\newcommand{\gm}[1]{#1} 
\newcommand{\gmm}[1]{#1} 
\newcommand{\hj}[1]{#1} 
\newenvironment{highlight}{\par\color{black}}{\par}
\begin{document}

\title{Implicit Bias of Gradient Descent for Mean Squared Error Regression with \hj{Two-Layer} Wide Neural Networks}

\author{%
  \name Hui Jin \email huijin@ucla.edu\\
  \addr Department of Mathematics\\
  University of California, Los Angeles\\
  Los Angeles, CA 90095, USA 
 \AND
  \name Guido Mont\'ufar \email montufar@math.ucla.edu\\
  \addr Department of Mathematics and Department of Statistics\\
  University of California, Los Angeles\\
  Los Angeles, CA 90095, USA; and \\
  Max Planck Institute for Mathematics in the Sciences\\ 
  04103 Leipzig, Germany 
}

\editor{}

\maketitle

\begin{abstract}%
We investigate gradient descent training of wide neural networks and the corresponding implicit bias in function space. For univariate regression, we show that the solution of training a width-$n$ shallow ReLU network is within $n^{- 1/2}$ of the function which fits the training data and whose difference from the initial function has the smallest 2-norm of the second derivative weighted by a curvature penalty that depends on the probability distribution that is used to initialize the network parameters. We compute the curvature penalty function explicitly for various common initialization procedures. For instance, asymmetric initialization with a uniform distribution yields a constant curvature penalty, and thence the solution function is the natural cubic spline interpolation of the training data. \hj{For stochastic gradient descent we obtain the same implicit bias result.}  We obtain a similar result for different activation functions. For multivariate regression we show an analogous result, whereby the second derivative is replaced by the Radon transform of a fractional Laplacian. For initialization schemes that yield a constant penalty function, the solutions are polyharmonic splines. Moreover, we show that the training trajectories are captured by trajectories of smoothing splines with decreasing regularization strength. 
\end{abstract}

\begin{keywords} implicit bias, 
overparametrized neural network, 
cubic spline interpolation, 
smoothing spline, 
effective capacity. 
\end{keywords}

\section{Introduction} 
Understanding why artificial neural networks trained in the overparametrized regime and without explicit regularization generalize well in practice is one of the key challenges in contemporary deep learning \citep{zhang2016understanding}. 
A series of works have observed that this phenomenon must involve some form of capacity control beyond the network size \citep{neyshabur2014search} and, specifically, an implicit bias resulting from the parameter optimization procedures \citep{neyshabur2017geometry}. 
By implicit bias we mean that among the many candidate hypotheses that fit the training data, the optimization procedure selects one which satisfies additional properties benefitting its performance on new data. 
In this work we investigate the implicit bias of gradient descent parameter optimization for mean squared error regression with wide shallow ReLU networks. Our theory shows that gradient descent is biased towards smooth functions. More precisely, the trained functions are well captured by interpolating splines depending on the initial function and the probability distribution that is used to initialize the network parameters. 

Under appropriate conditions, we intuitively expect that gradient descent will be biased towards solutions close to the initial parameter. 
Indeed, considering overparametrized neural networks, 
\cite{oymak2018overparameterized} showed that gradient descent finds a global minimizer of the training objective which is close to the initialization. 
This intuition is spot-on for least squares regression with linearized models. In this case, \cite{zhang2019type} showed that gradient flow optimization converges to the global minimum which is closest to the initialization in parameter space. 
Although neural networks have a non-linear parametrization, 
\cite{jacot2018neural}, \cite{lee2019wide} and  \cite{lai2023generalization} 
showed that the training dynamics of wide neural networks is well approximated by the dynamics of the linearization at a suitable initialization. 
This is referred to as the kernel regime, in contrast to the adaptive regime where the models are not well approximated by their linearization. 
Also, \cite{chizat2019lazy} showed that, under appropriate scaling of the output weights, a model can converge to zero training loss while hardly varying its parameters. This phenomenon is referred to as ``lazy training''. 
On the other hand, it is also possible to relate properties of the parameters to properties of the represented functions. \cite{savarese2019infinite} studied infinite-width univariate (single input) neural networks 
and showed that, under a standard parametrization, the complexity of the represented functions, as measured by the $1$-norm of the second derivative, can be controlled by the $2$-norm of the parameters. 
\cite{ongie2019function} extended these results to the multivariate setting. 
Using these results, one can show that gradient descent with $\ell_2$ weight penalty leads to simple functions. 
We will pursue an approach following these ideas, where we first approximate the gradient dynamics of a wide network in terms of a linear model and then establish a function space description of the implicit bias in parameter space. 


The implicit bias of parameter optimization has also been investigated in terms of the properties of the loss function at the points reached by different optimization procedures~\citep{keskar2016large,Wu2017TowardsUG,pmlr-v70-dinh17b}. 
\cite{gunasekar2018characterizing} analyze the implicit bias of different optimization methods (natural gradient, steepest and mirror descent) for linear regression and separable linear classification problems, and obtain characterizations in terms of minimum norm or max-margin solutions. 
Several works have studied the implicit bias of optimization for classification tasks  %
in terms of margins. 
\cite{soudry2018implicit} showed that in classification problems with separable data, gradient descent with linear networks converges to %
a max-margin solution. 
\cite{gunasekar2018implicit} presented a result on implicit bias for deep linear convolutional networks, and \cite{pmlr-v99-ji19a} studied non-separable data. 
\cite{chizat2020implicit} showed that gradient flow for logistic regression with infinitely wide two-layer networks yields a max-margin classifier in a certain space. 
In the adaptive regime, \cite{maennel2018gradient} showed that gradient flow for shallow ReLU networks initialized close to zero quantizes features depending on the training data but not on the network size. 
\cite{pmlr-v130-baratin21a} showed the evolution of the tangent features during training which can be interpreted as feature selection and compression. 
\citet{williams2019gradient} obtained results for univariate regression contrasting the kernel regime and the adaptive regime. 
We will obtain a related result for univariate regression in the kernel regime and a corresponding result for the multivariate case. 

This article is organized as follows. 
In Section~\ref{sec:notation} we provide settings and notation. 
We present our main results in Section~\ref{sec:main}, along with a discussion. The main techniques pertaining wide networks and the infinite width limit are presented in Sections~\ref{sec:3} and~\ref{2.4}. 
In Sections \ref{sec:implicit_bias_univariate} and \ref{sec:implicit_bias_multivariate}, we present the main derivations for the implicit bias in function space for univariate and multivariate regression. 
In the interest of a concise presentation, technical proofs and extended discussions are deferred to appendices.

\section{Notation and Problem Setup}
\label{sec:notation} 
Consider a fully connected network with $d$ inputs, 
one hidden layer of width $n$, and a single output. 
For any given input $\mathbf{x}\in\mathbb{R}^d$, the output of the network is
\begin{equation}
f(\mathbf{x},\theta)=\sum_{i=1}^n W_i^{(2)}\phi(\langle \mathbf{W}_i^{(1)},\mathbf{x}\rangle +b_i^{(1)}) +b^{(2)},
\label{standard-parametrization}
\end{equation}
where $\phi$ is an entry-wise activation function, 
$\mathbf{W}^{(1)}=(\mathbf{W}^{(1)}_1,\ldots,\mathbf{W}^{(1)}_n)^T\in \mathbb{R}^{n\times d}$, $\mathbf{W}_i^{(1)}=(W^{(1)}_{i,1},\ldots,W^{(1)}_{i,d})^T\in \mathbb{R}^{d}$, $\mathbf{W}^{(2)}=(W^{(2)}_1,\ldots,W^{(2)}_n)^T\in \mathbb{R}^{n}$, $\mathbf{b}^{(1)}=(b^{(1)}_1,\ldots,b^{(1)}_n)^T\in \mathbb{R}^n$ and $b^{(2)}\in \mathbb{R}$ are the weights and biases of the first and second layer. 
We write $\theta=\mathrm{vec}(\mathbf{W}^{(1)},\mathbf{b}^{(1)},\mathbf{W}^{(2)},b^{(2)})$ for the vector of all network parameters. 
These parameters are initialized by independent samples of pre-specified random variables $\mathcal{W}$ and $\mathcal{B}$ as follows: 
\begin{equation}
  \begin{aligned}
    W_{i,j}^{(1)} \buildrel d \over = \sqrt{1/d} ~\mathcal{W}, \quad
    &b_{i}^{(1)} \buildrel d \over = \sqrt{1/d} ~\mathcal{B},\\
    W_{i}^{(2)} \buildrel d \over = \sqrt{1/n} ~\mathcal{W}, \quad
    &b^{(2)} \buildrel d \over = \sqrt{1/n} ~\mathcal{B}.
  \end{aligned}
  \label{initialization}
\end{equation}
In the analysis of \citet{jacot2018neural, lee2019wide}, $\mathcal{W}$ and $\mathcal{B}$ are Gaussian $\mathcal{N}(0,\sigma^2)$. 
In the default initialization of PyTorch \citep{NEURIPS2019_9015}, $\mathcal{W}$ and $\mathcal{B}$ have uniform distribution $\mathrm{Unif}(-\sigma,\sigma)$. More generally, we will also allow weight-bias pairs $(\mathbf{W}_i^{(1)}, b_i^{(1)})$ of units in the hidden layer to be sampled from the joint distribution of a sub-Gaussian $(\bm{\mathcal{W}},\mathcal{B})$, where $\bm{\mathcal{W}}$ is a $d$-dimensional random vector and $\mathcal{B}$ is a random variable. 
The parameters of the second layer are still sampled from random variables $\mathcal{W}^{(2)}$ and $\mathcal{B}^{(2)}$. 
Then the parameters of the network are initialized as follows:
\begin{equation}
  \begin{aligned}
    &(\mathbf{W}_{i}^{(1)},b_{i}^{(1)}) \buildrel d \over = ~(\bm{\mathcal{W}},\mathcal{B})%
    \\
    &W_{i}^{(2)} \buildrel d \over = \sqrt{1/n} ~\mathcal{W}^{(2)}, \quad
    b^{(2)} \buildrel d \over = \sqrt{1/n} ~\mathcal{B}^{(2)}.
  \end{aligned}
  \label{initialization2}
\end{equation}

The setting \eqref{standard-parametrization} is known as the standard parametrization. Some works \citep{jacot2018neural, lee2019wide} use the so-called NTK parametrization, where the factor $\sqrt{1/n}$ is carried outside of the trainable parameter (for details see Appendix~\ref{twoParametrization}). 
If we fix the learning rate for all parameters, gradient descent leads to different trajectories under these two parametrizations (for details see Appendix~\ref{twoParametrization}). 
Our results are presented for the standard parametrization. 

We consider a regression problem for data $\{(\mathbf{x}_j,y_j) \}_{j=1}^M$ with inputs $\mathcal{X}=\{\mathbf{x}_j\}_{j=1}^M$ and outputs $\mathcal{Y}=\{y_j\}_{j=1}^M$. 
For a loss function $\ell \colon \mathbb{R} \times\mathbb{R} \rightarrow\mathbb{R}$, the empirical risk (also called training error) is $L(\theta) = \frac{1}{M}\sum_{j=1}^M \ell(f(\mathbf{x}_j,\theta),y_j)$. 
We will mainly focus on the square loss $\ell(y,\hat y)=\frac{1}{2}\|y-\hat y \|^2$, in which case $L$ is the mean squared error. 
We use full batch gradient descent with a fixed learning rate $\eta$ to minimize $L(\theta)$. 
Writing $\theta_t$ for the parameter at time $t$, and $\theta_0$ for the initialization, this defines an iteration 
\begin{equation}
  \theta_{t+1} = \theta_{t}-\eta\nabla L(\theta) = \theta_{t}-\eta\nabla_\theta f(\mathcal{X},\theta_{t})^T \nabla_{f(\mathcal{X},\theta_{t})}L, 
  \label{gd-iteration}
\end{equation}
where $f(\mathcal{X},\theta_{t}) = [f(\mathbf{x}_1,\theta_{t}),\ldots,f(\mathbf{x}_M,\theta_{t})]^T$ is the vector of network outputs for all training inputs, 
and $\nabla_{f(\mathcal{X},\theta_{t})}L$ is the gradient of $L$ as a function of the network outputs $f(\mathcal{X},\theta_{t})$. 
We will use subscript $i$ to index neurons and subscript $t$ to index time. 
Furthermore, we denote by $\hat{\Theta}_n$ the empirical neural tangent kernel (NTK) of the standard parametrization \eqref{standard-parametrization} at time $0$, which is the matrix 
$\hat{\Theta}_n = \frac{1}{n}\nabla_\theta f(\mathcal{X},\theta_0) \nabla_\theta f(\mathcal{X},\theta_0)^T$. 
We write $C^k$ for the space of real valued functions with continuous $k$th derivatives and $\mathrm{Lip}$ for the space of Lipschitz continuous functions. We use the notations $O_p$ to denote the standard mathematical orders in probability.\footnote{$X_n=O_p(a_n)$ as $n\to\infty$ means that for any $\epsilon > 0$, there exists a finite $M_\epsilon > 0$ and a finite $N_\epsilon > 0$ such that $\mathbb{P}(|X_n/a_n|>M_\epsilon)<\epsilon, \forall n>N_\epsilon$.}

\section{Main Results}
\label{sec:main}
In this section we describe our main results for univariate and multivariate regression, followed by an interpretation and overview of the proof steps developed in the next sections.

\subsection{Univariate Regression} 
We have the following description of the implicit bias in function space when applying gradient descent to univariate least squares regression with wide ReLU neural networks. 
\begin{theorem}
[Implicit bias of gradient descent for univariate regression]
\label{thm:theorem1}
Consider a feedforward network with a single input unit, a hidden layer of $n$ rectified linear units, and a single linear output unit. Assume standard parametrization \eqref{standard-parametrization} and parameter initialization \eqref{initialization2}, which means for each hidden unit the input weight and bias are initialized from a sub-Gaussian $(\mathcal{W},\mathcal{B})$ with continuous joint density $p_{\mathcal{W},\mathcal{B}}$. %
Then, for any finite data set $\{(x_j,y_j)\}_{j=1}^M$ 
and sufficiently large $n$ there exist constants $u,v\in\mathbb{R}$ so that optimization of the mean squared error on the adjusted training data 
 $\{(x_j,y_j-ux_j-v)\}_{j=1}^M$ 
by full-batch gradient descent with sufficiently small step size converges to a parameter $\theta^\ast$ for which the output function $f(x,\theta^\ast)$ %
attains zero training error. 
Furthermore, letting $\zeta(x) = \int_\mathbb{R} |W|^3 p_{\mathcal{W},\mathcal{B}}(W,-Wx)~\mathrm{d}W$ and $S = \operatorname{supp}(\zeta) \cap [\min_j x_j, \max_j x_j]$, we have  $\sup_{x\in S}\|f(x,\theta^\ast) - g^\ast(x)\|_2 = \hj{O_p(n^{-\frac{1}{2}})}$%
over the random initialization $\theta_0$, 
where $g^\ast$ solves following variational problem:\footnote{The existence of the minimum of the variational problem is not obvious. We prove that the minimum exists and the solution of the variational problem is $g^*$.}
\begin{equation}
  \begin{aligned}
\min_{g\in C^2(S)}\quad & \int_S \frac{1}{\zeta(x)} (g''(x) - f''(x,\theta_0))^2~\mathrm{d}x\\
\textup{subject to}\quad & g(x_j) = y_j-ux_j-v ,\quad j=1,\ldots, M .
\end{aligned}
\label{main_result}
\end{equation}
\end{theorem}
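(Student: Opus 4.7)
The plan is to chain together the four preceding theorems, using the Lee et al.\ linearization bound to control the gap between the trained nonlinear network and the analysis of its linearization, and to spend the initial effort on justifying the auxiliary linear correction $ux+v$.

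\emph{Step 1: introducing the linear correction.} A ReLU network with standard parametrization and a single scalar output writes $f(x,\theta)$ as a finite sum of terms of the form $w_2\,[W x+B]_+$ plus an output bias. In order for the variational problem to admit a finite-norm interpolant on $S$, the interpolation constraints $g(x_i)=y_i-ux_i-v$ must be compatible with the affine ``baseline'' generated by the initialization (i.e.\ the function $f(x,\theta_0)$ together with any unconstrained affine drift coming from gradient descent on the linearized model). I would first isolate the subspace of pure affine functions and use the normal equations for the orthogonal projection of $\{(x_i,y_i)\}$ onto that subspace to define $u$ and $v$; this produces an adjusted target $\tilde y_i = y_i-ux_i-v$ which lies in the range of the curvature-penalized operator with high probability for large $n$.

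\emph{Step 2: reducing to the linearized model.} By \citet[Theorem H.1]{lee2019wide}, applied to our standard parametrization with sufficiently small step size, the trained network parameters $\theta^\ast$ and the gradient-descent trajectory of the linearized model $f_{\mathrm{lin}}(x,\theta) := f(x,\theta_0) + \nabla_\theta f(x,\theta_0)^\top(\theta-\theta_0)$ produce functions that differ by $O(n^{-1/2})$ uniformly on any bounded interval, hence in the 2-norm over $S$. Therefore it suffices to analyze the linearized model's gradient descent output $f_{\mathrm{lin}}(x,\theta^\ast_{\mathrm{lin}})$ and identify its infinite-width limit.

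\emph{Step 3: parameter-space characterization.} Theorem~\ref{minimum_weight} identifies $\theta^\ast_{\mathrm{lin}}-\theta_0$ as the minimum-$\ell_2$-norm displacement from initialization that fits the training data under the linearization. Theorem~\ref{th-lin-onlyout} then collapses this high-dimensional minimization to the lower-dimensional one in which only the output-layer weights move, incurring only $o(1)$ error as $n\to\infty$. Theorem~\ref{theorem4} takes the width-$n$ minimum-norm problem to its well-defined infinite-width limit in parameter (measure) space, and Theorem~\ref{theorem_func} translates that limit into the variational problem over $g\in C^2(S)$ with curvature penalty $1/\zeta$, using the fact that the second derivative of a ReLU feature $[Wx+B]_+$ is a $|W|$-scaled Dirac at $-B/W$, so integrating against the initialization density yields exactly the weight $\zeta(x)=\int |W|^3 p_{\mathcal{W},\mathcal{B}}(W,-Wx)\,dW$ (the cube coming from change-of-variables plus the two factors of $W$ in the kernel).

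\emph{Step 4: assembling the rate.} Combining Step~2 (nonlinear vs.\ linearized, $O(n^{-1/2})$) with Steps~3 (linearized at width $n$ vs.\ its infinite-width limit $g^\ast$, which is $O(n^{-1/2})$ by Theorems~\ref{theorem4} and \ref{theorem_func} together with standard concentration of the empirical feature measure) gives $\|f(\cdot,\theta^\ast)-g^\ast\|_{L^2(S)}=O(n^{-1/2})$ with high probability over $\theta_0$. The main obstacle I expect is Step~3's infinite-width passage: one must show that the discrete minimum-norm problem, once projected onto its second-derivative representation, $\Gamma$-converges at rate $n^{-1/2}$ to the continuous variational problem \eqref{main_result}, which requires quantitative control of the empirical distribution of breakpoints $-B/W$ and of the kernel $\nabla_\theta f(x,\theta_0)\nabla_\theta f(x',\theta_0)^\top$ around its expectation. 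Sub-Gaussianity of $(\mathcal W,\mathcal B)$ and restriction to $S=\mathrm{supp}(\zeta)\cap[\min_i x_i,\max_i x_i]$ (so that $\zeta$ is bounded away from $0$) is what lets the final rate come out as $n^{-1/2}$ rather than something weaker.
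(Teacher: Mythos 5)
Your overall architecture matches the paper's: nonlinear network $\to$ linearized model (Lee et al.\ Theorem~H.1), linearized model $\to$ output-layer-only training (Theorem~\ref{th-lin-onlyout}/Corollary~\ref{cor:outlayer-only}), finite-$n$ minimum-norm problem $\to$ infinite-width continuous problem (Theorem~\ref{theorem4}), and continuous problem $\to$ function-space variational problem (Theorem~\ref{theorem_func}), with the $O(n^{-1/2})$ rate coming from concentration of the empirical feature measure. Your observation that the $|W|^3$ in $\zeta$ arises from one Jacobian factor in the change of variables $b\mapsto c=-b/W$ plus two factors from the conditional second moment is also correct.

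The one place where your plan genuinely deviates from — and would break — the paper's argument is Step~1. You propose to define $u,v$ by Euclidean orthogonal projection of $\{(x_i,y_i)\}$ onto the span of affine functions, i.e.\ ordinary least-squares regression of $y$ on $x$. That is \emph{not} the choice the theorem needs. In the paper, $u,v$ are the affine components $(\overline u,\overline v)$ of the minimizer of the relaxed curvature-penalized problem~\eqref{continuous_add_linear}; equivalently (Appendix~\ref{appendix:relation_NTK}), they come from projecting out the affine part in the RKHS norm $\|\cdot\|_{\tilde\Theta}$, not the Euclidean norm on $\mathbb{R}^M$. The reason this matters: after adjusting the data by $(\overline u,\overline v)$, the solution of~\eqref{continuous_add_linear} on the adjusted data has zero affine part, so it coincides with the solution of the unrelaxed problem~\eqref{continuous_new}, which is what the output-layer-trained network actually converges to (Theorem~\ref{theorem4}). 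Meanwhile, $g^\ast$ in~\eqref{main_result} is, via Theorem~\ref{theorem_func}, the output function of the solution of~\eqref{continuous_add_linear}. Only with the RKHS-optimal $(u,v)$ do these two limits agree, giving $\|f(\cdot,\theta^\ast)-g^\ast\|_2 = O(n^{-1/2})$. With least-squares $(u,v)$, the trained-network limit and $g^\ast$ are the solutions of two different constrained problems and will generically differ by $\Theta(1)$. So you need to replace the normal-equations projection by the $\tilde\Theta$-orthogonal projection (or, equivalently, define $(u,v)$ as the Lagrange-multiplier byproduct of solving~\eqref{continuous_add_linear} on the original data). Your remark that the adjusted targets ``lie in the range of the curvature-penalized operator'' gestures at the right condition but does not establish it for the least-squares choice.

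Two minor points: the paper's proof additionally invokes ASI so that $f(\cdot,\theta_0)\equiv 0$ and the term $f''(\cdot,\theta_0)$ drops out of~\eqref{main_result}, which simplifies the bookkeeping — your plan should either do the same or carry the initialization function through explicitly; and the convergence-to-zero-training-error clause is handled in the paper by citation rather than by the Lee et al.\ bound, which is worth stating separately since it is a different (though standard) result.
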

The proof is provided in Appendix~\ref{app:proof-thm1}. \hj{Our main theorem also holds when the network parameters are trained by stochastic gradient descent. We provide details in Theorem~\ref{minimum_weight_sgd} and Remark~\ref{rk:sgd_gd} in Appendix~\ref{proof1}.
} 
\hj{In Appendix~\ref{app:skip_connection} we also present a corresponding result for networks with skip connections, which does not need a linear adjustment of the data.} 
We will give an interpretation of the result in Section~\ref{sec:discussion_main_results}. 
We first give the explicit form of $\zeta$ for several common parameter initialization procedures. 

\begin{theorem}[Explicit form of the curvature penalty for common initializations]
\label{proposition:explicit-rho}
\mbox{} 
\begin{enumerate}[label=(\alph*),leftmargin=*,noitemsep]
\item {Gaussian initialization.} Assume that $\mathcal{W}$ and $\mathcal{B}$ are independent, $\mathcal{W}\sim \mathcal{N}(0, \sigma_w^2)$ and $\mathcal{B}\sim \mathcal{N}(0, \sigma_b^2)$. 
Then $\zeta(x)=\frac{2\sigma_w^3\sigma_b^3}{\pi(\sigma_b^2+x^2\sigma_w^2)^2}$. 

\item {Binary-uniform initialization.}  
\label{pro9:bin}
Assume that $\mathcal{W}$ and $\mathcal{B}$ are independent, $\mathcal{W}\in\{-1,1\}$ and $\mathcal{B}\sim \mathrm{Unif}(-a_b, a_b)$ with $a_b\geq I$. 
Then $\zeta$ is constant on $[-I,I]$. 

\item {Uniform initialization.}  
\label{pro9:unif}
Assume that $\mathcal{W}$ and $\mathcal{B}$ are independent, $\mathcal{W}\sim \mathrm{Unif}(-a_w, a_w)$ and $\mathcal{B}\sim \mathrm{Unif}(-a_b, a_b)$ with $\frac{a_b}{a_w}\geq I$. 
Then $\zeta$ is constant on $[-I,I]$. 
\end{enumerate}
\end{theorem}
The proof is provided in Appendix~\ref{appendix:proof-explicit-rho}. 
\begin{remark} %
\label{remark_univarite_curvature}
Theorem~\ref{proposition:explicit-rho}\,\ref{pro9:bin} and \ref{pro9:unif} show that for certain parameter initialization distributions, %
the function $\zeta$ is constant on an interval. 
In this case, the solution $(g(x)-f(x,\theta_0))$ to the variational problem \eqref{main_result} in Theorem~\ref{thm:theorem1} corresponds to cubic spline interpolation with natural boundary conditions \citep[see, e.g.,][]{1967theory}. 
For general $\zeta$, the solution corresponds to a spatially adaptive natural cubic spline, which can be computed numerically by solving a linear system and theoretically in an RKHS formalism (see Appendix~\ref{appendix:splines} for details). 
\end{remark}
For different activation functions, we have the following corollary, proved in Appendix~\ref{Other_activation}. 
\begin{corollary}[Different activation functions]
\label{cor:diff_activation}
Use the same settings as in Theorem~\ref{thm:theorem1} except with activation function $\phi$ instead of ReLU. 
Suppose that $\phi$ is a Green’s function of a linear operator $\mathrm{L}$, i.e., $\mathrm{L}\phi=\delta$, where $\delta$ denotes the Dirac delta function. Assume that $\phi$ is homogeneous of degree $k$, i.e., \ $\phi(ax) = a^k
\phi(x)$ for all $a>0$. Then we can find a function $p$ satisfying $\mathrm{L}p\equiv 0$ and adjust the training data $\{(x_j,y_j)\}_{j=1}^M$ to 
 $\{(x_j,y_j-p(x_j)\}_{j=1}^M$. After that, the statement in Theorem \ref{thm:theorem1} holds with the variational problem \eqref{main_result} changed to 
\begin{equation}
  \begin{aligned}
\min_{g\in C^2(S)}\quad & \int_S \frac{1}{\zeta(x)} [\mathrm{L}(g(x) - f(x,\theta_0))]^2~\mathrm{d}x\\ 
\textup{subject to}\quad & g(x_j) = y_j-p(x_j) ,\quad j=1,\ldots, M , 
\end{aligned}
\label{gen_diff_activation}
\end{equation}
where $\zeta(x) = p_\mathcal{C}(x)\mathbb{E}(\mathcal{W}^{2k}|\mathcal{C}=x) $ and $S = \operatorname{supp}(\zeta) \cap [\min_j x_j, \max_j x_j]$. 
\end{corollary} 
Based on Theorem~\ref{thm:theorem1}, we can also give an approximate description of the optimization trajectory in function space. 
If we substitute the constraints $g(x_j)=y_j$ in \eqref{main_result} by a quadratic penalty $\frac{1}{\lambda} \frac{1}{M}\sum_{j=1}^M(g(x_j)-y_j)^2$, then we obtain the variational problem for a so-called spatially adaptive smoothing spline \citep[see][]{ABRAMOVICH1996327,10.1093/biomet/93.1.113}. This problem can be solved explicitly and can be shown to approximate early stopping. 
In Appendix~\ref{appendix:smoothingspline} we provide details for the following observation. 
\begin{remark}[Training trajectory]
The output function of the network after gradient descent training for $t$ steps with learning rate $\bar{\eta}/n$ is approximated by the solution to following optimization problem: 
\begin{equation}
  \min_{g\in C^2(S)}\quad
\sum_{j=1}^M\left[ g(x_j)-y_j\right]^2+\frac{1}{\bar{\eta} t}
\int_{S} \frac{1}{\zeta(x)}
(g''(x)- f''(x,\theta_0))^2~\mathrm{d}x . 
  \label{regularized_function_space_uniform}
\end{equation} 
\end{remark}

\begin{figure}[t]
  \centering
\begin{tikzpicture}[x=\textwidth,y=\textwidth, inner sep = 1pt]
\node[above right] at (0,0) {\includegraphics[width=0.65\textwidth]{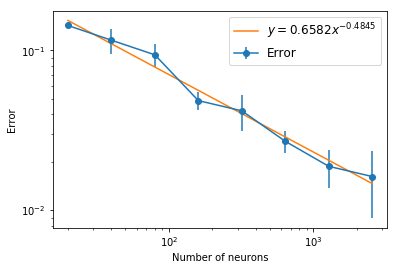}};
\node[above right] at (.1, .08) {\includegraphics[width=0.17\textwidth]{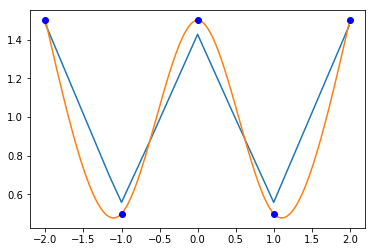}};
\node[above right] at (.27, .08) {\includegraphics[width=0.17\textwidth]{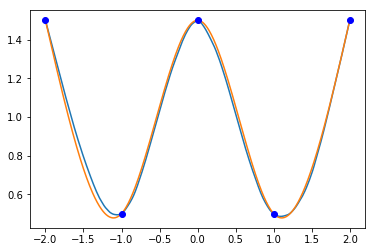}};

\node[above right,fill=white] at (.1+.03, .08+.12) {\small $n=10$};
\node[above right,fill=white] at (.27+.03, .08+.12) {\small $n=640$};

\node[above right] at (.7,0) {\includegraphics[width=0.3\textwidth]{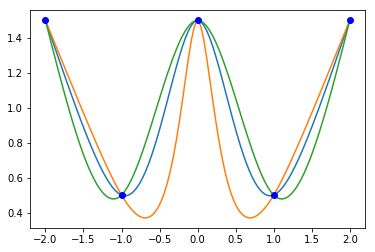}};
\node[above right] at (.7,.225) {\includegraphics[width=0.3\textwidth]{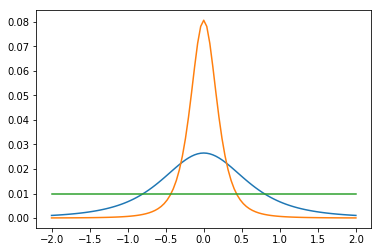}};
\node[above,fill=white] at (.7+.15,0+.2) {\scriptsize \textsf{Solution $g^\ast$ to the variational problem}};
\node[above,fill=white] at (.7+.15,.225+.2) {\scriptsize \textsf{Reciprocal curvature penalty $\zeta$}}; 

\end{tikzpicture}

\caption{Illustration of Theorem~\ref{thm:theorem1}. 
Left: Uniform error between the solution $g^\ast$ to the variational problem and 
the functions $f(\cdot,\theta^\ast)$ obtained by gradient descent training with uniform initialization $\mathcal{W}\sim \mathrm{Unif}(-1,1)$, $\mathcal{B}\sim \mathrm{Unif}(-2,2)$, against the number of neurons $n$. 
The inset shows the training data (dots), $g^\ast$ (orange), and $f(\cdot,\theta^\ast)$ (blue) for two values of $n$. 
Right: Effect of the curvature penalty function on the shape of the solution function. 
    The bottom shows $g^\ast$ for various $\zeta$ shown at the top. 
    The green curve is for $\zeta$ constant on $[-2,2]$, derived from $\mathcal{W}\sim \mathrm{Unif}(-1,1)$, $\mathcal{B}\sim \mathrm{Unif}(-2,2)$; 
    blue is for $\zeta(x)=1/(1+x^2)^2$, derived from $\mathcal{W} \sim \mathcal{N}(0,1)$, $\mathcal{B} \sim \mathcal{N}(0,1)$; 
    and orange for $\zeta(x)=1/(0.1+x^2)^2$, derived from $\mathcal{W} \sim \mathcal{N}(0,1)$, $\mathcal{B} \sim \mathcal{N}(0,0.1)$.  Theorem~\ref{proposition:explicit-rho} shows how to compute $\zeta$ for these distributions.}
\label{fig:nr_neurons}
\label{fig:different_rho}
\end{figure}

\subsection{Multivariate Regression}

For multivariate regression, we have the following generalization of Theorem~\ref{thm:theorem1}. 
\begin{theorem}
[Implicit bias of gradient descent for multivariate regression]
\label{thm:theorem_multi}
Consider the same network settings as in Theorem~\ref{thm:theorem1} except with $d$ input units instead of a single input unit. 
Assume that $\bm{\mathcal{W}}$ is a random vector with $\mathbb{P}(\|\bm{\mathcal{W}}\|=0)=0$ and $\mathcal{B}$ is a random variable; the distribution of $(\bm{\mathcal{W}},\mathcal{B})$ is symmetric, i.e., $(\bm{\mathcal{W}},\mathcal{B})$ and $(-\bm{\mathcal{W}},-\mathcal{B})$ have the same distribution; and $\|\bm{\mathcal{W}}\|_2$ and $\mathcal{B}$ are both sub-Gaussian.
Then, for any finite data set $\{(\mathbf{x}_j,y_j)\}_{i=1}^M$ 
and sufficiently large $n$ there exist a constant vector $\mathbf{u}$ and a constant $v$ so that optimization of the mean squared error on the adjusted training data 
 $\{(\mathbf{x}_j,y_j-\langle \mathbf{u},\mathbf{x}_j\rangle-v)\}_{j=1}^M$ 
by full-batch gradient descent with sufficiently small step size converges to a parameter $\theta^\ast$ for which $f(\mathbf{x},\theta^\ast)$ attains zero training error. 
Furthermore, let $\mathcal{U}=\|\bm{\mathcal{W}}\|_2$, $\bm{\mathcal{V}}=\bm{\mathcal{W}}/\|\bm{\mathcal{W}}\|_2$, $\mathcal{C}=-\mathcal{B}/\|\bm{\mathcal{W}}\|_2$ and $\zeta(\bm{V},c) = p_{\bm{\mathcal{V}},\mathcal{C}}(\bm{V},c)\mathbb{E}(\mathcal{U}^2|\bm{\mathcal{V}}=\bm{V},\mathcal{C}=c)$, where $p_{\bm{\mathcal{V}},\mathcal{C}}$ is the continuous joint density of $(\bm{\mathcal{V}},\mathcal{C})$. 
Then, for any compact set $D\subset \mathbb{R}^d$, we have  $\sup_{\mathbf{x}\in D}\|f(\mathbf{x},\theta^\ast) - g^\ast(\mathbf{x})\|_2 = \hj{O_p(n^{-\frac{1}{2}})}$
over the random initialization $\theta_0$, 
where $g^\ast$ solves following variational problem:  
\begin{equation}
\begin{aligned}
 \min_{g\in \operatorname{Lip}(\mathbb{R}^d)}\quad & \int_{\operatorname{supp}(\zeta)} \frac{\left({\mathcal{R}\{(-\Delta)^{(d+1)/2}(g-f(\cdot,\theta_0))\}(\bm{V},c)}\right)^2}{\zeta(\bm{V},c)}~\mathrm{d}\bm{V}\mathrm{d}c\\
 \textup{subject to}\quad & g(\mathbf{x}_j)=y_j-\langle \mathbf{u},\mathbf{x}_j\rangle-v,\quad j=1,\ldots,M \\
 & \mathcal{R}\{(-\Delta)^{(d+1)/2}(g-f(\cdot,\theta_0))\}(\bm{V},c)=0,\quad (\bm{V},c)\not\in\operatorname{supp}(\zeta) \\
 &(-\Delta)^{(d+1)/2}(g-f(\cdot,\theta_0)) \in L^p(\mathbb{R}^d),\ 1\leq p<d/(d-1). 
\end{aligned}
\label{gen_multi_dim}
\end{equation}
Here $\mathcal{R}$ is the Radon transform defined by $
    \mathcal{R}\{f\}(\bm{\omega},b)\coloneqq \int_{\langle{\bm\omega},\mathbf{x}\rangle=b}f(\mathbf{x})\mathrm{d}s(\mathbf{x})$, 
the fractional power of the negative Laplacian $(-\Delta)^{(d+1)/2}$ is defined in Fourier domain by $\widehat{(-\Delta)^{(d+1)/2}f}(\bm\xi)=\|\bm\xi\|^{d+1} \widehat f(\bm\xi)$, and $\operatorname{Lip}(\mathbb{R}^d)$ is the space of Lipschitz continuous functions on $\mathbb{R}^d$. 
\end{theorem} 
The proof is given in Appendix \ref{app:proof-thm1}. 
\hj{In Appendix~\ref{app:skip_connection} we also present a corresponding result for networks with skip connections, which does not need a linear adjustment of the data.} 
In Proposition~\ref{constant_zeta} we will show that for specific distributions of $(\bm{\mathcal{W}},\mathcal{B})$, the function $\zeta(\bm{V},c)$ is constant on $\operatorname{supp}(\zeta)$, which greatly simplifies the variational problem \eqref{gen_multi_dim}. We prove the following theorem in Appendix~\ref{proof_uniform_curvature}. 

\begin{theorem}[Variational problem for constant $\zeta$]
\label{uniform_curvature}
Suppose $\bm{\mathcal{W}}$ is uniformly distributed on $\mathbb{S}^{d-1}$ and $\mathcal{B}$ is uniformly distributed on $[-a_b,a_b]$. Assume that $a_b\geq\max_i \|\mathbf{x}_i\|_2$. Then the variational problem \eqref{gen_multi_dim} is equivalent to 
\begin{equation}
\begin{aligned}
 \min_{h\in \operatorname{Lip}(\mathbb{R}^d)\cap C(\mathbb{R}^d)}\quad & \int_{\mathbb{R}^{d}} \left((-\Delta)^{(d+3)/4}(h(\mathbf{x})-f(\mathbf{x},\theta_0))\right)^2~\mathrm{d}\mathbf{x}\\
 \textup{subject to}\quad & h(\mathbf{x}_j)=y_j,\quad j=1,\ldots,M\\
 &(-\Delta)^{(d+1)/2}(h(\cdot)-f(\cdot,\theta_0)) \in L^p(\mathbb{R}^d),\ 1\leq p<d/(d-1).
\end{aligned}
\label{function_space_multi_simple}
\end{equation}
\end{theorem}
We can solve the simplified variational problem \eqref{function_space_multi_simple} explicitly. We prove the following theorem in Appendix~\ref{proof_closed_form_sol}. 
\begin{theorem}[Closed form solution]
\label{closed_form_sol}
Suppose $h(\mathbf{x})$ solves the variational problem \eqref{function_space_multi_simple}. Then $h(\mathbf{x})$ is given by
 \begin{equation}
     h(\mathbf{x})-f(\mathbf{x},\theta_0)=\sum_{j=1}^M \lambda_j\|\mathbf{x}-\mathbf{x}_j\|^3+\langle \mathbf{u},\mathbf{x}_i \rangle +v, \label{exact_solution_h}
\end{equation}
where the coefficients $\lambda_j$, $\mathbf{u}$ and $v$ are determined by 
\begin{equation}
\begin{cases}
      \sum_{j=1}^M \lambda_j\|\mathbf{x}_i-\mathbf{x}_j\|^3+\langle \mathbf{u},\mathbf{x}_i \rangle +v= y_i- f(\mathbf{x}_i,\theta_0),\quad i=1,\ldots,M\\
      \sum_{j=1}^M \lambda_j=0\\
      \sum_{j=1}^M \lambda_j\mathbf{x}_j=\mathbf{0}\,.
    \end{cases}    
    \label{coefficients}
\end{equation}
\end{theorem}
\begin{remark} %
A function of the form \eqref{exact_solution_h}--\eqref{coefficients} is referred to as a polyharmonic spline \citep[see][]{potter1981multivariate}, which is a special type of radial basis function interpolation \citep{du2008radial}. 
When $d=1$ (i.e., the univariate case), this corresponds to the natural cubic spline interpolation described in Remark~\ref{remark_univarite_curvature}. 
Finally, we observe that the training trajectory of gradient descent for multivariate regression can be approximately described by a sequence of so-called polyharmonic smoothing splines \citep{segeth2019multivariate} with decreasing regularization parameter, similar to the description \eqref{regularized_function_space_uniform} for the univariate case. 
\end{remark}
\subsection{Discussion of the Main Results}
\label{sec:discussion_main_results}
\textbf{Interpretation}\quad 
An intuitive interpretation of Theorem~\ref{thm:theorem1} is that gradient descent optimization is biased towards smooth functions. 
At those regions of the input space where $\zeta$ is smaller, we can expect the difference between the functions after and before training to have a small curvature. 
We call $\rho = 1/\zeta$ a curvature penalty function. 
The theorem gives an explicit description of the bias in function space depending on the initialization. 
In Theorem~\ref{proposition:explicit-rho} we obtain the explicit form of $\zeta$ for various common parameter initialization procedures. 
In particular, when the parameters are initialized independently from a uniform distribution on a finite interval, $\zeta$ is constant and the problem is solved by the natural cubic spline interpolation of the data.  

We illustrate Theorem~\ref{thm:theorem1} numerically in Figure~\ref{fig:nr_neurons} and more extensively in Appendix~\ref{app:additional-experiments}.\footnote{\hj{The code of our experiments and the plots can be found in our GitHub repository: \href{https://github.com/huijin12/Implicit\_Bias\_Wide\_Neural\_Networks}{\texttt{https://github.com/huijin12/Implicit\_Bias\_Wide\_Neural\_Networks}}}}  
In close agreement with the theory, the solution to the variational problem captures the solution of gradient descent training uniformly with error of order $n^{-1/2}$. %
To illustrate the effect of the curvature penalty function, Figure~\ref{fig:different_rho} also shows the solutions to the variational problem for different values of $\zeta$ corresponding to different initialization distributions. 
We see that indeed at input points where $\zeta$ is small resp.\ peaks strongly, the solution function tends to have a lower curvature resp.\ use a higher curvature in order to fit the training data. 
This description could be used to formulate heuristics for parameter initialization either to ease optimization or to induce specific smoothness priors on the solutions. 
In particular, in Proposition~\ref{set_rho} we will show that any curvature penalty $1/\zeta$ can be implemented by an appropriate choice of the parameter initialization distribution. 

Similar to the univariate case, in the multivariate case gradient descent implicitly controls the complexity of the solution functions obtained upon training. In this case the complexity is measured by the weighted 2-norm of the Radon transform of the $(d+1)/2$ power of the negative Laplacian. The weight function $\zeta$ is again determined by the distribution used to initialize the parameters. 
Although the precise interpretation of these expressions is no longer as straightforward, intuitively the implicit bias corresponds to penalizing a global notion of overall curvature across hyperplanes in the input space. 
For certain parameter initialization distributions, Theorem~\ref{closed_form_sol} shows that the network output after training is a polyharmonic spline. 
We illustrate Theorem~\ref{thm:theorem_multi} numerically in Figure~\ref{error_against_n_2D} and more extensively in Appendix~\ref{app:additional-experiments}. Again in close agreement with the theory, the solution to the variational problem captures the solution returned by gradient descent training with a uniform error of order $n^{-1/2}$. 

These results show that the effective capacity of the network, understood as the set of possible output functions after training, is well captured by a space of cubic splines (polyharmonic splines for multivariate regression) relative to the initial function. This is a space with dimension of order $M$ (the number of training examples) independently of the number of parameters of the network. 

We note that under suitable asymmetric parameter initialization (see Appendix \ref{app:ASI}), it is possible to achieve $f(\cdot,\theta_0)\equiv 0$. 
Then in Theorem \ref{thm:theorem1} and Theorem \ref{thm:theorem_multi}, the regularization is on the curvature of the output function itself (rather than its difference to the initial function). 
Further, we note that although Theorem~\ref{thm:theorem1} and Theorem~\ref{thm:theorem_multi} describe gradient descent training with linearly adjusted data, they also approximately describe training with the original training data (see Appendix~\ref{Difference_between_solutions_of_variational_problems} for more details). 
The adjustment of the training data simply accounts for the fact that the second derivative and the Laplace operator are invariant to addition of linear terms. In practice we can use the coefficients $\mathbf{u}$ and $v$ of linear regression $y_j=\langle \mathbf{u},\mathbf{x}_j\rangle+v+\epsilon_j$, $j=1,\ldots,M$, and set the adjusted data as $\{(\mathbf{x}_j,\epsilon_j)\}_{j=1}^M$. 
\hj{Furthermore, if we consider a network architecture with skip connections from the inputs to the outputs, our result holds for the original training data without any adjustments. We present the details to this result in Appendix~\ref{app:skip_connection}.
}

\hj{%
\textbf{Generalization results}\quad
Theorem~\ref{thm:theorem1} allows us to show how gradient descent on wide neural networks learns a target function. 
In the following paragraphs, we show how the solution to the variational problems \eqref{main_result}, \eqref{regularized_function_space_uniform} and \eqref{gen_multi_dim} converges to a target function as the amount of data increases. 
}

\hj{%
In the so-called univariate noiseless model, the training outputs are given by $y_j=g_0(x_j)$, where $g_0\colon [a,b] \mapsto \mathbb{R}$ is the target function. Let $a=x_0<x_1<\cdots<x_M<x_{M+1}=b$ and $h=\max_i x_{i+1}-x_i$. If $\zeta$ is constant on $[a,b]$, the solution $g^*$ of  \eqref{main_result} is the cubic spline interpolation of the training data. \cite{hall1976optimal} showed in the context of splines that for a target function $g_0\in C^4([a,b])$ one has $\|g^*-g_0\|_\infty\leq C\|g_0^{(4)}\|_\infty h^4$, where $g_0^{(4)}$ is the fourth derivative of $g_0$. 
}

\hj{%
For univariate noisy models, the training outputs are given by $y_j=g_0(x_j)+\epsilon_j$, where $\epsilon_j$ are zero-mean independent random variables with a common variance $\sigma^2$. In this case we use early stopping to smooth out the noise and the training result is characterized by the solution of  \eqref{regularized_function_space_uniform}. If $\zeta$ is constant on $[a,b]$, the solution $g^*$ of  \eqref{regularized_function_space_uniform} is the cubic smoothing spline of the training data. \citet[Theorem 5.8]{ragozin1983error} showed that if $g_0\in C^2([a,b])$ and $\{x_j\}_{j=1}^M$ are the uniform partition of $[a,b]$, then $\mathbb{E}\|g^*-g_0\|_2^2\leq C\left((1/t+(1/M)^4)\|g_0''\|^2+t^{1/4}/M\right)$, where $t$ is the number of training steps. If we choose $t$ to be $\Theta(M^{4/5})$, then  $\mathbb{E}\|g^*-g_0\|_2^2=O(M^{-4/5})$.  This gives us some hints about how to choose the stopping time depending on the number of training samples. 
Similar observations can be obtained for more general settings.  
\citet{ragozin1983error} also gives an error bound for $g^*$ in the case of non-uniform training inputs. 
\citet{eggermont2006uniform} shows a similar result if $\{x_j\}_{j=1}^M$ are sampled independently from a distribution.} 

\hj{%
If $\zeta$ is non-constant on $[a,b]$, the solution $g^*$ of  \eqref{regularized_function_space_uniform} is called a spatially adaptive smoothing spline of the training data. \citet[Corollary 1]{wang2013smoothing} showed that if $g_0\in C^4([a,b])$, $\zeta\in C^3([a,b])$, $t=\Theta(M^{4/9})$ and $\{x_j\}_{j=1}^M$ are sampled from a distribution on $[a,b]$ with bounded positive density function $q\in C^3([a,b])$, then $|g^*(x)-g_0(x)|=O_p(M^{-4/9})$. 
If the curvature of the target function changes a lot on its domain, spatially adaptive smoothing splines with properly chosen $\zeta$ perform better than cubic smoothing splines. 
\citet[Corollary 1]{wang2013smoothing} showed that the optimal $\zeta$ is the solution of a variational problem if the target function is known. They approximate the optimal $\zeta$ by a piecewise constant function and estimate the target function from training data by interpolating splines. Then they numerically solve the variational problem and get a suitable $\zeta$ for the training data. \citet{ABRAMOVICH1996327} and \citet{storlie2010locally} proposed to choose $\zeta$ based on an estimation of the second derivative of $g_0$. 
\citet{liu2010data} used a piecewise constant $\zeta$ and proposed a search algorithm to find such $\zeta$. 
Proposition~\ref{set_rho} shows a way to choose the joint distribution of weight and bias parameters in order to have that $\zeta$ is proportional to a given function. Once we find an appropriate $\zeta$ according to the training data using the methods in the above literature, we can initialize the weight and bias parameters by the corresponding joint distribution and train the wide neural network by gradient descent. According to the theory, this parameter initialization should perform better than uniform or Gaussian initialization.} 

\hj{%
For multivariate noiseless models, if $\zeta$ is constant over its support, then the solution $g^*$ of variational problem \eqref{gen_multi_dim} is the polyharmonic spline. For this setting, \citet[Theorem 3.2]{potter1981multivariate} gave an error bound between $g^*$ and the target function $g_0$.} 

\textbf{Strategy of the proof}\quad
   In Section~\ref{sec:3} we observe that for a linearized model, gradient descent with sufficiently small step size finds the minimizer of the training objective which is closest to the initial parameter \citep[similar to a result by][]{zhang2019type}. 
   Then Theorem~\ref{th-lin-onlyout} shows that the training dynamics of a linearized  wide network is well approximated in parameter and in function space by that of a lower dimensional linear model which trains only the output weights. 
   This property is sometimes taken for granted in the literature. 
   We show that it holds for the standard parametrization, although it does not hold for the NTK parametrization, which leads to the adaptive regime. 
\gm{A similar result has been previously obtained by \citet{NIPS2017_489d0396}.} 
Under these settings, the implicit bias of gradient descent amounts to minimizing the distance from the initial parameter, subject to fitting the training data. 
In Section~\ref{2.4} we relate this description of the implicit bias in parameter space to an alternative optimization problem. 
In Theorem~\ref{theorem4} we show that the solution to this alternative problem has a well defined limit as the width of the network tends to infinity, which allows us to obtain a variational description. 
In Section \ref{sec:implicit_bias_univariate}, we focus on the case of univariate regression. In Theorem~\ref{theorem_func} we translate the description of the bias from parameter space to function space. 
In Section \ref{sec:implicit_bias_multivariate}, we turn to the case of multivariate regression and use the inversion formula of the dual Radon transform to analyze the optimization objective. 
Finally, we exploit recent results \citep[][Proposition 3.2]{lai2023generalization} bounding the difference in function space of the solutions obtained from training a wide network and its linearization to conclude the proof.

\textbf{Related works}\quad
\citet{zhang2019type} described the implicit bias of gradient descent %
in the kernel regime as minimizing %
a kernel norm from initialization, %
subject to fitting the training data. 
Our result can be regarded as making the kernel norm explicit, thus providing an interpretable description of the bias in function space and further illuminating the role of the parameter initialization procedure. 
We prove the equivalence in Appendix~\ref{appendix:relation_NTK}. 
\hj{\cite{cao2019generalization} derived generalization bounds for overparametrized deep neural networks under stochastic gradient descent training. They also approximated the neural network by a linearized model, which is called a neural tangent random feature (NTRF) model in their work. }

\citet{savarese2019infinite} showed that infinitely wide networks with $2$-norm weight regularization represent functions with smallest $1$-norm of the second derivative, an example of which are linear splines (see Appendix~\ref{Parameter_norm_min} for more details). 
A recent work by \citet{Parhi2019MinimumN} further develops this direction %
for two-layer networks with certain activation functions that interpolate data while minimizing a weight norm. 
In contrast, our result characterizes the solutions of training from a given initialization without explicit regularization, which turn out to minimize a weighted $2$-norm of the second derivative and hence correspond to cubic splines. %
Another recent work \citep{Heiss2019HowIR} discusses ridge weight penalty, adaptive splines, and early stopping for one-input ReLU networks training only the output layer. 
The spline perspective for univariate shallow ReLU networks has recently been also discussed by \citet{sahs2020shallow}. 
\hj{\citet{schmidt2020rejoinder} showed that a shallow ReLU network with one input and one output node approximately converges to the natural cubic spline interpolant under SGD training.} 
\citet{williams2019gradient} showed a similar result in the kernel regime for shallow ReLU networks training only the output layer from zero initialization. 
In contrast, we consider the initialization of the second layer and show that the difference from the initial output function is implicitly regularized by gradient descent. We show that the result of training both layers %
can be approximated by training only the second layer in Theorem \ref{th-lin-onlyout}. 
In addition, we give the explicit form of $\zeta$ in Theorem~\ref{proposition:explicit-rho}, while the description given by \citet{williams2019gradient} has a minor error because of a typo in their computation. Significantly, our results also cover multivariate regression, different activation functions, and training trajectories. 

In the multivariate case, \cite{ongie2019function} studied infinite-width neural networks with parameters having bounded norm. 
They showed that the complexity of the functions represented by the network, as measured by the $1$-norm of the Radon transform of the $(d+1)/2$-power of the negative Laplacian of the function, can be controlled by the $2$-norm of the parameters.  Rather than bounding the $2$-norm of the parameters, our result describes the implicit bias of gradient descent and in turn we obtain a weighted $2$-norm. 
A recent work by \cite{parhi2021kinds} considers adding an explicit regularization of $1$-norm of the Radon tranform in function space for multivariate regression, and uses the representer theorem to obtain the solution to the variational problem. In contrast, we consider gradient descent without explicit regularization and the implicit bias turns out to be a weighted $2$-norm. %

\begin{figure}
    \centering
\begin{tikzpicture}[x=\textwidth,y=\textwidth, inner sep = 1pt]
\node[above right] at (0,0) {\includegraphics[width=0.49\textwidth]{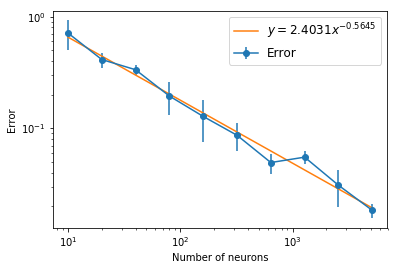}};
\node[above right] at (.5, .175) {\includegraphics[width=0.25\textwidth]{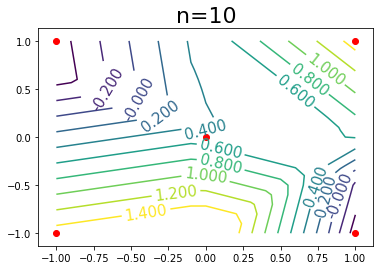}};
\node[above right] at (.75, .175) {\includegraphics[width=0.25\textwidth]{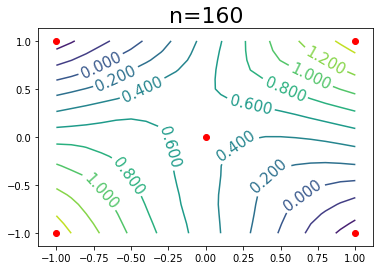}};
\node[above right] at (.5, 0) {\includegraphics[width=0.25\textwidth]{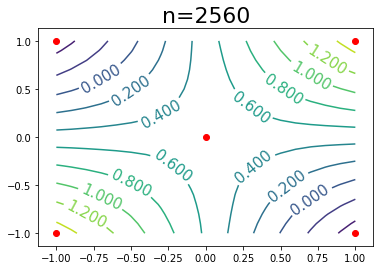}};
\node[above right] at (.75, 0) {\includegraphics[width=0.25\textwidth]{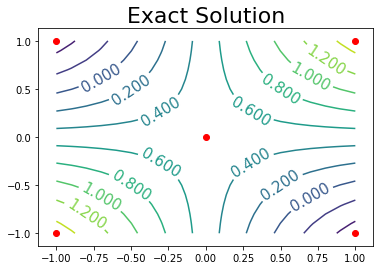}};
\end{tikzpicture}
\caption{Illustration of Theorem~\ref{thm:theorem_multi}. 
Left: Uniform error between the solution $g^\ast$ to the variational problem and 
the functions $f(\cdot,\theta^\ast)$ obtained by gradient descent training of a neural network (in this case with initialization $\bm{\mathcal{W}}\sim \mathrm{Unif}(\mathbb{S}^1)$, $\mathcal{B}\sim \mathrm{Unif}(-2,2)$), against the number of neurons. 
Right: The input training data (dots), the contour plots of trained network functions with $10$, $160$, $2560$ neurons, and the exact solution to the variational problem. 
}
\label{error_against_n_2D}
\end{figure}

\section{Wide Networks and Parameter Space}
\label{sec:3}

In this section, we characterize the implicit bias in parameter space and show that,  under our initialization and parametrization scheme, training only the output layer approximates training all parameters. 

\subsection{Implicit Bias in Parameter Space for a Linearized Model} 
\label{parameter}

In this section we describe how training a linearized network or a wide network by gradient descent leads to solutions having parameter values close to the initial parameter values. 
First, we consider the following linearized model:
\begin{equation}
  f^{\mathrm{lin}}(\mathbf{x},\omega) =  f(\mathbf{x},\theta_0)+\nabla_\theta f(\mathbf{x},\theta_0)(\omega-\theta_0).
  \label{linearized_model}
\end{equation}
We write $\omega$ for the parameter of the linearized model, in order to distinguish it from the parameter $\theta$ of the nonlinearized model. The empirical loss of the linearized model is defined by
\begin{equation}
L^{\mathrm{lin}}(\omega)
=\frac{1}{M}\sum_{j=1}^M \ell(f^{\mathrm{lin}}(\mathbf{x}_j,\omega),y_j).
\label{eq:loss_L}
\end{equation}
The gradient descent iteration for the linearized model is given by 
\begin{equation}
 \omega_0=\theta_0,\quad 
  \omega_{t+1}
  = \omega_t- \eta\nabla_\theta f(\mathcal{X},\theta_{0})^T \nabla_{f^{\mathrm{lin}}(\mathcal{X},\omega_{t})}L^{\mathrm{lin}}. 
  \label{linearized}
\end{equation}

Next, we consider wide neural networks. 
According to \citet[][Theorem H.1]{lee2019wide} and \citep[][Proposition 3.2]{lai2023generalization}, 
\[
\sup_t \|f^{\mathrm{lin}}(\mathbf{x},\omega_t)-f(\mathbf{x},\theta_t)\|_2=\hj{O_p(n^{-\frac{1}{2}})} . 
\]
This means that gradient descent training of a wide network or of the linearization of the network results in similar trajectories and solutions in function space. 
Both solution functions fit the training data perfectly, meaning $f^{\mathrm{lin}}(\mathcal{X},\omega_\infty) = f(\mathcal{X},\theta_\infty)=\mathcal{Y}$, and they are also approximately equal outside of the training data.  

Under the assumption that $\mathrm{rank}(\nabla_\theta f(\mathcal{X},\theta_0))=M$, the gradient descent iterations \eqref{linearized} of the linearized network converge to the unique global minimum that is closest to initialization \citep{gunasekar2018characterizing, zhang2019type}. More precisely, $\omega_\infty$ is the solution to following constrained optimization problem (further details are provided in Appendix~\ref{proof1}):
\begin{equation}
  \min_{\omega} \|\omega- \theta_0\|_2 \quad \text{s.t. } f^{\mathrm{lin}}(\mathcal{X},\omega)=\mathcal{Y}. 
  \label{problem}
\end{equation}

\subsection{Training Only the Output Layer Approximates Training All Parameters}
\label{Training_only_the_output_layer}

In the following we consider networks with a single hidden layer of $n$ ReLUs and a linear output, 
$  
f(\mathbf{x},\theta)=\sum_{i=1}^n W_i^{(2)}[\langle \mathbf{W}_i^{(1)},\mathbf{x}\rangle +b_i^{(1)}]_+ +b^{(2)}
$.
We show that the functions and parameter vectors obtained by training the linearized model are close to those obtained by training only the output layer. 
In view of the previous subsection, this implies that training all parameters of a wide network or training only the output layer results in similar functions. 

Let $\theta_0=\mathrm{vec}(\overline{\mathbf{W}}^{(1)},\overline{\mathbf{b}}^{(1)},\overline{\mathbf{W}}^{(2)},\overline{b}^{(2)})$ be the parameter at initialization so that $f^{\mathrm{lin}}(\cdot, \theta_0) = f(\cdot,\theta_0)$. 
Denote the trained parameter of the linearized network by $\omega_\infty=\mathrm{vec}(\mathbf{\widehat{W}}^{(1)},\widehat{\mathbf{b}}^{(1)},\widehat{\mathbf{W}}^{(2)},\widehat{b}^{(2)})$. 
Using initialization \eqref{initialization2}, 
given $1\leq i\leq n$, we have that $\|\overline{\mathbf{W}}_i^{(1)}\|, \overline{b}_i^{(1)}=\hj{O_p(1)}$ and $\overline{W}_i^{(2)}, \overline{b}^{(2)}=\hj{O_p(n^{-\frac{1}{2}})}$.%
\footnote{More precisely, given $1\leq i\leq n$, $\exists C$, for any $\delta>0$, %
s.t.\ with prob.\ $1-\delta$, $|\overline{W}_i^{(2)}|, |\overline{b}^{(2)}| \leq Cn^{-1/2}\sqrt{\log\frac{1}{\delta}}$ and $\|\overline{\mathbf{W}}_i^{(1)}\|, |\overline{b}_i^{(1)}| \leq C\sqrt{\log\frac{1}{\delta}}$ since the random variables are sub-Gaussian.} 
Therefore, writing $H$ for the Heaviside function, we have  
\begin{align}
\begin{split}
  \nabla_{\mathbf{W}_i^{(1)}, b^{(1)}_i} f(\mathbf{x},\theta_0)=&
  \left[\overline{W}_i^{(2)} H(\langle\overline{\mathbf{W}}_i^{(1)},\mathbf{x}\rangle+\overline{b}^{(1)}_i)\cdot \mathbf{x}
  \;,\;
  \overline{W}_i^{(2)} H(\langle\overline{\mathbf{W}}_i^{(1)},\mathbf{x}\rangle+\overline{b}_i^{(1)})
  \right]
  =\hj{O_p(n^{-\frac{1}{2}})}, \\
\nabla_{W_i^{(2)},b^{(2)}} f(\mathbf{x},\theta_0)=&
\left[
[\langle\overline{\mathbf{W}}_i^{(1)},\mathbf{x}\rangle+\overline{b}_i^{(1)}]_+
\;,\;
1
\right]
=\hj{O_p(1)}. 
\end{split}
\end{align}
This implies that when $n$ is large, if we use gradient descent with a constant learning rate for all parameters, then the changes of $\mathbf{\mathbf{W}}^{(1)}$, $\mathbf{b}^{(1)}$, $b^{(2)}$ are negligible compared with the changes of $\mathbf{W}^{(2)}$. 
In turn, approximately we can train just the output weights, $W^{(2)}_i, i=1,\ldots,n$, and fix all other parameters, which corresponds to training a smaller linear model. 
Let $\widetilde{\omega}_t=\mathrm{vec}(\overline{\mathbf{W}}^{(1)},\overline{\mathbf{b}}^{(1)},\widetilde{\mathbf{W}}_t^{(2)},\overline{b}^{(2)})$ be the parameter at time $t$ under the update rule where  $\overline{\mathbf{\mathbf{W}}}^{(1)},\overline{\mathbf{b}}^{(1)}$, $\overline{b}^{(2)}$ are kept fixed at their initial values, and 
\begin{equation}
  \widetilde{\mathbf{W}}^{(2)}_0=\overline{\mathbf{W}}^{(2)},\quad 
  \widetilde{\mathbf{W}}^{(2)}_{t+1}=\widetilde{\mathbf{W}}^{(2)}_{t}- \eta\nabla_{\mathbf{W}^{(2)}} L^{\mathrm{lin}}(\widetilde{\omega}_t). 
  \label{fix_first}
\end{equation}
Let $\widetilde{\omega}_\infty = \lim_{t\to\infty}\widetilde{\omega}_t$. 
By the above discussion, we expect that $f^{\mathrm{lin}}(\mathbf{x},\widetilde{\omega}_\infty)$ will be close to $f^{\mathrm{lin}}(\mathbf{x},\omega_\infty)$. 
We have the following formal result for mean squared error regression. 

\begin{theorem}[Training only output weights vs linearized network]  
\label{th-lin-onlyout}
Consider a finite data set $\{(\mathbf{x}_i,y_i)\}_{i=1}^M$. 
Assume that  we use the square loss $\ell(\widehat{y},y)=\frac{1}{2}\|\widehat{y}-y\|_2^2$ and $\inf_n \lambda_{\min}(\hat{\Theta}_n)>0$. 
Let $\omega_t$ denote the parameters of the linearized model at time $t$ when we train all parameters using \eqref{linearized}, and let $\widetilde{\omega}_t$ denote the parameters at time $t$ when we only train weights of the output layer using \eqref{fix_first}. 
If we use the same learning rate $\eta$ in these two training processes and $\eta < \frac{2}{n\lambda_{\max}(\hat{\Theta}_n)}$, then for 
any compact set $D\subset \mathbb{R}^d$, we have 
\begin{equation*}
    \sup_{\mathbf{x}\in D}\sup_t |f^{\mathrm{lin}}(\mathbf{x},\widetilde{\omega}_t)- f^{\mathrm{lin}}(\mathbf{x},\omega_t)|=\hj{O_p(n^{-1})}, \text{ as } n\to\infty. 
\end{equation*}
Moreover, in terms of the parameter trajectories we have     $\sup_t \|\overline{\mathbf{W}}^{(1)}- \widehat{\mathbf{W}}^{(1)}_t\|_2=\hj{O_p(n^{-1})}$, $\sup_t \|\overline{\mathbf{b}}^{(1)}- \widehat{\mathbf{b}}^{(1)}_t\|_2 =\hj{O_p(n^{-1})}$,  
    $\sup_t \|\widetilde{\mathbf{W}}^{(2)}_t- \widehat{\mathbf{W}}^{(2)}_t\|_2=\hj{O_p(n^{-3/2})}$,  
    $\sup_t \|\overline{b}^{(2)}- \widehat{b}^{(2)}_t\| =\hj{O_p(n^{-1})}$. 
\end{theorem}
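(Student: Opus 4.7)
The strategy is to reduce both training dynamics to closed-form linear iterations on the training-set residuals, compare the two via an NTK perturbation bound, and pull the resulting bound back to arbitrary $x$ through the parameter trajectories. For the MSE loss, substituting the updates \eqref{linearized} and \eqref{fix_first} into the residuals $r_t := f^{\mathrm{lin}}(\mathcal X,\omega_t)-\mathcal Y$ and $\tilde r_t := f^{\mathrm{lin}}(\mathcal X,\widetilde\omega_t)-\mathcal Y$ gives
\[
r_{t+1}=A\,r_t,\qquad \tilde r_{t+1}=B\,\tilde r_t,\qquad A:=I-\eta n\hat\Theta_n,\quad B:=I-\eta n\hat\Theta_n^{\mathrm{out}},
\]
with common initial residual $r_0=\tilde r_0$ of order $O(1)$ with high probability, where $\hat\Theta_n^{\mathrm{out}} := \tfrac{1}{n}\nabla_{W^{(2)}}f(\mathcal X,\theta_0)\nabla_{W^{(2)}}f(\mathcal X,\theta_0)^{\!T}$ is the NTK of the output-only model. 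The parameter increments at each step are linear in these residuals.

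The matrix $\hat\Theta_n-\hat\Theta_n^{\mathrm{out}}$ collects the contributions of the $W^{(1)}, b^{(1)}, b^{(2)}$ blocks. Using the neuron-wise scales from the excerpt (each $\nabla_{W^{(1)}_i}f$ or $\nabla_{b^{(1)}_i}f$ entry is $O(n^{-1/2})$, summed over $n$ neurons and divided by $n$; and $b^{(2)}$ contributes a single $\tfrac1n\mathbf 1\mathbf 1^{\!T}$), a sub-Gaussian concentration argument for sums of $n$ independent rank-$M$ outer products yields $\|\hat\Theta_n-\hat\Theta_n^{\mathrm{out}}\|_{op}=O(n^{-1})$ with high probability. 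Assumption (2) together with this perturbation bound gives $\lambda_{\min}(\hat\Theta_n^{\mathrm{out}})$ bounded below, and the learning-rate hypothesis gives a uniform spectral radius $\|A\|,\|B\|\le r<1$ independent of $n$. The telescoping identity $A^t-B^t=\sum_{k=0}^{t-1} A^{t-1-k}(A-B)B^k$ then yields $\|r_t-\tilde r_t\|_2\le t\,r^{t-1}\,\|A-B\|\,\|r_0\|_2$, so $\sup_t\|r_t-\tilde r_t\|_2=O(n^{-1})$ and, since $\sum_{s\ge 0} s r^{s-1}<\infty$, also $\sum_{s\ge 0}\|r_s-\tilde r_s\|_2=O(n^{-1})$; likewise $\sum_{s\ge 0}\|r_s\|_2=O(1)$.

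Integrating the gradient updates gives $\widehat W^{(1)}_t-\overline W^{(1)}=-\eta\sum_{s<t}\nabla_{W^{(1)}}f(\mathcal X,\theta_0)^{\!T}r_s$, and analogous formulas for $\widehat b^{(1)}$ and $\widehat b^{(2)}$. The operator-norm bounds $\|\nabla_{W^{(1)}}f(\mathcal X,\theta_0)\|_{op},\|\nabla_{b^{(1)}}f(\mathcal X,\theta_0)\|_{op}=O(1)$ (an $M\times n$ matrix of $O(n^{-1/2})$ entries) combined with $\eta=O(1/n)$ and $\sum_s\|r_s\|_2=O(1)$ give the $O(n^{-1})$ estimates for the three non-output blocks. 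For the output block both trajectories share the same gradient, so $\widetilde W^{(2)}_t-\widehat W^{(2)}_t=\eta\sum_{s<t}\nabla_{W^{(2)}}f(\mathcal X,\theta_0)^{\!T}(r_s-\tilde r_s)$; with $\|\nabla_{W^{(2)}}f(\mathcal X,\theta_0)\|_{op}=O(\sqrt n)$ and the summable $O(n^{-1})$ bound on $\|r_s-\tilde r_s\|_2$ this is $O(n^{-3/2})$. Finally, for any $x$, $f^{\mathrm{lin}}(x,\omega_t)-f^{\mathrm{lin}}(x,\widetilde\omega_t)=\nabla_\theta f(x,\theta_0)(\omega_t-\widetilde\omega_t)$, and block-wise Cauchy--Schwarz shows each of the four blocks contributes $O(n^{-1})$, e.g.\ $\|\nabla_{W^{(2)}}f(x,\theta_0)\|_2=O(\sqrt n)$ times $O(n^{-3/2})$, and $\|\nabla_{W^{(1)}}f(x,\theta_0)\|_2=O(1)$ times $O(n^{-1})$.

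The main obstacle is the high-probability spectral control in Step 2: $\|\hat\Theta_n-\hat\Theta_n^{\mathrm{out}}\|_{op}=O(n^{-1})$ and the operator-norm bounds on the gradient blocks must hold uniformly in $n$, which requires concentration of sums of $n$ independent sub-Gaussian rank-$M$ outer products (standard matrix Bernstein suffices since $M$ is fixed); one also needs a uniform gap $r<1$ in the spectral radii of $A$ and $B$, which amounts to interpreting the learning-rate hypothesis as $\eta\le(2-\delta)/(n\lambda_{\max}(\hat\Theta_n))$ for some fixed $\delta>0$. Once this probabilistic bookkeeping is done, the rest is the telescoping identity $A^t-B^t=\sum A^{t-1-k}(A-B)B^k$ and block-wise Cauchy--Schwarz.
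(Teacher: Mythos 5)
Your proof is correct and takes essentially the same approach as the paper: for the MSE loss both training dynamics are solved in closed form as affine functions of the residuals, the perturbation $\hat\Theta_n-\hat\Theta_n^{\mathrm{out}}$ is bounded as $O(n^{-1})$ from the block scalings $W^{(2)}=O(n^{-1/2})$, and the telescoping identity $A^t-B^t=\sum_{k}A^{t-1-k}(A-B)B^k$ is used to control the geometric-series difference. The only material differences are presentational: the paper writes each $\omega_t$ and $\widetilde W^{(2)}_t$ explicitly via $\hat\Theta_n^{-1}\bigl[I-(I-n\eta\hat\Theta_n)^t\bigr]$ and bounds the function-space difference directly from those closed forms before turning to the parameter blocks, whereas you work with the residual recursion $r_{t+1}=Ar_t$ and sum over time to get the parameter-block bounds first, then pull back to arbitrary $x$ via $\nabla_\theta f(x,\theta_0)(\omega_t-\widetilde\omega_t)$; these are algebraic rearrangements of the same estimate. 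One thing you do flag that the paper passes over silently: for $\sup_t t\,r^{t-1}$ to be finite uniformly in $n$, the learning rate must keep the spectral radius bounded away from $1$, which amounts to reading the hypothesis as $\eta\le(2-\delta)/(n\lambda_{\max}(\hat\Theta_n))$ together with $\eta\gtrsim 1/n$ so that $n\eta\lambda_{\min}(\hat\Theta_n)$ does not vanish; the paper's proof invokes ``$\|I-n\eta\tilde\Theta_n\|_2<1$'' and finiteness of $\sup_t t r^{t-1}$ without making this uniformity explicit, so your observation is a genuine (if minor) improvement in precision.
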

The proof is provided in Appendix~\ref{Proof2}. 
By combining Theorem~\ref{th-lin-onlyout} and the fact that training a linearized model approximates training a wide network \citep[][Proposition 3.2]{lai2023generalization}, we obtain the following. 
\begin{corollary}[Training only output weights vs training all weights]
\label{cor:outlayer-only}
Consider the settings of Theorem~\ref{th-lin-onlyout}, and assume that the joint distribution of $(\mathcal{W},\mathcal{B})$ is sub-Gaussian. Given any compact set $D\subset \mathbb{R}^d$, 
$\sup_{\mathbf{x}\in D}\sup_t \|f^{\mathrm{lin}}(\mathbf{x},\tilde\omega_t)-f(\mathbf{x},\theta_t)\|_2=\hj{O_p(n^{-\frac{1}{2}})}$. 
\end{corollary} 
The proof is given in Appendix~\ref{appendix:Lee-generalization}. 
In view of the arguments in this section, in the next sections we will focus on training only the output weights and understanding the corresponding solution functions.

\section{Infinite Width Limit of Shallow Networks} 
\label{2.4}

According to \eqref{problem}, gradient descent training of the output weights \eqref{fix_first} achieves zero loss, 
$f^{\mathrm{lin}}(\mathbf{x}_{j}, \widetilde{\omega}_\infty)-f^{\mathrm{lin}}(\mathbf{x}_{j}, \theta_0) 
  =\sum_{i=1}^n (\widetilde{W}_i^{(2)}-\overline{W}_i^{(2)})[\langle \overline{\mathbf{W}}_i^{(1)}, \mathbf{x}_{j}\rangle+\overline{b}_i^{(1)}]_+
  =y_{j}-f(\mathbf{x}_{j}, \theta_0)$, $j=1,\ldots, M$, 
with minimum $\|\widetilde{\mathbf{W}}^{(2)}-\overline{\mathbf{W}}^{(2)}\|^2_2$. 
Hence gradient descent is actually solving 
\begin{equation}
 \min_{\mathbf{W}^{(2)}}  \|\mathbf{W}^{(2)}-\overline{\mathbf{W}}^{(2)}\|^2_2 \quad \text{s.t.}\quad \sum_{i=1}^n (W_i^{(2)}-\overline{W}_i^{(2)})[\langle \overline{\mathbf{W}}_i^{(1)}, \mathbf{x}_{j}\rangle+\overline{b}_i^{(1)}]_+=y_j-f(\mathbf{x}_{j}, \theta_0), \;  j=1,\ldots, M.
\label{direct_version_non_ASI}
\end{equation} 
To simplify the presentation, in the following we let $f^{\mathrm{lin}}(\mathbf{x}, \theta_0)\equiv 0$ by using the Anti-Symmetrical Initialization (ASI) trick (see Appendix~\ref{app:ASI}). The analysis still goes through without this simplification (see Appendix~\ref{app:univariate}). 

We reformulate problem %
\eqref{direct_version_non_ASI} 
in a way that allows us to consider the limit of infinitely wide networks, with $n\to\infty$, and obtain a deterministic counterpart, analogous to the convergence of the NTK. 
Let $\mu_n$ denote the empirical distribution of the samples $(\overline{\mathbf{W}}_i^{(1)},\overline{b}_i^{(1)})_{i=1}^n$, 
i.e., $\mu_n(A)=\frac{1}{n}\sum_{i=1}^n \mathbbm{1}_A\left((\overline{\mathbf{W}}_i^{(1)},\overline{b}_i^{(1)})\right)$, where $\mathbbm{1}_A$ denotes the indicator function for measurable subsets $A$ in $\mathbb{R}^{d+1}$. 
We further consider a function $\alpha_n\colon \mathbb{R}^{d+1}\to\mathbb{R}$ whose value encodes the difference of the output weight from its initialization for a hidden unit with input weight and bias given by the argument, i.e., $\alpha_n(\overline{\mathbf{W}}_i^{(1)},\overline{b}_i^{(1)})=n(W_i^{(2)}-\overline{W}_i^{(2)})$. 
Then \eqref{direct_version_non_ASI} with ASI can be rewritten as 
\begin{equation}
\begin{aligned}
 \min_{\alpha_n\in C(\mathbb{R}^{d+1})}\  \int_{\mathbb{R}^2} \alpha_n^2(\mathbf{W}^{(1)},b)~\mathrm{d}\mu_n(\mathbf{W}^{(1)},b)\
 \textup{ s.t.}\int_{\mathbb{R}^{d+1}} \alpha_n(\mathbf{W}^{(1)},b)[\langle \mathbf{W}^{(1)}, \mathbf{x}_{j}\rangle+b]_+~\mathrm{d}\mu_n(\mathbf{W}^{(1)},b)=y_j,
\end{aligned}
\label{probablity_version}
\end{equation}
where $j$ ranges from $1$ to $M$. Here we minimize over functions $\alpha_n$ in $C(\mathbb{R}^{d+1})$, but since only the values on $(\overline{\mathbf{W}}_i^{(1)},\overline{b}_i^{(1)})_{i=1}^n$ are taken into account, 
we can take any continuous interpolation of $\alpha_n(\overline{\mathbf{W}}_i^{(1)},\overline{b}_i^{(1)})$, $i=1,\ldots,n$. 

Now we can consider the infinite width limit. 
Let $\mu$ be the probability measure of $(\bm{\mathcal{W}},\mathcal{B})$. 
By substituting $\mu$ for $\mu_n$, we obtain a continuous version of problem~\eqref{probablity_version} as follows: 
\begin{equation}
\begin{aligned}
 \min_{\alpha\in C(\mathbb{R}^{d+1})}\quad & \int_{\mathbb{R}^{d+1}} \alpha^2(\mathbf{W}^{(1)},b)~\mathrm{d}\mu(\mathbf{W}^{(1)},b)\\
 \textup{subject to}\quad & \int_{\mathbb{R}^{d+1}} \alpha(\mathbf{W}^{(1)},b)[\langle \mathbf{W}^{(1)}, \mathbf{x}_{j}\rangle+b]_+~\mathrm{d}\mu(\mathbf{W}^{(1)},b)=y_j, \quad j=1,\ldots,M . 
\end{aligned}
\label{continuous_version}
\end{equation}
Using that $\mu_n$ weakly converges to $\mu$, 
the following theorem shows that in fact the solution of 
problem \eqref{probablity_version} converges to the solution of 
\eqref{continuous_version}. The proof is given in Appendix~\ref{Proof4}. 

\begin{theorem}[Infinite width limit]
\label{theorem4}
Let $(\overline{\mathbf{W}}_i^{(1)},\overline{b}_i^{(1)})_{i=1}^n$ be i.i.d.\ samples from a pair $(\bm{\mathcal{W}},\mathcal{B})$ with finite fourth moment. 
Suppose $\mu_n$ is the empirical distribution of $(\overline{\mathbf{W}}_i^{(1)},\overline{b}_i^{(1)})_{i=1}^n$ and $\overline{\alpha}_n(\mathbf{W}^{(1)},b)$ is the solution of \eqref{probablity_version}. 
Let $\overline{\alpha}(\mathbf{W}^{(1)},b)$ be the solution of \eqref{continuous_version}. 
Then, for any compact set $D\subset \mathbb{R}^d$, we have $\sup_{\mathbf{x}\in D}|g_n(\mathbf{x},\overline{\alpha}_n)- g(\mathbf{x},\overline{\alpha})|=O_p(n^{-1/2})$ 
, where $g_n(\mathbf{x},\alpha_n) = \int_{\mathbb{R}^{d+1}} \alpha_n(\mathbf{W}^{(1)},b)[\langle \mathbf{W}^{(1)}, \mathbf{x}\rangle+b]_+~\mathrm{d}\mu_n(\mathbf{W}^{(1)},b)$
is the function represented by a network with $n$ hidden neurons after training, 
and $g(\mathbf{x},\alpha) = \int_{\mathbb{R}^{d+1}} \alpha(\mathbf{W}^{(1)},b)[\langle \mathbf{W}^{(1)}, \mathbf{x}\rangle+b]_+~\mathrm{d}\mu(\mathbf{W}^{(1)},b)$ is
the function represented by the infinite-width network. 
\end{theorem}

\section{Implicit Bias for Univariate Regression} 
\label{sec:implicit_bias_univariate}
In this section we solve the optimization problem \eqref{continuous_version} in the univariate case, which provides a function space characterization of the implicit bias previously described in parameter space. 
First we rewrite the problem in terms of breakpoints. 
Consider the breakpoint $c = -b/W^{(1)}$ of a ReLU with weight $W^{(1)}$ and bias $b$. 
We define a corresponding random variable $\mathcal{C} = -\mathcal{B}/\mathcal{W}$ and let $\nu$ denote the distribution of $(\mathcal{W},\mathcal{C})$.\footnotemark 
\footnotetext{Here we assume that $\mathbb{P}(\mathcal{W}=0)=0$ so that the random variable $\mathcal{C}$ is well defined. This is not an important restriction, since neurons with weight $W^{(1)}=0$ have a constant output value that can be absorbed in the bias of the output layer.}  
Then, writing $\gamma(W^{(1)},c)=\alpha(W^{(1)},-cW^{(1)})$, 
the optimization problem \eqref{continuous_version} is equivalently given as 
\begin{equation}
\begin{aligned}
 \min_{\gamma\in C(\mathbb{R}^2)} \int_{\mathbb{R}^2} \gamma^2(W^{(1)},c)~\mathrm{d}\nu(W^{(1)},c)\
 \textup{ s.t.}\int_{\mathbb{R}^2} \gamma(W^{(1)},c)[W^{(1)}(x_j-c)]_+~\mathrm{d}\nu(W^{(1)},c)=y_j,
\end{aligned}
\label{continuous_new}
\end{equation}
where $j$ ranges from $1$ to $M$. Let $\nu_\mathcal{C}$ denote the  distribution of $\mathcal{C} = -\mathcal{B}/\mathcal{W}$, and $\nu_{\mathcal{W}|\mathcal{C}=c}$ the conditional distribution of $\mathcal{W}$ given $\mathcal{C}=c$. 
Suppose $\nu_\mathcal{C}$ has support $\mathrm{supp}(\nu_\mathcal{C})$ and a density function $p_\mathcal{C}(c)$. 
Let $g(x,\gamma) = \int_{\mathbb{R}^2}\gamma(W^{(1)},c)[W^{(1)}(x-c)]_+~\mathrm{d}\nu(W^{(1)},c)$, which again corresponds to the output function of the network. 
Then, the second derivative $g''$ with respect to $x$  %
satisfies $g''(x,\gamma)
=p_{\mathcal{C}}(x)\int_{\mathbb{R}}\gamma(W^{(1)},x)\big|W^{(1)}\big|~\mathrm{d}\nu_{\mathcal{W}|\mathcal{C}=x}(W^{(1)})$ (for details on this see Appendix~\ref{Proof5}). 
This shows that $\gamma(W^{(1)},c)$ is closely related to $g''(x,\gamma)$. 
In the following we seek to express \eqref{continuous_new} in terms of $g''(x,\gamma)$. 
Since $g''(x,\gamma)$ determines $g(x,\gamma)$ only up to linear functions, we consider the following problem:
\begin{equation}
\begin{aligned}
 \min_{\gamma\in C(\mathbb{R}^2), u\in\mathbb{R}, v\in\mathbb{R}}\quad & \int_{\mathbb{R}^2} \gamma^2(W^{(1)},c)~\mathrm{d}\nu(W^{(1)},c)\\
 \textup{subject to}\quad & ux_j+v+\int_{\mathbb{R}^2} \gamma(W^{(1)},c)[W^{(1)}(x_j-c)]_+~\mathrm{d}\nu(W^{(1)},c)=y_j,\quad j=1,\ldots,M. 
\end{aligned}
\label{continuous_add_linear}
\end{equation}
Here $u,v$ are not included in the cost. 
They add a linear function to the output of the neural network. If $u$ and $v$ in the solution of \eqref{continuous_add_linear} are small, then the solution is close to the solution of \eqref{continuous_new}. 
\cite{ongie2019function} also use this trick to simplify the characterization of neural networks in function space. 
Next we study the solution of \eqref{continuous_add_linear} in function space. 
This is our main technical result for univariate regression. 

\begin{theorem}[Implicit bias in function space for univariate regression]
  \label{theorem_func}
Assume $\mathcal{W}$ and $\mathcal{B}$ are random variables with $\mathbb{P}(\mathcal{W}=0)=0$, and let $\mathcal{C}=-\mathcal{B}/\mathcal{W}$. 
Let $\nu$ denote the probability distribution of
$(\mathcal{W}, \mathcal{C})$.  %
Suppose $(\overline{\gamma},\overline{u},\overline{v})$ is the solution of \eqref{continuous_add_linear}, 
and consider the corresponding output function 
\begin{equation}
  g(x,(\overline{\gamma},\overline{u},\overline{v}))=\overline{u} %
  x+\overline{v}+\int_{\mathbb{R}^2} \overline{\gamma}(W^{(1)},c)[W^{(1)}(%
  x-c)]_+~\mathrm{d}\nu(W^{(1)},c). 
  \label{func_g}
\end{equation}
Let $\nu_\mathcal{C}$ denote the marginal distribution of $\mathcal{C}$ and assume it has a density function $p_\mathcal{C}$. Assume that $\mathcal{W}$ has finite second moment. Let $\mathbb{E}(\mathcal{W}^2|\mathcal{C})$ denote the conditional expectation of $\mathcal{W}^2$ given $\mathcal{C}$. Consider the function 
$\zeta(x) = p_\mathcal{C}(x)\mathbb{E}(\mathcal{W}^2|\mathcal{C}=x)$, 
assume its support contains the input samples, $x_i\in\mathrm{supp}(\zeta)$, $i=1,\ldots,m$, and let $S = \operatorname{supp}(\zeta) \cap [\min_i x_i, \max_i x_i]$. 
Then $g(x,(\overline{\gamma},\overline{u},\overline{v}))$ 
satisfies $g''(x,(\overline{\gamma},\overline{u},\overline{v}))=0$ for 
$x\not\in S$ 
and for $x\in S$ it is the solution to the following problem: 
\begin{equation}
  \min_{h\in C^2(S)}
\int_{S}
\frac{(h''(x))^2}{\zeta(x)}~\mathrm{d}x %
\quad 
\text{s.t.}\quad h(x_j)=y_j,\quad j=1,\ldots,m. 
  \label{function_space}
\end{equation}
\end{theorem}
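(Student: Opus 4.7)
The plan is to express the objective $\int\gamma^{2}\,\mathrm{d}\nu$ of \eqref{continuous_add_linear} purely in terms of the second derivative $g''(x,\gamma)$ using \eqref{2nd_derivative}, via a pointwise Cauchy--Schwarz inequality in the breakpoint variable. This converts the minimization over $\gamma\in C(\mathbb{R}^{2})$ into the minimization of $\int_S (h'')^{2}/\zeta$ over $h\in C^{2}(S)$. A separate argument, based on how ReLU breakpoints outside the data convex hull act on the constraint points, then forces the optimal $\overline{\gamma}$ to be supported on $c\in S$ and hence $g''(\cdot,\overline{\gamma})$ to vanish off $S$.

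\textbf{Cauchy--Schwarz in the breakpoint variable.} Conditioning on $\mathcal{C}=c$ factorises the cost as $\int p_{\mathcal{C}}(c)\,\mathbb{E}[\gamma(\mathcal{W},c)^{2}\mid\mathcal{C}=c]\,\mathrm{d}c$, while by \eqref{2nd_derivative} we have $g''(c,\gamma)/p_{\mathcal{C}}(c)=\mathbb{E}[\gamma(\mathcal{W},c)|\mathcal{W}|\mid\mathcal{C}=c]$. Applying Cauchy--Schwarz to the pair $\gamma(\mathcal{W},c)$ and $|\mathcal{W}|$ under the conditional law yields
\[
\mathbb{E}[\gamma(\mathcal{W},c)^{2}\mid\mathcal{C}=c]\ \ge\ \frac{g''(c,\gamma)^{2}}{p_{\mathcal{C}}(c)\,\zeta(c)},
\]
with equality iff $\gamma(\cdot,c)$ is proportional (as a function of $W$) to $|\cdot|$. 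Integrating against $p_{\mathcal{C}}(c)\,\mathrm{d}c$ gives $\int\gamma^{2}\,\mathrm{d}\nu\ \ge\ \int_{\operatorname{supp}(\zeta)} g''(x,\gamma)^{2}/\zeta(x)\,\mathrm{d}x$, and the bound is attained by the explicit choice $\gamma^{*}(W,c)=(h''(c)/\zeta(c))\,|W|$ for any target second derivative $h''$, which in turn produces a $g(\cdot,\gamma^{*})$ whose second derivative coincides with $h''$ on $S$.

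\textbf{Removing breakpoints outside the data range and conclusion.} For any feasible $(\gamma,u,v)$, split $\gamma=\gamma_{\mathrm{in}}+\gamma_{\mathrm{out}}$ according to whether $c\in[\min_i x_i,\max_i x_i]$. For $c>\max_i x_i$ the term $[W(x_j-c)]_{+}$ equals $0$ if $W>0$ and $W(x_j-c)$ if $W<0$, hence is affine in $x_j$; symmetrically for $c<\min_i x_i$. Thus $g(\cdot,\gamma_{\mathrm{out}})$ restricted to $\{x_j\}$ is an affine function of $x_j$, so $(\gamma_{\mathrm{in}},u',v')$ with suitably shifted $(u',v')$ remains feasible while the cost drops strictly unless $\gamma_{\mathrm{out}}=0$ $\nu$-a.e. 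Hence $\overline{\gamma}$ is supported on $c\in[\min_i x_i,\max_i x_i]$; combined with $g''(x,\gamma)=0$ for $x\notin\operatorname{supp}(\zeta)$ (automatic from \eqref{2nd_derivative} since either $p_{\mathcal{C}}(x)=0$ or $\mathbb{E}[\mathcal{W}^{2}\mid\mathcal{C}=x]=0$), this proves $g''(x,\overline{\gamma})=0$ for $x\notin S$. On $S$, the Cauchy--Schwarz step turns the minimization over $(\gamma,u,v)$ into the minimization of $\int_S (h'')^{2}/\zeta$ over $h\in C^{2}(S)$ satisfying $h(x_j)=y_j$, which is exactly \eqref{function_space}; the minimizer corresponds to $h=g(\cdot,(\overline{\gamma},\overline{u},\overline{v}))$.

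\textbf{Main obstacle.} The Cauchy--Schwarz bound is essentially forced by \eqref{2nd_derivative}; the delicate points are (i) justifying the absorption of $\gamma_{\mathrm{out}}$ into $(u,v)$ when $\nu_{\mathcal{C}}$ places significant mass outside $[\min_i x_i,\max_i x_i]$ so that the two problems really share minima and minimizers, and (ii) producing a representative of the optimizer $\gamma^{*}(W,c)=(h''(c)/\zeta(c))|W|$ that is admissible for \eqref{continuous_add_linear}, which is handled by working modulo $\nu$-null sets on $S$ (where $\zeta>0$ so $h''/\zeta$ is continuous) and smoothly extending $\gamma^{*}$ elsewhere.
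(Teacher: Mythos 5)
Your proof is correct and uses the same central idea as the paper: the conditional Cauchy--Schwarz inequality relating $\int\gamma^2\,\mathrm{d}\nu$ to $\int_S (g'')^2/\zeta$, with equality at $\gamma^*(W,c)=h''(c)|W|/\zeta(c)$. The paper's proof wraps this bound inside a proof by contradiction, whereas you state the inequality directly together with its equality condition; that is a cosmetic difference.

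The one genuinely distinct sub-argument is how the vanishing of $g''(\cdot,\overline\gamma)$ outside $[\min_i x_i,\max_i x_i]$ is established. The paper works in function space: it replaces the candidate $h$ by its linear extrapolation beyond the data range and checks that the objective does not increase, hence $h$ must already be linear there. You argue at the $\gamma$ level: split $\gamma$ by breakpoint location, observe that $[W(x_j-c)]_+$ is affine in $x_j$ whenever $c$ lies outside $[\min_i x_i,\max_i x_i]$ (zero branch for $\operatorname{sign}(W)$ matching $\operatorname{sign}(x_j-c)$, linear branch otherwise), and absorb the whole outside contribution into the free linear term $(u,v)$, strictly lowering the cost unless $\gamma_{\mathrm{out}}\equiv 0$ $\nu$-a.e. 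Your version is arguably more direct since it stays with the variable of optimization, but there is a fixable technicality you only gesture at: the sharp-indicator split $\gamma=\gamma_{\mathrm{in}}+\gamma_{\mathrm{out}}$ does not produce admissible competitors in $C(\mathbb{R}^2)$. The remedy is to use a continuous cutoff $\phi_\epsilon(c)$ equal to $1$ on $[\min_i x_i,\max_i x_i]$ and $0$ outside an $\epsilon$-neighborhood; the transition layer still acts affinely on the training inputs, so the conclusion survives in the limit $\epsilon\to 0$. This is a detail, not a conceptual gap.
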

The proof is provided in Appendix~\ref{Proof5}, where we also present the corresponding statement without ASI.

Finally, we discuss the curvature penalty function. 
We provide the proof %
of following propositions in Appendix~\ref{app:set-rho}. 
\begin{proposition}[Curvature penalty function]
\label{proposition:form-rho}
Let $p_{\mathcal{W},\mathcal{B}}$ denote the joint density function of $(\mathcal{W},\mathcal{B})$ and let $\mathcal{C}=-\mathcal{B}/\mathcal{W}$ so that $p_{\mathcal{C}}$ is the breakpoint density. 
Then $\zeta(x) = \mathbb{E}(W^2|C=x)p_\mathcal{C}(x)
=\int_\mathbb{R}|W|^3p_{\mathcal{W},\mathcal{B}}(W,-Wx)~\mathrm{d}W$. 
\end{proposition}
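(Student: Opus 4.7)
The plan is to derive the formula by a change of variables from $(\mathcal{W},\mathcal{B})$ to $(\mathcal{W},\mathcal{C})$, where $\mathcal{C}=-\mathcal{B}/\mathcal{W}$, and then read off the conditional expectation from the resulting joint density. Since the earlier analysis assumes $\mathbb{P}(\mathcal{W}=0)=0$, the random variable $\mathcal{C}$ is almost surely well-defined and the transformation is a diffeomorphism on the relevant domain, so no measure-theoretic subtleties arise.

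First I would set up the change of variables. Consider the map $\Phi\colon (W,B)\mapsto (W,-B/W)$ defined on $\{W\neq 0\}$, with inverse $\Phi^{-1}\colon (W,c)\mapsto (W,-Wc)$. The Jacobian of $\Phi^{-1}$ is
\begin{equation*}
\det\begin{pmatrix} 1 & 0 \\ -c & -W \end{pmatrix}=-W,
\end{equation*}
so its absolute value is $|W|$. By the standard density transformation formula, the joint density of $(\mathcal{W},\mathcal{C})$ is
\begin{equation*}
p_{\mathcal{W},\mathcal{C}}(W,c)=|W|\,p_{\mathcal{W},\mathcal{B}}(W,-Wc).
\end{equation*}

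Next I would integrate out $W$ to obtain the marginal $p_{\mathcal{C}}(c)=\int_{\mathbb R}|W|\,p_{\mathcal{W},\mathcal{B}}(W,-Wc)\,\mathrm{d}W$, and then compute the conditional expectation via
\begin{equation*}
\mathbb{E}(\mathcal{W}^2\mid\mathcal{C}=c)\,p_{\mathcal{C}}(c)=\int_{\mathbb R} W^2\,p_{\mathcal{W},\mathcal{C}}(W,c)\,\mathrm{d}W=\int_{\mathbb R} W^2\,|W|\,p_{\mathcal{W},\mathcal{B}}(W,-Wc)\,\mathrm{d}W=\int_{\mathbb R}|W|^3\,p_{\mathcal{W},\mathcal{B}}(W,-Wc)\,\mathrm{d}W.
\end{equation*}
Evaluating at $c=x$ gives the claimed identity $\zeta(x)=\int_{\mathbb{R}}|W|^3 p_{\mathcal{W},\mathcal{B}}(W,-Wx)\,\mathrm{d}W$.

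There is essentially no hard step here: the only thing to be careful about is that the change of variables is only valid off the null set $\{W=0\}$, which is ruled out by the assumption $\mathbb{P}(\mathcal{W}=0)=0$ used throughout this section. If desired, one can make this rigorous by applying the transformation formula on $\{W>0\}$ and $\{W<0\}$ separately (where $\Phi$ is a bijection with smooth inverse) and summing the contributions; the absolute value $|W|$ in the Jacobian arises precisely from collecting the two pieces.
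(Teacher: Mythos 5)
Your proof is correct and follows essentially the same approach as the paper: a change of variables from $(\mathcal{W},\mathcal{B})$ to $(\mathcal{W},\mathcal{C})$ using the Jacobian $|W|$, then integrating $W^2$ against the resulting joint density. You are slightly more explicit than the paper about computing the Jacobian matrix and about handling the null set $\{W=0\}$, but the substance is identical.
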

We note that if we sample the initial weight and biases from a suitable joint distribution, we can make the curvature penalty $\rho=1/\zeta$ arbitrary:  

\begin{proposition}[Constructing any curvature penalty]
\label{set_rho}
Given any function $\varrho \colon \mathbb{R}\to\mathbb{R}_{>0}$, satisfying $Z=\int_\mathbb{R}\frac{1}{\varrho}<\infty$, if we set the density of $\mathcal{C}$ as $p_\mathcal{C}(x)=\frac{1}{Z}\frac{1}{\varrho(x)}$ and make $\mathcal{W}$ independent of $\mathcal{C}$ with non-vanishing second moment, then 
$(\mathbb{E}(W^2|C=x)p_\mathcal{C}(x))^{-1} = 
(\mathbb{E}(W^2)p_\mathcal{C}(x))^{-1}
\propto \varrho(x)$, $x\in\mathbb{R}$.
\end{proposition}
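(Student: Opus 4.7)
The plan is to prove this by explicit construction of the initialization distribution, since the proposition is an existence statement about realizing an arbitrary curvature penalty, and the equalities claimed in the conclusion are then a direct consequence of independence.

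First I would construct the joint distribution of $(\mathcal{W},\mathcal{B})$ as follows. Normalize $p_\mathcal{C}(x)=\frac{1}{Z\varrho(x)}$, which is a valid probability density on $\mathbb{R}$ because $\varrho>0$ pointwise and $Z<\infty$ by hypothesis. Sample $\mathcal{C}$ from this density, independently sample $\mathcal{W}$ from any distribution with $\mathbb{P}(\mathcal{W}=0)=0$ and $0<\mathbb{E}(\mathcal{W}^2)<\infty$ (e.g.\ a symmetric uniform on a bounded interval bounded away from zero, or a centered Gaussian), and then define $\mathcal{B}=-\mathcal{W}\mathcal{C}$. By construction $(\mathcal{W},\mathcal{B})$ is sub-Gaussian whenever the chosen $\mathcal{W}$ and $\mathcal{C}$ are, so the hypotheses of the earlier results on $\zeta$ are satisfied.

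Next I would verify two facts about this construction. (i) The ratio $-\mathcal{B}/\mathcal{W}$ equals $\mathcal{C}$ almost surely, so the breakpoint random variable has exactly the prescribed marginal density $p_\mathcal{C}$. (ii) Because $\mathcal{W}$ was drawn independently of $\mathcal{C}$, the pair $(\mathcal{W},-\mathcal{B}/\mathcal{W})=(\mathcal{W},\mathcal{C})$ has product law, and hence $\mathcal{W}$ is independent of $\mathcal{C}$. This is the only structural point, and it is immediate from the construction; no change-of-variables Jacobian is needed since we do not need the joint density of $(\mathcal{W},\mathcal{B})$ in closed form.

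Finally I would conclude using Proposition~\ref{proposition:form-rho}, which identifies $\zeta(x)=\mathbb{E}(\mathcal{W}^2\mid\mathcal{C}=x)\,p_\mathcal{C}(x)$. By independence, $\mathbb{E}(\mathcal{W}^2\mid\mathcal{C}=x)=\mathbb{E}(\mathcal{W}^2)$ for $p_\mathcal{C}$-a.e.\ $x$, so
\begin{equation*}
\zeta(x)=\mathbb{E}(\mathcal{W}^2)\,p_\mathcal{C}(x)=\frac{\mathbb{E}(\mathcal{W}^2)}{Z}\cdot\frac{1}{\varrho(x)},
\end{equation*}
and therefore $1/\zeta(x)=\frac{Z}{\mathbb{E}(\mathcal{W}^2)}\,\varrho(x)\propto\varrho(x)$, as required.

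There is no real obstacle here; the content of the proposition is essentially the observation that prescribing the breakpoint density $p_\mathcal{C}$ and decoupling $\mathcal{W}$ from $\mathcal{C}$ gives one full degree of freedom to set $1/\zeta$ pointwise, up to the multiplicative normalization $\mathbb{E}(\mathcal{W}^2)/Z$. The only thing worth being careful about is making sure the constructed $(\mathcal{W},\mathcal{B})$ still fits the sub-Gaussian framework invoked in Theorem~\ref{thm:theorem1} and that $\mathbb{P}(\mathcal{W}=0)=0$, both of which are controlled by the freedom in choosing the marginal of $\mathcal{W}$.
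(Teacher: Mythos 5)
Your proof is correct and takes essentially the same approach as the paper, which simply observes that the construction is already given in the statement (and, in a remark, spells out the sampling recipe $\mathcal{C}\sim p_\mathcal{C}$, $\mathcal{W}$ independent, $\mathcal{B}=-\mathcal{W}\mathcal{C}$). You just make the one-line conclusion $\mathbb{E}(\mathcal{W}^2\mid\mathcal{C}=x)=\mathbb{E}(\mathcal{W}^2)$ explicit.
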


\section{Implicit Bias for Multivariate Regression} 
\label{sec:implicit_bias_multivariate}

In this section we solve the optimization problem \eqref{continuous_version} in the multivariate case. Similar to Section~\ref{sec:implicit_bias_univariate}, we can relax the optimization problem %
to 
\begin{equation}
\begin{aligned}
 \min_{\substack{\alpha\in C(\mathbb{R}^d\times \mathbb{R}),\\ \mathbf{u}\in\mathbb{R}^{d}, v\in\mathbb{R}}}\ & \int_{\mathbb{R}^d\times \mathbb{R}} \alpha^2(\mathbf{W}^{(1)},b)~\mathrm{d}\mu(\mathbf{W}^{(1)},b)\\
 \textup{subject to}\ & \int_{\mathbb{R}^d\times \mathbb{R}} \alpha(\mathbf{W}^{(1)},b)[\langle \mathbf{W}^{(1)},\mathbf{x}_j\rangle+b]_+~\mathrm{d}\mu(\mathbf{W}^{(1)},b)+\langle \mathbf{u},\mathbf{x}_j \rangle +v=y_j,\ j=1,\ldots,M.
\end{aligned}
\label{continuous_version_multi_relax}
\end{equation}
Let $\mathcal{U}=\|\bm{\mathcal{W}}\|_2$, $\bm{\mathcal{V}}=\bm{\mathcal{W}}/\|\bm{\mathcal{W}}\|_2$ and $\mathcal{C}=-\mathcal{B}/\|\bm{\mathcal{W}}\|_2$. 
Let $\nu$ denote the distribution of $(\mathcal{U},\bm{\mathcal{V}},\mathcal{C})$ and $\gamma(u,\bm{V},c)=\alpha(u\bm{V},-cu)$. Then, after the change of variables, the optimization problem \eqref{continuous_version_multi_relax} is equivalently expressed as  
\begin{equation}
\begin{aligned}
 \min_{\substack{\alpha\in C(\mathbb{R}^+\times\mathbb{S}^{d-1}\times\mathbb{R}),\\ \mathbf{u}\in\mathbb{R}^{d}, v\in\mathbb{R}}}\ & \int_{\mathbb{R}^+\times\mathbb{S}^{d-1}\times\mathbb{R}} \gamma^2(u,\bm{V},c)~\mathrm{d}\nu(u,\bm{V},c)\\
 \textup{subject to}\ & \int_{\mathbb{R}^+\times\mathbb{S}^{d-1}\times\mathbb{R}} \gamma(u,\bm{V},c)\cdot u\cdot[\langle \bm{V},\mathbf{x}_j\rangle-c]_+~\mathrm{d}\nu(u,\bm{V},c)+\langle \mathbf{u},\mathbf{x}_j \rangle +v=y_j,\ j=1,\ldots,M.
\end{aligned}
\label{continuous_version_multi_relax_gamma}
\end{equation}
Define the output of the infinite-width network by 
\begin{equation*}
    g(\mathbf{x},(\gamma,\mathbf{u},v)) = \int_{\mathbb{R}^+\times\mathbb{S}^{d-1}\times\mathbb{R}} \gamma(u,\bm{V},c)\cdot u\cdot[\langle \bm{V},\mathbf{x}\rangle-c]_+~\mathrm{d}\nu(u,\bm{V},c)+\langle \mathbf{u},\mathbf{x} \rangle +v.
\end{equation*} 
Then the Laplacian $\Delta g(\mathbf{x},(\gamma,\mathbf{u},v))=\sum_{i=1}^d \partial^2_{x_i}g(\mathbf{x},(\gamma,\mathbf{u},v))$ is given by
\begin{equation}
  \begin{aligned}
    \Delta g(\mathbf{x},(\gamma,\mathbf{u},v))
    &=\int_{\mathbb{R}^+\times\mathbb{S}^{d-1}\times\mathbb{R}} \gamma(u,\bm{V},c)\cdot u\cdot~\delta(\langle \bm{V},\mathbf{x}\rangle-c)~\mathrm{d}\nu(u,\bm{V},c)\\
    &=\int_{\mathbb{S}^{d-1}\times\mathbb{R}}\left(\int_{\mathbb{R}^+}\gamma(u,\bm{V},c)\cdot u~\mathrm{d}\nu_{\mathcal{U}|\bm{\mathcal{V}}=\bm{V},\mathcal{C}=c}(u)\right)\delta(\langle \bm{V},\mathbf{x}\rangle-c)~\mathrm{d}\nu_{\bm{\mathcal{V}},\mathcal{C}}(\bm{V},c),
  \end{aligned}
  \label{2nd_derivative-long_multi_partial}
\end{equation}
where $\nu_{\bm{\mathcal{V}},\mathcal{C}}$ denotes the joint distribution of $(\bm{\mathcal{V}},\mathcal{C})$, and $\nu_{\mathcal{U}|\bm{\mathcal{V}}=\bm{V},\mathcal{C}=c}$ the conditional distribution of $\mathcal{U}$ given $\bm{\mathcal{V}}=\bm{V}$ and $\mathcal{C}=c$. 
Let $\nu_{C|\bm{\mathcal{V}}=\bm{V}}$ denote the conditional distribution of $\mathcal{C}$ given $\bm{\mathcal{V}}=\bm{V}$. Suppose $\nu_{C|\bm{\mathcal{V}}=\bm{V}}$ has a density function $p_{\mathcal{C}|\bm{\mathcal{V}}=\bm{V}}(c)$. Define 
\begin{equation}
    \kappa(\bm{V},c)=\int_{\mathbb{R}^+}\gamma(u,\bm{V},c)\cdot u~\mathrm{d}\nu_{\mathcal{U}|\bm{\mathcal{V}}=\bm{V},\mathcal{C}=c}(u).
    \label{definition_of_kappa}
\end{equation} 
Then \eqref{2nd_derivative-long_multi_partial} becomes 
\begin{equation}
  \begin{aligned}
    \Delta g(\mathbf{x},(\alpha,\mathbf{u},v))
    &=\int_{\mathbb{S}^{d-1}\times\mathbb{R}}\kappa(\bm{V},c)~\delta(\langle \bm{V},\mathbf{x}\rangle-c)~\mathrm{d}\nu_{\bm{\mathcal{V}},\mathcal{C}}(\bm{V},c)\\
    &=\int_{\mathbb{S}^{d-1}}\left(\int_{\mathbb{R}}\kappa(\bm{V},c) ~  
    \delta(\langle \bm{V},\mathbf{x}\rangle-c) p_{\mathcal{C}|\bm{\mathcal{V}}=\bm{V}}(c)\mathrm{d}c\right)~\mathrm{d}\nu_{\bm{\mathcal{V}}}(\bm{V})\\
    &=\int_{\mathbb{S}^{d-1}}\kappa(\bm{V},\langle \bm{V},\mathbf{x}\rangle) ~  
     p_{\mathcal{C}|\bm{\mathcal{V}}=\bm{V}}(\langle \bm{V},\mathbf{x}\rangle)~\mathrm{d}\nu_{\bm{\mathcal{V}}}(\bm{V}),
    \end{aligned}
  \label{2nd_derivative-long_multi}
\end{equation}
where $\nu_{\bm{\mathcal{V}}}$ denotes the distribution of $\bm{\mathcal{V}}$.
Assume that $\nu_{\bm{\mathcal{V}}}$ has a density function $p_{\bm{\mathcal{V}}}(\bm{V})$ with respect to the spherical measure $\sigma^{d-1}$. Then \eqref{2nd_derivative-long_multi} becomes
\begin{equation}
  \begin{aligned}
    \Delta g(\mathbf{x},(\alpha,\mathbf{u},v))
    &=\int_{\mathbb{S}^{d-1}}\kappa(\bm{V},\langle \bm{V},\mathbf{x}\rangle) ~  
     p_{\mathcal{C}|\bm{\mathcal{V}}=\bm{V}}(\langle \bm{V},\mathbf{x}\rangle)p_{\bm{\mathcal{V}}}(\bm{V})~\mathrm{d}\sigma^{d-1}(\bm{V}). 
    \end{aligned}
  \label{2nd_derivative-long_multi_sphere}
\end{equation}
Now, defining 
\begin{equation}
    \beta(\bm{V},c)=\kappa(\bm{V},c)~p_{\mathcal{C}|\bm{\mathcal{V}}=\bm{V}}(c)~p_{\bm{\mathcal{V}}}(\bm{V}),
    \label{definition_of_beta}
\end{equation}
we observe that 
\begin{equation}
  \begin{aligned}
    \Delta g(\mathbf{x},(\alpha,\mathbf{u},v))
    &=\int_{\mathbb{S}^{d-1}}\beta(\bm{V},\langle \bm{V},\mathbf{x}\rangle)~\mathrm{d}\bm{V}\\
    &=\mathcal{R}^*\{\beta\}(\mathbf{x}).
    \end{aligned}
  \label{2nd_derivative-long_multi_sphere_beta}
\end{equation}
The right-hand side of \eqref{2nd_derivative-long_multi_sphere_beta} is precisely the dual Radon transform of $\beta$. 
Let $\beta=\beta^++\beta^-$ be the even–odd decomposition of $\beta$, where $\beta^+$ is even and $\beta^-$ is odd, i.e., $\beta^+(\bm{V},c)=\beta^+(-\bm{V},-c)$ and $\beta^-(\bm{V},c)=-\beta^-(-\bm{V},-c)$ for all $(\bm{V},c)\in\mathbb{S}^{d-1}\times \mathbb{R}$. 
Since the dual Radon transform annihilates odd functions, we have $\Delta g(\mathbf{x},(\alpha,\mathbf{u},v))
    =\int_{\mathbb{S}^{d-1}}\beta^+(\bm{V},\langle \bm{V},\mathbf{x}\rangle)~\mathrm{d}\bm{V}$. \cite{ongie2019function} observed that $\beta^+$ can be recovered from $\Delta g$ by using the inversion formula of the dual Radon transform. According to \citet[Lemma~3]{ongie2019function},
\begin{equation}
    \beta^+=-\frac{1}{2(2\pi)^{d-1}}\mathcal{R}\{(-\Delta)^{(d+1)/2}g(\cdot,\alpha)\},
    \label{beta}
\end{equation} 
where $\mathcal{R}$ is the Radon transform which is defined by
\begin{equation*}
    \mathcal{R}\{f\}(\bm{\omega},b)\coloneqq \int_{\langle{\bm\omega},\mathbf{x}\rangle=b}f(\mathbf{x})\mathrm{d}s(\mathbf{x}), \quad (\bm{\omega},b)\in\mathbb{S}^{d-1}\times \mathbb{R},
\end{equation*}
where $\mathrm{d}s(\mathbf{x})$ represents integration with respect to the $(d-1)$-dimensional surface measure on the hyperplane $\langle{\bm\omega},\mathbf{x}\rangle=b$. 
The fractional power of the negative Laplacian $(-\Delta)^{(d+1)/2}$ in \eqref{beta} is the operator defined in Fourier domain by 
\begin{equation*}
  \widehat{(-\Delta)^{(d+1)/2}f}(\bm\xi)=\|\bm\xi\|^{d+1} \widehat f(\bm\xi).
\end{equation*}
When $d+1$ is a even number, $(-\Delta)^{(d+1)/2}$ is the same as applying the negative Laplacian $(d+1)/2$ times. When $d+1$ is odd, it is a pseudo-differential operator given by convolution with a singular kernel \citep[see][]{kwasnicki2017ten}. %
Then according to \eqref{beta} and the definition of $\beta$, we have
\begin{equation}
\begin{aligned}
    &\mathcal{R}\{(-\Delta)^{(d+1)/2}g(\cdot,\alpha)\}(\bm{V},c)-2(2\pi)^{d-1}\beta^-\\
    =&-2(2\pi)^{d-1}\kappa(\bm{V},c)p_{\mathcal{C}|\bm{\mathcal{V}}=\bm{V}}(c)p_{\bm{\mathcal{V}}}(\bm{V})\\
    =&-2(2\pi)^{d-1}p_{\mathcal{C}|\bm{\mathcal{V}}=\bm{V}}(c)p_{\bm{\mathcal{V}}}(\bm{V})\int_{\mathbb{R}^+}\gamma(u,\bm{V},c)\cdot u~\mathrm{d}\nu_{\mathcal{U}|\bm{\mathcal{V}}=\bm{V},\mathcal{C}=c}(u). 
\end{aligned}
    \label{alpha_multi}
\end{equation} 
From the above equation, we show how $\gamma(u,\bm{V},c)$ is characterized by the network output function, which allows us to study the solution of \eqref{continuous_version_multi_relax_gamma} in function space. The following theorem generalizes Theorem~\ref{theorem_func} to the multivariate case. 

\begin{theorem}[Implicit bias in function space for multivariate regression]
  \label{theorem_func_multi_dim}
Assume that (1) $\bm{\mathcal{W}}$ is a random vector with $\mathbb{P}(\|\bm{\mathcal{W}}\|=0)=0$ and $\mathcal{B}$ is a random variable; (2) the distribution of $(\bm{\mathcal{W}},\mathcal{B})$ is symmetric, i.e., $(\bm{\mathcal{W}},\mathcal{B})$ and $(-\bm{\mathcal{W}},-\mathcal{B})$ have the same distribution; (3) $\|\bm{\mathcal{W}}\|_2$ and $\mathcal{B}$ both have finite second moments. Let $\mathcal{U}=\|\bm{\mathcal{W}}\|_2$, $\bm{\mathcal{V}}=\bm{\mathcal{W}}/\|\bm{\mathcal{W}}\|_2$ and $\mathcal{C}=-\mathcal{B}/\|\bm{\mathcal{W}}\|_2$.
Let $\nu$ denote the distribution of $(\mathcal{U},\bm{\mathcal{V}},\mathcal{C})$. %
Suppose $(\overline{\gamma},\overline{ \mathbf{u}},\overline{v})$ is the solution of \eqref{continuous_version_multi_relax_gamma}, 
and assume that \eqref{continuous_version_multi_relax_gamma} is feasible, which means 
\[\int_{\mathbb{R}^+\times\mathbb{S}^{d-1}\times\mathbb{R}} \overline{\gamma}^2(u,\bm{V},c)~\mathrm{d}\nu(u,\bm{V},c)<+\infty.\]
Consider the corresponding output function 
\begin{equation}
  g(\mathbf{x},(\overline{\gamma},\overline{ \mathbf{u}},\overline{v})) = \int_{\mathbb{R}^+\times\mathbb{S}^{d-1}\times\mathbb{R}} \overline{\gamma}(u,\bm{V},c)\cdot u\cdot[\langle \bm{V},\mathbf{x}\rangle-c]_+~\mathrm{d}\nu(u,\bm{V},c)+\langle \overline{ \mathbf{u}},\mathbf{x} \rangle +\overline{v}.
  \label{func_g_multi_dim}
\end{equation}
Let $\nu_{\bm{\mathcal{V}}}$ denote the marginal distribution of $\mathcal{C}$ and assume it has a density function $p_{\bm{\mathcal{V}}}(\bm{V})$. Let $\nu_{C|\bm{\mathcal{V}}=\bm{V}}$ denote the conditional distribution of $\mathcal{C}$ given $\bm{\mathcal{V}}=\bm{V}$ and assume it has a density function $p_{\mathcal{C}|\bm{\mathcal{V}}=\bm{V}}(c)$. Let $\mathbb{E}(\mathcal{U}^2|\bm{\mathcal{V}}=\bm{V},\mathcal{C}=c)$ denote the conditional expectation of $\mathcal{U}^2$ given $\mathcal{V}$ and $\mathcal{C}$. Consider the following function $\zeta\colon \mathbb{S}^{d-1}\times \mathbb{R} \to \mathbb{R}$, 
\begin{equation}
    \zeta(\bm{V},c) = p_{\mathcal{C}|\bm{\mathcal{V}}=\bm{V}}(c)~p_{\bm{\mathcal{V}}}(\bm{V})\mathbb{E}(\mathcal{U}^2|\bm{\mathcal{V}}=\bm{V},\mathcal{C}=c). 
    \label{definition_of_zeta}
\end{equation}
Then $g(\mathbf{x},(\overline{\gamma},\overline{ \mathbf{u}},\overline{v}))$ 
is the solution of the following problem: 
\begin{equation}
\begin{aligned}
 \min_{h\in \operatorname{Lip}(\mathbb{R}^d)\cap C(\mathbb{R}^d)}\quad & \int_{\operatorname{supp}(\zeta)} \frac{\left({\mathcal{R}\{(-\Delta)^{(d+1)/2}h\}(\bm{V},c)}\right)^2}{\zeta(\bm{V},c)}~\mathrm{d}\sigma^{d-1}(\bm{V})\mathrm{d}c\\
 \textup{subject to}\quad & h(\mathbf{x}_j)=y_j,\quad j=1,\ldots,M, \\
 & {\mathcal{R}\{(-\Delta)^{(d+1)/2}h\}(\bm{V},c)}=0, \ \forall(\bm{V},c)\not\in \operatorname{supp}(\zeta),\\
 &(-\Delta)^{(d+1)/2}h \in L^p(\mathbb{R}^d),\ 1\leq p<d/(d-1), 
\end{aligned}
\label{function_space_multi}
\end{equation}
where $\operatorname{Lip}(\mathbb{R}^d)$ is the space of Lipschitz continuous function on $\mathbb{R}^d$ and $\sigma^{d-1}$ is the spherical measure. 
\end{theorem}
The proof of Theorem \ref{theorem_func_multi_dim} is provided in Appendix \ref{proof_theorem_func_multi_dim}. The optimization problem \eqref{function_space_multi} characterizes the implicit bias of the gradient descent in function space for the multivariate setting. 
\cite{zhang2019type} obtained a characterization in terms of the minimization of a kernel norm in function space, which is also valid for multi-dimensional inputs. In Appendix~\ref{appendix:relation_NTK} we prove the equivalence between the kernel norm minimization and our result in the one-dimensional setting. 
In future work it will be interesting to show that in the multivariate setting, the kernel norm is equivalent to the objective in \eqref{function_space_multi} under appropriate conditions.

To conclude this section, we discuss the function $\zeta$ in the variational problem \eqref{function_space_multi}. The proofs of the following statements are presented in Appendix~\ref{proof_multi_curvature_penalty_function}. 
First we propose an initialization scheme such that $\zeta$ is constant over a bounded region. 
\begin{proposition}[Constant $\zeta$ over a bounded region]
\label{constant_zeta}
If $\bm{\mathcal{W}}$ is sampled uniformly from the unit sphere and $\mathcal{B}$ from a symmetric interval, i.e., $\bm{\mathcal{W}}\sim \mathrm{Unif}(\mathbb{S}^{d-1})$ and $\mathcal{B}\sim \mathrm{Unif}(-a,a)$, then $\zeta(\bm{V},c)$ is constant over $\{(\bm{V},c):|c|\leq a\}$ and $\zeta(\bm{V},c)=0$ for $|c|>a$.
\end{proposition}
Now we discuss the form of $\zeta$ under certain conditions. 
\begin{proposition}[Penalty function $\zeta$]
\label{proposition:form-rho-2D}
Let $p_{\bm{\mathcal{W}},\mathcal{B}}$ denote the joint density function of $(\bm{\mathcal{W}},\mathcal{B})$ and let $\mathcal{U}=\|\bm{\mathcal{W}}\|_2$, $\bm{\mathcal{V}}=\bm{\mathcal{W}}/\|\bm{\mathcal{W}}\|_2$ and $\mathcal{C}=-\mathcal{B}/\|\bm{\mathcal{W}}\|_2$.
Then $\zeta(\bm{V},c) = p_{\mathcal{C}|\bm{\mathcal{V}}=\bm{V}}(c)~p_{\bm{\mathcal{V}}}(\bm{V})\mathbb{E}(\mathcal{U}^2|\bm{\mathcal{V}}=\bm{V},\mathcal{C}=c)=\int_\mathbb{R}u^{d+2}p_{\bm{\mathcal{W}},\mathcal{B}}(u\bm{V},-uc)~\mathrm{d}u $. 
\end{proposition}
Using the above result we compute the explicit form of $\zeta$ for Gaussian initialization. %

\begin{theorem}[Explicit form of $\zeta$ for Gaussian initialization]
\label{proposition:explicit-rho-gaussian-2d}
\mbox{} 
Assume that $\bm{\mathcal{W}}$ and $\mathcal{B}$ are independent, $\bm{\mathcal{W}}\sim \mathcal{N}(0, \sigma_w^2\bm{I}_d)$ and $\mathcal{B}\sim \mathcal{N}(0, \sigma_b^2)$. Then 
$\zeta$ is given by 
\begin{equation*}
\zeta(\bm{V},c)=\frac{\sigma_w^3\sigma_b^{d+2}}{\pi^{(d+1)/2}\left(\sigma_b^2+c^2\sigma_w^2\right)^{(d+3)/2}}\Gamma(\frac{d+3}{2}). 
\end{equation*}
\end{theorem}

\section{Conclusion} 
We obtained explicit descriptions in function space for the implicit bias of gradient descent in mean squared error regression with wide shallow ReLU networks covering the univariate and multivariate cases. We also presented a generalization to networks with different activation functions and discussed a relaxation related to early stopping and training trajectories in function space. 

In the case of univariate regression, our main result shows that the trained network function interpolates the training data while minimizing a weighted 2-norm of the second derivative with respect to the input. Such functions correspond to spatially adaptive interpolating splines. 
In the case of multivariate regression, our results also characterize the trained network functions.  Under specific parameter initialization schemes, these functions correspond to polyharmonic interpolating splines. 
The spaces of interpolating splines are linear of dimension in the order of the number of data points. Hence, our results imply that, even if the network has many parameters, the complexity of the trained functions will be adjusted to the number of training data points. This can be used to explain why overparametrized networks do not overfit in practice, as the generalization error can be regarded as the precision of the spline interpolation \citep[see, e.g.,][]{wendland2004scattered}. 

\citet{zhang2019type} described the implicit bias of gradient descent as minimizing a RKHS norm from initialization. %
Our result can be regarded as making the RKHS norm explicit, providing an interpretable description of the bias in function space. 
Compared with \citet{zhang2019type}, our results describe the role of the parameter initialization scheme, which determines the curvature penalty function $1/\zeta$. This gives us a clearer picture of how the initialization affects the implicit bias of gradient descent. 
This could be used in order to select a good initialization scheme. 
For instance, one could conduct a pre-assessment of the data to estimate the locations of the input space where the solution should have a high curvature, and choose the parameter initialization accordingly. This is an interesting possibility to experiment with based on our theoretical results. 

Our results can also be interpreted in combination with early stopping. The training trajectory is approximated by a smoothing spline, meaning that the network will filter out high frequencies which are usually associated to noise in the training data. This behaviour is sometimes referred to as a spectral bias \citep{pmlr-v97-rahaman19a}. \hj{\cite{ijcai2021p304} studied spectral bias theoretically and showed that spherical harmonics of low frequency are easier to be learned by over-parameterized neural networks if the input data is uniformly distributed over the unit hypersphere.}

\acks{This project has been supported by ERC Starting Grant 757983, NSF CAREER Grant 2145630, DFG SPP 2298 Grant 464109215.}

\appendix
\newpage 
\section*{Appendix}
The appendix is organized as follows. 
\begin{itemize}[leftmargin=*]
\item 
In Appendix \ref{app:additional-experiments} we illustrate our theoretical results numerically, 
and %
provide details on the numerical implementation. 
\item 
In Appendix \ref{app:additional-comments} we briefly discuss definitions and settings around the parametrization and initialization of neural networks, as well as on the limiting NTK and the linearization of a neural network. 
\item 
In Appendices 
\ref{app:proof-thm1}, 
\hj{
\ref{proof1}, 
}
\ref{Proof2}, 
\ref{appendix:Lee-generalization}, 
\ref{Proof4},  
we provide proofs and supporting results for the results presented in Sections~\ref{sec:main}, 
\ref{parameter}, \ref{Training_only_the_output_layer}, 
and \ref{2.4}. 
\item 
In Appendices~\ref{app:univariate} and~\ref{app:multi-dimensional inputs}, we provide the proofs of the results in Sections~\ref{sec:implicit_bias_univariate} and \ref{sec:implicit_bias_multivariate} for univariate and multivariate regression respectively. 
\item 
In Appendix~\ref{Other_activation}, we prove a corresponding result for activation functions other than ReLU. 
\item 
In Appendix~\ref{Difference_between_solutions_of_variational_problems} we discuss the linear adjustment of the training data and why our result still gives a good description of training with the original data for non-linear target functions. 

\item \hj{In Appendix~\ref{app:skip_connection}, we introduce the network with skip connections and obtain the same implicit bias result without adjusting the training data. 
}
\item 
In Appendix~\ref{appendix:relation_NTK} we show the equivalence between our variational characterization of the implicit bias of gradient descent in function space and the description in terms of a kernel norm minimization problem. 
\item 
In Appendix~\ref{appendix:smoothingspline} we discuss the relation between the gradient descent optimization trajectory and a trajectory of spatially adaptive smoothing splines with decreasing smoothness regularization coefficient which converges to the spatially adaptive interpolating spline. 
\item 
In Appendix~\ref{appendix:splines} we give the explicit form of the solution to our variational problem, i.e., the spatially adaptive interpolating spline, which corresponds to the output function after gradient descent training in the infinite width limit. 
\item 
In Appendix~\ref{app:generalizations} we comment on possible extensions and generalizations of the analysis. 
\end{itemize}

\section{Numerical Illustration of the Theoretical Results}
\label{app:additional-experiments}

\paragraph{Implementation of gradient descent}
Training is implemented as full-batch gradient descent.
In practice we choose the learning rate as follows. We start with a large learning rate and keep decreasing it by half until we observe that the loss function decreases. After that, we start training with the fixed learning rate we found. We observe that the learning rate we found is inversely proportional to the width $n$ of the neural network. This observation is in accord with Theorem~\ref{minimum_weight} with respect to the upper bound of the learning rate in order to converge. 

We note that the implicit bias in parameter space shown in Theorem~\ref{minimum_weight} is independent of the specific step size that is used in the optimization, so long as it is small enough (see Appendix~\ref{proof1}). 
The stopping criterion for training of the neural network is that the change in the training loss in consecutive iterations is less than a pre-specified threshold:  $|L(\theta_{t})-L(\theta_{t-1})|\leq 10^{-8}$. 

We use ASI (see Appendix \ref{app:ASI}) at initialization. 
Then the initial output function of the network is $f(\cdot,\theta_0)\equiv 0$. %
Hence in the figures the network output function is actually equal to the difference from initialization. 

For the comparison of the functions $f(\cdot,\theta^\ast)$ and $g^\ast$, 
the infinity norm $\|f(\cdot,\theta^\ast)- g^\ast\|_\infty$ is computed over a discretization of $[-\max_i\|\mathbf{x}_i\|_2,\max_i\|\mathbf{x}_i\|_2]^d$.

\paragraph{Implementation of numerical solutions to the variational problem}
For univariate regression, the variational problem for cubic splines can be solved explicitly as described in Appendix~\ref{appendix:splines}. 
For a general non-constant curvature penalty function $1/\zeta$, we can obtain a numerical solution to problem \eqref{function_space} as follows. 
First we discretize the interval $[-I,I]$ evenly with points $x_j=-I+2jI/N$, $j=0,\ldots,N$. 
For simplicity we suppose that the $M$ input training data points are among these grid points, and we denote them by $x_{j_1},\ldots, x_{j_M}$. 
Then we initialize $f(x_j)=0$ for $x_j$ not in the training data (to be optimized) and $f(x_{j_i})=y_i$ (fixed values during optimization). 
We use the central difference to approximate the second derivative, $f''(x_j) = \frac{f(x_{j+1})-2f(x_{j})+f(x_{j-1})}{h^2}$, where $h = |x_{j+1}-x_j|$. 
Then the objective function in \eqref{function_space} is approximated by $\sum_{j=1}^{N-1} \frac{1}{\zeta(x_j)%
}\left(\frac{f(x_{j+1})-2f(x_{j})+f(x_{j-1})}{h^2}\right)^2$. 
This is a quadratic problem in $f(x_j)$, $j\in\{1,\ldots, N\}\setminus\{j_1,\ldots, j_M\}$. If we equate the gradient to zero, we obtain a linear system. 
The solution can be written in closed form in terms of the inverse of a design matrix. 
As with any linear regression problem, in practice we may still prefer to use an iterative approach to obtain a numerical solution. 
In our experiment, we discretize the interval $[-2,2]$ into $200$ pieces and use conjugate gradient descent for solving the linear system.  

For multivariate regression, it is not straightforward to numerically solve \eqref{gen_multi_dim}. Hence we numerically solve \eqref{continuous_version_multi_relax} instead. We discretize the interval $[-I_w,I_w]$ evenly with points $w_j=-I_w+2jI_w/n_w$, $j=0,\ldots,n_w$ and the interval $[-I_b,I_b]$ evenly with points $b_j=-I_b+2jI_b/n_b$, $j=0,\ldots,n_b$. Let $\alpha_{(i_1,\ldots,i_d,j)}=\alpha((w_{i_1},\ldots,w_{i_d}),b_j)$, $i_k=0,\ldots,n_w$, $j=0,\ldots,n_b$. We use numerical integration to approximate the objective and constraints of \eqref{continuous_version_multi_relax} and then get an optimization problem with search variables $\alpha_{(i_1,\ldots,i_d,j)}$. This is a quadratic programming problem which can be solved using an internal point method.

\paragraph{Gradient descent training and variational problem} 
To illustrate Theorem~\ref{thm:theorem1} across different initialization procedures, 
in Figures~\ref{error_against_n_gaussian}
and~\ref{error_against_n_gaussian0.1} we show analogous experiments to those in the left panel of Figure~\ref{fig:nr_neurons}, but using two types of Gaussian initialization instead of the uniform initialization. 
As we already observed in the right panel of Figure~\ref{fig:different_rho}, 
here the effect of the curvature penalty function is also visible. In portions of the input space where $\zeta$ is peaked, the solution function can have a high curvature, and, conversely, in portions of the input space where $\zeta$ takes small values, the solution function has a small second derivative and is more linear. 

To verify that the results are stable over different data sets, 
in Figure~\ref{fig:nr_neurons_mre_samples} we show an experiment similar to that of Figure~\ref{fig:nr_neurons}, but for a larger data set.

\begin{figure}
    \centering
\begin{tikzpicture}[x=\textwidth,y=\textwidth, inner sep = 1pt]
\node[above right] at (0,0) {\includegraphics[width=0.49\textwidth]{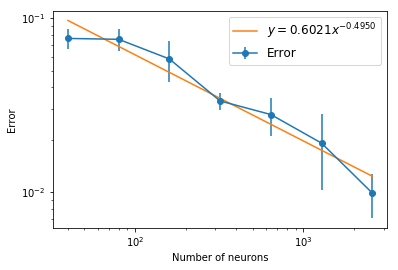}};
\node[above right] at (.5, .175) {\includegraphics[width=0.25\textwidth]{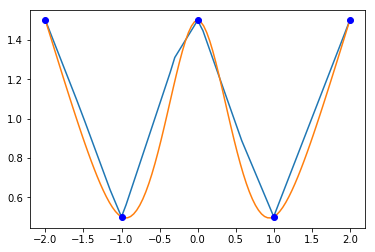}};
\node[above right] at (.75, .175) {\includegraphics[width=0.25\textwidth]{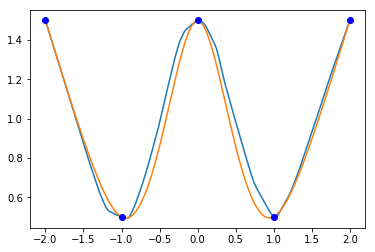}};
\node[above right] at (.5, 0) {\includegraphics[width=0.25\textwidth]{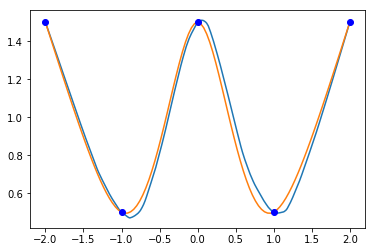}};
\node[above right] at (.75, 0) {\includegraphics[width=0.25\textwidth]{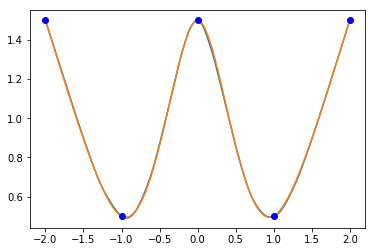}};
\node[above,fill=white] at (0.25, .33) {\small \textsf{Gaussian initialization $\sigma_B^2=1$}};
\node[above right,fill=white] at (.5+.03, .175+.08) {\small $n=20$};
\node[above right,fill=white] at (.75+.03, .175+.08) {\small $n=80$};
\node[above right,fill=white] at (.5+.03, 0+.08) {\small $n=320$};
\node[above right,fill=white] at (.75+.03, 0+.08) {\small $n=1280$};
\end{tikzpicture}
\caption{Illustration of Theorem~\ref{thm:theorem1}. Shown is the error between the output function $f(\cdot,\theta^\ast)$ of the trained neural network and the solution $g^\ast$ to the variational problem \eqref{function_space} against the number of neurons, $n$. 
    Shown is the average over $5$ repetitions, with error bars indicating the standard deviation. 
    Here the training data is fixed, and the parameters were initialized with $W\sim \mathcal{N}(0,1)$ and $B\sim \mathcal{N}(0,1)$. 
    The right panel shows the data (dots), trained network functions (blue) with $20$, $80$, $320$, $1280$ neurons, and the solution (orange) to the variational problem. 
}
\label{error_against_n_gaussian}
\end{figure}

\begin{figure}
    \centering
\begin{tikzpicture}[x=\textwidth,y=\textwidth, inner sep = 1pt]
\node[above right] at (0,0) {\includegraphics[width=0.49\textwidth]{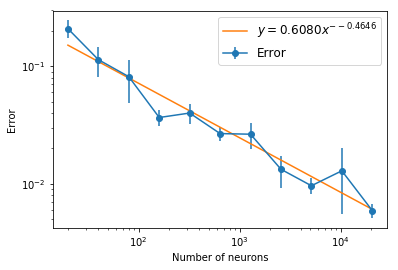}};
\node[above right] at (.5, .175) {\includegraphics[width=0.25\textwidth]{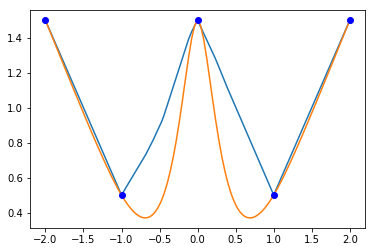}};
\node[above right] at (.75, .175) {\includegraphics[width=0.25\textwidth]{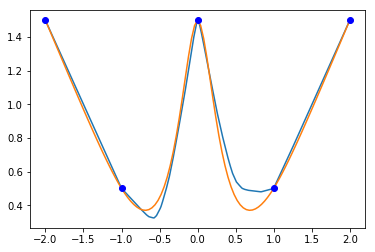}};
\node[above right] at (.5, 0) {\includegraphics[width=0.25\textwidth]{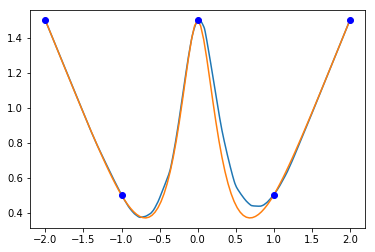}};
\node[above right] at (.75, 0) {\includegraphics[width=0.25\textwidth]{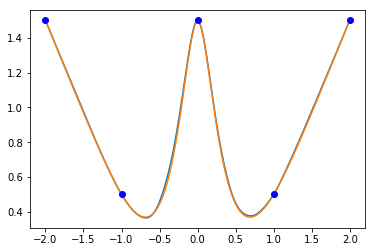}};
\node[above,fill=white] at (0.25, .33) {\small \textsf{Gaussian initialization $\sigma_B^2=0.1$}};
\node[above right,fill=white] at (.5+.03, .175+.08) {\small $n=20$};
\node[above right,fill=white] at (.75+.03, .175+.08) {\small $n=80$};
\node[above right,fill=white] at (.5+.03, 0+.08) {\small $n=320$};
\node[above right,fill=white] at (.75+.03, 0+.08) {\small $n=1280$};
\end{tikzpicture}    
\caption{Illustration of Theorem~\ref{thm:theorem1}. Similar to Figure~\ref{error_against_n_gaussian}, but with a different initialization $\mathcal{W}\sim \mathcal{N}(0,1)$ and $\mathcal{B}\sim \mathcal{N}(0,0.1)$, which gives rise to a curvature penalty function $\zeta$ that is more strongly peaked around $x=0$ (see Figure~\ref{fig:different_rho}). We observe in particular that the solutions are more curvy around $x=0$. 
}
    \label{error_against_n_gaussian0.1}
\end{figure}

\begin{figure}
\centering
\begin{tikzpicture}[x=\textwidth,y=\textwidth, inner sep = 1pt]
\node[above right] at (0,0) {\includegraphics[width=0.49\textwidth]{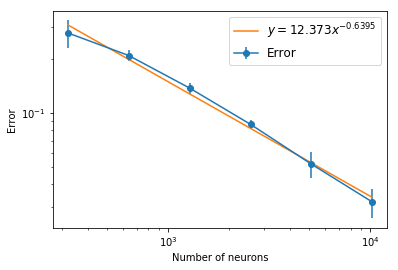}};
\node[above right] at (.5, .175) {\includegraphics[width=0.25\textwidth]{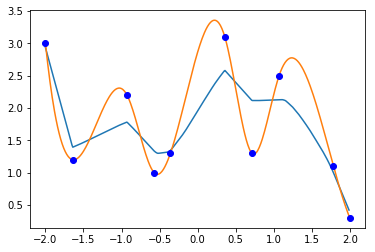}};
\node[above right] at (.75, .175) {\includegraphics[width=0.25\textwidth]{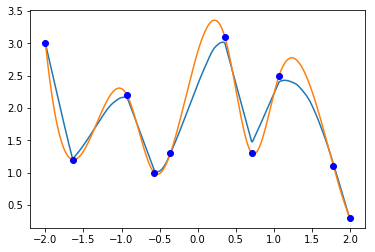}};
\node[above right] at (.5, 0) {\includegraphics[width=0.25\textwidth]{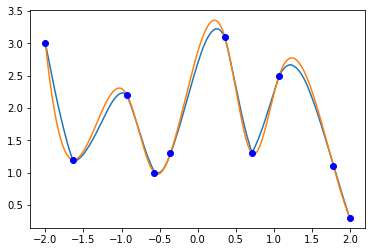}};
\node[above right] at (.75, 0) {\includegraphics[width=0.25\textwidth]{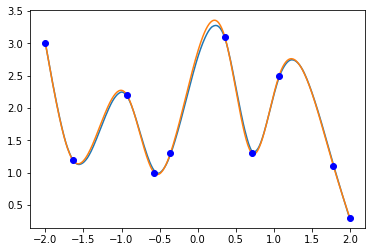}};
\node[above,fill=white] at (0.25, .33) {\small \textsf{Uniform initialization}};
\node[above right,fill=white] at (.5+.03, .175+.03) {\small $n=160$};
\node[above right,fill=white] at (.75+.03, .175+.03) {\small $n=640$};
\node[above right,fill=white] at (.5+.03, 0+.03) {\small $n=2560$};
\node[above right,fill=white] at (.75+.03, 0+.03) {\small $n=10240$};
\end{tikzpicture} 
\caption{Illustration of Theorem~\ref{thm:theorem1}. Similar to Figure~\ref{fig:nr_neurons}, with uniform initialization, but with a larger data set and larger networks. 
    }
\label{fig:nr_neurons_mre_samples}
\end{figure}

\paragraph{Training all layers versus training only the output layer} 
To illustrate Theorem~\ref{th-lin-onlyout}, we conduct the following experiment. We use the same training set as in Figure~\ref{fig:nr_neurons} and use uniform initialization. Starting from the same initial weights, we train the network in two ways. One way is only training the output layer and another way is training all layers of the network. The result is shown in Figure~\ref{fig:two_net}. The left panel plots the error between two trained network functions against the number of neurons $n$. In this experiment the error is of order $n^{-3/2}$, which is even smaller than the upper bound $n^{-1}$ given in Theorem~\ref{th-lin-onlyout}. 
Potentially the bound can be improved. 
The right panel plots two trained network functions with 20, 80, 320, 1280 neurons.

\begin{figure}
\centering
\begin{tikzpicture}[x=\textwidth,y=\textwidth, inner sep = 1pt]
\node[above right] at (0,0) {\includegraphics[width=0.49\textwidth]{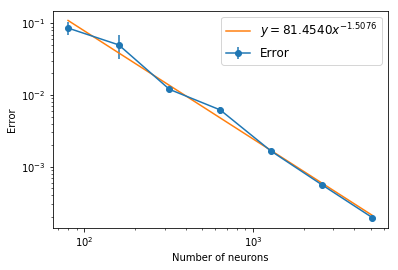}};
\node[above right] at (.5, .175) {\includegraphics[width=0.25\textwidth]{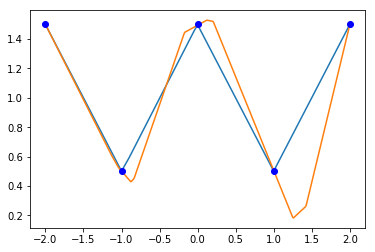}};
\node[above right] at (.75, .175) {\includegraphics[width=0.25\textwidth]{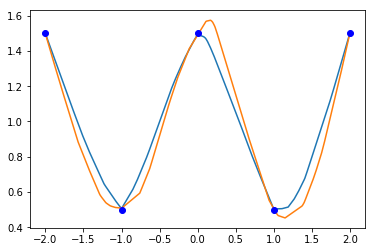}};
\node[above right] at (.5, 0) {\includegraphics[width=0.25\textwidth]{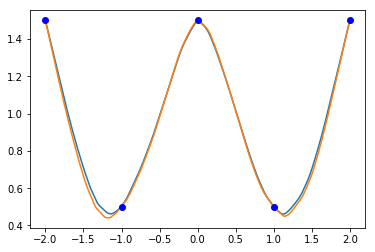}};
\node[above right] at (.75, 0) {\includegraphics[width=0.25\textwidth]{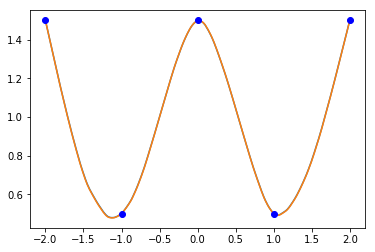}};
\node[above,fill=white] at (0.25, .33) {\small %
\textsf{Only output layer vs all parameters}};
\node[above right,fill=white] at (.5+.03, .175+.08) {\small $n=20$};
\node[above right,fill=white] at (.75+.03, .175+.08) {\small $n=80$};
\node[above right,fill=white] at (.5+.03, 0+.08) {\small $n=320$};
\node[above right,fill=white] at (.75+.03, 0+.08) {\small $n=1280$};
\end{tikzpicture}    
\caption{Illustration of Theorem~\ref{th-lin-onlyout}. Training only output layer vs training all parameters of the network. 
We use uniform initialization and the same training set as in Figure~\ref{fig:nr_neurons}. The left panel plots the error between two trained network functions against the number of neurons $n$. For one network, we only train the output layer while for the another one, we train all layers. The right panel shows the data (dots) and two trained network functions with 20, 80, 320, 1280 neurons. 
}
\label{fig:two_net}
\end{figure}

\paragraph{Effect of linear function on implicit bias} 
In Theorem~\ref{thm:theorem1}, since the variational problem defines functions only up to addition of linear functions, we need to adjust training data by subtracting a specific linear function $ux+v$. %
However, in our previous experiments, we observed that even if we do not adjust the training data, the statement of Theorem~\ref{thm:theorem1} still approximately holds. 
We attribute this to the fact that the linear function can be easily fit by the neural network. 
We provide details about this in Appendix~\ref{Difference_between_solutions_of_variational_problems}. 
In order evaluate the effect of this linear function on the implicit bias, we conduct the following experiment. 
Similar to Figure~\ref{fig:nr_neurons}, we use uniform initialization. We add a linear function $10x+10$ to the training data in Figure~\ref{fig:nr_neurons}. So the training data we use are $\{(-2,-8.5), (-1,0.5),  (0,11.5),  (1,20.5) , (2,31.5)\}$. 
In  Figure~\ref{fig:effect_linear} we show analogous experiments to those in the left panel of Figure~\ref{fig:nr_neurons}. In order to clearly show the difference between the trained network function and the solution to the variational problem, we subtract $10x+10$ from these two functions in the right panel of Figure~\ref{fig:effect_linear}. From the right panel of Figure~\ref{fig:effect_linear}, we see that the difference between plotted two functions is relatively larger than that in Figure~\ref{fig:nr_neurons}. From the left panel of Figure~\ref{fig:effect_linear}, we see that the error between these two functions stops to decrease when number of neurons $n$ is larger than $1280$. 
It means that the limit of trained network function as $n\to\infty$ is slightly different from the solution to the variational problem. 
If we choose bigger $u$ and $v$, we expect that the difference will become larger.

\begin{figure}
\centering
\begin{tikzpicture}[x=\textwidth,y=\textwidth, inner sep = 1pt]
\node[above right] at (0,0) {\includegraphics[width=0.49\textwidth]{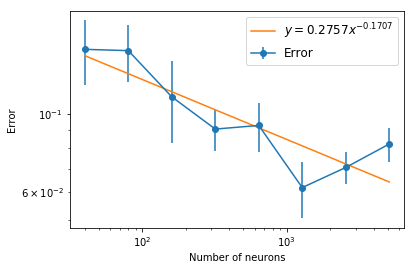}};
\node[above right] at (.5, .175) {\includegraphics[width=0.25\textwidth]{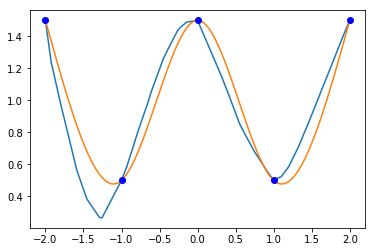}};
\node[above right] at (.75, .175) {\includegraphics[width=0.25\textwidth]{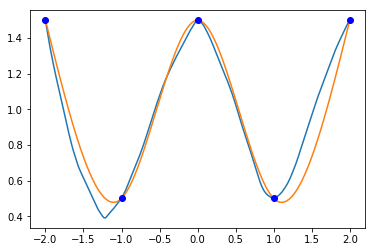}};
\node[above right] at (.5, 0) {\includegraphics[width=0.25\textwidth]{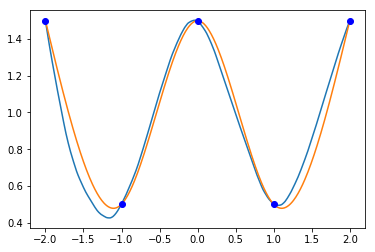}};
\node[above right] at (.75, 0) {\includegraphics[width=0.25\textwidth]{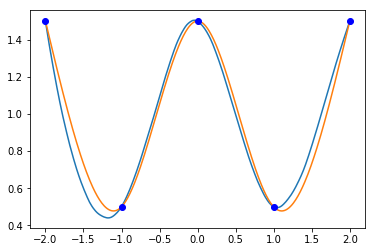}};
\node[above,fill=white] at (0.25, .33) {\small %
\textsf{Severely unadjusted data}};
\node[above right,fill=white] at (.5+.03, .175+.08) {\small $n=80$};
\node[above right,fill=white] at (.75+.03, .175+.08) {\small $n=320$};
\node[above right,fill=white] at (.5+.03, 0+.08) {\small $n=1280$};
\node[above right,fill=white] at (.75+.03, 0+.08) {\small $n=5120$};
\end{tikzpicture}    
\caption{
Effect of not adjusting the data. 
We use uniform initialization and add a linear function $10x+10$ to the training data of Figure~\ref{fig:nr_neurons}. 
In order to clearly show the difference between trained network function and the solution to the variational problem, we subtract $10x+10$ from these two functions in the right panel. 
In the right panel we see that if we ignore $u$ and $v$ in the variational problem \eqref{continuous_add_linear}, the solution is slightly different from \eqref{function_space}. 
    }
\label{fig:effect_linear}
\end{figure}

\paragraph{Experiments for two-dimensional regression problems}
We illustrate Theorem~\ref{thm:theorem_multi} numerically in Figure~\ref{error_against_n_2D}. 
We conduct experiments similar to Figure~\ref{fig:nr_neurons} and Figure~\ref{error_against_n_gaussian} for the bivariate case. 
The initialization used in Figure~\ref{error_against_n_2D} is $\bm{\mathcal{W}}\sim U(\mathbb{S}^1)$ and $\mathcal{B}\sim U(-2,2)$, thus we can use Theorem~\ref{closed_form_sol} to exactly compute the solution to the variational problem \eqref{gen_multi_dim}. 
In close agreement with the theory, the solution to the variational problem captures the solution of gradient descent training uniformly with error of order $n^{-1/2}$. %

To verify that the results are stable over different data sets, 
in Figure~\ref{error_against_n_2D_more_sample} we show an experiment similar to that of Figure~\ref{error_against_n_2D}, but for a larger data set.

\begin{figure}
    \centering
\begin{tikzpicture}[x=\textwidth,y=\textwidth, inner sep = 1pt]
\node[above right] at (0,0) {\includegraphics[width=0.49\textwidth]{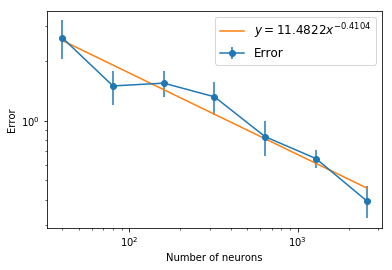}};
\node[above right] at (.5, .175) {\includegraphics[width=0.25\textwidth]{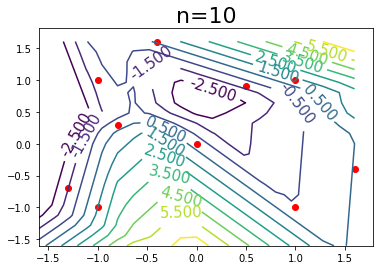}};
\node[above right] at (.75, .175) {\includegraphics[width=0.25\textwidth]{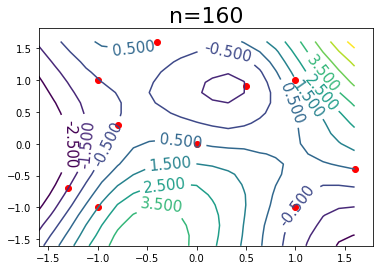}};
\node[above right] at (.5, 0) {\includegraphics[width=0.25\textwidth]{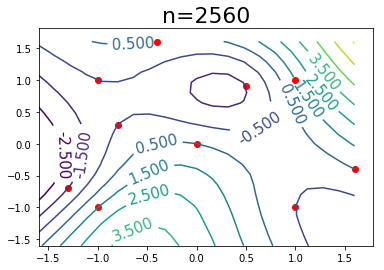}};
\node[above right] at (.75, 0) {\includegraphics[width=0.25\textwidth]{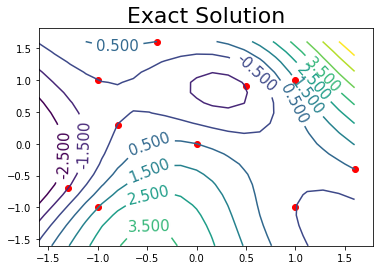}};
\end{tikzpicture}
\caption{Illustration of Theorem~\ref{thm:theorem_multi}. Similar to Figure~\ref{error_against_n_2D}, with the same initialization, but with a larger data set. 
}
\label{error_against_n_2D_more_sample}
\end{figure}
To illustrate Theorem~\ref{thm:theorem_multi} across different initialization procedures, %
in Figures~\ref{error_against_n_gaussian_2D} and~\ref{error_against_n_gaussian0.1_2D} we show analogous experiments to Figure~\ref{error_against_n_2D}, but using Gaussian initialization instead. 
The initialization used in Figure~\ref{error_against_n_gaussian_2D} is $\bm{\mathcal{W}}\sim \mathcal{N}(0,I_d)$ and $\mathcal{B}\sim \mathcal{N}(0,1)$, and the initialization used in Figure~\ref{error_against_n_gaussian0.1_2D} is $\bm{\mathcal{W}}\sim \mathcal{N}(0,I_d)$ and $\mathcal{B}\sim \mathcal{N}(0,0.1)$. 
So we can use Theorem~\ref{proposition:explicit-rho-gaussian-2d} to exactly compute the curvature penalty function and solve the variational problem \eqref{gen_multi_dim} numerically. 
\begin{figure}
    \centering
\begin{tikzpicture}[x=\textwidth,y=\textwidth, inner sep = 1pt]
\node[above right] at (0,0) {\includegraphics[width=0.49\textwidth]{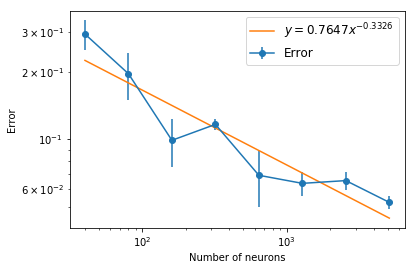}};
\node[above right] at (.5, .175) {\includegraphics[width=0.25\textwidth]{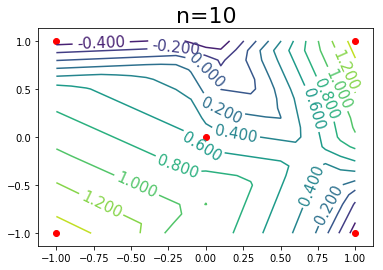}};
\node[above right] at (.75, .175) {\includegraphics[width=0.25\textwidth]{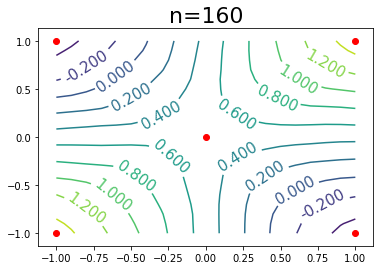}};
\node[above right] at (.5, 0) {\includegraphics[width=0.25\textwidth]{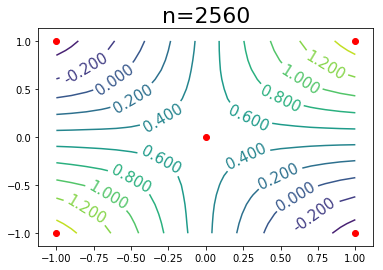}};
\node[above right] at (.75, 0) {\includegraphics[width=0.25\textwidth]{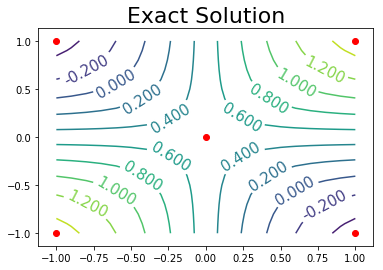}};
\end{tikzpicture}
\caption{Illustration of Theorem~\ref{thm:theorem_multi}. Similar to Figure~\ref{error_against_n_2D}, but with the Gaussian initialization $\bm{\mathcal{W}}\sim \mathcal{N}(0,I_d)$ and $\mathcal{B}\sim \mathcal{N}(0,1)$.
}
\label{error_against_n_gaussian_2D}
\end{figure}

\begin{figure}
    \centering
\begin{tikzpicture}[x=\textwidth,y=\textwidth, inner sep = 1pt]
\node[above right] at (0,0) {\includegraphics[width=0.49\textwidth]{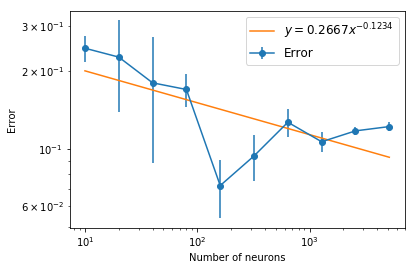}};
\node[above right] at (.5, .175) {\includegraphics[width=0.25\textwidth]{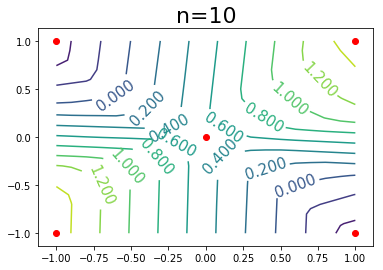}};
\node[above right] at (.75, .175) {\includegraphics[width=0.25\textwidth]{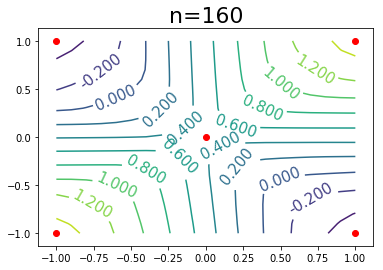}};
\node[above right] at (.5, 0) {\includegraphics[width=0.25\textwidth]{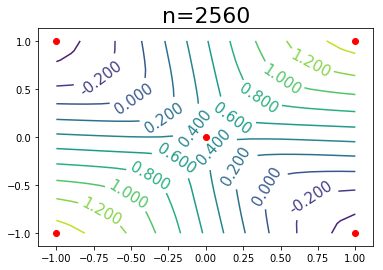}};
\node[above right] at (.75, 0) {\includegraphics[width=0.25\textwidth]{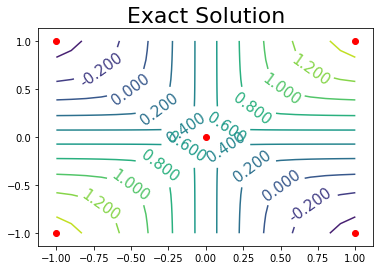}};
\end{tikzpicture}
\caption{Illustration of Theorem~\ref{thm:theorem_multi}. Similar to Figure~\ref{error_against_n_2D}, but with the Gaussian initialization $\bm{\mathcal{W}}\sim N(0,I_d)$ and $\mathcal{B}\sim N(0,0.1)$. Because of the linear adjustment, the exact solution of the variational problem \eqref{gen_multi_dim} is slightly different from the network output with a large number of hidden neurons.
}
\label{error_against_n_gaussian0.1_2D}
\end{figure}

\section{Additional Background on the NTK, Initialization, and Parametrization}
\label{app:additional-comments}

In this appendix we provide a few additional details on the NTK, ASI initialization, standard vs NTK parametrization, and discuss the difference between our results and weight norm minimization. 

\subsection{NTK Convergence and Positive-definiteness}
\label{app:NTK-conv-pos}

The convergence of the empirical NTK to a deterministic limiting NTK as the width of the network tends to infinity and the positive-definiteness of this limiting kernel can be ensured whenever the neural network converges to a Gaussian process. 
The arguments from \cite{jacot2018neural} to prove convergence and positive definiteness hold in this case. 
As they mention, the limiting NTK only depends on the choice of the network activation function, the depth of the network, and the variance of the parameters at initialization. 
They prove positive definiteness when the input data is supported on a sphere. 
More generally, positive definiteness can be proved based on the structure of the NTK as a covariance matrix. 
Let $\|f\|_p^2 = \mathbb{E}_{\mathbf{x}\sim p}[f(\mathbf{x})^T f(\mathbf{x})]$, where $p$ denotes the distribution of inputs. 
The NTK is positive definite when the span of the partial derivatives $\partial_{\theta_i} f(\cdot, \theta)$, $i=1,\ldots, d$, becomes dense in function space with respect to $\|\cdot\|_p$ as the width of the network tends to infinity \citep{jacot2018neural}. 
For a finite data set $\mathbf{x}_1,\ldots, \mathbf{x}_M$, positive definiteness of the corresponding Gram matrix is equivalent to $\partial_{\theta_i} f(\mathbf{x}_j, \cdot)$ being linearly independent \citep[Theorem~3.1]{du2018gradient}. 
This condition for positive definiteness does not depend on the specific distribution of the parameters, but if anything it only depends on the support of the distribution of parameters and on the input data. The precise value of the least eigenvalue may be affected by changes in the distribution however. 
The convergence of the network function to a Gaussian process in the limit of infinite width and independent parameter initialization is a classic result \citep{neal1996priors}. 
To verify this Gaussian process assumption it is sufficient that 
$\sum_{i}W_i^{(2)}\sigma(\langle \mathbf{W}^{(1)}_i,\mathbf{x}\rangle +b_i)$ is a sum of independent random variables with finite variance. 

\subsection{Anti-Symmetrical Initialization (ASI)} 
\label{app:ASI}

The AntiSymmetrical Initialization (ASI) trick as proposed by \citet{zhang2019type} creates duplicate hidden units with opposite output weights, ensuring that $f(\cdot, \theta_0)\equiv 0$. 
More precisely, ASI defines $f_{\mathrm{ASI}}(\mathbf{x},\vartheta) = \frac{\sqrt{2}}{2}f(\mathbf{x},\vartheta') - \frac{\sqrt{2}}{2}f(\mathbf{x},\vartheta'')$. Here $\vartheta = (\vartheta',\vartheta'')$ is initialized with $\vartheta_0'=\vartheta_0''$, so that 
\begin{equation*}
  f_{\mathrm{ASI}}(\mathbf{x},\vartheta_0)=\sum_{i=1}^{n} \frac{\sqrt{2}}{2}\overline{V}_i^{(2)}[\langle\bm{\overline{V}}_i^{(1)},\mathbf{x}\rangle +\overline{a}_i^{(1)}]_+ + \sum_{i=1}^n -\frac{\sqrt{2}}{2}\overline{V}_i^{(2)}[\langle\bm{\overline{V}}_i^{(1)},\mathbf{x}\rangle +\overline{a}_i^{(1)}]_+ \equiv 0. 
\end{equation*} 
The parameter vector at initialization is thus 
$\vartheta_0 = \mathrm{vec}(\bm{\overline{V}}^{(1)},\bm{\overline{V}}^{(1)},\bm{\overline{a}}^{(1)},\bm{\overline{a}}^{(1)},\frac{\sqrt{2}}{2} \bm{\overline{V}}^{(2)},-\frac{\sqrt{2}}{2} \bm{\overline{V}}^{(2)},\allowbreak \frac{\sqrt{2}}{2}\overline{a}^{(2)},-\frac{\sqrt{2}}{2}\overline{a}^{(2)})$.   

The basic statistics on the size of the parameters remains like \eqref{initialization2}, even if now there are perfectly correlated pairs of parameters. Hence the analysis and results on limits when the number of hidden units tends to infinity remain valid under ASI. 
The ASI is not needed for our analysis, which can be used to compare different types of initialization procedures, but it simplifies some of the presentation. 
One motivation for using ASI in practical applications is that it provides a simple way to implement a simple output function at initialization. 
Since the output function at initialization directly influences the bias of the gradient descent solution, this is a particular way to control the bias. 
Manipulating the bias from initialization is also the motivation presented by \cite{zhang2019type}. 
A related discussion also appears in \cite{sahs2020a}.
\subsection{Standard vs NTK Parametrization}
  \label{twoParametrization}

We have focused on the standard parametrization of the neural network. 
\cite{jacot2018neural} use a non-standard parametrization which is now known as the NTK parametrization. 
We briefly discuss the difference. 
A network with NTK parametrization is described as
  \begin{equation*}
    \displaystyle
    \begin{cases}
    \mathbf{h}^{(l+1)}=\sqrt{\frac{1}{n_l}}\mathbf{W}^{(l+1)}\mathbf{x}^l+\mathbf{b}^{(l+1)} \\
    \mathbf{x}^{(l+1)}=\phi(\mathbf{h}^{(l+1)})
    \end{cases}
    \quad
    \text{ and}
    \quad
    \begin{cases}
       W_{i,j}^{(l)} \sim  \mathcal{N}(0,1)\\
     b_{j}^{(l)} \sim  \mathcal{N}(0,1)
    \end{cases}.
  \end{equation*}
In contrast to the standard parametrization, in the NTK parametrization the factor $\sqrt{1/n_l}$ is carried outside of the trainable parameter. 
In this case, the scaling of the derivatives is $\nabla_{W_{i,j}^{(1)}} f(x,\theta_0)=O(n^{-\frac{1}{2}})$ and $\nabla_{W_i^{(2)}} f(x,\theta_0)=O(n^{-\frac{1}{2}})$. 
In turn, during training the changes of $W_{i,j}^{(1)}$ and $W_i^{(2)}$ are comparable in magnitude. 
This implies that we can not ignore the changes of $W_{i,j}^{(1)}$ and approximate the dynamics by that of the linearized model that trains only the output weights as we did in the case of the standard parametrization. 
In particular, we can not use problem %
\eqref{continuous_version} to describe the result of gradient descent as $n\to\infty$.

\subsection{Weight Norm Minimization} 
\label{Parameter_norm_min}
\citet{savarese2019infinite} studied networks of the form $f(x,\theta)=\sum_{i=1}^n W_i^{(2)}[W_i^{(1)}x +b_i^{(1)}]_+ +b^{(2)}$ allowing the width to tend to infinity. They showed that the minimum weight norm for approximating a given function $g$ is related to a measure of the smoothness of $g$ by $\lim_{\epsilon\to 0}(\inf_{\theta}C(\theta) \ \text{s.t.} \ \|f(\cdot,\theta)-g\|_\infty\leq \epsilon) =\max\{ \int_{-\infty}^{\infty}|g''(x)|~\mathrm{d}x, \; |g'(-\infty)+g'(\infty)| \}$, where $C(\theta)=\frac{1}{2}\sum_{i=1}^n((W_i^{(2)})^2+(W_i^{(1)})^2)$. 
Here the derivatives are understood in the weak sense. 
This implies that infinite width shallow networks trained with weight norm regularization (sparing biases) represent functions with smallest $1$-norm of the second derivative, an example of which are linear splines. 
(Note that $C(\theta)$ is not strictly convex in the space of all parameters and also the $1$-norm of the second derivative is not strictly convex, hence the solution is not unique).

The result of \cite{savarese2019infinite} is illuminating in that it connects properties of the parameters and properties of the represented functions. 
However, the result does not necessarily inform us about the functions represented by the network upon gradient descent training without explicit weight norm regularization. 
Indeed, if we initialize the parameters by \eqref{initialization2} with sub-Gaussian distribution, the neural network can be approximated by the linearized model. 
Then by Theorem~\ref{minimum_weight}, $\|\omega-\theta_0\|_2$ is minimized rather than $\|\omega\|_2$. But in this case $\|\theta_0\|_2$ is bounded away from zero with high probability and the $2$-norm of all parameters (or also of the weights only) is not minimized. 
On the other hand, if we initialize the parameters with $\|\theta_0\|_2$ close to $0$, then the neural network might not be well approximated by the linearized model. 
This has been observed experimentally by \cite{chizat2019lazy} and we further illustrate it in Appendix~\ref{linear-validity}. 

Even if we assume that the linearization of a network at the origin is valid, 
in order for the network to approximate certain complex functions, the weights necessarily have to be bounded away from zero. 
This means that reaching zero training error requires to move far from the basis point, where the difference between linearized and non-linearized model could become significant. In turn, the implicit bias description derived from a linearization at the origin may not accurately reflect the implicit bias of gradient descent in the original non-linearized model. 

The above paragraphs discuss why the result of \citet{savarese2019infinite} does not apply to gradient descent training without weight norm regularization. 
It is also interesting to discuss the difference between our result and the result of \cite{savarese2019infinite}. In our result, the implicit bias of gradient descent without weight norm regularization is characterized by 2-norm of the second derivative weighted by $1/\zeta$, which is a RKHS-norm. In the result of \cite{savarese2019infinite}, they showed that training with weight norm regularization (sparing biases) leads to functions with smallest 1-norm of the second derivative, which is not a RKHS norm. The reason why training without weight decay gives RKHS norm is because the training trajectory can be approximated by that of a linear model, which corresponds to a certain RKHS. And for training with weight norm regularization, the weight in the first layer is regularized, so it changes the feature space and we can no longer regard that as a linear model. Some works give empirical evidence that minimizing a non-RKHS norm can have better generalization than minimizing an RKHS norm because of the limitation of linear models and the kernel regime. However, as far as we know, there is no theory which shows that a non-RKHS-norm could result in better generalization than a RKHS norm. 

The paper by 
\cite{Parhi2019MinimumN} follows the approach of \citet{savarese2019infinite} and generalizes the result of \citet{savarese2019infinite} to different types of activation functions $\sigma$. Then they show that minimizing the weight “norm” of two-layer neural networks with activation function $\sigma$ is actually minimizing 1-norm of $\mathrm{L}f$ in place of the second derivative, where $f$ is the output function of the neural network. Here $\mathrm{L}$ and $\sigma$ satisfy $L\sigma=\delta$, i.e., $\sigma$ is a Green’s function of $\mathrm{L}$. 
Such activation functions can be used in combination with our analysis. We comment further on such generalizations in Appendix~\ref{Other_activation}.

\subsection{Basis Parameter for Linearization of the Model} 
\label{linear-validity} 

We discuss how the quality of the approximation of a neural network by a linearized model depends on the basis point. 
For a feedforward ReLU network and a list 
$\mathcal{X}=(x_i)_{i=1}^m$ of input data points, 
the mapping $\theta\mapsto f(\mathcal{X},\theta) = [f(x_1,\theta), \ldots, f(x_m,\theta)]$ is piecewise multilinear. Each of the pieces is smooth and we can assume that it is approximated reasonably well by its Taylor expansion. 
However, the quality of the approximation can drop when we cross the boundary between smooth pieces. 
Consider a single-input network with a layer of $n$ ReLUs and a single output unit. 
At an input $x$ the prediction is $f(x;\theta) =\sum_{j=1}^n W^{(2)}_j[W^{(1)}_j x + b^{(1)}_j]_+ + b^{(2)}$, where $\theta=\mathrm{vec}(\mathbf{W}^{(1)},\mathbf{b}^{(1)},\mathbf{W}^{(2)},b^{(2)})$. 
The Jacobian is non-smooth whenever $\theta \in H_{xj} =\{ W^{(1)}_{j}x + b^{(1)}_j=0\}$ for some $j=1,\ldots, n$. 
Hence for $m$ input data points $x_i$, $i=1,\ldots, m$, the locus of non-smoothness is given by $m$ central hyperplanes $H_{ij}$, $i=1,\ldots, m$ in the parameter space of each hidden unit $j=1,\ldots, n$. 
For an individual ReLU, if the parameter $\theta_0$ is drawn from a centrally symmetric probability distribution, the probability $p$ that an $\epsilon$ ball around $c \theta_0$ intersects one of the non-linearity hyperplanes $H_i$, $i=1,\ldots,m$, behaves roughly as $p = O(m c^{-1})$ as $c$ goes to infinity. 
Hence we can expect that the prediction function will be better approximated by its linearization $f^{\text{lin}}(x,\theta) = f(x,c\theta_0) + \nabla_\theta f(x,c\theta_0)(\theta-c\theta_0)$ at a point $c\theta_0$ if $c$ is larger. 
This is well reflected numerically in Figure~\ref{fig:linear-initialization}. 
As we see, for larger initialization the model looks more linear. 
We observed that this qualitative behavior remains same if we try to adjust the size of the window around the initial value.

\begin{figure}
    \centering
\begin{tabular}{cc}
\begin{minipage}{.42\textwidth}
\centering
\begin{tikzpicture}[x=1cm,y=1cm]
\draw[-,black, thick] (-.5,-1) -- (1,2); 
\node[right] at (1,2) {$H_1$};
\draw[-,black, thick] (1,-1) -- (-2,2); 
\node[right] at (1,-1) {$H_2$};
\node at (-1.8,-0) {$\mathbb{R}^d$}; 
\draw[fill,opacity=.5,blue] (.5,0.25) circle [radius=0.5]; 
\draw[fill] (.5,0.25) circle [radius=0.025]; 
\node[below] at (.5,0.25) {$c_1$};
\draw[fill,opacity=.5,red] (1.5,0.75) circle [radius=0.5]; 
\draw[fill] (1.5,0.75) circle [radius=0.025]; 
\node[below] at (1.5,0.75) {$c_2$};
\draw[dotted,->,black, thick] (0,0) -- (2.5,1.25); 
\node[right] at (2.5,1.25) {$\theta_0$};
\node at (0,-1.2) {}; 
\node at (0.2,2.6) {\footnotesize\textsf{Parameter space of a ReLU}}; 
\end{tikzpicture}\\
\begin{tikzpicture}
\node at (0,0) {\includegraphics[clip=true,trim=2cm 9.5cm 11cm 10.5cm,width=.9\textwidth]{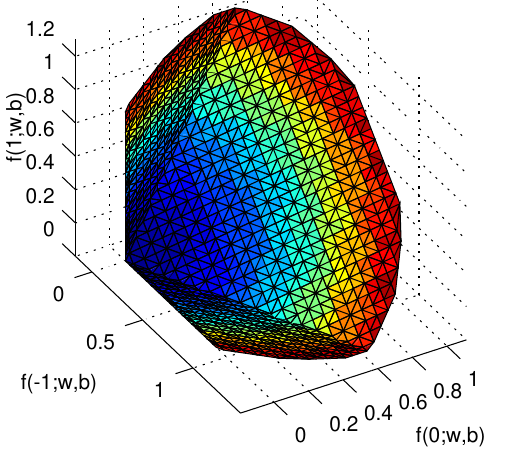}};
\node at (0,2.8) {\footnotesize\textsf{Predictions of a ReLU on 3 data points}};
\end{tikzpicture}
\end{minipage}
& 
\begin{minipage}{.59\textwidth}
\begin{tikzpicture}
\node at (0,0) {\includegraphics[width=7cm]{./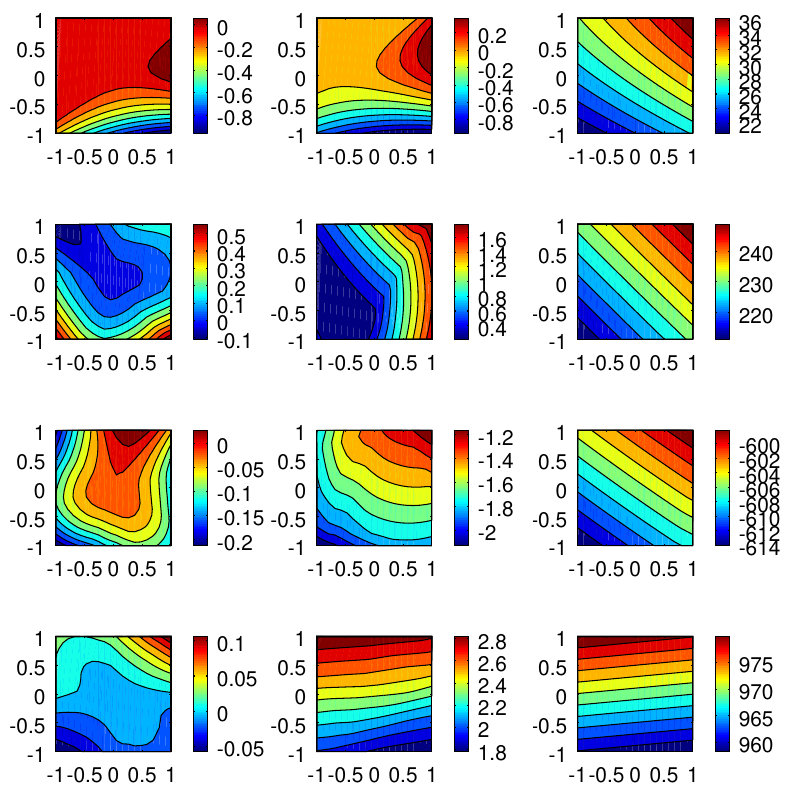}};
\node at (0,4) {\footnotesize\textsf{Network predictions over 2D parameter slices}};
\node at (-3.7,0) {\rotatebox[origin=c]{90}{\footnotesize\textsf{$\leftarrow$ more neurons}}};
\node  at (0,-3.8) {\footnotesize\textsf{larger initialization $\rightarrow$}};
\node at (0,-3.8) {}; 
\end{tikzpicture}
    \end{minipage}
\end{tabular}
    \caption{
Left: For a single ReLU, the map $\theta\mapsto f(\mathcal{X},\theta)$ from parameters to prediction vectors over a set $\mathcal{X} = \{x_1,\ldots, x_m\}$ of $m$ input data points is piecewise linear, with pieces separated by $m$ central hyperplanes. 
Right: Shown is the prediction $f(x,\theta)$ of a shallow ReLU network on a fixed input point $x$, over a 2D slice of parameters $\theta = c \theta_0 + v_1\xi_1 + v_2\xi_2$ 
spanned by two random orthogonal unit norm vectors $v_1$, $v_2$ and parametrized by $(\xi_1,\xi_2)\in[-1,1]^2$. 
    From top to bottom, the number of hidden units is $n = 1, 5, 25, 125$ and in each row the initial parameter $\theta_0$ is drawn i.i.d.\ from a standard Gaussian. 
    In each column we use a different scaling constant $c = 0, 0.5, 10$. As we see, for larger scaling $c$ of the initialization the model looks more linear. }
    \label{fig:linear-initialization}
\end{figure}

\section{Proof of Theorem~\ref{thm:theorem1} and Theorem~\ref{thm:theorem_multi}}
\label{app:proof-thm1}

The proof of Theorem \ref{thm:theorem1} and Theorem~\ref{thm:theorem_multi} is the compilation of results from Sections \ref{sec:3}, \ref{2.4}, \ref{sec:implicit_bias_univariate} and \ref{sec:implicit_bias_multivariate}. 
Next we give the proof of Theorem~\ref{thm:theorem_multi}. 
Theorem \ref{thm:theorem1} can be similarly proved. 
\begin{proof}[Proof of Theorem \ref{thm:theorem_multi}]
The convergence to zero training error for ReLU networks is by now a well known result \citep{du2018gradient,pmlr-v97-allen-zhu19a}. We proceed with the implicit bias result. 

For simplicity, we give the proof under ASI (see Appendix \ref{app:ASI}). 
In Section~\ref{sec:implicit_bias_multivariate}, we relax the optimization problem \eqref{continuous_version} to \eqref{continuous_version_multi_relax}. Suppose $(\overline{\alpha},\bm{\overline{u}},\overline{v})$ is the solution of \eqref{continuous_version_multi_relax}. The we can adjust the training samples $\{(\mathbf{x}_i,y_i)\}_{i=1}^M$ to  $\{(\mathbf{x}_i,y_i-\langle\bm{\overline{u}},\mathbf{x}_i\rangle-\overline{v})\}_{i=1}^M$. It's easy to see that on the adjusted training samples, $(\overline{\alpha},\mathbf{0},0)$ is the solution of \eqref{continuous_version_multi_relax}. Then $\overline{\alpha}$ is the solution of \eqref{continuous_version} on the adjusted data. Furthermore, the solution of \eqref{continuous_version} in function space, $g(\mathbf{x},\overline{\alpha})$, 
equals to the solution of \eqref{continuous_version_multi_relax} in function space, $g(\mathbf{x},(\overline{\alpha},0,0))$, i.e.,
\begin{equation}
\label{step1}
  g(\mathbf{x},\overline{\alpha})=g(\mathbf{x},(\overline{\alpha},\mathbf{0},0)) .  
\end{equation}

It we change the variable $\alpha$ to $\gamma$ as in Section \ref{sec:implicit_bias_multivariate}, we get
\begin{equation}
\label{step2}
    g(\mathbf{x},(\overline{\alpha},\mathbf{0},0))=g(\mathbf{x},(\overline{\gamma},\mathbf{0},0)),
\end{equation} 
On any compact set $D\subset \mathbb{R}^d$, %
according to Theorem \ref{theorem4}, 
\begin{equation}
\label{step3}
    \sup_{\mathbf{x}\in D}|g_n(\mathbf{x},\overline{\alpha}_n)- g(\mathbf{x},\overline{\alpha})|=O_p(n^{-1/2}),
\end{equation}
where $g_n(\mathbf{x},\overline{\alpha}_n)$ is the solution of problem \eqref{probablity_version} in function space. 
Since problem \eqref{probablity_version} is equivalent to problem \eqref{direct_version_non_ASI}, $g_n(\mathbf{x},\overline{\alpha}_n)$ is also the solution of \eqref{direct_version_non_ASI} in function space. According to discussion in Section~\ref{2.4}, $f^{\mathrm{lin}}(\mathbf{x}, \widetilde{\omega}_\infty)$ is the solution of \eqref{direct_version_non_ASI}. Then we have
\begin{equation}
\label{step4}
    g_n(\mathbf{x},\overline{\alpha}_n)=f^{\mathrm{lin}}(\mathbf{x}, \widetilde{\omega}_\infty).
\end{equation}
According to Corollary~\ref{cor:outlayer-only}, we get
\begin{equation}
\label{step5}
\sup_{\mathbf{x}\in D}|f^{\mathrm{lin}}(\mathbf{x},\tilde\omega_\infty)-f(\mathbf{x},\theta^*)|=O_p(n^{-\frac{1}{2}}).
\end{equation}
Finally, according to Theorem~\ref{theorem_func_multi_dim} (to prove Theorem \ref{thm:theorem1}, apply Theorem~\ref{theorem_func} and Proposition \ref{proposition:form-rho}), $g(\mathbf{x},(\overline{\gamma},\mathbf{0},0))$  %
is the solution of \eqref{gen_multi_dim}, which is $g^*(\mathbf{x})$. 
It means that %
\begin{equation}
\label{step6}
    g(\mathbf{x},(\overline{\gamma},0,0))=g^*(\mathbf{x}).
\end{equation}
Combining \eqref{step1}, \eqref{step2}, \eqref{step3}, \eqref{step4}, \eqref{step5}, \eqref{step6}, %
we prove the theorem.
\end{proof}

\section{Implicit Bias in Parameter Space for a Linearized Model}
  \label{proof1}

\citet{zhang2019type} show that gradient flow converges to the solution with zero empirical loss which is closest to the initial weights. 
We show a similar result for the case of gradient descent with small enough learning rate. 

\begin{theorem}[Bias of the linearized model in parameter space]
  \label{minimum_weight}
Consider a convex loss function $\ell$ with a unique finite minimum and its derivative is $K$-Lipschitz continuous, i.e., $|\frac{\mathrm{d}}{\mathrm{d}y}\ell(y_1,\hat{y})-\frac{\mathrm{d}}{\mathrm{d}y}\ell(y_2,\hat{y})|\leq K|y_1-y_2|$. 
If $\mathrm{rank}(\nabla_\theta f(\mathcal{X},\theta_0))=M$, then the gradient descent iteration~\eqref{linearized} with learning rate $\eta\leq\frac{M}{Kn\lambda_{\max}(\hat{\Theta}_n)}$ converges to the unique solution of following constrained optimization problem: 
\begin{equation}
  \min_{\omega} \|\omega- \theta_0\|_2 \quad \text{s.t. } f^{\mathrm{lin}}(\mathcal{X},\omega)=\mathcal{Y}. 
  \label{problem_appendix}
\end{equation}
\end{theorem}
\gmm{The derivative $\frac{\mathrm{d}}{\mathrm{d}y}$ is with respect to the first argument of $\ell$ and the gradient $\nabla_\theta$ is with respect to the second argument of $f$ (see notation in Section~\ref{sec:notation}).} 

\begin{remark}[Remark on Theorem~\ref{minimum_weight}, step size]
Note that this statement is valid for the linearization of any set of functions, not only neural networks. 
The proof remains valid for a changing step size as long as this satisfies the required inequality. 
\end{remark}
\begin{remark}[Remark on Theorem~\ref{minimum_weight}, rank assumption]
The assumption $\nabla_\theta f(\mathcal{X}, \theta_0)=M$ is satisfied in most cases when $n\geq M$ (here $n$ refers to the number of parameters in $\theta$ since we use the linearized model). 
This is because $\nabla_\theta f(\mathcal{X}, \theta_0)$ is a $M\times n$ matrix. The $M$ rows corresponds to $M$ training samples and they are almost always linearly independent.
\end{remark}
Here we give the proof of Theorem~\ref{minimum_weight}. 
We note that \cite{zhang2019type} prove a similar result for gradient flow. Our proof is for finite step size and different from theirs.  
  \begin{proof}[Proof of Theorem \ref{minimum_weight}]
    We use gradient descent to minimize $L^{\mathrm{lin}}(\omega)=\frac{1}{M}\sum_{i=1}^M \ell(f^{\mathrm{lin}}(\mathbf{x}_i,\omega),y_i)$. First we prove that $\nabla_\omega L^{\mathrm{lin}}(\omega)$
 is Lipschitz continuous as follows:

 \begin{equation*}
   \begin{aligned}
    &\|\nabla_\omega L^{\mathrm{lin}}(\omega_1)-\nabla_\omega L^{\mathrm{lin}}(\omega_2)\|_2\\
    =& \frac{1}{M}\|\nabla_\theta f(\mathcal{X},\theta_{0})^\top \nabla_{f^{\mathrm{lin}}(\mathcal{X},\omega_{1})}L-\nabla_\theta f(\mathcal{X},\theta_{0})^\top \nabla_{f^{\mathrm{lin}}(\mathcal{X},\omega_{2})}L\|_2\\
    \leq& \frac{1}{M}\| \nabla_\theta f(\mathcal{X},\theta_{0})^\top\|_2\| \nabla_{f^{\mathrm{lin}}(\mathcal{X},\omega_{1})}L-\nabla_{f^{\mathrm{lin}}(\mathcal{X},\omega_{2})}L\|_2\\
    =& \frac{1}{M} \|\nabla_\theta f(\mathcal{X},\theta_{0})^\top\|_2\sqrt{\sum_{i=1}^M\left(\frac{\mathrm{d}}{\mathrm{d}y}l(f^{\mathrm{lin}}(\mathbf{x}_i,\omega_{1}),y_i)-\frac{\mathrm{d}}{\mathrm{d}y}l(f^{\mathrm{lin}}(\mathbf{x}_i,\omega_{2}),y_i)\right)^2}\\
    \leq& \frac{K}{M} \| \nabla_\theta f(\mathcal{X},\theta_{0})^\top\|_2\| f^{\mathrm{lin}}(\mathcal{X},\omega_{1})-f^{\mathrm{lin}}(\mathcal{X},\omega_{2})\|_2\quad\text{(K-Lipschitz continuity of $\ell$)}\\
    =&\frac{K}{M}  \|\nabla_\theta f(\mathcal{X},\theta_{0})^\top\|_2\| \nabla_\theta f(\mathcal{X},\theta_{0})(\omega_1-\omega_2)\|_2\\
    \leq&\frac{K}{M}  \|\nabla_\theta f(\mathcal{X},\theta_{0})^\top\|_2\| \nabla_\theta f(\mathcal{X},\theta_{0})\|_2\|(\omega_1-\omega_2)\|_2\\
    \leq&\frac{Kn}{M}  \lambda_{\max}(\hat{\Theta}_n)\|\omega_1-\omega_2\|_2. 
   \end{aligned}
 \end{equation*}
 So $L^{\mathrm{lin}}(\omega)$
is Lipschitz continuous with Lipschitz constant $\frac{Kn}{M}  \lambda_{\max}(\hat{\Theta}_n)$. Since $L^{\mathrm{lin}}$ is convex over $\omega$, gradient descent with learning rate $\eta=\frac{M}{Kn \lambda_{\max}(\hat{\Theta}_n)}$ converges to a global minimium of $L^{\mathrm{lin}}(\omega)$. %
     By assumption that $\mathrm{rank}(\nabla_\theta f(\mathcal{X},\theta_0))=M$, the model can perfectly fit all data. Then the minimium of $L^{\mathrm{lin}}(\omega)$ is zero and
    gradient descent converges to zero loss.

    Let $\omega_\infty = \lim_{t\to\infty}\omega_t$. Then $f^{\mathrm{lin}}(\mathcal{X},\omega_\infty)=\mathcal{Y}$. According to gradient descent iteration,
    \begin{equation*}
      \begin{aligned}
        \omega_\infty&=\theta_0-\sum_{t=0}^\infty \eta\nabla_\theta f(\mathcal{X},\theta_{0})^T \nabla_{f^{\mathrm{lin}}(\mathcal{X},\omega_{t})}L^{\mathrm{lin}}\\
        &=\theta_0- \eta\nabla_\theta f(\mathcal{X},\theta_{0})^T \sum_{t=0}^\infty \nabla_{f^{\mathrm{lin}}(\mathcal{X},\omega_{t})}L^{\mathrm{lin}}.
      \end{aligned}
    \end{equation*}

    Since $f^{\mathrm{lin}}$ is linear over weights $\omega$ and $\|\omega- \theta_0\|_2$ is strongly convex, the constrained optimization problem \eqref{problem_appendix} is a strongly convex optimization problem. The first order optimality condition of the problem is
    \begin{equation}
      \begin{cases}
      \omega- \theta_0 +  \nabla_\theta f^{\mathrm{lin}}(\mathcal{X},\theta_{0})^T \lambda = 0,\\
      f^{\mathrm{lin}}(\mathcal{X},\omega)=\mathcal{Y}.
      \end{cases}
      \label{condition}
    \end{equation}
    Let $ \lambda = \sum_{t=0}^\infty \nabla_{f^{\mathrm{lin}}(\mathcal{X},\theta_{t})}L$, we can easily check that $\omega_\infty$ satisfies condition \eqref{condition}. So $\omega_\infty$ is the solution of problem \eqref{problem_appendix}. 
  \end{proof}

\begin{remark}[Remark on Theorem~\ref{minimum_weight}]
Making an analogous statement to Theorem~\ref{minimum_weight} to describe the bias in parameter space when training wide networks rather than the linearized model is interesting, but harder, because the gradient direction is no longer constant. 
\cite{oymak2018overparameterized} obtain bounds on the trajectory length in parameter space, putting the final solution within a factor $4\beta/\alpha$ of $\min_\theta\|\theta_0-\theta\|$, where $\beta$ and $\alpha$ are upper and lower bounds on the singular values of the Jacobian over the relevant region. However, currently it is unclear whether the solution upon gradient optimization is indeed the distance  minimizer from initialization. 
\end{remark}

\hj{Next we discuss the implicit bias of SGD (stochastic gradient descent) in parameter space. Consider the following stochastic gradient descent iteration for the linearized model:
\begin{equation}
 \omega_0=\theta_0,\quad 
  \omega_{t+1}
  = \omega_t- \eta_t \frac{\mathrm{d}}{\mathrm{d}y}\ell(f^{\mathrm{lin}}(\mathbf{x}_{r(t)},\omega_{t}),y_{r(t)})\nabla_\theta f(\mathbf{x}_{r(t)},\theta_{0}), 
  \label{eq:linearized_sgd}
\end{equation}
where $r(t)$ is evenly chosen from the set $\{1,2,...,M\}$ and $\eta_t$ is the learning rate at the step $t$. 
Typically, $\eta_t$ needs to decay in order for SGD
to converge. However, for overparametrized linearized model, we can show that SGD converges for constant learning rate and the implicit bias of SGD is the same as gradient descent under certain conditions. This is shown in the following theorem.}

  \hj{\begin{theorem}[Bias of the linearized model in parameter space, SGD]
  \label{minimum_weight_sgd}
Consider a convex loss function $\ell$ with a unique finite minimum and its derivative is $K$-Lipschitz continuous, i.e., $|\frac{\mathrm{d}}{\mathrm{d}y}\ell(y_1,\hat{y})-\frac{\mathrm{d}}{\mathrm{d}y}\ell(y_2,\hat{y})|\leq K|y_1-y_2|$. 
If $\mathrm{rank}(\nabla_\theta f(\mathcal{X},\theta_0))=M$, the stochastic gradient descent iteration~\eqref{eq:linearized_sgd} with constant learning rate $\eta_t=\eta\leq\frac{1}{K\max_j\|\nabla_\theta f(\mathbf{x}_{j},\theta_{0})\|_2^2}$ converges to the unique solution of following constrained optimization problem with probability $1$: 
\begin{equation}
  \min_{\omega} \|\omega- \theta_0\|_2 \quad \text{s.t. } f^{\mathrm{lin}}(\mathcal{X},\omega)=\mathcal{Y}. 
  \label{problem_appendix_sgd}
\end{equation}
\end{theorem}}
\begin{highlight}
\begin{proof}[Proof of Theorem \ref{minimum_weight_sgd}]
Let $\omega^*$ be the solution to the optimization problem \eqref{problem_appendix_sgd}. Let $\mathbf{z}_j=\nabla_\theta f(\mathbf{x}_{j},\theta_{0})$. It is easy to see that $\omega_t-\langle \omega_t-\omega^*,\frac{\mathbf{z}_j}{\|\mathbf{z}_j\|_2} \rangle \frac{\mathbf{z}_j}{\|\mathbf{z}_j\|_2}$ is the projection of $\omega_t$ onto the hyperplane $\{\langle \omega, \mathbf{z}_j \rangle\}=\{\langle \omega^*, \mathbf{z}_j \rangle\}$. So for any $\hat{\eta}\leq 1$, we have
\begin{align}
    \left\|\omega_t-\hat{\eta}\langle \omega_t-\omega^*,\frac{\mathbf{z}_j}{\|\mathbf{z}_j\|_2} \rangle \frac{\mathbf{z}_j}{\|\mathbf{z}_j\|_2}-\omega^*\right\|_2^2
    &=\|\omega_t-\omega^*\|^2_2-(1-(1-\hat{\eta})^2)\left|\langle \omega_t-\omega^*,\frac{\mathbf{z}_j}{\|\mathbf{z}_j\|_2^2} \rangle\right|^2\label{eq:proj_equal}
    \\
    &\leq \|\omega_t-\omega^*\|^2_2.
    \label{eq:proj_less}
\end{align}
The length of the stochastic gradient in \eqref{eq:linearized_sgd} can be bounded as follows:
\begin{align*}
    &\eta_t \frac{\mathrm{d}}{\mathrm{d}y}\ell(f^{\mathrm{lin}}(\mathbf{x}_{r(t)},\omega_{t}),y_{r(t)})\|\mathbf{z}_{r(t)}\|_2 \\
    &\leq \eta_t K|f^{\mathrm{lin}}(\mathbf{x}_{r(t)},\omega_{t})-y_{r(t)}|\|\mathbf{z}_{r(t)}\|_2 \\
    &\leq  K\frac{1}{K\max_j\|\nabla_\theta f(\mathbf{x}_{j},\theta_{0})\|_2^2}|f^{\mathrm{lin}}(\mathbf{x}_{r(t)},\omega_{t})-y_{r(t)}|\|\mathbf{z}_{r(t)}\|_2 \\
    &=  \frac{1}{\max_j\|\mathbf{z}_{j}\|_2^2}\langle \omega_t-\omega^*, \mathbf{z}_{r(t)}\rangle\|\mathbf{z}_{r(t)}\|_2 \\
    &\leq \frac{1}{\max_j\|\mathbf{z}_{j}\|_2^2}\|\mathbf{z}_{r(t)}\|^2_2\langle \omega_t-\omega^*, \frac{\mathbf{z}_{r(t)}}{\|\mathbf{z}_{r(t)}\|_2}\rangle \\ 
    &\leq \langle \omega_t-\omega^*, \frac{\mathbf{z}_{r(t)}}{\|\mathbf{z}_{r(t)}\|_2}\rangle . 
\end{align*}
Then according to \eqref{eq:proj_less}, we have \begin{equation*}
    \left\|\omega_t-\eta_t \frac{\mathrm{d}}{\mathrm{d}y}\ell(f^{\mathrm{lin}}(\mathbf{x}_{r(t)},\omega_{t}),y_{r(t)})\|\mathbf{z}_{r(t)}\|_2 \frac{\mathbf{z}_{r(t)}}{\|\mathbf{z}_{r(t)}\|_2}-\omega^*\right\|_2\leq \|\omega_t-\omega^*\|_2.
\end{equation*}
The above equation means that
\begin{equation}
    \|\omega_{t+1}-\omega^*\|_2\leq \|\omega_t-\omega^*\|_2.
    \label{eq:decreasing}
\end{equation}
Then $\|\omega_t\|_2$ is bounded and $\lim_{t\to\infty}\|\omega_t-\omega^*\|_2- \|\omega_{t+1}-\omega^*\|_2=0$. 
Next we show that for any convergent subsequence $\{\omega_{t_k}\}_{k\geq 1}$ of $\{\omega_t\}_{t\geq 1}$, we have $\lim_{k\to\infty}\omega_{t_k}=\omega^*$.

Let $\lim_{k\to\infty}\omega_{t_k}=\bar{\omega}$. Asuume that $\bar{\omega}\not=\omega^*$. According to the first order optimality \eqref{condition}, we have that $\omega^*=\theta_0+\sum_{j=1}^{M} \lambda_j \mathbf{z}_j$. From the stochastic gradient descent iterations, we have $\omega_t=\theta_0-\eta\sum_{s=1}^{t-1} \frac{\mathrm{d}}{\mathrm{d}y}\ell(f^{\mathrm{lin}}(\mathbf{x}_{r(s)},\omega_{s}),y_{r(s)})\mathbf{z}_{r(s)}$. Then $\omega_t-\omega^*$ is a linear combination of $\{\omega_j\}_{j=1}^M$. It means that $\bar{\omega}-\omega^*$ is a linear combination of $\{\mathbf{z}_j\}_{j=1}^M$. Since $\bar{\omega}-\omega^*$ is not zero, the set  $A=\left\{j:\left|\langle \bar{\omega}-\omega^*,\frac{\mathbf{z}_{j}}{\|\mathbf{z}_{j}\|_2} \rangle\right|>0\right\}$ is not empty. With probability 1, we have that $r(t)\in A$ infinitely many times. So for any given $k$, we can find $t_k'\geq t_k$ such that $r(t_k')\in A$ and $r(t)\not\in A$ for $t_k\leq t<t_k'$.

When we prove \eqref{eq:decreasing}, we only use the property that $f^{\mathrm{lin}}(\mathbf{x}_{r(t)},\omega^*)=y_{r(t)}$. When $t_k\leq t<t_k'$, we have $r(t)\not\in A$, so $\langle \bar{\omega},\frac{\mathbf{z}_{j}}{\|\mathbf{z}_{j}\|_2} \rangle=\langle \omega^*,\frac{\mathbf{z}_{r(t)}}{\|\mathbf{z}_{r(t)}\|_2} \rangle$. It means that $f^{\mathrm{lin}}(\mathbf{x}_{r(t)},\bar{\omega})=f^{\mathrm{lin}}(\mathbf{x}_{r(t)},\omega^*)=y_{r(t)}$. Using the same argument as  \eqref{eq:decreasing}, we have $\|\omega_{t+1}-\bar{\omega}\|_2\leq \|\omega_t-\bar{\omega}\|_2$ when $t_k\leq t<t_k'$. Then $\|\omega_{t_k'}-\bar{\omega}\|_2\leq \|\omega_{t_k}-\bar{\omega}\|_2$. Then $\lim_{k\to\infty}\omega_{t_k'}=\lim_{k\to\infty}\omega_{t_k}=\bar{\omega}$.
According to \eqref{eq:proj_equal}, we have
\begin{align*}
    \|\omega_{t+1}-\omega^*\|^2_2 &=\|\omega_t-\omega^*\|^2_2-(1-(1-\tilde{\eta}_t)^2)\left|\langle \omega_t-\omega^*,\frac{\mathbf{z}_{r(t)}}{\|\mathbf{z}_{r(t)}\|_2} \rangle\right|^2\\
    \text{and }\tilde{\eta}_t&=\frac{\eta \frac{\mathrm{d}}{\mathrm{d}y}\ell(f^{\mathrm{lin}}(\mathbf{x}_{r(t)},\omega_{t}),y_{r(t)})\|\mathbf{z}_{r(t)}\|_2}{\left|\langle \omega_t-\omega^*,\frac{\mathbf{z}_{r(t)}}{\|\mathbf{z}_{r(t)}\|_2} \rangle\right|}.
\end{align*}
Since  $\lim_{k\to\infty}\omega_{t_k'}=\bar{\omega}$, for sufficiently large $k$ we have
\begin{align}
    \left|\langle \omega_{t_k'}-\omega^*,\frac{\mathbf{z}_{r({t_k'})}}{\|\mathbf{z}_{r({t_k'})}\|_2} \rangle\right|^2 &\geq \frac{1}{2}\min_{j\in A}\left|\langle \bar{\omega}-\omega^*,\frac{\mathbf{z}_{r(j)}}{\|\mathbf{z}_{r(j)}\|_2} \rangle\right|^2 \nonumber\\
    &=\Omega(1),\label{eq:angle_estimate}
\end{align}
and
\begin{align}
    \tilde{\eta}_t&\geq \frac{1}{2}\frac{\eta \min_{j\in A}\frac{\mathrm{d}}{\mathrm{d}y}\ell(f^{\mathrm{lin}}(\mathbf{x}_{j},\bar{\omega}),y_{j})\min_{j\in A}\|\mathbf{z}_{j}\|_2}{\|\bar{\omega}-\omega^*\|_2}\nonumber\\
    &=\Omega(1)\min_{j\in A}\frac{\mathrm{d}}{\mathrm{d}y}\ell(f^{\mathrm{lin}}(\mathbf{x}_{j},\bar{\omega}),y_{j})\nonumber\\ 
    &=\Omega(1),\label{eq:mid_step}
\end{align}
where \eqref{eq:mid_step} holds because $f^{\mathrm{lin}}(\mathbf{x}_{j},\bar{\omega})-y_{j}=\langle \bar{\omega}-\omega^*,\mathbf{z}_{j} \rangle\not=0$ for all $j\in A$ and $\frac{\mathrm{d}}{\mathrm{d}y}\ell(y,\hat{y})=0$ if and only if $y=\hat{y}$ according to the fact that loss function $\ell$ has a unique finite minimum. From \eqref{eq:angle_estimate} and \eqref{eq:mid_step} we have $\|\omega_{t_k'}-\omega^*\|^2_2-\|\omega_{t_k'+1}-\omega^*\|^2_2=\Omega(1)$. This contradicts the fact that $\lim_{t\to\infty}\|\omega_t-\omega^*\|_2- \|\omega_{t+1}-\omega^*\|_2=0$. Then the assumption $\bar{\omega}\not=\omega^*$ is not true. So for any convergent subsequence $\{\omega_{t_k}\}_{k\geq 1}$ of $\{\omega_t\}_{t\geq 1}$, we have $\lim_{k\to\infty}\omega_{t_k}=\omega^*$. Combining the above statement with the fact that $\|\omega_t\|_2$ is bounded, we have $\lim_{t\to\infty}\omega_{t}=\omega^*$
\end{proof}
\end{highlight}

\hj{\begin{remark}[Remark on Theorem~\ref{minimum_weight_sgd}]
\label{rk:sgd_gd}
Theorem~\ref{minimum_weight_sgd} shows that SGD and gradient descent has the same implicit bias in parameter space. Then our main theorem also holds for SGD training.
\end{remark}}

\section{Proof of Theorem~\ref{th-lin-onlyout}} 
\label{Proof2}

We note that assumption  $\operatorname{liminf}_{n\to\infty} \lambda_{\operatorname{min}}(\hat \Theta_n)>0$ is satisfied if the empirical NTK converges and the limit NTK is positive definite. 
For details see Appendix~\ref{app:NTK-conv-pos}.

\begin{proof}[Proof of Theorem~\ref{th-lin-onlyout}] Since set $D$ is compact and $\mathbf{x}\in D$, we have $\|\mathbf{x}\|_2\leq C$ for a fixed constant $C$.
According to \eqref{linearized}, 
\begin{equation*}
  \omega_{t+1}
  = \omega_t- \eta\nabla_\theta f(\mathcal{X},\theta_{0})^T \nabla_{f^{\mathrm{lin}}(\mathcal{X},\omega_{t})}L^{\mathrm{lin}}.
\end{equation*}
Since we use the MSE loss, we have
\begin{equation*}
  \omega_{t+1}
  = \omega_t- \eta\nabla_\theta f(\mathcal{X},\theta_{0})^T (f^{\mathrm{lin}}(\mathcal{X},\omega_{t})-\mathcal{Y}).
\end{equation*}
Using \eqref{linearized_model}, we get
\begin{equation*}
\begin{aligned}
  f^{\mathrm{lin}}(\mathcal{X},\omega_{t+1})
  &= f^{\mathrm{lin}}(\mathcal{X},\omega_{t})- \eta \nabla_\theta f(\mathcal{X},\theta_{0})\nabla_\theta f(\mathcal{X},\theta_{0})^T (f^{\mathrm{lin}}(\mathcal{X},\omega_{t})-\mathcal{Y})\\
  &= f^{\mathrm{lin}}(\mathcal{X},\omega_{t})- n \eta\hat{\Theta}_n (f^{\mathrm{lin}}(\mathcal{X},\omega_{t})-\mathcal{Y}). 
\end{aligned}
\end{equation*}
Then we have
\begin{equation*}
    f^{\mathrm{lin}}(\mathcal{X},\omega_{t+1})-\mathcal{Y} = (I-n\eta\hat{\Theta}_n) (f^{\mathrm{lin}}(\mathcal{X},\omega_{t})-\mathcal{Y}),
\end{equation*}
and
\begin{equation*}
\begin{aligned}
    f^{\mathrm{lin}}(\mathcal{X},\omega_{t})-\mathcal{Y} &= (I-n\eta\hat{\Theta}_n)^t (f^{\mathrm{lin}}(\mathcal{X},\theta_{0})-\mathcal{Y})\\
    &= (I-n\eta\hat{\Theta}_n)^t (f(\mathcal{X},\theta_{0})-\mathcal{Y}).\\
\end{aligned}
\end{equation*}
According to the update rule of $\omega_t$, we know that $\omega_t=\nabla_\theta f(\mathcal{X},\theta_{0})^T \xi+\theta_0$, where $\xi$ is a column vector. Then we have
\begin{equation*}
\begin{aligned}
    f^{\mathrm{lin}}(\mathcal{X},\omega_{t})-\mathcal{Y} &= 
    f^{\mathrm{lin}}(\mathcal{X},\omega_{t})-f(\mathcal{X},\theta_{0})+f(\mathcal{X},\theta_{0})-\mathcal{Y}\\ 
    &=\nabla_\theta f(\mathcal{X},\theta_{0})(\omega_t-\theta_0)+f(\mathcal{X},\theta_{0})-\mathcal{Y}\\
    &=\nabla_\theta f(\mathcal{X},\theta_{0})\nabla_\theta f(\mathcal{X},\theta_{0})^T \xi+f(\mathcal{X},\theta_{0})-\mathcal{Y}\\
    &=n\hat{\Theta}_n\xi+f(\mathcal{X},\theta_{0})-\mathcal{Y}\\
    &=(I-n\eta\hat{\Theta}_n)^t (f(\mathcal{X},\theta_{0})-\mathcal{Y}).
\end{aligned}
\end{equation*}
From above equation we can solve for $\xi$:
\begin{equation*}
    \xi= -n^{-1}\hat{\Theta}_n^{-1}[I-(I-n\eta\hat{\Theta}_n )^t](f(\mathcal{X},\theta_{0})-\mathcal{Y}). 
\end{equation*}
Therefore
\begin{equation}
\label{hat_omega}
    \omega_t = -n^{-1}\nabla_\theta f(\mathcal{X},\theta_{0})^T\hat{\Theta}_n^{-1}[I-(I-n\eta\hat{\Theta}_n )^t](f(\mathcal{X},\theta_{0})-\mathcal{Y})+\theta_0. 
\end{equation}
For any $\mathbf{x}\in \mathbb{R}^d$,
\begin{equation}
\begin{aligned}
    f^{\mathrm{lin}}(\mathbf{x},\omega_t)&=f(\mathbf{x},\theta_0)+\nabla_\theta f(\mathbf{x},\theta_0)(\omega_t-\theta_0)\\
    &=f(\mathbf{x},\theta_0)-n^{-1}\nabla_\theta f(\mathbf{x},\theta_0)\nabla_\theta f(\mathcal{X},\theta_{0})^T\hat{\Theta}_n^{-1}[I-(I-n\eta\hat{\Theta}_n )^t](f(\mathcal{X},\theta_{0})-\mathcal{Y}). 
\end{aligned}
\label{training_all_parameters}
\end{equation}
For the training process \eqref{fix_first}, we can define the corresponding empirical neural tangent kernel in the following way:
\begin{equation*}
    \tilde{\Theta}_n = \frac{1}{n}\nabla_{\mathbf{W}^{(2)}} f(\mathcal{X},\theta_0) \nabla_{\mathbf{W}^{(2)}} f(\mathcal{X},\theta_0)^T . 
\end{equation*}
Using the same argument, we have
\begin{equation}
\label{tilde_omega}
    \widetilde{\mathbf{W}}^{(2)}_t = -n^{-1}\nabla_{\mathbf{W}^{(2)}} f(\mathcal{X},\theta_{0})^T\tilde{\Theta}_n^{-1}[I-(I-n\eta\tilde{\Theta}_n )^t](f(\mathcal{X},\theta_{0})-\mathcal{Y})+\overline{\mathbf{W}}^{(2)}_0 
\end{equation}
and
\begin{equation}
    f^{\mathrm{lin}}(\mathbf{x},\widetilde{\omega}_t)
    =f(\mathbf{x},\theta_0)-n^{-1}\nabla_{\mathbf{W}^{(2)}} f(\mathbf{x},\theta_0)\nabla_{\mathbf{W}^{(2)}} f(\mathcal{X},\theta_{0})^T\tilde{\Theta}_n^{-1}[I-(I-n\eta\tilde{\Theta}_n )^t](f(\mathcal{X},\theta_{0})-\mathcal{Y}). 
    \label{training_the_second_layer}
\end{equation}
According to \eqref{training_all_parameters} and \eqref{training_the_second_layer}, we have
\begin{equation}
\label{diff_hat_tilde}
\begin{aligned}
    &|f^{\mathrm{lin}}(\mathbf{x},\widetilde{\omega}_t)- f^{\mathrm{lin}}(\mathbf{x},\omega_t)|\\
    =&n^{-1}\left|\nabla_\theta f(\mathbf{x},\theta_0)\nabla_\theta f(\mathcal{X},\theta_{0})^T\hat{\Theta}_n^{-1}[I-(I-n\eta\hat{\Theta}_n )^t](f(\mathcal{X},\theta_{0})-\mathcal{Y})\right. \\
    &\left.-\nabla_{\mathbf{W}^{(2)}} f(\mathbf{x},\theta_0)\nabla_{\mathbf{W}^{(2)}} f(\mathcal{X},\theta_{0})^T\tilde{\Theta}_n^{-1}[I-(I-n\eta\tilde{\Theta}_n )^t](f(\mathcal{X},\theta_{0})-\mathcal{Y})\right|. 
\end{aligned}
\end{equation}
The next step is to compute the difference between $\tilde{\Theta}_n$ and $\hat{\Theta}_n$. Let $\Delta \Theta=\hat{\Theta}_n-\tilde{\Theta}_n$, then the $ij$-th entry of the matrix $\Delta \Theta$ is
\begin{equation}
\begin{aligned}
(\Delta \Theta)_{ij} =& \frac{1}{n}\left[\sum_{k=1}^n \left(\left\langle\nabla_{\mathbf{W}_k^{(1)}} f(\mathbf{x}_i,\theta_0), \nabla_{\mathbf{W}_k^{(1)}} f(\mathbf{x}_j,\theta_0)\right\rangle+\nabla_{b_k^{(1)}} f(\mathbf{x}_i,\theta_0)\nabla_{b_k^{(1)}} f(\mathbf{x}_j,\theta_0)\right) \right.\\
&+ \left.\nabla_{b^{(2)}} f(\mathbf{x}_i,\theta_0)\nabla_{b^{(2)}} f(\mathbf{x}_j,\theta_0)\vphantom{\sum_{k=1}^n}\right]. 
\end{aligned}
\end{equation}
Given $\mathbf{x}\in\mathbb{R}^d$, we have
\begin{align}
\label{gradient1}
  \|\nabla_{\mathbf{W}_k^{(1)}} f(\mathbf{x},\theta_0)\|=&\|W_k^{(2)} H(\langle\mathbf{W}_k^{(1)},\mathbf{x}\rangle+b_k^{(1)})\cdot \mathbf{x}\|\leq C|W_k^{(2)}|\\
  \label{gradient2}
   |\nabla_{b^{(1)}_k} f(\mathbf{x},\theta_0)|=&|W_k^{(2)} H(\langle\mathbf{W}_k^{(1)},\mathbf{x}\rangle+b_k^{(1)})|\leq |W_k^{(2)}|\\
   \label{gradient3}
  |\nabla_{W_k^{(2)}} f(\mathbf{x},\theta_0)|=&[\langle\mathbf{W}_k^{(1)},\mathbf{x}\rangle+b_k^{(1)}]_+\leq C\|\mathbf{W}_k^{(1)}\|\|+b_k^{(1)}\\
  \label{gradient4}
   |\nabla_{b^{(2)}} f(\mathbf{x},\theta_0)|=&1.
\end{align}
Therefore, 
\begin{equation}
\label{eq:delta_Theta}
\begin{aligned}
|(\Delta \Theta)_{ij}| &\leq \frac{1}{n}\left[\sum_{k=1}^n \left(|W_k^{(2)}|^2\|\mathbf{x}_i\|\|\mathbf{x}_j\|+|W_k^{(2)}|^2\right) \right.
+\left.1 \vphantom{\sum_{k=1}^n}\right]\\
&=\frac{
C^2+1}{n}\sum_{k=1}^n |W_k^{(2)}|^2+\frac{1}{n}. 
\end{aligned}
\end{equation}
According to initialization \eqref{initialization2}, $W_{k}^{(2)} \buildrel d \over = \sqrt{1/n} ~\mathcal{W}^{(2)}$. Then according to the law of large numbers, $\lim_{n\to\infty}\sum_{k=1}^n |W_k^{(2)}|^2=\mathbb{E}|\mathcal{W}^{(2)}|^2$ almost surely as $n\to\infty$. Then $\sum_{k=1}^n |W_k^{(2)}|^2=O_p(1)$ and $|(\Delta \Theta)_{ij}|=O_p(n^{-1})$.

Since the size of $\Delta \Theta$ is $M\times M$, which does not change as n goes up. So $\|\Delta \Theta\|_2=O_p(n^{-1})$, which means $\|\hat{\Theta}_n-\tilde{\Theta}_n\|_2=O_p(n^{-1})$. 

Now we measure the difference of each part in \eqref{diff_hat_tilde}. According to assumption  $\inf_n \lambda_{\min}(\hat{\Theta}_n)>0$, we have
\begin{align}
\label{NTK_hat_inv}
 \lambda_{\min}(\hat{\Theta}_n^{-1})&\geq \frac{1}{\inf_n \lambda_{\min}(\hat{\Theta}_n)}=O_p(1)\\
\label{NTK_tilde_inv}
 \lambda_{\min}(\tilde{\Theta}_n^{-1})&\geq \frac{1}{\inf_n \lambda_{\min}(\hat{\Theta}_n)-O_p(n^{-1})} =O_p(1).
\end{align} 
Therefore
\begin{equation}
\label{diff_inv}
\begin{aligned}
    \|\hat{\Theta}_n^{-1}-\tilde{\Theta}_n^{-1}\|_2&=\|\hat{\Theta}_n^{-1}(\tilde{\Theta}_n-\hat{\Theta}_n)\tilde{\Theta}_n^{-1}\|_2\\
    &\leq\|\hat{\Theta}_n^{-1}\|_2\|\Delta \Theta\|_2\|\tilde{\Theta}_n^{-1}\|_2\\
    &=O_p(n^{-1}).
\end{aligned}
\end{equation}
The assumption $\eta < \frac{2}{n\lambda_{\max}(\hat{\Theta}_n)}$ implies
\begin{equation}
\label{diff_I_NTK}
\|I-n\eta\hat{\Theta}_n\|_2<1,
\end{equation}
and 
\begin{equation*}
\begin{aligned}
    \|I-n\eta\tilde{\Theta}_n \|_2&\leq  \|I-n\eta\hat{\Theta}_n \|_2+n\eta\|\hat{\Theta}_n- \Theta\|_2 \\
    &\leq\max\{n\eta\frac{\lambda_{\max}(\Theta)}{2},1-n\eta\lambda_{\min}(\hat{\Theta}_n)\}+O_p(n^{-1}).
\end{aligned}
\end{equation*}
For any $\delta>0$, as $n$ is large enough, we also have $\|I-n\eta\tilde{\Theta}_n \|_2<1$ with probability at least $1-\delta$. Then as $n$ is large enough,
\begin{equation*}
\begin{aligned}
    &\|[I-(I-n\eta\hat{\Theta}_n )^t]-[I-(I-n\eta\tilde{\Theta}_n )^t]\|_2\\
    &=\|(I-n\eta\hat{\Theta}_n )^t-(I-n\eta\tilde{\Theta}_n )^t\|_2\\
    &\leq\|[(I-n\eta\hat{\Theta}_n )-(I-n\eta\tilde{\Theta}_n )](I-n\eta\hat{\Theta}_n )^{t-1} \|_2\\
    &\quad+\|(I-n\eta\tilde{\Theta}_n )[(I-n\eta\hat{\Theta}_n )-(I-n\eta\tilde{\Theta}_n )](I-n\eta\hat{\Theta}_n )^{t-2} \|_2\\
    &\quad+\cdots\\
    &\quad+\|(I-n\eta\tilde{\Theta}_n )^{t-1}[(I-n\eta\hat{\Theta}_n )-(I-n\eta\tilde{\Theta}_n )] \|_2\\
    &\leq\eta\|\hat{\Theta}_n-\tilde{\Theta}_n\|_2\|I-n\eta\hat{\Theta}_n \|_2^{t-1}\\
    &\quad+\eta\|I-n\eta\tilde{\Theta}_n \|_2\|\hat{\Theta}_n-\tilde{\Theta}_n\|_2\|I-n\eta\hat{\Theta}_n \|_2^{t-2} \\
    &\quad+\cdots\\
    &\quad+\eta\|I-n\eta\tilde{\Theta}_n \|_2^{t-1}\|\hat{\Theta}_n-\tilde{\Theta}_n\|_2\\
    &\leq \eta\|\hat{\Theta}_n-\tilde{\Theta}_n\|_2\cdot t\cdot(\max\{\|I-n\eta\hat{\Theta}_n \|_2,\|I-n\eta\tilde{\Theta}_n \|_2\})^{t-1}.
\end{aligned}
\end{equation*}
Since $\max\{\|I-n\eta\hat{\Theta}_n \|_2,\|I-n\eta\tilde{\Theta}_n \|_2\}<1$,  $\sup_{t>0} t\cdot(\max\{\|I-n\eta\hat{\Theta}_n \|_2,\|I-n\eta\tilde{\Theta}_n \|_2\})^{t-1}$ is a finite number. Then we have
\begin{equation}
\label{diff_power}
\begin{aligned}
    \|[I-(I-n\eta\hat{\Theta}_n )^t]-[I-(I-n\eta\tilde{\Theta}_n )^t]\|_2 &\leq O(\eta\|\hat{\Theta}_n-\tilde{\Theta}_n\|_2)\\
    &= O_p(n^{-1}). 
\end{aligned}
\end{equation}
Let $\Delta \Theta(\mathbf{x},\mathcal{X})=n^{-1}(\nabla_\theta f(\mathbf{x},\theta_0)\nabla_\theta f(\mathcal{X},\theta_{0})^T-\nabla_{\mathbf{W}^{(2)}} f(\mathbf{x},\theta_0)\nabla_{\mathbf{W}^{(2)}} f(\mathcal{X},\theta_{0})^T)$, then the $i$-th entry of the vector $\Delta \Theta(\mathbf{x},\mathcal{X})$ is
\begin{equation*}
\begin{aligned}
(\Delta \Theta(\mathbf{x},\mathcal{X}))_{i} =& \frac{1}{n}\left[\sum_{k=1}^n \left(\nabla_{\mathbf{W}_k^{(1)}} f(\mathbf{x},\theta_0)\nabla_{\mathbf{W}_k^{(1)}} f(\mathbf{x}_i,\theta_0)+\nabla_{b_k^{(1)}} f(\mathbf{x},\theta_0)\nabla_{b_k^{(1)}} f(\mathbf{x}_i,\theta_0)\right) \right.\\
&+ \left.\nabla_{b^{(2)}} f(\mathbf{x},\theta_0)\nabla_{b^{(2)}} f(\mathbf{x}_i,\theta_0)\vphantom{\sum_{k=1}^n}\right].
\end{aligned}
\end{equation*}
Similar to \eqref{eq:delta_Theta}, we have
\begin{equation}
    (\Delta \Theta(\mathbf{x},\mathcal{X}))_{i}|=O_p(n^{-1}).
\end{equation}
Since the size of $\Delta \Theta(\mathbf{x},\mathcal{X})$ is $M$, which does not change as n goes up. So 
\begin{equation}
\label{diff_d}
    \|\Delta \Theta(\mathbf{x},\mathcal{X})\|_2=O_p(n^{-1}).
\end{equation}
Let $\tilde{\Theta}_n(\mathbf{x},\mathcal{X})=n^{-1}(\nabla_{\mathbf{W}^{(2)}} f(\mathbf{x},\theta_0)\nabla_{\mathbf{W}^{(2)}} f(\mathcal{X},\theta_{0})^T))$, then the $i$-th entry of the vector $\tilde{\Theta}_n(\mathbf{x},\mathcal{X})$ is
\begin{equation}
\label{test_NTK}
\begin{aligned}
|(\tilde{\Theta}_n(\mathbf{x},\mathcal{X}))_{i}| &\leq \frac{1}{n}\sum_{k=1}^n |\nabla_{W_k^{(2)}} f(\mathbf{x},\theta_0)\nabla_{W_k^{(2)}} f(\mathbf{x}_i,\theta_0)|\\
&\leq \frac{1}{n}\sum_{k=1}^n |(\|\mathbf{W}_k^{(1)}\|\|\mathbf{x}\|_2+b_k^{(1)})(\|\mathbf{W}_k^{(1)}\|\|\mathbf{x}_i\|_2+b_k^{(1)})|.\\
&\leq \frac{1}{n}\sum_{k=1}^n |(C\|\mathbf{W}_k^{(1)}\|+b_k^{(1)})(C\|\mathbf{W}_k^{(1)}\|+b_k^{(1)})|.\\
\end{aligned}
\end{equation}
According to initialization \eqref{initialization2}, $(\mathbf{W}_{k}^{(1)},b_{k}^{(1)}) \buildrel d \over = ~(\bm{\mathcal{W}},\mathcal{B})$. Then according to the law of large numbers,
\begin{equation}
\label{eq:theta_xX}
    |(\tilde{\Theta}_n(\mathbf{x},\mathcal{X}))_{i}|= O_p(1).
\end{equation}
Since the size of $\tilde{\Theta}_n(\mathbf{x},\mathcal{X})$ is $M$, which does not change as $n$ goes up. So 
\begin{equation*}
    \|\tilde{\Theta}_n(\mathbf{x},\mathcal{X})\|_2=O_p(1).
\end{equation*}
\cite{neal1996priors}, \cite{lee2018deep} show that as $n$ goes to infinity, the output function at initialization $f(\cdot,\theta_{0})$ converges to a Gaussian process in distribution, which means that $f(\mathcal{X},\theta_{0})\sim \mathcal{N}(0,\mathcal{K}(\mathcal{X},\mathcal{X}))$. Here $\mathcal{K}(\mathcal{X},\mathcal{X})$ can be computed recursively. 
Then $f(\mathcal{X},\theta_{0})$ is bounded in probability and we get
\begin{equation}
\label{bound_init}
    \|f(\mathcal{X},\theta_{0})-\mathcal{Y}\|_2=O_p(1).
\end{equation}
Then following \eqref{diff_hat_tilde} and \eqref{bound_init}, we get
\begin{equation*}
\begin{aligned}
    &|f^{\mathrm{lin}}(\mathbf{x},\widetilde{\omega}_t)- f(\mathbf{x},\theta_t)|\\
    =&n^{-1}|\nabla_\theta f(\mathbf{x},\theta_0)\nabla_\theta f(\mathcal{X},\theta_{0})^T\hat{\Theta}_n^{-1}[I-(I-n\eta\hat{\Theta}_n )^t](f(\mathcal{X},\theta_{0})-\mathcal{Y})\\
    &-\nabla_{\mathbf{W}^{(2)}} f(\mathbf{x},\theta_0)\nabla_{\mathbf{W}^{(2)}} f(\mathcal{X},\theta_{0})^T\tilde{\Theta}_n^{-1}[I-(I-n\eta\tilde{\Theta}_n )^t](f(\mathcal{X},\theta_{0})-\mathcal{Y})|\\
    =&n^{-1}\|\nabla_\theta f(\mathbf{x},\theta_0)\nabla_\theta f(\mathcal{X},\theta_{0})^T\hat{\Theta}_n^{-1}[I-(I-n\eta\hat{\Theta}_n )^t]\\
    &-\nabla_{\mathbf{W}^{(2)}} f(\mathbf{x},\theta_0)\nabla_{\mathbf{W}^{(2)}} f(\mathcal{X},\theta_{0})^T\tilde{\Theta}_n^{-1}[I-(I-n\eta\tilde{\Theta}_n )^t]\|_2\|f(\mathcal{X},\theta_{0})-\mathcal{Y}\|_2\\
    =&n^{-1}\|\nabla_\theta f(\mathbf{x},\theta_0)\nabla_\theta f(\mathcal{X},\theta_{0})^T\hat{\Theta}_n^{-1}[I-(I-n\eta\hat{\Theta}_n )^t]\\
    &-\nabla_{\mathbf{W}^{(2)}} f(\mathbf{x},\theta_0)\nabla_{\mathbf{W}^{(2)}} f(\mathcal{X},\theta_{0})^T\tilde{\Theta}_n^{-1}[I-(I-n\eta\tilde{\Theta}_n )^t]\|_2\cdot O_p(1). 
\end{aligned}
\end{equation*}
According to 
\eqref{diff_inv}, \eqref{diff_I_NTK}, \eqref{diff_power},  \eqref{diff_d} and \eqref{eq:theta_xX}, we have that 
\begin{equation*}
\begin{aligned}
    &n^{-1}\|\nabla_\theta f(\mathbf{x},\theta_0)\nabla_\theta f(\mathcal{X},\theta_{0})^T\hat{\Theta}_n^{-1}[I-(I-n\eta\hat{\Theta}_n )^t]\\
    &-\nabla_{\mathbf{W}^{(2)}} f(\mathbf{x},\theta_0)\nabla_{\mathbf{W}^{(2)}} f(\mathcal{X},\theta_{0})^T\tilde{\Theta}_n^{-1}[I-(I-n\eta\tilde{\Theta}_n )^t]\|_2\\
    \leq&n^{-1}\|\nabla_\theta f(\mathbf{x},\theta_0)\nabla_\theta f(\mathcal{X},\theta_{0})^T-\nabla_{\mathbf{W}^{(2)}} f(\mathbf{x},\theta_0)\nabla_{\mathbf{W}^{(2)}} f(\mathcal{X},\theta_{0})^T\|\|\hat{\Theta}_n^{-1}\|_2\|I-(I-n\eta\hat{\Theta}_n )^t\|_2\\
    &+n^{-1}\|\nabla_{\mathbf{W}^{(2)}} f(\mathbf{x},\theta_0)\nabla_{\mathbf{W}^{(2)}} f(\mathcal{X},\theta_{0})^T\|_2\|\hat{\Theta}_n^{-1}-\tilde{\Theta}_n^{-1}\|_2\|I-(I-n\eta\hat{\Theta}_n )^t\|_2\\
    &+n^{-1}\|\nabla_{\mathbf{W}^{(2)}} f(\mathbf{x},\theta_0)\nabla_{\mathbf{W}^{(2)}} f(\mathcal{X},\theta_{0})^T\|_2\|\tilde{\Theta}_n^{-1}\|_2\|[I-(I-n\eta\hat{\Theta}_n )^t]-[I-(I-n\eta\tilde{\Theta}_n )^t]\|_2\\
    \leq &O_p(n^{-1})O_p(1)O_p(1)+O_p(1)O_p(n^{-1})O_p(1)+O_p(1)O_p(1)O_p(n^{-1})\\
    = &O_p(n^{-1}). 
\end{aligned}
\end{equation*}
So we have $|f^{\mathrm{lin}}(\mathbf{x},\widetilde{\omega}_t)- f(\mathbf{x},\theta_t)|=O_p(n^{-1})$, and the constants in $O_p(n^{-1})$ do not depend on $t$ and $\mathbf{x}$. 
Then we get
\begin{equation*}
    \sup_{\mathbf{x}\in D}\sup_t |f^{\mathrm{lin}}(\mathbf{x},\widetilde{\omega}_t)- f^{\mathrm{lin}}(\mathbf{x},\omega_t)|=O_p(n^{-1}), \text{ as } n\to\infty. 
\end{equation*}
For the difference of parameters, we have
\begin{equation*}
   \widetilde{\omega}_t-\omega_t=\mathrm{vec}(\overline{\mathbf{W}}^{(1)}-\widehat{\mathbf{W}}_t^{(1)},\overline{\mathbf{b}}^{(1)}-\widehat{\mathbf{b}}_t^{(1)},\widetilde{\mathbf{W}}_t^{(2)}-\widehat{\mathbf{W}}_t^{(2)},\overline{b}^{(2)}-\widehat{b}_t^{(2)}).
\end{equation*}
According to \eqref{hat_omega} and \eqref{tilde_omega},
\begin{equation*}
\begin{aligned}
\|\overline{\mathbf{W}}^{(1)}-\widehat{\mathbf{W}}_t^{(1)}\|_2&=\|n^{-1}\nabla_{\mathbf{W}^{(1)}} f(\mathcal{X},\theta_{0})^T\hat{\Theta}_n^{-1}[I-(I-n\eta\hat{\Theta}_n )^t](f(\mathcal{X},\theta_{0})-\mathcal{Y})\|_2\\
&\leq \|n^{-1}\nabla_{\mathbf{W}^{(1)}} f(\mathcal{X},\theta_{0})^T\|_2\|\hat{\Theta}_n^{-1}\|_2\|I-(I-n\eta\hat{\Theta}_n )^t\|_2\|f(\mathcal{X},\theta_{0})-\mathcal{Y}\|_2\\
&\leq n^{-1}\|\nabla_{\mathbf{W}^{(1)}}f(\mathcal{X},\theta_{0})^T\|_2\cdot O_p(1). 
\end{aligned}
\end{equation*}
Here $\nabla_{\mathbf{W}^{(1)}}f(\mathcal{X},\theta_{0})^T$ is a $n\times M$ matrix, the $ij$-th entry of the matrix is $\nabla_{\mathbf{W}^{(1)}_i}f(\mathbf{x}_j,\theta_{0})$.  According to \eqref{gradient1}, we have $\nabla_{\mathbf{W}^{(1)}_i}f(\mathbf{x}_j,\theta_{0})=O_p(n^{-1/2})$. Then we get $\|\nabla_{\mathbf{W}^{(1)}}f(\mathcal{X},\theta_{0})^T\|_2=O_p(1)$ by the law of large numbers. So we have $\|\overline{\mathbf{W}}^{(1)}-\widehat{\mathbf{W}}_t^{(1)}\|_2=O_p(n^{-1})$, and $O_p(n^{-1})$ does not contain any constant factor which is related to $t$.  Then we get
\begin{equation*}
\sup_t \|\overline{\mathbf{W}}^{1}- \widehat{\mathbf{W}}^{1}_t\|_2=O_p(n^{-1}), \text{ as } n\to\infty.
\end{equation*}
Similarly we can prove
\begin{align}
    \sup_t \|\overline{\mathbf{b}}^{1}- \widehat{\mathbf{b}}^{1}_t\|_2&=O_p(n^{-1}), \text{ as } n\to\infty, \\
    \sup_t \|\overline{b}^{2}- \widehat{b}^{2}_t\|&=O_p(n^{-1}), \text{ as } n\to\infty.
\end{align}
For $\widetilde{\mathbf{W}}_t^{(2)}-\widehat{\mathbf{W}}_t^{(2)}$, we have 
\begin{equation*}
\begin{aligned}
\|\overline{\mathbf{W}}^{(2)}-\widehat{\mathbf{W}}_t^{(2)}\|_2&=\|n^{-1}\nabla_{\mathbf{W}^{(2)}}f(\mathcal{X},\theta_{0})^T \left(\hat{\Theta}_n^{-1}[I-(I-n\eta\hat{\Theta}_n )^t]-\right.\\
&\quad\quad\left.\tilde{\Theta}_n^{-1}[I-(I-n\eta\tilde{\Theta}_n )^t]\right)(f(\mathcal{X},\theta_{0})-\mathcal{Y})\|_2\\
&\leq \|n^{-1}\nabla_{\mathbf{W}^{(2)}} f(\mathcal{X},\theta_{0})^T\|_2\left(\|\hat{\Theta}_n^{-1}-\tilde{\Theta}_n^{-1}\|_2\|I-(I-n\eta\hat{\Theta}_n )^t\|_2+\right.\\
&\quad\quad\left.\|\tilde{\Theta}_n^{-1}\|_2\|[I-(I-n\eta\hat{\Theta} )^t]-[I-(I-n\eta\tilde{\Theta} )^t]\|_2\right)\|f(\mathcal{X},\theta_{0})-\mathcal{Y}\|_2\\
&\leq n^{-1}\|\nabla_{\mathbf{W}^{(2)}}f(\mathcal{X},\theta_{0})^T\|_2(O_p(n^{-1})O_p(1)+O_p(1)O_p(n^{-1}))\cdot O_p(1)\\
&=O_p(n^{-2})\|\nabla_{\mathbf{W}^{(2)}}f(\mathcal{X},\theta_{0})^T\|_2 . 
\end{aligned}
\end{equation*}
Here $\nabla_{\mathbf{W}^{(2)}}f(\mathcal{X},\theta_{0})^T$ is a $n\times M$ matrix, the $ij$-th entry of the matrix is $\nabla_{W^{(2)}_i}f(\mathbf{x}_j,\theta_{0})$. According to \eqref{gradient3}, we have $\nabla_{W^{(2)}_i}f(\mathbf{x}_j,\theta_{0})=O_p(1)$. Then $\|\nabla_{\mathbf{W}^{(2)}}f(\mathcal{X},\theta_{0})^T\|_2=O_p(n^{1/2})$ by the law of large numbers. So we have $\|\widetilde{\mathbf{W}}_t^{(2)}-\widehat{\mathbf{W}}_t^{(2)}\|_2=O_p(n^{-3/2})$, and $O_p(n^{-3/2})$ does not contain any constant factor which is related to $t$. Then we get
\begin{equation*}
\sup_t \|\widetilde{\mathbf{W}}^{2}_t- \widehat{\mathbf{W}}^{2}_t\|_2=O_p(n^{-3/2}), \text{ as } n\to\infty.
\end{equation*}
\end{proof}

\section{Training Only the Output Layer Approximates Training a Wide Shallow Network}
\label{appendix:Lee-generalization}

Corollary~\ref{cor:outlayer-only} is obtained by combining Theorem~\ref{th-lin-onlyout} and the fact that training a linearized model approximates training a wide network \citep[Proposition 3.2]{lai2023generalization}. 
Although \citet[Proposition 3.2]{lai2023generalization} consider Gaussian initialization, the arguments extend to sub-Gaussian initialization if the initialization distribution has a continuous probability density. 

\begin{proof}[Proof of Corollary~\ref{cor:outlayer-only}]
Using Theorem~\ref{th-lin-onlyout}, we have that
\begin{equation}
    \sup_t |f^{\mathrm{lin}}(\mathbf{x},\widetilde{\omega}_t)- f^{\mathrm{lin}}(\mathbf{x},\omega_t)|=O_p(n^{-1}), \text{ as } n\to\infty. 
    \label{result_theorem3}
\end{equation}
According to \citet[Proposition 3.2]{lai2023generalization}, in the case of Gaussian initialization, we have
\begin{equation*}
    \sup_t |f^{\mathrm{lin}}(\mathbf{x},\omega_t)- f(\mathbf{x},\theta)|=O_p(n^{-\frac{1}{2}}), \text{ as } n\to\infty. 
\end{equation*}
Under our neural network setting, which is a one-input network with a single hidden layer of $n$ ReLUs and a linear output, we can generalize the above result to sub-Gaussian initialization. 
Combining the above equation with \eqref{result_theorem3} concludes the proof. 
\end{proof}

\section{Proof of Theorem~\ref{theorem4}}
\label{Proof4}

\begin{proof}[Proof of Theorem~\ref{theorem4}]
The Lagrangian of problem \eqref{probablity_version} is
\begin{equation*}
  L(\alpha_n,\lambda^{(n)})=\int_{\mathbb{R}^2} \alpha_n^2(\mathbf{W}^{(1)},b)~\mathrm{d}\mu_n(\mathbf{W}^{(1)},b)+\sum_{j=1}^M \lambda^{(n)}_j(g_n(\mathbf{x}_j,\alpha_n)-y_j).
\end{equation*}
The optimal condition is $\nabla_{\alpha_n} L=0$, which means
\begin{equation*}
  \nabla_{\alpha_n} L = 2\alpha_n(\mathbf{W}^{(1)},b)+\sum_{j=1}^M \lambda^{(n)}_j [\langle\mathbf{W}^{(1)},\mathbf{x}_j \rangle+b]_+ = 0 \textup{ when } (\mathbf{W}^{(1)},b)=(\mathbf{W}^{(1)}_i,b_i), \ i=1,\ldots,k.
\end{equation*}
Then we get
\begin{equation*}
  \overline{\alpha}_n(\mathbf{W}^{(1)},b) = -\frac{1}{2}\sum_{j=1}^M \lambda^{(n)}_j [\langle\mathbf{W}^{(1)},\mathbf{x}_j \rangle+b]_+ \textup{ when } (\mathbf{W}^{(1)},b)=(\mathbf{W}^{(1)}_i,b_i), \ i=1,\ldots,k.
\end{equation*}
Since only function values on $(\mathbf{W}_i^{(1)},b_i)_{i=1}^M$ are taken into account in problem \eqref{probablity_version}, we can let
\begin{equation}
  \overline{\alpha}_n(\mathbf{W}^{(1)},b) = -\frac{1}{2}\sum_{j=1}^M \lambda^{(n)}_j [\langle\mathbf{W}^{(1)},\mathbf{x}_j \rangle+b]_+ \quad \forall (\mathbf{W}^{(1)},b)\in\mathbb{R}^{d+1}
  \label{a1}
\end{equation}
without changing $\int_{\mathbb{R}^2} \overline{\alpha}_n^2(\mathbf{W}^{(1)},b)~\mathrm{d}\mu_n(\mathbf{W}^{(1)},b)$ and $g_n(\mathbf{x},\overline{\alpha}_n)$.

Here $\lambda^{(n)}_j$, $j=1,\ldots,M$ are chosen to make $g_n(\mathbf{x}_i,\overline{\alpha}_n)=y_i$, $i=1,\ldots,M$. This means that
\begin{equation}
  -\frac{1}{2}\sum_{j=1}^M \lambda^{(n)}_j \int_{\mathbb{R}^2} [\langle\mathbf{W}^{(1)},\mathbf{x}_j \rangle+b]_+[\langle\mathbf{W}^{(1)},\mathbf{x}_i \rangle+b]_+ ~\mathrm{d}\mu_n(\mathbf{W}^{(1)},b)= y_i,\; i=1,\ldots,M. 
  \label{e1}
\end{equation}

Similarly, the Lagrangian of problem \eqref{continuous_version} is 
\begin{equation*}
  \widetilde{L}(\alpha,\lambda)=\int_{\mathbb{R}^2} \alpha^2(\mathbf{W}^{(1)},b)~\mathrm{d}\mu(\mathbf{W}^{(1)},b)+\sum_{j=1}^M \lambda_j(g(\mathbf{x}_j,\alpha)-y_j) . 
\end{equation*}
The optimality condition is $\nabla_\alpha \widetilde{L}=0$, which means 
\begin{equation*}
  \nabla_\alpha \widetilde{L} = 2\alpha(\mathbf{W}^{(1)},b)+\sum_{j=1}^M \lambda_j [\langle\mathbf{W}^{(1)},\mathbf{x}_j \rangle+b]_+ = 0 \quad \forall (\mathbf{W}^{(1)},b)\in\mathbb{R}^{d+1}.
\end{equation*}
Then we get
\begin{equation}
  \overline{\alpha}(\mathbf{W}^{(1)},b) = -\frac{1}{2}\sum_{j=1}^M \lambda_j [\langle\mathbf{W}^{(1)},\mathbf{x}_j \rangle+b]_+  \quad \forall (\mathbf{W}^{(1)},b)\in\mathbb{R}^{d+1}.
  \label{a2}
\end{equation}
Here $\lambda_j$, $j=1,\ldots,M$ are chosen to make $g(\mathbf{x},\alpha)=y_i$, $i=1,\ldots,M$. This means that 
\begin{equation}
  -\frac{1}{2}\sum_{j=1}^M \lambda_j \int_{\mathbb{R}^2} [\langle\mathbf{W}^{(1)},\mathbf{x}_j \rangle+b]_+[\langle\mathbf{W}^{(1)},\mathbf{x}_i \rangle+b]_+ ~\mathrm{d}\mu(\mathbf{W}^{(1)},b)= y_i, \quad i=1,\ldots,M. 
  \label{e2}
\end{equation}
Compare \eqref{e1} and \eqref{e2}. Since the number of samples is finite, $\mathbf{x}_i$ is also bounded. Then by the assumption that $\mathcal{\bm{W}}$ and $\mathcal{B}$ have finite fourth moments, we have that $[\langle\mathbf{W}^{(1)},\mathbf{x}_j \rangle+b]_+[\langle\mathbf{W}^{(1)},\mathbf{x}_i \rangle+b]_+$ has finite variance. According to central limit theorem, as $n\to\infty$, $\int_{\mathbb{R}^2} [\langle\mathbf{W}^{(1)},\mathbf{x}_j \rangle+b]_+[\langle\mathbf{W}^{(1)},\mathbf{x}_i \rangle+b]_+ ~\mathrm{d}\mu_n(\mathbf{W}^{(1)},b)$ tends to a Gaussian distribution with variance $O(n^{-1})$. 
This implies that $\forall i=1,\ldots,M,~\forall j=1,\ldots,M$,
\begin{equation*}
\begin{aligned}
&|\int_{\mathbb{R}^2} [\langle\mathbf{W}^{(1)},\mathbf{x}_j \rangle+b]_+[\langle\mathbf{W}^{(1)},\mathbf{x}_i \rangle+b]_+ ~\mathrm{d}\mu_n(\mathbf{W}^{(1)},b)\\
&-\int_{\mathbb{R}^2} [\langle\mathbf{W}^{(1)},\mathbf{x}_j \rangle+b]_+[\langle\mathbf{W}^{(1)},\mathbf{x}_i \rangle+b]_+ ~\mathrm{d}\mu(\mathbf{W}^{(1)},b)|\\
&=O_p(n^{-1/2})
\end{aligned}
\end{equation*}
Since \eqref{e1} and \eqref{e2} are systems of linear equations and coefficients of \eqref{e1} converge to coefficients of \eqref{e2} at the rate of $O_p(n^{-1/2})$, then we get
\begin{equation}
  |\lambda_j^n-\lambda_j|=O_p(n^{-1/2}),\quad j=1,\ldots,M. \label{converge_lambda} 
\end{equation}
Compare \eqref{a1} and \eqref{a2}. Given $(\mathbf{W}^{(1)},b)$, we have
\begin{equation}
  |\overline{\alpha}_n(\mathbf{W}^{(1)},b)-\overline{\alpha}(\mathbf{W}^{(1)},b)|=O_p(n^{-1/2}).\label{alpha_n_converge}
\end{equation}
Next we want to prove that $\sup_{\mathbf{x}\in D}|g_n(\mathbf{x},\overline{\alpha}_n)-g(\mathbf{x},\overline{\alpha})|=O_p(n^{-1/2})$. 
Firstly, we prove that $\sup_{\mathbf{x}\in D}|g_n(\mathbf{x},\overline{\alpha})-g(\mathbf{x},\overline{\alpha})|=O_p(n^{-1/2})$. 
Note that 
\begin{equation*}
  \begin{aligned}
    g_n(\mathbf{x},\overline{\alpha})&=\int_{\mathbb{R}^2} \overline{\alpha}(\mathbf{W}^{(1)},b)[\langle\mathbf{W}^{(1)},\mathbf{x} \rangle+b]_+~\mathrm{d}\mu_n(\mathbf{W}^{(1)},b)\\
    g(\mathbf{x},\overline{\alpha})&=\int_{\mathbb{R}^2} \overline{\alpha}(\mathbf{W}^{(1)},b)[\langle\mathbf{W}^{(1)},\mathbf{x} \rangle+b]_+~\mathrm{d}\mu(\mathbf{W}^{(1)},b).
  \end{aligned}
\end{equation*}
Therefore, 
\begin{equation}
\begin{aligned}
   \mathbb{E}(g_n(\mathbf{x},\overline{\alpha}))&=g(\mathbf{x},\overline{\alpha}) \\
   \operatorname{Var}(g_n(\mathbf{x},\overline{\alpha}))&=\frac{1}{n}\int_{\mathbb{R}^2} [\overline{\alpha}(\mathbf{W}^{(1)},b)[\langle\mathbf{W}^{(1)},\mathbf{x} \rangle+b]_+-g(\mathbf{x},\overline{\alpha})]^2~\mathrm{d}\mu(\mathbf{W}^{(1)},b). 
\end{aligned}
\label{mean_variance}
\end{equation}
Here the expectation and the variance are with respect to $(\mathbf{W}_i^{(1)},b_i)_{i=1}^n$. According to \eqref{a2} and the assumption that $\mathcal{\bm{W}}$ and $\mathcal{B}$ have finite fourth moments, the integral in \eqref{mean_variance} is bounded on $ D$. So $\sup_{\mathbf{x}\in D}\operatorname{Var}g_n(\mathbf{x},\overline{\alpha})=O(n^{-1})$. According to central limit theorem, as $n\to\infty$, $g_n(\mathbf{x},\overline{\alpha})$ tends to Gaussian distribution of variance $O(n^{-1})$ for any $\mathbf{x}\in D$. Then $|g_n(\mathbf{x},\overline{\alpha})-g(\mathbf{x},\overline{\alpha})|=O_p(n^{-1/2})$ pointwise on $ D$.
Then we only need to prove that the sequence of functions $\{g_n(\mathbf{x},\overline{\alpha})\}_{n=1}^\infty$ is uniformly equicontinuous. Actually, $\forall \mathbf{x}_1,\mathbf{x}_2\in D$
\begin{equation*}
  \begin{aligned}
    &|g_n(\mathbf{x}_1,\overline{\alpha})-g_n(\mathbf{x}_2,\overline{\alpha})|\\
    \leq&\int_{\mathbb{R}^2} \left|\overline{\alpha}(\mathbf{W}^{(1)},b)[\langle\mathbf{W}^{(1)},\mathbf{x}_1 \rangle+b]_+ -\overline{\alpha}(\mathbf{W}^{(1)},b)[\langle\mathbf{W}^{(1)},\mathbf{x}_2 \rangle+b]_+\right|~\mathrm{d}\mu_n(\mathbf{W}^{(1)},b)\\
    \leq&\int_{\mathbb{R}^2} \left|\overline{\alpha}(\mathbf{W}^{(1)},b)\right|\left|\mathbf{W}_i^{(1)}\right|\left|\mathbf{x}_1-\mathbf{x}_2\right|~\mathrm{d}\mu_n(\mathbf{W}^{(1)},b)\\
    \leq&\int_{\mathbb{R}^2} \left|\overline{\alpha}(\mathbf{W}^{(1)},b)\right|\left|\mathbf{W}_i^{(1)}\right|~\mathrm{d}\mu_n(\mathbf{W}^{(1)},b)\left|\mathbf{x}_1-\mathbf{x}_2\right|. 
  \end{aligned}
\end{equation*}
Notice that $\int_{\mathbb{R}^2} \left|\overline{\alpha}(\mathbf{W}^{(1)},b)\right|\left|\mathbf{W}_i^{(1)}\right|~\mathrm{d}\mu_n(\mathbf{W}^{(1)},b)\to \int_{\mathbb{R}^2} \left|\overline{\alpha}(\mathbf{W}^{(1)},b)\right|\left|\mathbf{W}_i^{(1)}\right|~\mathrm{d}\mu(\mathbf{W}^{(1)},b)$ with probability 1 according to the law of large numbers. Hence $\int_{\mathbb{R}^2} \left|\overline{\alpha}(\mathbf{W}^{(1)},b)\right|\left|\mathbf{W}_i^{(1)}\right|~\mathrm{d}\mu_n(\mathbf{W}^{(1)},b)$ is bounded and the bound is independent of $n$.  So $\{g_n(\mathbf{x},\overline{\alpha})\}_{n=1}^\infty$ is uniformly equicontinuous. Then by similar arguments to the Arzela-Ascoli theorem, 
\begin{equation}
\sup_{\mathbf{x}\in D}|g_n(\mathbf{x},\overline{\alpha})-g(\mathbf{x},\overline{\alpha})|=O_p(n^{-1/2}). \label{part_converge}
\end{equation}
Finally, we prove that $\sup_{\mathbf{x}\in D}|g_n(\mathbf{x},\overline{\alpha}_n)-g_n(\mathbf{x},\overline{\alpha})|=O_p(n^{-1/2})$. Since $\forall \mathbf{x}\in  D$
\begin{equation*}
  \begin{aligned}
    &|g_n(\mathbf{x},\overline{\alpha}_n)-g_n(\mathbf{x},\overline{\alpha})|\\
    \leq&\int_{\mathbb{R}^2} \left|\overline{\alpha}_n(\mathbf{W}^{(1)},b)[\langle\mathbf{W}^{(1)},\mathbf{x} \rangle+b]_+ -\overline{\alpha}(\mathbf{W}^{(1)},b)[\langle\mathbf{W}^{(1)},\mathbf{x} \rangle+b]_+\right|~\mathrm{d}\mu_n(\mathbf{W}^{(1)},b)\\
    \leq&\int_{\mathbb{R}^2} \left|\overline{\alpha}_n(\mathbf{W}^{(1)},b)-\overline{\alpha}(\mathbf{W}^{(1)},b)\right|[\langle\mathbf{W}^{(1)},\mathbf{x} \rangle+b]_+~\mathrm{d}\mu_n(\mathbf{W}^{(1)},b)\\
    \leq&  \int_{\mathbb{R}^2}\left|-\frac{1}{2}\sum_{j=1}^M (\lambda_j^n-\lambda_j) [\langle\mathbf{W}^{(1)},\mathbf{x}_j \rangle+b]_+\right| [\langle\mathbf{W}^{(1)},\mathbf{x} \rangle+b]_+~\mathrm{d}\mu_n(\mathbf{W}^{(1)},b)\\
    \leq&  \frac{1}{2}\sum_{j=1}^M |\lambda_j^n-\lambda_j| \int_{\mathbb{R}^2}[\langle\mathbf{W}^{(1)},\mathbf{x}_j \rangle+b]_+ [\langle\mathbf{W}^{(1)},\mathbf{x} \rangle+b]_+~\mathrm{d}\mu_n(\mathbf{W}^{(1)},b)\\
    \leq&  \frac{1}{2} \left(\max_{\mathbf{x}\in D}\int_{\mathbb{R}^2}[\langle\mathbf{W}^{(1)},\mathbf{x}_j \rangle+b]_+ [\langle\mathbf{W}^{(1)},\mathbf{x} \rangle+b]_+~\mathrm{d}\mu_n(\mathbf{W}^{(1)},b)\right)\sum_{j=1}^M |\lambda_j^n-\lambda_j|. 
  \end{aligned}
\end{equation*}
Because $ D$ is compact and $\int_{\mathbb{R}^2}[\langle\mathbf{W}^{(1)},\mathbf{x}_j \rangle+b]_+ [\langle\mathbf{W}^{(1)},\mathbf{x} \rangle+b]_+~\mathrm{d}\mu_n(\mathbf{W}^{(1)},b)$ converges according to the law of large numbers, we have that $\max_{\mathbf{x}\in D}\int_{\mathbb{R}^2}[\langle\mathbf{W}^{(1)},\mathbf{x}_j \rangle+b]_+ [\langle\mathbf{W}^{(1)},\mathbf{x} \rangle+b]_+~\mathrm{d}\mu_n(\mathbf{W}^{(1)},b)$ is bounded by a finite number independent of $n$. Then according to \eqref{converge_lambda},%
\begin{equation*}
\sup_{\mathbf{x}\in D}|g_n(\mathbf{x},\overline{\alpha}_n)-g_n(\mathbf{x},\overline{\alpha})|=O_p(n^{-1/2}) . 
\end{equation*}
Combined with \eqref{part_converge}, we have
\begin{equation*}
\sup_{\mathbf{x}\in D}|g_n(\mathbf{x},\overline{\alpha}_n)-g(\mathbf{x},\overline{\alpha})|=O_p(n^{-1/2}) . 
\end{equation*}
This concludes the proof. 
\end{proof}

\section{Proofs of Results for Univariate Regression}
\label{app:univariate}
\subsection{Proof of Theorem~\ref{theorem_func}}
\label{Proof5}
The second derivative $g''$ is given by
\begin{equation}
  \begin{aligned}
    g''(x,\gamma)
&=p_{\mathcal{C}}(x)\int_{\mathbb{R}}\gamma(W^{(1)},x)\big|W^{(1)}\big|~\mathrm{d}\nu_{\mathcal{W}|\mathcal{C}=x}(W^{(1)}) . 
  \end{aligned}
  \label{2nd_derivative}
\end{equation}
The detailed calculation of \eqref{2nd_derivative} is as follows:  
\begin{equation}
  \begin{aligned}
    g''(x,\gamma)
    &=\int_{\mathbb{R}^2}\gamma(W^{(1)},c) \left|W^{(1)}\right| \delta(x-c)~\mathrm{d}\nu(W^{(1)},c)\\
    &=\int_{\mathrm{supp}(\nu_\mathcal{C})}\left(\int_{\mathbb{R}}\gamma(W^{(1)},c) \left|W^{(1)}\right|  %
    ~\mathrm{d}\nu_{\mathcal{W}|\mathcal{C}=c}(W^{(1)})\right)
    \delta(x-c)~\mathrm{d}\nu_\mathcal{C}(c)\\
    &=\int_{\mathrm{supp}(\nu_\mathcal{C})}\left(\int_{\mathbb{R}}\gamma(W^{(1)},c)\left|W^{(1)}\right|~\mathrm{d}\nu_{\mathcal{W}|\mathcal{C}=c}(W^{(1)})\right)\delta(x-c) p_\mathcal{C}(c)\mathrm{d}c\\
    &=p_{\mathcal{C}}(x)\int_{\mathbb{R}}\gamma(W^{(1)},x)\left|W^{(1)}\right|~\mathrm{d}\nu_{\mathcal{W}|\mathcal{C}=x}(W^{(1)}) . 
  \end{aligned}
  \label{2nd_derivative-long}
\end{equation}

\begin{proof}[Proof of Theorem~\ref{theorem_func}]
First, if $x\not\in\mathrm{supp}(\zeta)$, similar to \eqref{2nd_derivative}, we have
\begin{equation*}
    \begin{aligned}
g(x,(\overline{\gamma},\overline{u},\overline{v}))&=p_{\mathcal{C}}(x)\int_{\mathbb{R}}\gamma(W^{(1)},x)\left|W^{(1)}\right|~\mathrm{d}\nu_{\mathcal{W}|\mathcal{C}=x}(W^{(1)})\\    
  &=0 . 
    \end{aligned}
\end{equation*}
Next, we prove that $g(x,(\overline{\gamma},\overline{u},\overline{v}))$ restricted on $\mathrm{supp}(\zeta)$ is the solution of the following problem:
\begin{equation}
  \begin{aligned}
  \min_{h\in C^2(\mathrm{supp}(\zeta))}
  \quad &
\int_{\mathrm{supp}(\zeta)}
\frac{(h''(x))^2}{\zeta(x)}~\mathrm{d}x\\
    \textup{subject to}\quad & h(x_j)=y_j,\quad j=1,\ldots,m. 
  \end{aligned}
  \label{function_space_only_zeta}
\end{equation}
Let $L(f)=\int_{\mathrm{supp}(\zeta)} \frac{(f''(x))^2}{p(x)\mathbb{E}(\mathcal{W}^2|\mathcal{C}=x)}\mathrm{d}x$. Then the functional $L(f)$ is strictly convex on space $\{f\in C^2(\mathbb{R}^2)|f(x_i)=y_i,~i=1,\ldots,m\}$ when $m\geq 2$. 
This means that the minimizer of problem \eqref{function_space_only_zeta} is unique.

Suppose $h(x)$ is the minimizer of problem \eqref{function_space_only_zeta} and $h(x)$ is different from $g(x,(\overline{\gamma},\overline{u},\overline{v}))$ restricted on $\mathrm{supp}(\zeta)$. Then by uniqueness of the solution,
\begin{equation}
  L(h)<L(g(\cdot,(\overline{\gamma},\overline{u},\overline{v}))).
  \label{contradiction}
\end{equation}
Now our goal is to find a different $(\gamma,u,v)$ with smaller cost in problem \eqref{continuous_add_linear}. 
Then $(\overline{\gamma},\overline{u},\overline{v})$ is not the solution of \eqref{continuous_add_linear}, which is a contradiction. We set
\begin{equation*}
  \gamma(W^{(1)},c)=\frac{h''(c)|W^{(1)}|}{p_\mathcal{C}(c)\mathbb{E}(\mathcal{W}^2|\mathcal{C}=c)}, \quad c\in \mathrm{supp}(\zeta) . 
\end{equation*}
Then according to \eqref{2nd_derivative},
\begin{equation*}
  \begin{aligned}
    g''(x,\gamma)&=p(x)\int_{\mathbb{R}}\gamma(W^{(1)},x)\left|W^{(1)}\right|~\mathrm{d}\nu_{\mathcal{W}|\mathcal{C}=x}(W^{(1)})\\
    &=p(x)\int_{\mathbb{R}}\frac{h''(x)|W^{(1)}|}{p(x)\mathbb{E}(\mathcal{W}^2|\mathcal{C}=x)}\left|W^{(1)}\right|~\mathrm{d}\nu_{\mathcal{W}|\mathcal{C}=x}(W^{(1)})\\
    &=\frac{h''(x)}{\mathbb{E}(\mathcal{W}^2|\mathcal{C}=x)}\int_{\mathbb{R}}\left|W^{(1)}\right|^2~\mathrm{d}\nu_{\mathcal{W}|\mathcal{C}=x}(W^{(1)})\\
    &=\frac{h''(x)}{\mathbb{E}(\mathcal{W}^2|\mathcal{C}=x)}\mathbb{E}(\mathcal{W}^2|\mathcal{C}=x)\\
    &=h''(x), \quad x\in \mathrm{supp}(\zeta) . 
  \end{aligned}
\end{equation*}
This means that we can find $u,v\in\mathbb{R}$ such that $ux+v+g(x,\gamma)\equiv h(x)$. 
Then we find $(\gamma,u,v)$ such that $g(x,(\gamma,u,v))=ux+v+g(x,\gamma)=h(x)$ on $\mathrm{supp}(\zeta)$. So $g(x_j,(\gamma,u,v))=h(x_j)=y_j$. It means that $(\gamma,u,v)$ satisfies the condition in problem \eqref{continuous_add_linear}. Next we compute the cost of $(\gamma,u,v)$: 
\begin{equation}
  \begin{aligned}
    &\int_{\mathbb{R}^2} \gamma^2(W^{(1)},c)~\mathrm{d}\nu(W^{(1)},c)\\
   =&\int_{\mathbb{R}^2} \left(\frac{h''(c)|W^{(1)}|}{p_\mathcal{C}(c)\mathbb{E}(\mathcal{W}^2|\mathcal{C}=c)}\right)^2~\mathrm{d}\nu(W^{(1)},c)\\
   =&\int_{\mathrm{supp}(\zeta)}\left(\int_{\mathbb{R}}\left(\frac{h''(c)|W^{(1)}|}{p_\mathcal{C}(c)\mathbb{E}(\mathcal{W}^2|\mathcal{C}=c)}\right)^2~\mathrm{d}\nu_{\mathcal{W}|\mathcal{C}=c}(W^{(1)})\right)\mathrm{d}\nu_\mathcal{C}(c)\\
   =&\int_{\mathrm{supp}(\zeta)}\left(\frac{h''(c)}{p_\mathcal{C}(c)\mathbb{E}(\mathcal{W}^2|\mathcal{C}=c)}\right)^2\left(\int_{\mathbb{R}}|W^{(1)}|^2~\mathrm{d}\nu_{\mathcal{W}|\mathcal{C}=c}(W^{(1)})\right)p_\mathcal{C}(c)\mathrm{d}c\\
   =&\int_{\mathrm{supp}(\zeta)}\left(\frac{h''(c)}{p_\mathcal{C}(c)\mathbb{E}(\mathcal{W}^2|\mathcal{C}=c)}\right)^2\left(\int_{\mathbb{R}}|W^{(1)}|^2~\mathrm{d}\nu_{\mathcal{W}|\mathcal{C}=c}(W^{(1)})\right)p_\mathcal{C}(c)\mathrm{d}c\\
   =&\int_{\mathrm{supp}(\zeta)} \frac{(h''(c))^2}{p_\mathcal{C}(c)\mathbb{E}(\mathcal{W}^2|\mathcal{C}=c)}\mathrm{d}x\\
   =&L(h) . 
  \end{aligned}
  \label{long1}
\end{equation}
On the other hand, the cost of $(\overline{\gamma},\overline{u},\overline{v})$ is
\begin{equation}
  \begin{aligned}
    &\int_{\mathbb{R}^2} \overline{\gamma}^2(W^{(1)},c)~\mathrm{d}\nu(W^{(1)},c)\\
   =&\int_{\mathrm{supp}(\zeta)}\left(\int_{\mathbb{R}}\overline{\gamma}^2(W^{(1)},c)~\mathrm{d}\nu_{\mathcal{W}|\mathcal{C}=c}(W^{(1)})\right)p_\mathcal{C}(c)\mathrm{d}c\\
   \geq&\int_{\mathrm{supp}(\zeta)}\frac{\left(\int_{\mathbb{R}}\overline{\gamma}(W^{(1)},c)|W^{(1)}|~\mathrm{d}\nu_{\mathcal{W}|\mathcal{C}=c}\right)^2}{\int_{\mathbb{R}}|W^{(1)}|^2~\mathrm{d}\nu_{\mathcal{W}|\mathcal{C}=c}}p_\mathcal{C}(c)\mathrm{d}c\quad\text{(Cauchy-Schwarz inequality)}\\
   =&\int_{\mathrm{supp}(\zeta)}\frac{\left(g''(c,\overline{\gamma})/p_\mathcal{C}(c))\right)^2}{\int_{\mathbb{R}}|W^{(1)}|^2~\mathrm{d}\nu_{\mathcal{W}|\mathcal{C}=c}}p_\mathcal{C}(c)\mathrm{d}c\quad\text{(according to \eqref{2nd_derivative})}\\
   =&\int_{\mathrm{supp}(\zeta)}\frac{\left(g''(c,\overline{\gamma})\right)^2}{p_\mathcal{C}(c)\mathbb{E}(\mathcal{W}^2|\mathcal{C}=c)}\mathrm{d}c\\
   =&L(g(\cdot,\overline{\gamma}))\\
   =&L(g(\cdot,(\overline{\gamma},\overline{u},\overline{v}))) \quad (g(\cdot,(\overline{\gamma},\overline{u},\overline{v}))\text{ has the same second derivative as }g(\cdot,\overline{\gamma})) .
  \end{aligned}
  \label{long2}
\end{equation}
From this we have 
\begin{equation*}
  \begin{aligned}
    \int_{\mathbb{R}^2} \gamma^2(W^{(1)},c)~\mathrm{d}\nu(W^{(1)},c)&=L(h)\quad\text{(according to \eqref{long1})}\\
    &<L(g(\cdot,(\overline{\gamma},\overline{u},\overline{v})))\quad\text{(according to \eqref{contradiction})}\\
    &\leq \int_{\mathbb{R}^2} \overline{\gamma}^2(W^{(1)},c)~\mathrm{d}\nu(W^{(1)},c)\quad\text{(according to \eqref{long2})} . 
  \end{aligned}
\end{equation*}
It means that the cost of $(\gamma,u,v)$ is smaller than the cost of $(\overline{\gamma},\overline{u},\overline{v})$. So $(\overline{\gamma},\overline{u},\overline{v})$ is not the solution of \eqref{function_space_only_zeta}, which is a contradiction. So our assumption is wrong. So $h(x)\equiv g(x,(\overline{\gamma},\overline{u},\overline{v}))$ on $\mathrm{supp}(\zeta)$, and $g(x,(\overline{\gamma},\overline{u},\overline{v}))$ is the solution of problem \eqref{function_space_only_zeta}. In the last step we prove that $g''(x,(\overline{\gamma},\overline{u},\overline{v}))=0$ when $x\not\in[\min_i x_i, \max_i x_i]$ and $g(x,(\overline{\gamma},\overline{u},\overline{v}))$ restricted on $\operatorname{supp}(\zeta) \cap [\min_i x_i, \max_i x_i]$ is the solution of \eqref{function_space_only_zeta}. We only need to prove these statements for $h(x)$, which is the solution of \eqref{function_space_only_zeta}.

Since $|x_i|\in [\min_i x_i, \max_i x_i]$, the function values on $(-\infty,\min_i x_i)$ and $(\max_i x_i,\infty)$ are not related to constraints of problem \eqref{function_space_only_zeta}, so $h(x)$ can be replaced by following $\tilde{h}(x)$ which also satisfies the constraints of problem \eqref{function_space_only_zeta}:
\begin{equation*}
  \tilde{h}(x)=\begin{cases}
    h(x)& x\in [\min_i x_i, \max_i x_i]\\
    h'(\min_i x_i)(x-\min_i x_i)+h(\min_i x_i)&x\in (-\infty,\min_i x_i)\\
    h'(\max_i x_i)(x-\max_i x_i)+h(\max_i x_i)&x\in (\max_i x_i,\infty) .
\end{cases}
\end{equation*}
Then we get
 \begin{equation*}
  \tilde{h}''(x)=\begin{cases}
    h''(x)& x\in [\min_i x_i, \max_i x_i]\\
    0&x\in (-\infty,\min_i x_i)\\
    0&x\in (\max_i x_i,\infty) . 
\end{cases}
\end{equation*}
So the cost of $\tilde{h}(x)$ is less than that of $h(x)$. Then the fact $h(x)$ is the minimizer of \eqref{function_space_only_zeta} tell us that $h(x)\equiv\tilde{h}(x)$. So $h(x)$ should be linear on $(-\infty,\min_i x_i)$ and $(\max_i x_i,\infty)$. Then $h''(x)=0$ when $x\not\in[\min_i x_i, \max_i x_i]$. Let $h(x)|_S$ denote the function $h(x)$ restricted on $S=\operatorname{supp}(\zeta) \cap [\min_i x_i, \max_i x_i]$. Since $h(x)$ is the solution to problem \eqref{function_space_only_zeta}, we get $h(x)|_S$ is the solution to problem \eqref{function_space_only_zeta}.
This concludes the proof. 
\end{proof}

In the case of not using ASI, problem~\eqref{continuous_add_linear} becomes 
\begin{equation}
\begin{aligned}
 \min_{\gamma\in C(\mathbb{R}^2), u\in\mathbb{R}, v\in\mathbb{R}}\quad & \int_{\mathbb{R}^2} \gamma^2(W^{(1)},c)~\mathrm{d}\nu(W^{(1)},c)\\
 \textup{subject to}\quad & ux_j+v+\int_{\mathbb{R}^2} \gamma(W^{(1)},c)[W^{(1)}(x_j-c)]_+~\mathrm{d}\nu(W^{(1)},c)=y_j-f(x_{j}, \theta_0), 
 j=1,\ldots,M. 
\end{aligned}
\label{continuous_add_linear_non_ASI}
\end{equation}
Then Theorem~\ref{theorem_func} without ASI is stated as follows. 

\begin{theorem}[Theorem~\ref{theorem_func} without ASI]
Suppose $(\overline{\gamma},\overline{u},\overline{v})$ is the solution of \eqref{continuous_add_linear_non_ASI}, 
and consider the corresponding output function 
\begin{equation}
  g(x,(\overline{\gamma},\overline{u},\overline{v}))=\overline{u} %
  x+\overline{v}+\int_{\mathbb{R}^2} \overline{\gamma}(W^{(1)},c)[W^{(1)}(%
  x-c)]_+~\mathrm{d}\nu(W^{(1)},c)+f(x, \theta_0). 
  \label{func_g_non_ASI}
\end{equation}
Then $g(x,(\overline{\gamma},\overline{u},\overline{v}))$ 
satisfies $g''(x,(\overline{\gamma},\overline{u},\overline{v}))=f''(x, \theta_0)$ for 
$x\not\in S$ 
and for $x\in S$ it is the solution of the following problem: 
\begin{equation}
  \begin{aligned}
  \min_{h\in C^2(S)}
  \quad &
\int_{S}
\frac{(h''(x)-f''(x, \theta_0))^2}{\zeta(x)}~\mathrm{d}x\\
    \textup{subject to}\quad & h(x_j)=y_j,\quad j=1,\ldots,M. 
  \end{aligned}
  \label{function_space_non_ASI}
\end{equation}
\end{theorem}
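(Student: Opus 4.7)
The plan is to reduce the non-ASI statement to the already-established Theorem~\ref{theorem_func} by a relabeling of the training data. The key observation is that problem~\eqref{continuous_add_linear_non_ASI} is obtained from problem~\eqref{continuous_add_linear} by replacing the labels $y_j$ with the shifted labels $\tilde y_j := y_j - f(x_j, \theta_0)$; the objective and the geometric form of the constraint are otherwise identical. Hence if $(\overline{\gamma}, \overline{u}, \overline{v})$ solves \eqref{continuous_add_linear_non_ASI}, then the same triple solves \eqref{continuous_add_linear} for the relabeled data $\{(x_j, \tilde y_j)\}_{j=1}^M$.

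Applying Theorem~\ref{theorem_func} to this relabeled instance, the ``ASI-style'' output function
\[
\tilde g(x) := \overline{u}\,x + \overline{v} + \int_{\mathbb{R}^2} \overline{\gamma}(W^{(1)},c)\,[W^{(1)}(x-c)]_+\, \mathrm{d}\nu(W^{(1)},c)
\]
satisfies $\tilde g''(x) = 0$ for $x \notin S$, and its restriction to $S$ is the unique minimizer of $\int_S (h''(x))^2/\zeta(x)\, \mathrm{d}x$ subject to $h(x_j) = \tilde y_j$. Translating back via the identity $g(x,(\overline{\gamma},\overline{u},\overline{v})) = \tilde g(x) + f(x, \theta_0)$ and differentiating yields $g''(x,(\overline{\gamma},\overline{u},\overline{v})) - f''(x, \theta_0) = \tilde g''(x)$, which immediately gives the first claim that $g''(x,(\overline{\gamma},\overline{u},\overline{v})) = f''(x, \theta_0)$ for $x \notin S$.

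For the variational characterization, I would introduce the affine bijection $\Phi\colon C^2(S)\to C^2(S)$ defined by $\Phi(h) = h - f(\cdot,\theta_0)|_S$. Under $\Phi$, the constraint $h(x_j) = y_j$ becomes $\Phi(h)(x_j) = \tilde y_j$, and the objective of \eqref{function_space_non_ASI}, namely $\int_S (h'' - f''(\cdot,\theta_0))^2/\zeta\, \mathrm{d}x$, becomes $\int_S (\Phi(h)'')^2/\zeta\, \mathrm{d}x$. Thus \eqref{function_space_non_ASI} is equivalent, via $\Phi$, to the ASI variational problem for the shifted data $\{(x_j,\tilde y_j)\}_{j=1}^M$. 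Since $\tilde g|_S$ is the unique minimizer of the ASI side, its preimage $\tilde g|_S + f(\cdot,\theta_0)|_S = g(\cdot,(\overline{\gamma},\overline{u},\overline{v}))|_S$ is the unique minimizer of \eqref{function_space_non_ASI}.

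The only technical subtlety requiring care is the regularity of $f(\cdot, \theta_0)$: for a finite-width ReLU network this function is piecewise linear, so $f''(\cdot, \theta_0)$ is naturally interpreted as a distribution (a finite sum of Dirac masses at the kinks), and the variational problem should be phrased in a Sobolev-type space in which the difference $h'' - f''(\cdot,\theta_0)$ lies in the weighted $L^2(\mathrm{d}x/\zeta)$ space. This does not obstruct the reduction, because $\Phi$ remains an affine bijection on the enlarged function class, and the shift leaves the weighted $L^2$ norm invariant. No new analytic work is required beyond the ASI version; the entire argument is a bookkeeping translation through $\Phi$, with the main (and already discharged) obstacle being the ASI-case proof itself.
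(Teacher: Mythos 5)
Your reduction is correct and is precisely the route the paper intends: the non-ASI problem \eqref{continuous_add_linear_non_ASI} is the ASI problem \eqref{continuous_add_linear} with labels shifted to $y_j - f(x_j,\theta_0)$, and the affine substitution $h \mapsto h - f(\cdot,\theta_0)$ carries the variational problem \eqref{function_space_non_ASI} onto \eqref{function_space} for the shifted data, so everything follows from Theorem~\ref{theorem_func}. The paper states the non-ASI version without writing out a proof, implicitly relying on exactly this bookkeeping translation, and your remark on the distributional interpretation of $f''(\cdot,\theta_0)$ is an appropriate (and harmless) caveat.
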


\subsection{Proof of Proposition~\ref{proposition:form-rho} and Remarks to Proposition~\ref{set_rho}}
\label{app:set-rho}

\begin{proof}[Proof of Proposition~\ref{proposition:form-rho}]
Let $p_{\mathcal{W},\mathcal{C}}$ and $p_{\mathcal{W},\mathcal{B}}$ denote the joint density functions of $(\mathcal{W},\mathcal{C})$ and $(\mathcal{W},\mathcal{B})$, respectively. We have
\begin{equation*}
    p_{\mathcal{W},\mathcal{C}}(W,c)
    =\left|\frac{\partial(W,-Wc)}{\partial(W,c)}\right|p_{\mathcal{W},\mathcal{B}}(W,-Wc) 
    =|W|p_{\mathcal{W},\mathcal{B}}(W,-Wc), 
\end{equation*}
and 
\begin{equation}
\label{rho_x}
\begin{aligned}
\mathbb{E}(W^2|C=x)p_\mathcal{C}(x)
&=\int_\mathbb{R}W^2p_{\mathcal{W}|\mathcal{C}=x}(W)~\mathrm{d}W\cdot p_\mathcal{C}(x)\\
&=\int_\mathbb{R}W^2p_{\mathcal{W},\mathcal{C}}(W,x)~\mathrm{d}W\\
&=\int_\mathbb{R}|W|^3p_{\mathcal{W},\mathcal{B}}(W,-Wx)~\mathrm{d}W . 
\end{aligned}
\end{equation}
\end{proof}

\begin{proof}[Proof of Proposition~\ref{set_rho}]
The construction is given in the statement of the proposition. 
\end{proof}

\begin{remark}[Remark to Proposition~\ref{set_rho}, sampling the initial parameters]
The variables $(\mathcal{W},\mathcal{B})$ can be sampled by first sampling $C$ from $p_\mathcal{C}(x)=\frac{1}{Z}\frac{1}{\varrho(x)}$, then independently sampling $W$ from a standard Gaussian distribution and setting $B=-WC$. 
In this construction, in general $\mathcal{W}$ and $\mathcal{B}$ are not independent. 
\end{remark}

Intuitively, if we want the output function to be smooth at a certain point $x_0$, we can let the conditional distribution of $\mathcal{W}$ given $\mathcal{C}$ be concentrated around zero for $\mathcal{C}=x_0$, or we can let the probability density function of $\mathcal{C}$ to be small at $\mathcal{C}=x_0$. 
Note that $p_\mathcal{C}$ is the breakpoint density at initialization. The form of this has been studied for uniform initialization by \citet{sahs2020a}. 
We provide the explicit form of the smoothness penalty function for several types of initialization in Appendix~\ref{appendix:proof-explicit-rho}.

\begin{remark}[Remark to Proposition~\ref{set_rho}, independent initialization]
Note that constructing an arbitrary curvature penalty function will necessitate in general a non-independent joint distribution of $\mathcal{W}$ and $\mathcal{B}$. 
If $\mathcal{W}$ and $\mathcal{B}$ are required to be independent random variables, \eqref{rho_x} gives 
\[
\zeta(x) = \mathbb{E}(W^2|C=x)p_\mathcal{C}(x)
=\int_\mathbb{R}|W|^3p_{\mathcal{W}}(W)p_\mathcal{B}(-Wx)~\mathrm{d}W.
\]
Given a desired function for the left hand side, we can still try to solve for the parameter densities. 
This type of integral equation problem has been studied \citep{10.2307/2038642} and one can write a formal solution, although it is not always clear whether it will be a density. 
\end{remark}

\subsection{Proof of Theorem~\ref{proposition:explicit-rho}}
\label{appendix:proof-explicit-rho}

We prove the statement for the three considered types of initialization distributions in turn.

\begin{proof}[Proof of Theorem~\ref{proposition:explicit-rho} for Gaussian initialization] 
Using \eqref{rho_x}, we have
\begin{equation*}
\begin{aligned}
\mathbb{E}(W^2|C=x)p_\mathcal{C}(x)&=\int_\mathbb{R}|W|^3p_{\mathcal{W}}(W)p_\mathcal{B}(-Wx)\mathrm{d}W\\
&=\int_\mathbb{R}|W|^3\frac{1}{\sqrt{2\pi}\sigma_w}e^{-\frac{W^2}{2\sigma_w^2}}\frac{1}{\sqrt{2\pi}\sigma_b}e^{-\frac{W^2x^2}{2\sigma_b^2}}\mathrm{d}W\\
&=\frac{1}{2\pi\sigma_w\sigma_b}\int_\mathbb{R}|W|^3e^{-(\frac{1}{2\sigma_w^2}+\frac{x^2}{2\sigma_b^2})W^2}\mathrm{d}W.\\
\end{aligned}
\end{equation*}
Let $\sigma^2=1/\left(\frac{1}{\sigma_w^2}+\frac{x^2}{\sigma_b^2}\right)$, then we get
\begin{equation*}
\begin{aligned}
\mathbb{E}(W^2|C=x)p_\mathcal{C}(x)
&=\frac{\sigma}{\sqrt{2\pi}\sigma_w\sigma_b}\int_\mathbb{R}|W|^3\frac{1}{\sqrt{2\pi}\sigma}e^{-\frac{W^2}{2\sigma^2}}\mathrm{d}W\\
&=\frac{\sigma}{\sqrt{2\pi}\sigma_w\sigma_b}\sigma^3\cdot 2 \cdot \sqrt{\frac{2}{\pi}}\\
&=\frac{2\sigma^4}{\pi\sigma_w\sigma_b}\\
&=\frac{2\sigma_w^3\sigma_b^3}{\pi(\sigma_b^2+x^2\sigma_w^2)^2}.\\
\end{aligned}
\end{equation*}
Then we have
\begin{equation*}
\begin{aligned}
\zeta(x)&=\mathbb{E}(W^2|C=x)p_\mathcal{C}(x)\\
&=\frac{2\sigma_w^3\sigma_b^3}{\pi(\sigma_b^2+x^2\sigma_w^2)^2}.
\end{aligned}
\end{equation*}
\end{proof}

\begin{proof}[Proof of Theorem~\ref{proposition:explicit-rho} for binary-uniform initialization]
Since $\mathcal{W}$ is either $-1$ or $1$, $\mathbb{E}(\mathcal{W}^2|\mathcal{C}=x)=1$ for any $x\in \mathrm{supp}(\nu_{\mathcal{C}})$. Since $\mathcal{B}\sim \mathrm{Unif}(-a_b, a_b)$, it is easy to check $-\mathcal{B}/\mathcal{W}\sim \mathrm{Unif}(-a_b, a_b)$. So $\zeta(x)=1/2a_b,~x\in[-a_b,a_b]$. 
\end{proof}

\begin{proof}[Proof of Theorem~\ref{proposition:explicit-rho} for uniform initialization]
According to Theorem 1 in \cite{sahs2020a}, the density function $p_\mathcal{C}(c)$ of $\nu_\mathcal{C}$ is
\begin{equation*}
  p_\mathcal{C}(c)=\frac{1}{4a_wa_b}\left(\min\left\{\frac{a_b}{|c|},a_w\right\}\right)^2,\quad c\in\mathrm{supp}(\nu_\mathcal{C}).
\end{equation*}
When $|c|\leq \frac{a_b}{a_w}$, %
then $p_\mathcal{C}(c)=\frac{1}{4a_wa_b}\left(a_w\right)^2$. It means that $p_\mathcal{C}(c)$ is constant when $|c|\leq \frac{a_b}{a_w}$.

Let $p_{\mathcal{W},\mathcal{B}}(W^{(1)},b)$ denote the density function of $\mu$, $p_{\mathcal{W},\mathcal{C}}(W^{(1)},c)$ denote the density function of $\nu$, so
\begin{equation*}
  \begin{aligned}
    p_{\mathcal{W},\mathcal{C}}(W^{(1)},c)&=p_{\mathcal{W},\mathcal{B}}(W^{(1)},-cW^{(1)})\frac{\partial b}{\partial c}\\
    &= \frac{1}{4a_wa_b}\mathbbm{1}_{W^{(1)}\in[-a_w,a_w]}\cdot\mathbbm{1}_{-cW^{(1)}\in[-a_b,a_b]}\cdot(-W^{(1)}).
  \end{aligned}
\end{equation*}
Here $\mathbbm{1}_{a}$ is the indicator function which equals to $1$ when condition $a$ is true, and $0$ otherwise. Then density function $p_{\mathcal{W}|\mathcal{C}}(W^{(1)}|c)$ of the conditional distribution %
$\nu_{\mathcal{W}|\mathcal{C}=c}$ is
\begin{equation*}
  \begin{aligned}
    p_{\mathcal{W}|\mathcal{C}}(W^{(1)}|c)&=\frac{p_{\mathcal{W},\mathcal{C}}(W^{(1)},c)}{p_\mathcal{C}(c)}\\
    &=\frac{\frac{1}{4a_wa_b}\mathbbm{1}_{W^{(1)}\in[-a_w,a_w]}\cdot\mathbbm{1}_{-cW^{(1)}\in[-a_b,a_b]}\cdot(-W^{(1)})}{p_\mathcal{C}(c)}.
  \end{aligned}
\end{equation*}
When $|c|\leq \frac{a_b}{a_w}$, $|-cW^{(1)}|\leq\frac{a_b}{a_w}a_w=a_b$. So $-cW^{(1)}\in[-a_b,a_b]$ is true and $\mathbbm{1}_{-cW^{(1)}\in[-a_b,a_b]}=1$. Combined with the fact that $p_\mathcal{C}(c)$ is constant when $|c|\leq \frac{a_b}{a_w}$, we have $p_{\mathcal{W}|\mathcal{C}}(W^{(1)}|c)$ is independent of $c$ when $|c|\leq \frac{a_b}{a_w}$. So $\mathbb{E}(\mathcal{W}^2|\mathcal{C}=c)$ is constant when $|c|\leq \frac{a_b}{a_w}$. Since $\frac{a_b}{a_w} \geq I$, $\mathbb{E}(\mathcal{W}^2|\mathcal{C}=c)$ and $p_\mathcal{C}(c)$ are constant when $c\in [-I,I]$. Then $\zeta(x) =\mathbb{E}(W^2|C=x)p_\mathcal{C}(x)$ is constant when $c\in [-I,I]$. 
\end{proof}

\section{Proofs of Results for Multivariate Regression}

\label{app:multi-dimensional inputs}

\subsection{Proof of Theorem~\ref{theorem_func_multi_dim}}
\label{proof_theorem_func_multi_dim}
In this section, we prove Theorem \ref{theorem_func_multi_dim}. 
We will need the following lemmas: 
\begin{lemma}
\label{Delta_s_lipschitz}
Let $f\in\operatorname{Lip}(\mathbb{R}^d)$ be considered as a tempered distribution 
and $(-\Delta)^s f\equiv 0$, $s>0$. Then $f$ is linear, i.e., $f(\mathbf{x})=\langle \mathbf{u},\mathbf{x}\rangle+v$. 
\end{lemma}
\begin{proof}[Proof of Lemma \ref{Delta_s_lipschitz}] 
\hj{In the following proof we regard $f$ as a tempered distribution, thus the fractional Laplacian and Fourier transform of $f$ can be defined. We first give a brief introduction of tempered distribution. }

\hj{The space of tempered distributions $S'(\mathbb{R}^d)$ is the space of continuous linear functionals on the space of Schwartz test functions $S(\mathbb{R}^d)$. 
The space of Schwartz test functions on
$\mathbb{R}^{d}$ is the rapidly decreasing function space
\begin{equation}
    S(\mathbb{R}^d) := \left \{ \psi \in C^\infty(\mathbb{R}^d) \mid \forall \alpha, \beta \in\mathbb{N}^d, \sup_{\mathbf{x}\in \mathbb{R}^d}|\mathbf{x}^\beta D^\alpha \psi(\mathbf{x})| < \infty\right \}.
\end{equation}
The details of defining norms and the topology on $S(\mathbb{R}^d)$ is shown in \cite[Chapter 1]{melrose2008introduction}.} 

\hj{For any $f\in\operatorname{Lip}(\mathbb{R}^d)$, we can define a corresponding tempered distribution $T_f$ by
\begin{equation}
    T_f:S(\mathbb{R}^d)\mapsto \mathbb{R},\ T_f(\psi)=\int_{\mathbb{R}^d} f\psi\mathrm{d}\mathbf{x}.
\end{equation}
So any $f\in\operatorname{Lip}(\mathbb{R}^d)$ can be naturally regarded as a tempered distribution $T_f$.}

\hj{Let $\mathcal{F}$ be the Fourier transform. Since $\mathcal{F}$ and its adjoint maps a Schwartz function to a Schwartz function, we can define the Fourier transform of a tempered distribution by
\begin{equation}
 \mathcal{F}:S'(\mathbb{R}^d)\mapsto S'(\mathbb{R}^d),\ (\mathcal{F}T_f)(\psi)=\int_{\mathbb{R}^d} f\cdot \mathcal{G}\psi\mathrm{d}\mathbf{x}, 
\end{equation}
where $\mathcal{G}$ is the adjoint of $\mathcal{F}$. Details of Fourier transform on tempered distributions can be found in \cite[Chapter 1.7]{melrose2008introduction}.}

\hj{Similarly the fractional Laplacian of a tempered distribution is defined by
\begin{equation}
 (-\Delta)^s:S'(\mathbb{R}^d)\mapsto S'(\mathbb{R}^d),\ ((-\Delta)^sT_f)(\psi)=\int_{\mathbb{R}^d} f\cdot (-\Delta)^s\psi\mathrm{d}\mathbf{x}, 
\end{equation}}

Since $(-\Delta)^s f\equiv 0$, in Fourier domain we have $\|\bm\xi\|^{2s} \mathcal{F}f\equiv 0$.
It means that the support of $\mathcal{F}f$ is $\{0\}$. According to \citet[Chapter~9]{folland1999real}, $\mathcal{F}f$ is a linear combination of $\delta$ (Dirac's Delta) and derivatives of $\delta$.\footnote{The $k$-th derivative of $\delta$ can be defined as a tempered distribution on the Schwartz test function $\phi$ by: $\delta^{(k)}(\phi)=(-1)^k\phi^{(k)}(0)$.} Then $f$ is a polynomial. Since $f$ is Lipschitz continuous, we conclude that $f$ is linear. 
\end{proof}

\begin{lemma}
\label{even_odd_2_norm}
Let $\alpha\in L^2(\mathbb{S}^{d-1}\times \mathbb{R})$. Suppose that $\alpha=\alpha^++\alpha^-$ where $\alpha^+$ is even and $\alpha^-$ is odd. Then $\|\alpha\|_2\geq \|\alpha^+\|_2$ and $\|\alpha\|_2\geq \|\alpha^-\|_2$.
\end{lemma}
\begin{proof}[Proof of Lemma \ref{even_odd_2_norm}]
Since
\begin{equation*}
    \begin{aligned}
      \|\alpha\|_2^2&=\|\alpha^++\alpha^-\|_2^2\\
      &=\|\alpha^+\|_2^2+\|\alpha^-\|_2^2+2\langle \alpha^+,\alpha^+\rangle\\
      &=\|\alpha^+\|_2^2+\|\alpha^-\|_2^2+2\int_{\mathbb{S}^{d-1}\times\mathbb{R}}\alpha^+\cdot \alpha^-~\mathrm{d}\sigma^{d-1}(\bm{V})\mathrm{d}c\\
      &=\|\alpha^+\|_2^2+\|\alpha^-\|_2^2,
    \end{aligned}
\end{equation*}
where the last equality holds true since $\alpha^+\cdot \alpha^-$ is odd. Then we have $\|\alpha\|_2\geq \|\alpha^+\|_2$ and $\|\alpha\|_2\geq \|\alpha^-\|_2$.
\end{proof}
The next lemma shows that the output of the infinite-width network is Lipschitz continuous. This is also observed in \cite[Proposition~8]{ongie2019function}. 
\begin{lemma}
\label{network_Lipschitz}
Assume that (1) the norm of the random vector $\|\bm{\mathcal{W}}\|$ has the finite second moment; (2) $\int_{\mathbb{R}^+\times\mathbb{S}^{d-1}\times\mathbb{R}} \gamma^2(u,\bm{V},c)~\mathrm{d}\nu(u,\bm{V},c)<+\infty$; (3) $\mathbf{u}\in\mathbb{R}^d$ and $v\in\mathbb{R}$.  Then $g(\mathbf{x},(\gamma, \mathbf{u},v))$ is Lipschitz continuous.
\end{lemma}
\begin{proof}[Proof of Lemma \ref{network_Lipschitz}]
Let $\alpha(u\bm{V},-cu)=\gamma(u,\bm{V},c)$. For all $\mathbf{x}_1,\mathbf{x}_2\in\mathbb{R}^d$, we have
\begin{equation*}
    \begin{aligned}
      &|g(\mathbf{x}_1,(\gamma, \mathbf{u},v))-g(\mathbf{x}_2,(\gamma, \mathbf{u},v))|\\
      \leq&\left|\int_{\mathbb{R}^d\times \mathbb{R}} |\alpha(\mathbf{W}^{(1)},b)|\left|[\langle \mathbf{W}^{(1)},\mathbf{x}_1\rangle+b]_+-[\langle \mathbf{W}^{(1)},\mathbf{x}_2\rangle+b]_+\right|~\mathrm{d}\mu(\mathbf{W}^{(1)},b)\right|+|\langle  \mathbf{u},\mathbf{x}_1-\mathbf{x}_2 \rangle| \\
      \leq&\left|\int_{\mathbb{R}^d\times \mathbb{R}} |\alpha(\mathbf{W}^{(1)},b)|\left|\langle \mathbf{W}^{(1)},\mathbf{x}_1-\mathbf{x}_2\rangle\right|~\mathrm{d}\mu(\mathbf{W}^{(1)},b)\right|+|\langle  \mathbf{u},\mathbf{x}_1-\mathbf{x}_2 \rangle| \\
      \leq&\left(\int_{\mathbb{R}^d\times \mathbb{R}} |\alpha(\mathbf{W}^{(1)},b)| \|\mathbf{W}^{(1)}\|~\mathrm{d}\mu(\mathbf{W}^{(1)},b)+\| \mathbf{u}\|\right) \|\mathbf{x}_1-\mathbf{x}_2\|\\
      \leq&\left(\int_{\mathbb{R}^d\times \mathbb{R}} \alpha^2(\mathbf{W}^{(1)},b) ~\mathrm{d}\mu(\mathbf{W}^{(1)},b)\cdot\int_{\mathbb{R}^d\times \mathbb{R}}\|\mathbf{W}^{(1)}\|^2~\mathrm{d}\mu(\mathbf{W}^{(1)},b)+\| \mathbf{u}\|\right) \|\mathbf{x}_1-\mathbf{x}_2\|\\
      \leq&\left(\int_{\mathbb{R}^d\times \mathbb{R}} \alpha^2(\mathbf{W}^{(1)},b) ~\mathrm{d}\mu(\mathbf{W}^{(1)},b)\cdot\mathbb{E}(\|\bm{\mathcal{W}}\|^2) +\| \mathbf{u}\|\right) \|\mathbf{x}_1-\mathbf{x}_2\|.\\
    \end{aligned}
\end{equation*}
According to the assumptions, $\int_{\mathbb{R}^d\times \mathbb{R}} \alpha^2(\mathbf{W}^{(1)},b) ~\mathrm{d}\mu(\mathbf{W}^{(1)},b)$, $\mathbb{E}(\|\bm{\mathcal{W}}\|^2)$ and $\| \mathbf{u}\|$ are all finite. Then $g(\mathbf{x},(\gamma, \mathbf{u},v))$ is Lipschitz continuous.
\end{proof}

\begin{lemma}
Given a function $h\in\operatorname{Lip}(\mathbb{R}^d)\cap C(\mathbb{R}^d)$. Define $\psi:\mathbb{S}^{d-1}\times \mathbb{R}\to \mathbb{R}$ by $\psi\coloneqq-\frac{1}{2(2\pi)^{d-1}}\mathcal{R}\{(-\Delta)^{(d+1)/2}h\}$. Assume that (1) $\int_{\operatorname{supp}(\zeta)} \left(\psi(\bm{V},c)\right)^2/\zeta(\bm{V},c)~\mathrm{d}\sigma^{d-1}(\bm{V})\mathrm{d}c<+\infty$, where $\zeta(\bm{V},c)$ is define in \eqref{definition_of_zeta}, and $\psi(\bm{V},c)=0, \ \forall(\bm{V},c)\not\in \operatorname{supp}(\zeta)$; (2) $\|\bm{\mathcal{W}}\|_2$ and $\mathcal{B}$ both have finite second moments; (3) $(-\Delta)^{(d+1)/2}h \in L^p(\mathbb{R}^d),\ 1\leq p<d/(d-1)$. Then there exist $\mathbf{u}\in \mathbb{R}^d$ and $v\in \mathbb{R}$ such that $h(\mathbf{x})=\int_{\mathbb{S}^{d-1}\times\mathbb{R}}\psi(\bm{V},c)[\langle\bm{V},\mathbf{x}\rangle-c]_+ ~\mathrm{d}\sigma^{d-1}(\bm{V})\mathrm{d}c+\langle \mathbf{u},\mathbf{x} \rangle +v$.
\label{recover_lemma}
\end{lemma}
\begin{proof}[Proof of Lemma~\ref{recover_lemma}]
Since $\|\bm{\mathcal{W}}\|_2$ and $\mathcal{B}$ both have finite second moments, we have
\begin{equation*}
\begin{aligned}
    \int_{\operatorname{supp}(\zeta)} \zeta(\bm{V},c)~\mathrm{d}\sigma^{d-1}(\bm{V})\mathrm{d}c&=\int_{\operatorname{supp}(\zeta)} p_{\mathcal{C}|\bm{\mathcal{V}}=\bm{V}}(c)~p_{\bm{\mathcal{V}}}(\bm{V})\mathbb{E}(\mathcal{U}^2|\bm{\mathcal{V}}=\bm{V},\mathcal{C}=c)~\mathrm{d}\sigma^{d-1}(\bm{V})\mathrm{d}c\\
    &=\mathbb{E}\left(\mathbb{E}(\mathcal{U}^2|\bm{\mathcal{V}},\mathcal{C})\right)\\
    &=\mathbb{E}(\mathcal{U}^2)\\
    &=\mathbb{E}(\|\bm{\mathcal{W}}\|_2^2)\\
    &<+\infty, 
\end{aligned}
\end{equation*}
and 
\begin{equation*}
\begin{aligned}
    \int_{\operatorname{supp}(\zeta)} \zeta(\bm{V},c)\cdot c^2~\mathrm{d}\sigma^{d-1}(\bm{V})\mathrm{d}c&=\int_{\operatorname{supp}(\zeta)} p_{\mathcal{C}|\bm{\mathcal{V}}=\bm{V}}(c)~p_{\bm{\mathcal{V}}}(\bm{V})\mathbb{E}(\mathcal{U}^2|\bm{\mathcal{V}}=\bm{V},\mathcal{C}=c)\cdot c^2~\mathrm{d}\sigma^{d-1}(\bm{V})\mathrm{d}c\\
    &=\mathbb{E}\left(\mathbb{E}(\mathcal{U}^2\mathcal{C}^2|\bm{\mathcal{V}},\mathcal{C})\right)\\
    &=\mathbb{E}(\mathcal{U}^2\mathcal{C}^2)\\
    &=\mathbb{E}(\mathcal{B}^2)\\
    &<+\infty.
\end{aligned}
\end{equation*}
Let $\tilde{h}(\mathbf{x})=\int_{\mathbb{S}^{d-1}\times\mathbb{R}}\psi(\bm{V},c)[\langle\bm{V},\mathbf{x}\rangle-c]_+ ~\mathrm{d}\sigma^{d-1}(\bm{V})\mathrm{d}c$. For any $\mathbf{x}\in \mathbb{R}^d$, we have
\begin{equation*}
    \begin{aligned}
      &\int_{\mathbb{S}^{d-1}\times\mathbb{R}}\left|\psi(\bm{V},c)\right|[\langle\bm{V},\mathbf{x}\rangle-c]_+ ~\mathrm{d}\sigma^{d-1}(\bm{V})\mathrm{d}c\\
      \leq&\int_{\operatorname{supp}(\zeta)}\left|\psi(\bm{V},c)\right|(\|\bm{V}\|_2\|\mathbf{x}\|_2+|c|) ~\mathrm{d}\sigma^{d-1}(\bm{V})\mathrm{d}c \\
      \leq&\int_{\operatorname{supp}(\zeta)}\left|\psi(\bm{V},c)\right|(\|\mathbf{x}\|_2+|c|) ~\mathrm{d}\sigma^{d-1}(\bm{V})\mathrm{d}c \\
      \leq&\|\mathbf{x}\|_2\sqrt{\int_{\operatorname{supp}(\zeta)} \frac{\left(\psi(\bm{V},c)\right)^2}{\zeta(\bm{V},c)}~\mathrm{d}\sigma^{d-1}(\bm{V})\mathrm{d}c\cdot \int_{\operatorname{supp}(\zeta)} \zeta(\bm{V},c)~\mathrm{d}\sigma^{d-1}(\bm{V})\mathrm{d}c}\\
      +&\sqrt{\int_{\operatorname{supp}(\zeta)} \frac{\left(\psi(\bm{V},c)\right)^2}{\zeta(\bm{V},c)}~\mathrm{d}\sigma^{d-1}(\bm{V})\mathrm{d}c\cdot \int_{\operatorname{supp}(\zeta)} \zeta(\bm{V},c)\cdot c^2~\mathrm{d}\sigma^{d-1}(\bm{V})\mathrm{d}c}\\
      < &+\infty.
    \end{aligned}
\end{equation*}
So $\tilde{h}(\mathbf{x})$ is well-defined. The above inequality also implies that the Lipschitz constant of $\tilde{h}(\mathbf{x})$ is bounded by $\int_{\operatorname{supp}(\zeta)}\left|\psi(\bm{V},c)\right|\|\bm{V}\|_2 ~\mathrm{d}\sigma^{d-1}(\bm{V})\mathrm{d}c$, which is finite. So $\tilde{h}(\mathbf{x})$ is Lipschitz continuous. Then we have
\begin{equation}
\begin{aligned}
    (-\Delta)^{(d+1)/2}  \tilde{h}
    &= -(-\Delta)^{(d-1)/2}\int_{\mathbb{S}^{d-1}\times\mathbb{R}}\psi(\bm{V},c)\delta(\langle\bm{V},\mathbf{x}\rangle-c) ~\mathrm{d}\sigma^{d-1}(\bm{V})\mathrm{d}c\\
    &= -(-\Delta)^{(d-1)/2}\int_{\mathbb{S}^{d-1}}\psi(\bm{V},\langle\bm{V},\mathbf{x}\rangle) ~\mathrm{d}\sigma^{d-1}(\bm{V})\\
    &= -(-\Delta)^{(d-1)/2}\mathcal{R}^*\{\psi\}.
\end{aligned}
\label{Laplace_output}
\end{equation}
Since $(-\Delta)^{(d+1)/2}h \in L^p(\mathbb{R}^d),\ 1\leq p<d/(d-1)$, we can apply the inversion formula of the Radon transform \citep{solmon1987asymptotic}:
\begin{equation*}
\begin{aligned}
    (-\Delta)^{(d+1)/2}h
    &=\frac{1}{2(2\pi)^{d-1}}(-\Delta)^{(d-1)/2}\mathcal{R}^*\{\mathcal{R}\{(-\Delta)^{(d+1)/2}h\}\}\\
    &= -(-\Delta)^{(d-1)/2}\mathcal{R}^*\{\psi\}\\
    &= (-\Delta)^{(d+1)/2}  \tilde{h}.
\end{aligned}
\end{equation*}
According to Lemma \ref{Delta_s_lipschitz}, we have that $h-\tilde{h}$ is linear, which gives the claim.
\end{proof}
Lemma \ref{recover_lemma} immediately gives the following corollary:
\begin{corollary}
\label{equal_radon}
If $\mathcal{R}\{(-\Delta)^{(d+1)/2}g\}\equiv\mathcal{R}\{(-\Delta)^{(d+1)/2}h\}$, and $(-\Delta)^{(d+1)/2}g, (-\Delta)^{(d+1)/2}h\in L^p(\mathbb{R}^d),\ 1\leq p<d/(d-1)$, then $g-h$ is linear.
\end{corollary}

The next lemma shows that the minimizer $h(\mathbf{x})$ of problem \eqref{function_space_multi} satisfies that $\mathcal{R}\{(-\Delta)^{(d+1)/2}h\}$ is compactly supported.
\begin{lemma}
\label{tight_support}
  Consider the training data $\{(\mathbf{x}_i,y_i)\}_{i=1}^M$. Let $R$ be the maximum 2-norm of training inputs, i.e., $R=\max_i \|\mathbf{x}_i\|_2$. Suppose $h(\mathbf{x})$ is the solution of the optimization problem \eqref{function_space_multi}. Then ${\mathcal{R}\{(-\Delta)^{(d+1)/2}h\}(\bm{V},c)}=0,\ \forall (\bm{V},c)\not\in \mathbb{S}^{d-1}\times [-R,R]$. 
\end{lemma}
\begin{proof}[Proof of Lemma \ref{tight_support}]
Define $\psi:\mathbb{S}^{d-1}\times \mathbb{R}\to \mathbb{R}$ by $\psi\coloneqq-\frac{1}{2(2\pi)^{d-1}}\mathcal{R}\{(-\Delta)^{(d+1)/2}h\}$. Then we construct the function $\overline{\psi}:\mathbb{S}^{d-1}\times \mathbb{R}\to \mathbb{R}$ as follows:
\begin{equation*}
    \overline{\psi}(\bm{V},c) =
    \begin{cases}
    \psi(\bm{V},c),&\text{for }|c|\leq R\\
    0.&\text{for }|c|> R.
    \end{cases}
\end{equation*}
Since the Radon transform is even, we have that $\psi$ and $\overline{\psi}$ are both even. Since $h$ is the solution of \eqref{function_space_multi}, $\psi$ satisfies all assumptions of Lemma \ref{recover_lemma}. Then according to Lemma \ref{recover_lemma},  $h(\mathbf{x})=\int_{\mathbb{S}^{d-1}\times\mathbb{R}}\psi(\bm{V},c)[\langle\bm{V},\mathbf{x}\rangle-c]_+ ~\mathrm{d}\sigma^{d-1}(\bm{V})\mathrm{d}c+\langle \mathbf{u},\mathbf{x} \rangle +v$. Let $\overline{h}(\mathbf{x})=\int_{\mathbb{S}^{d-1}\times\mathbb{R}}\overline{\psi}(\bm{V},c)[\langle\bm{V},\mathbf{x}\rangle-c]_+ ~\mathrm{d}\sigma^{d-1}(\bm{V})\mathrm{d}c$. Then $\overline{h}(\mathbf{x})-h(\mathbf{x})=\int_{\mathbb{S}^{d-1}\times\mathbb{R}}(\psi-\overline{\psi})(\bm{V},c)[\langle\bm{V},\mathbf{x}\rangle-c]_+ ~\mathrm{d}\sigma^{d-1}(\bm{V})\mathrm{d}c+\langle \mathbf{u},\mathbf{x} \rangle +v$. When $|c|\leq R$,  $\psi-\overline{\psi}=0$. When $|c|> R$, $[\langle\bm{V},\mathbf{x}\rangle-c]_+$ is linear with respect to $x$ on $\{\mathbf{x}:\|\mathbf{x}\|_2\leq R\}$. It means that $\overline{h}(\mathbf{x})-h(\mathbf{x})$ is linear on $\{\mathbf{x}:\|\mathbf{x}\|_2\leq R\}$. Then we can find out $\overline{u}$ and $\overline{v}$ such that $h(\mathbf{x})=\overline{h}(\mathbf{x})+\langle \overline{\mathbf{u}},\mathbf{x} \rangle +\overline{v}$ on $\{\mathbf{x}:\|\mathbf{x}\|_2\leq R\}$. Let $\widetilde{h}(\mathbf{x})=\overline{h}(\mathbf{x})+\langle \overline{\mathbf{u}},\mathbf{x} \rangle +\overline{v}$. Since all training inputs satisfy $\|\mathbf{x}_i\|\leq R$, we have that $\widetilde{h}(\mathbf{x})$ fits all training data. Similar to \eqref{Laplace_output}, we have that $\Delta \widetilde{h}=\mathcal{R}^*\{\overline{\psi}\}$. Since $\overline{\psi}$ has compact support, the inversion formula of the Radon transform \citep{solmon1987asymptotic} gives that $\overline{\psi}=-\frac{1}{2(2\pi)^{d-1}}\mathcal{R}\{(-\Delta)^{(d+1)/2}\widetilde{h}\}$. Since the support of $\overline{\psi}$ is contained in the support of $\psi$, we have $\mathcal{R}\{(-\Delta)^{(d+1)/2}\widetilde{h}\}(\bm{V},c)=0, \ \forall(\bm{V},c)\not\in \operatorname{supp}(\zeta)$. Since $(-\Delta)^{(d+1)/2}\widetilde{h}=-(-\Delta)^{(d-1)/2}\mathcal{R}^*\{\overline{\psi}\}$ and $\overline{\psi}$ is compactly supported, we have $(-\Delta)^{(d-1)/2}\mathcal{R}^*\{\overline{\psi}\}\in L^p(\mathbb{R}^d),\ 1\leq p<d/(d-1)$ according to  \citep[Lemma~4.1]{solmon1987asymptotic}. The above argument shows that $\widetilde{h}$ satisfies all constrains of the problem \eqref{function_space_multi}. Since $h$ is the solution of \eqref{function_space_multi}, we have $\int_{\operatorname{supp}(\zeta)} \frac{\left(\psi(\bm{V},c)\right)^2}{\zeta(\bm{V},c)}~\mathrm{d}\sigma^{d-1}(\bm{V})\mathrm{d}c\leq \int_{\operatorname{supp}(\zeta)} \frac{\left(\overline{\psi}(\bm{V},c)\right)^2}{\zeta(\bm{V},c)}~\mathrm{d}\sigma^{d-1}(\bm{V})\mathrm{d}c$. It means that $\psi(\bm{V},c)=0$ when $|c|>R$, which gives the claim.
\end{proof}
The proof of Lemma \ref{tight_support} also applies to the
optimization problem without the constraint ${\mathcal{R}\{(-\Delta)^{(d+1)/2}h\}(\bm{V},c)}=0, \ \forall(\bm{V},c)\not\in \operatorname{supp}(\zeta)$. Then we have the following corollary.
\begin{corollary}
\label{tight_support_corollary}
  Consider the training data $\{(\mathbf{x}_i,y_i)\}_{i=1}^M$. Let $R$ be the maximum 2-norm of training inputs, i.e., $R=\max_i \|\mathbf{x}_i\|_2$. Suppose $h(\mathbf{x})$ is the solution of the following optimization problem:
\begin{equation}
\begin{aligned}
 \min_{h\in \operatorname{Lip}(\mathbb{R}^d)\cap C(\mathbb{R}^d)}\quad & \int_{\operatorname{supp}(\zeta)} \frac{\left({\mathcal{R}\{(-\Delta)^{(d+1)/2}h\}(\bm{V},c)}\right)^2}{\zeta(\bm{V},c)}~\mathrm{d}\sigma^{d-1}(\bm{V})\mathrm{d}c\\
 \textup{subject to}\quad & h(\mathbf{x}_j)=y_j,\quad j=1,\ldots,M,\\
 &(-\Delta)^{(d+1)/2}h \in L^p(\mathbb{R}^d),\ 1\leq p<d/(d-1).
\end{aligned}
\label{function_space_multi_less_constraint}
\end{equation}
Then ${\mathcal{R}\{(-\Delta)^{(d+1)/2}h\}(\bm{V},c)}=0,\ \forall (\bm{V},c)\not\in \mathbb{S}^{d-1}\times [0,R]$. It means that 
  if  $\mathbb{S}^{d-1}\times [0,R] \subset \operatorname{supp}(\zeta)$, $h(\mathbf{x})$ is also the solution of \eqref{function_space_multi}.
\end{corollary}

Now we are ready to prove Theorem~\ref{theorem_func_multi_dim}. 
We use the proof technique of Theorem~\ref{theorem_func} and \eqref{alpha_multi}. 

\begin{proof}[Proof of Theorem~\ref{theorem_func_multi_dim}]
First, according to \eqref{definition_of_kappa} and \eqref{alpha_multi}, if $(\bm{V},c)\not\in \operatorname{supp}(\zeta)$, we have
\begin{equation}
    \begin{aligned}
&|\mathcal{R}\{(-\Delta)^{(d+1)/2}g(\cdot,(\overline{\gamma},\mathbf{u},v))\}(\bm{V},c)|\\
=&|2(2\pi)^{d-1}\int_{\mathbb{R}}\gamma(u,\bm{V},c)\cdot u~\mathrm{d}\nu_{\mathcal{U}|\bm{\mathcal{V}}=\bm{V},\mathcal{C}=c}(u)\cdot p_{\mathcal{C}|\bm{\mathcal{V}}=\bm{V}}(c)~p_{\bm{\mathcal{V}}}(\bm{V})|\\
\leq& |2(2\pi)^{d-1}\int_{\mathbb{R}}\gamma^2(u,\bm{V},c)~\mathrm{d}\nu_{\mathcal{U}|\bm{\mathcal{V}}=\bm{V},\mathcal{C}=c}(u)\cdot\mathbb{E}(\mathcal{U}^2|\bm{\mathcal{V}}=\bm{V},\mathcal{C}=c) p_{\mathcal{C}|\bm{\mathcal{V}}=\bm{V}}(c)~p_{\bm{\mathcal{V}}}(\bm{V})|\\
=&0.
    \end{aligned}
\label{zero_outside}
\end{equation}
By Lemma \ref{network_Lipschitz}, we have that $g(\mathbf{x},(\overline{\gamma},\overline{ \mathbf{u}},\overline{v}))$ is Lipschitz continuous, thus $g(\mathbf{x},(\overline{\gamma},\overline{ \mathbf{u}},\overline{v}))$ satisfies all constraints of \eqref{function_space_multi}. Next, we prove that $g(\mathbf{x},(\overline{\gamma},\overline{ \mathbf{u}},\overline{v}))$ is the solution of \eqref{function_space_multi}.

Let $L(f)=\int_{\operatorname{supp}(\zeta)} \frac{\left({\mathcal{R}\{(-\Delta)^{(d+1)/2}g\}(\bm{V},c)}\right)^2}{\zeta(\bm{V},c)}~\mathrm{d}\sigma^{d-1}(\bm{V})\mathrm{d}c$. We first show that when $m\geq d+1$, the functional $L(f)$ is strictly convex on the feasible set, %
which means that the minimizer of problem \eqref{function_space_multi} is unique.

Suppose $f_1,f_2$ are two different functions in the feasible set of \eqref{function_space_multi}. Then $\mathcal{R}\{(-\Delta)^{(d+1)/2}f_1\}$ and  $\mathcal{R}\{(-\Delta)^{(d+1)/2}f_2\}$ should be different. Otherwise, according to Corollary \ref{equal_radon}, $f_1-f_2$ is a linear function. We know that $(f_1-f_2)(\mathbf{x}_i)=0$, $i=1,\ldots,m$. So $f_1=f_2$ on at least $d+1$ points. Then $f_1-f_2\equiv0$ and this is a contradiction. Since $\mathcal{R}\{(-\Delta)^{(d+1)/2}(f_1)\}$ and  $\mathcal{R}\{(-\Delta)^{(d+1)/2}(f_2)\}$ are different, by strict convexity of the square function, we have that $L(f)$ is strictly convex on the feasible set. 

Suppose $h(\mathbf{x})$ is the minimizer of problem \eqref{function_space_multi} and $h(\mathbf{x})$ is different from $g(\mathbf{x},(\overline{\gamma},\overline{ \mathbf{u}},\overline{v}))$. Then by uniqueness of the solution,
\begin{equation}
  L(h)<L(g(\mathbf{x},(\overline{\gamma},\overline{\mathbf{u}},\overline{v}))).
  \label{contradiction_multi}
\end{equation}

Now our goal is to find a different $(\gamma,\mathbf{u},v)$ with smaller cost in problem \eqref{continuous_version_multi_relax_gamma}. Then $(\overline{\gamma},\overline{ \mathbf{u}},\overline{v})$ is not the solution of \eqref{continuous_version_multi_relax_gamma}, which is a contradiction. We set
\begin{equation*}
  \gamma(u,\bm{V},c)=\begin{cases}
  \displaystyle\frac{\mathcal{R}\{(-\Delta)^{(d+1)/2}h\}(\bm{V},c)\cdot u}{-2(2\pi)^{d-1}\zeta(\bm{V},c)}, & (\bm{V},c)\in \mathrm{supp}(\zeta),\\
  0,& (\bm{V},c)\not\in \mathrm{supp}(\zeta). 
  \end{cases}
\end{equation*}
According to \eqref{2nd_derivative-long_multi_sphere_beta}, we have $\Delta g(\cdot, (\gamma,\mathbf{0},0))=\mathcal{R}^*\{\beta\}$ where $\beta$ is defined in \eqref{definition_of_kappa} and  \eqref{definition_of_beta}. Using Lemma \ref{tight_support}, we know that $\mathcal{R}\{(-\Delta)^{(d+1)/2}h\}$ is compactly supported. Then we can easily verify that $\beta$ is also compactly supported. According to  \citep[Lemma~4.1]{solmon1987asymptotic}, $(-\Delta)^{(d-1)/2}\mathcal{R}^*\{\beta\}\in L^p(\mathbb{R}^d),\ 1\leq p<d/(d-1)$, which means that $g(\cdot, (\gamma,\mathbf{0},0))$ satisfies the third constraint of the optimization problem \eqref{function_space_multi}.

Since the Radon transform is an even function, we have $\gamma(u,\bm{V},c)=\gamma(u,-\bm{V},-c)$. Since the distribution of $(\bm{\mathcal{W}},\mathcal{B})$ is symmetric, we have that $\nu_{\mathcal{U}|\bm{\mathcal{V}}=\bm{V},\mathcal{C}=c}$ is the same probability measure as $\nu_{\mathcal{U}|\bm{\mathcal{V}}=-\bm{V},\mathcal{C}=-c}$ and $p_{\mathcal{C}|\bm{\mathcal{V}}=\bm{V}}(c)p_{-\bm{\mathcal{V}}}(\bm{V})=p_{\mathcal{C}|\bm{\mathcal{V}}=\bm{V}}(-c)p_{\bm{\mathcal{V}}}(-\bm{V})$. From the definition of $\kappa$ \eqref{definition_of_kappa} and $\beta$ \eqref{definition_of_beta}, we have that $\kappa$ and $\beta$ are even. Then the odd part $\beta^-$ of $\beta$ is $0$. According to \eqref{alpha_multi}, 
\begin{equation}
  \begin{aligned}
    &\mathcal{R}\{(-\Delta)^{(d+1)/2}g(\cdot, (\gamma,\mathbf{0},0)))\}(\bm{V},c)\\
    =&-2(2\pi)^{d-1}p_{\mathcal{C}|\bm{\mathcal{V}}=\bm{V}}(c)p_{\bm{\mathcal{V}}}(\bm{V})\int_{\mathbb{R}^+}\gamma(u,\bm{V},c)\cdot u~\mathrm{d}\nu_{\mathcal{U}|\bm{\mathcal{V}}=\bm{V},\mathcal{C}=c}(u)\\
    =&-2(2\pi)^{d-1}p_{\mathcal{C}|\bm{\mathcal{V}}=\bm{V}}(c)p_{\bm{\mathcal{V}}}(\bm{V})\int_{\mathbb{R}^+}\frac{\mathcal{R}\{(-\Delta)^{(d+1)/2}h\}(\bm{V},c)\cdot u^2}{-2(2\pi)^{d-1}\zeta(\bm{V},c)}~\mathrm{d}\nu_{\mathcal{U}|\bm{\mathcal{V}}=\bm{V},\mathcal{C}=c}(u)\\
    =&-2(2\pi)^{d-1}p_{\mathcal{C}|\bm{\mathcal{V}}=\bm{V}}(c)p_{\bm{\mathcal{V}}}(\bm{V})\frac{\mathcal{R}\{(-\Delta)^{(d+1)/2}h\}(\bm{V},c)\cdot \mathbb{E}(\mathcal{U}^2|\bm{\mathcal{V}}=\bm{V},\mathcal{C}=c)}{-2(2\pi)^{d-1}\zeta(\bm{V},c)}\\
    =&\mathcal{R}\{(-\Delta)^{(d+1)/2}h\}(\bm{V},c), \quad (\bm{V},c)\in \mathrm{supp}(\zeta) . 
  \end{aligned}
  \label{Radon_of_new_function}
\end{equation}
It is not difficult to show that if $(\bm{V},c)\not\in \operatorname{supp}(\zeta)$, then $\mathcal{R}\{(-\Delta)^{(d+1)/2}g(\cdot, (\gamma,\mathbf{0},0)))\}(\bm{V},c)=0$ as in \eqref{zero_outside}. Then, according to \eqref{Radon_of_new_function}, $\mathcal{R}\{(-\Delta)^{(d+1)/2}g(\cdot, (\gamma,\mathbf{0},0)))\}\equiv\mathcal{R}\{(-\Delta)^{(d+1)/2}h\}$. According to Corollary~\ref{equal_radon}, we have
that $g(\cdot, (\gamma,\mathbf{0},0)))-h$ is a linear function. This means that we can find  $\mathbf{u}\in\mathbb{R}^d,v\in\mathbb{R}$ such that $\langle \mathbf{u}, \mathbf{x}\rangle+v+g(\mathbf{x}, (\gamma,\mathbf{0},0)))\equiv h(\mathbf{x})$. Then we find $(\gamma,\mathbf{u},v)$ such that $g(\mathbf{x},(\gamma,\mathbf{u},v))=\langle \mathbf{u}, \mathbf{x}\rangle+v+g(\mathbf{x}, (\gamma,\mathbf{0},0)))=h(\mathbf{x})$ on $\mathrm{supp}(\zeta)$. So $g(\mathbf{x}_j,(\gamma,\mathbf{u},v))=h(\mathbf{x}_j)=y_j$. This means that $(\gamma,\mathbf{u},v)$ satisfies the condition in problem \eqref{continuous_version_multi_relax_gamma}. Next we compute the cost of $(\gamma,\mathbf{u},v)$:
\begin{equation}
  \begin{aligned}
    &\int_{\mathbb{R}^+\times\mathbb{S}^{d-1}\times\mathbb{R}}\gamma^2(u,\bm{V},c)~\mathrm{d}\nu(u,\bm{V},c)\\
   =&\int_{\mathbb{R}^+\times\mathbb{S}^{d-1}\times\mathbb{R}} \left(\frac{\mathcal{R}\{(-\Delta)^{(d+1)/2}h\}(\bm{V},c)\cdot u}{-2(2\pi)^{d-1}\zeta(\bm{V},c)}\right)^2~\mathrm{d}\nu(u,\bm{V},c)\\
   =&\int_{\mathbb{S}^{d-1}\times\mathbb{R}}\left(\frac{\mathcal{R}\{(-\Delta)^{(d+1)/2}h\}(\bm{V},c)}{\zeta(\bm{V},c)}\right)^2\left(\frac{\int_{\mathbb{R}^+}u^2~\mathrm{d}\nu_{\mathcal{U}|\bm{\mathcal{V}}=\bm{V},\mathcal{C}=c}(u)}{4(2\pi)^{2(d-1)}}\right)~\mathrm{d}\nu_{\bm{\mathcal{V}},\mathcal{C}}(\bm{V},c)\\
   =&\int_{\mathbb{S}^{d-1}\times\mathbb{R}}\left(\frac{\mathcal{R}\{(-\Delta)^{(d+1)/2}h\}(\bm{V},c)}{\zeta(\bm{V},c)}\right)^2\frac{\mathbb{E}(\mathcal{U}^2|\bm{\mathcal{V}}=\bm{V},\mathcal{C}=c)p_{\mathcal{C}|\bm{\mathcal{V}}=\bm{V}}(c)~p_{\bm{\mathcal{V}}}(\bm{V})}{4(2\pi)^{2(d-1)}}~\mathrm{d}\sigma^{d-1}(\bm{V})\mathrm{d}c\\
   =&\frac{1}{4(2\pi)^{2(d-1)}}\int_{\mathbb{S}^{d-1}\times\mathbb{R}}\frac{\left(\mathcal{R}\{(-\Delta)^{(d+1)/2}h\}(\bm{V},c)\right)^2}{\zeta(\bm{V},c)}~\mathrm{d}\sigma^{d-1}(\bm{V})\mathrm{d}c\\
   =&\frac{1}{4(2\pi)^{2(d-1)}}L(h).
  \end{aligned}
  \label{long_multi_dim1}
\end{equation}
According to \eqref{alpha_multi}, the cost of $(\overline{\gamma},\mathbf{\overline{u}},\overline{v})$ is
\begin{equation}
  \begin{aligned}
&\int_{\mathbb{R}^+\times\mathbb{S}^{d-1}\times\mathbb{R}}\overline{\gamma}^2(u,\bm{V},c)~\mathrm{d}\nu(u,\bm{V},c)\\
=&\int_{\mathbb{S}^{d-1}\times\mathbb{R}}\left(\int_{\mathbb{R}^+}\overline{\gamma}^2(u,\bm{V},c)~\mathrm{d}\nu_{\mathcal{U}|\bm{\mathcal{V}}=\bm{V},\mathcal{C}=c}(u)\right)~\mathrm{d}\nu_{\bm{\mathcal{V}},\mathcal{C}}(\bm{V},c)\\
\geq&\int_{\mathbb{S}^{d-1}\times\mathbb{R}}\frac{\left(\int_{\mathbb{R}^+}\overline{\gamma}(u,\bm{V},c)\cdot u~\mathrm{d}\nu_{\mathcal{U}|\bm{\mathcal{V}}=\bm{V},\mathcal{C}=c}(u)\right)^2}{\mathbb{E}(\mathcal{U}^2|\bm{\mathcal{V}}=\bm{V},\mathcal{C}=c)}~\mathrm{d}\nu_{\bm{\mathcal{V}},\mathcal{C}}(\bm{V},c)\\
=&\int_{\mathbb{S}^{d-1}\times \mathbb{R}} \left(\frac{\mathcal{R}\{(-\Delta)^{(d+1)/2}g(\cdot, (\overline{\gamma},\mathbf{0},0))\}(\bm{V},c)-2(2\pi)^{d-1}\beta^-}{2(2\pi)^{d-1}p_{\mathcal{C}|\bm{\mathcal{V}}=\bm{V}}(c)~p_{\bm{\mathcal{V}}}(\bm{V})}\right)^2\frac{1}{\mathbb{E}(\mathcal{U}^2|\bm{\mathcal{V}}=\bm{V},\mathcal{C}=c)}~\mathrm{d}\nu_{\bm{\mathcal{V}},\mathcal{C}}(\bm{V},c)\\
\geq&\int_{\mathbb{S}^{d-1}\times \mathbb{R}} \left(\frac{\mathcal{R}\{(-\Delta)^{(d+1)/2}g(\cdot, (\overline{\gamma},\mathbf{0},0))\}(\bm{V},c)}{2(2\pi)^{d-1}p_{\mathcal{C}|\bm{\mathcal{V}}=\bm{V}}(c)~p_{\bm{\mathcal{V}}}(\bm{V})}\right)^2\frac{p_{\mathcal{C}|\bm{\mathcal{V}}=\bm{V}}(c)~p_{\bm{\mathcal{V}}}(\bm{V})}{\mathbb{E}(\mathcal{U}^2|\bm{\mathcal{V}}=\bm{V},\mathcal{C}=c)}~\mathrm{d}\sigma^{d-1}(\bm{V})\mathrm{d}c\\
=&\int_{\mathbb{S}^{d-1}\times \mathbb{R}} \frac{\left({\mathcal{R}\{(-\Delta)^{(d+1)/2}g(\cdot, (\overline{\gamma},\mathbf{0},0))\}(\bm{V},c)}\right)^2}{4(2\pi)^{2(d-1)}p_{\mathcal{C}|\bm{\mathcal{V}}=\bm{V}}(c)~p_{\bm{\mathcal{V}}}(\bm{V})\mathbb{E}(\mathcal{U}^2|\bm{\mathcal{V}}=\bm{V},\mathcal{C}=c)}~\mathrm{d}\sigma^{d-1}(\bm{V})\mathrm{d}c\\
=&\frac{1}{4(2\pi)^{2(d-1)}}L(g(\cdot, (\overline{\gamma},\mathbf{0},0)))\\
=&\frac{1}{4(2\pi)^{2(d-1)}}L(g(\cdot, (\overline{\gamma},\mathbf{u},v))) \quad \text{(since $(-\Delta)^{(d+1)/2}$ is invariant up to a linear function)},
  \end{aligned}
  \label{long_multi_dim}
\end{equation}
where the first inequality is by the Cauchy-Schwarz inequality and the second inequality is by the Lemma \ref{even_odd_2_norm} and the fact that $\frac{\mathcal{R}\{(-\Delta)^{(d+1)/2}g(\cdot, (\overline{\gamma},\mathbf{0},0))\}(\bm{V},c)}{2(2\pi)^{d-1}p_{\mathcal{C}|\bm{\mathcal{V}}=\bm{V}}(c)~p_{\bm{\mathcal{V}}}(\bm{V})}$ is an even function and $\frac{-2(2\pi)^{d-1}\beta^-}{2(2\pi)^{d-1}p_{\mathcal{C}|\bm{\mathcal{V}}=\bm{V}}(c)~p_{\bm{\mathcal{V}}}(\bm{V})}$ is an odd function. Then we have
\begin{equation*}
  \begin{aligned}
    &\int_{\mathbb{R}^+\times\mathbb{S}^{d-1}\times\mathbb{R}}\gamma^2(u,\bm{V},c)~\mathrm{d}\nu(u,\bm{V},c)\\
    =&\frac{1}{4(2\pi)^{2(d-1)}}L(h)\quad\text{(according to \eqref{long_multi_dim1})}\\
    <&\frac{1}{4(2\pi)^{2(d-1)}}L(g(\cdot, (\overline{\gamma},\mathbf{u},v)))\quad\text{(according to \eqref{contradiction_multi})}\\
    \leq& \int_{\mathbb{R}^+\times\mathbb{S}^{d-1}\times\mathbb{R}}\frac{1}{4(2\pi)^{2(d-1)}}\overline{\gamma}^2(u,\bm{V},c)~\mathrm{d}\nu(u,\bm{V},c)\quad\text{(according to \eqref{long_multi_dim})} . 
  \end{aligned}
\end{equation*}
This means that the cost of $(\gamma,\mathbf{u},v)$ is smaller than the cost of $(\overline{\gamma},\mathbf{\overline{u}},\overline{v})$. 
This implies that $(\overline{\gamma},\mathbf{\overline{u}},\overline{v})$ is not the solution of \eqref{continuous_version_multi_relax_gamma}, which is a contradiction and hence the assumption cannot be true. 
In turn, $h(\mathbf{x})\equiv g(\mathbf{x},(\overline{\gamma},\mathbf{\overline{u}},\overline{v}))$, and $g(x,(\overline{\gamma},\mathbf{\overline{u}},\overline{v})$ is the solution of problem \eqref{continuous_version_multi_relax_gamma}. 
This concludes the proof. 
\end{proof}

\subsection{Proof of Theorem~\ref{uniform_curvature}}
\label{proof_uniform_curvature}
\begin{proof}[Proof of Theorem \ref{uniform_curvature}]
To simplify the analysis, we let $f(\mathbf{x}, \theta_0)\equiv 0$. 
The analysis still holds without this simplification. 
It is easy to verify that $\operatorname{supp}(\zeta)=\mathbb{S}^{d-1}\times [-a_b,a_b]$ and  $\zeta(\bm{V},c)$ is constant over $\operatorname{supp}(\zeta)$ according to Proposition~\ref{constant_zeta}. 
According to Corollary \ref{tight_support_corollary}, we have that the variational problem \eqref{gen_multi_dim} is equivalent to the following variational problem: 
\begin{equation}
\begin{aligned}
 \min_{h\in \operatorname{Lip}(\mathbb{R}^d)\cap C(\mathbb{R}^d)}\quad & \int_{\operatorname{supp}(\zeta)} \left({\mathcal{R}\{(-\Delta)^{(d+1)/2}h\}(\bm{V},c)}\right)^2~\mathrm{d}\sigma^{d-1}(\bm{V})\mathrm{d}c\\
 \textup{subject to}\quad & h(\mathbf{x}_j)=y_j,\quad j=1,\ldots,M,\\
 &(-\Delta)^{(d+1)/2}h \in L^p(\mathbb{R}^d),\ 1\leq p<d/(d-1). 
\end{aligned}
\label{function_space_multi_less_constraint_copy}
\end{equation}
The solution $h(\mathbf{x})$ of \eqref{function_space_multi_less_constraint_copy} satisfies that  ${\mathcal{R}\{(-\Delta)^{(d+1)/2}h\}(\bm{V},c)}=0,\ \forall (\bm{V},c)\not\in \mathbb{S}^{d-1}\times [0,\max_i \|\mathbf{x}_i\|_2]$. 
The assumption $a_b\geq\max_i \|\mathbf{x}_i\|_2$ means that $\mathbb{S}^{d-1}\times [0,\max_i\|\mathbf{x}_i\|_2] \subset \operatorname{supp}(\zeta)$. 
So ${\mathcal{R}\{(-\Delta)^{(d+1)/2}h\}(\bm{V},c)}=0,\ \forall (\bm{V},c)\not\in \operatorname{supp}(\zeta)$, which means that $h(\mathbf{x})$ is also the solution of the following variational problem: 
\begin{equation}
\begin{aligned}
 \min_{h\in \operatorname{Lip}(\mathbb{R}^d)\cap C(\mathbb{R}^d)}\quad & \int_{\mathbb{S}^{d-1}\times\mathbb{R}} \left({\mathcal{R}\{(-\Delta)^{(d+1)/2}h\}(\bm{V},c)}\right)^2~\mathrm{d}\sigma^{d-1}(\bm{V})\mathrm{d}c\\
 \textup{subject to}\quad & h(\mathbf{x}_j)=y_j,\quad j=1,\ldots,M.\\
 &(-\Delta)^{(d+1)/2}h \in L^p(\mathbb{R}^d),\ 1\leq p<d/(d-1).
\end{aligned}
\label{function_space_multi_less_constraint_copy2}
\end{equation}
So it is sufficient to prove that if $h\in \operatorname{Lip}(\mathbb{R}^d)$ and $(-\Delta)^{(d+1)/2}h \in L^p(\mathbb{R}^d),\ 1\leq p<d/(d-1)$, we have
\begin{equation*}
    \int_{\mathbb{S}^{d-1}\times\mathbb{R}} \left({\mathcal{R}\{(-\Delta)^{(d+1)/2}h\}(\bm{V},c)}\right)^2~\mathrm{d}\sigma^{d-1}(\bm{V})\mathrm{d}c=\int_{\mathbb{R}^{d}} \left((-\Delta)^{(d+3)/4}h(\mathbf{x})\right)^2~\mathrm{d}\mathbf{x}.
\end{equation*} 
Given $f:\mathbb{S}^{d-1}\times \mathbb{R}\to \mathbb{R}$, let $\widetilde{f}$ be the Fourier transform over affine parameter:
\[
\widetilde{f}(\bm{V},\tau)=\int_\infty^\infty f(\bm{V},c)e^{-ic\tau}\mathrm{d}c.
\]
According to \citet[Lemma~4.5]{solmon1987asymptotic}, we have
\begin{equation*}
\begin{aligned}
    \widetilde{\mathcal{R}}\{(-\Delta)^{(d+1)/2}h\}(\bm{V},\tau)&=\widehat{(-\Delta)^{(d+1)/2}h}(\tau\bm{V})\\
    \\
    &=\|\tau\|^{d+1}\widehat{h}(\tau\bm{V}) \quad \mathrm{a.e.},
\end{aligned}
\end{equation*}
where $\widehat{h}$ is the Fourier transform of $h$. Then we have
\begin{equation*}
\begin{aligned}
    &\int_{\mathbb{S}^{d-1}\times\mathbb{R}} \left({\mathcal{R}\{(-\Delta)^{(d+1)/2}h\}(\bm{V},c)}\right)^2~\mathrm{d}\sigma^{d-1}(\bm{V})\mathrm{d}c\\
    =&\int_{\mathbb{S}^{d-1}\times\mathbb{R}} \left(\widetilde{\mathcal{R}}\{(-\Delta)^{(d+1)/2}h\}(\bm{V},\tau)\right)^2~\mathrm{d}\sigma^{d-1}(\bm{V})\mathrm{d}\tau\\
    =&\int_{\mathbb{S}^{d-1}\times\mathbb{R}} \left(\|\tau\|^{d+1}\widehat{h}(\tau\bm{V})\right)^2~\mathrm{d}\sigma^{d-1}(\bm{V})\mathrm{d}\tau\\
    =&\int_{\mathbb{R}^{d}} \left(\|\tau\|^{(d+3)/2}\widehat{h}(\mathbf{x})\right)^2~\mathrm{d}\mathbf{x}\\
    =&\int_{\mathbb{R}^{d}} \left(\widehat{(-\Delta)^{(d+3)/4}h}(\mathbf{x})\right)^2~\mathrm{d}\mathbf{x}\\
    =&\int_{\mathbb{R}^{d}} \left((-\Delta)^{(d+3)/4}h(\mathbf{x})\right)^2~\mathrm{d}\mathbf{x}.\\
\end{aligned}
\end{equation*} 
\end{proof}
\subsection{Proof of Theorem~\ref{closed_form_sol}}
\label{proof_closed_form_sol}
In order to prove Theorem \ref{closed_form_sol}, we need following lemmas:
\begin{lemma}
\label{fractional_Lap_radial_power_function}
For any $d\geq 2$ and $\mathbf{x}_1,\mathbf{x}_2\in\mathbb{R}^d$, we have $(-\Delta)^{(d+1)/2}(\|\mathbf{x}-\mathbf{x}_1\|^3-\|\mathbf{x}-\mathbf{x}_2\|^3)=C_d(\Gamma(\mathbf{x}-\mathbf{x}_1)-\Gamma(\mathbf{x}-\mathbf{x}_2))$, where $C_d$ is a constant.
\end{lemma}
\begin{proof}[Proof of Lemma \ref{fractional_Lap_radial_power_function}]
In order to prove the lemma, we need the following simple fact that 
\begin{equation}
    (-\Delta)\|\mathbf{x}\|^p=\tilde{C}_p\|\mathbf{x}\|^{p-2},
    \label{simple_fact}
\end{equation}
where $\tilde{C}_p$ is a constant depends on $p$.

For $d\geq 3$, we can actually prove that $(-\Delta)^{(d+1)/2}\|\mathbf{x}\|^3=C_d(\Gamma(\mathbf{x}))$. We discuss the cases of odd $d$ and even $d$ separately. If $d$ is odd, we apply \eqref{simple_fact} for $(d+1)/2$ times and get
\begin{equation*}
\begin{aligned}
    (-\Delta)^{(d+1)/2}\|\mathbf{x}\|^3&=\bar{C}\|\mathbf{x}\|^{3-(d+1)}\\
    &=C_d(\Gamma(\mathbf{x})),
\end{aligned}
\end{equation*}
where $C_d$ and $\bar{C}$ are some constants. 

If $d$ is even, we apply \eqref{simple_fact} for $d/2$ times and get
\begin{equation*}
\begin{aligned}
    (-\Delta)^{(d+1)/2}\|\mathbf{x}\|^3&=\bar{C}(-\Delta)^{1/2}\|\mathbf{x}\|^{3-d}.
\end{aligned}
\end{equation*}
Then we only need to prove that $(-\Delta)^{1/2}\|\mathbf{x}\|^{3-d}=C\|\mathbf{x}\|^{2-d}$ for some constant $C$. Let $g(\mathbf{x})=(-\Delta)^{1/2}\|\mathbf{x}\|^{3-d}$. Since the fractional Laplacian can be written \gmm{as} a singular integral \citep{kwasnicki2017ten}, we have
\begin{equation*}
    \begin{aligned}
      g(\mathbf{x})&=C_1\int_{\mathbb{R}^d}\frac{\|\mathbf{x}\|^{3-d}-\|\mathbf{y}\|^{3-d}}{\|\mathbf{x}-\mathbf{y}\|^{d+1}}\mathrm{d}\mathbf{y},
    \end{aligned}
\end{equation*}
where $C_1$ is some constant. Since the fractional Laplacian of a radially symmetric function is also radially symmetric, we have that $g(\mathbf{x})$ is radially symmetric, which means $g(\mathbf{x})$ only depends on $\|\mathbf{x}\|$. For any positive number $k>0$, we have
\begin{equation*}
    \begin{aligned}
      g(k\mathbf{x})&=C_1\int_{\mathbb{R}^d}\frac{\|k\mathbf{x}\|^{3-d}-\|\mathbf{y}\|^{3-d}}{\|k\mathbf{x}-\mathbf{y}\|^{d+1}}\mathrm{d}\mathbf{y}\\
      &=C_1\int_{\mathbb{R}^d}k^d\cdot\frac{\|k\mathbf{x}\|^{3-d}-k\|\mathbf{y}\|^{3-d}}{\|k\mathbf{x}-k\mathbf{y}\|^{d+1}}\mathrm{d}\mathbf{y}\\
      &=C_1\int_{\mathbb{R}^d}k^{2-d}\cdot\frac{\|\mathbf{x}\|^{3-d}-\|\mathbf{y}\|^{3-d}}{\|\mathbf{x}-\mathbf{y}\|^{d+1}}\mathrm{d}\mathbf{y}\\
      &=k^{2-d}g(\mathbf{x}).\\
    \end{aligned}
\end{equation*}
Combining the above equation with the fact that  $g(\mathbf{x})$ is radially symmetric, we show that $g(\mathbf{x})=\|\mathbf{x}\|^{2-d}g(\frac{\mathbf{x}}{\|\mathbf{x}\|})=C\|\mathbf{x}\|^{2-d}$ for some constant $C$.

Now we have proved the lemma for $d\geq 3$. Next we consider the case when $d=2$. Since $(-\Delta)^{3/2}(\|\mathbf{x}-\mathbf{x}_1\|^3-\|\mathbf{x}-\mathbf{x}_2\|^3)=(-\Delta)^{1/2}(\|\mathbf{x}-\mathbf{x}_1\|-\|\mathbf{x}-\mathbf{x}_2\|)$, we only need to prove that $(-\Delta)^{1/2}(\|\mathbf{x}-\mathbf{x}_1\|-\|\mathbf{x}-\mathbf{x}_2\|)=C(\log\|\mathbf{x}-\mathbf{x}_1\|-\log\|\mathbf{x}-\mathbf{x}_2\|)$, where $C$ is a constant. Using the singular integral definition of fractional Laplacian, we get
\begin{equation*}
\begin{aligned}
    &(-\Delta)^{1/2}(\|\mathbf{x}-\mathbf{x}_1\|-\|\mathbf{x}-\mathbf{x}_2\|)\\
    =&C_1\int_{\mathbb{R}^d}\frac{\|\mathbf{x}-\mathbf{x}_1\|-\|\mathbf{x}-\mathbf{x}_2\|-\|\mathbf{y}-\mathbf{x}_1\|+\|\mathbf{y}-\mathbf{x}_2\|}{\|\mathbf{x}-\mathbf{y}\|^{3}}\mathrm{d}\mathbf{y}\\
    =&C_1\lim_{R\to\infty}\int_{B(\mathbf{x}_1,R)\cup B(\mathbf{x}_2,R)}\frac{\|\mathbf{x}-\mathbf{x}_1\|-\|\mathbf{x}-\mathbf{x}_2\|-\|\mathbf{y}-\mathbf{x}_1\|+\|\mathbf{y}-\mathbf{x}_2\|}{\|\mathbf{x}-\mathbf{y}\|^{3}}\mathrm{d}\mathbf{y}\\
    =&C_1\lim_{R\to\infty}\int_{B(\mathbf{x}_1,R)}\frac{\|\mathbf{x}-\mathbf{x}_1\|-\|\mathbf{y}-\mathbf{x}_1\|}{\|\mathbf{x}-\mathbf{y}\|^{3}}\mathrm{d}\mathbf{y}-C_1\lim_{R\to\infty}\int_{B(\mathbf{x}_2,R)}\frac{\|\mathbf{x}-\mathbf{x}_2\|-\|\mathbf{y}-\mathbf{x}_2\|}{\|\mathbf{x}-\mathbf{y}\|^{3}}\mathrm{d}\mathbf{y}\\
    +&C_1\lim_{R\to\infty}\int_{B(\mathbf{x}_2,R)\backslash B(\mathbf{x}_1,R)}\frac{\|\mathbf{x}-\mathbf{x}_1\|-\|\mathbf{y}-\mathbf{x}_1\|}{\|\mathbf{x}-\mathbf{y}\|^{3}}\mathrm{d}\mathbf{y}\\
    -&C_1\lim_{R\to\infty}\int_{B(\mathbf{x}_1,R)\backslash B(\mathbf{x}_2,R)}\frac{\|\mathbf{x}-\mathbf{x}_2\|-\|\mathbf{y}-\mathbf{x}_2\|}{\|\mathbf{x}-\mathbf{y}\|^{3}}\mathrm{d}\mathbf{y}.\\
\end{aligned}
\end{equation*}
Since for $\mathbf{y}\in B(\mathbf{x}_2,R)\backslash B(\mathbf{x}_1,R)$, we have $\|\mathbf{y}\|\geq R-\|\mathbf{x}_1\|$. And the area of $B(\mathbf{x}_2,R)\backslash B(\mathbf{x}_1,R)$ is at most $2R\|\mathbf{x}_1-\mathbf{x}_2\|$. So
\begin{equation*}
\begin{aligned}
    &\lim_{R\to\infty}\int_{B(\mathbf{x}_2,R)\backslash B(\mathbf{x}_1,R)}\left|\frac{\|\mathbf{x}-\mathbf{x}_1\|-\|\mathbf{y}-\mathbf{x}_1\|}{\|\mathbf{x}-\mathbf{y}\|^{3}}\right|\mathrm{d}\mathbf{y}\\
    \leq &\lim_{R\to\infty}2R\|\mathbf{x}_1-\mathbf{x}_2\|\cdot\frac{\|\mathbf{x}-\mathbf{x}_1\|+R+\|\mathbf{x}_1\|+\|\mathbf{x}_2\|}{(R-\|\mathbf{x}\|-\|\mathbf{x}_1\|)^{3}}\\
    =&0.
\end{aligned}
\end{equation*}
Similarly we have $\lim_{R\to\infty}\int_{B(\mathbf{x}_1,R)\backslash B(\mathbf{x}_2,R)}\frac{\|\mathbf{x}-\mathbf{x}_2\|-\|\mathbf{y}-\mathbf{x}_2\|}{\|\mathbf{x}-\mathbf{y}\|^{3}}\mathrm{d}\mathbf{y}=0$. Then we get
\begin{equation*}
\begin{aligned}
    &(-\Delta)^{1/2}(\|\mathbf{x}-\mathbf{x}_1\|-\|\mathbf{x}-\mathbf{x}_2\|)\\
    =&C_1\lim_{R\to\infty}\int_{B(\mathbf{x}_1,R)}\frac{\|\mathbf{x}-\mathbf{x}_1\|-\|\mathbf{y}-\mathbf{x}_1\|}{\|\mathbf{x}-\mathbf{y}\|^{3}}\mathrm{d}\mathbf{y}-C_1\lim_{R\to\infty}\int_{B(\mathbf{x}_2,R)}\frac{\|\mathbf{x}-\mathbf{x}_2\|-\|\mathbf{y}-\mathbf{x}_2\|}{\|\mathbf{x}-\mathbf{y}\|^{3}}\mathrm{d}\mathbf{y}\\
    =&C_1\lim_{R\to\infty}\int_{B(0,R)}\frac{\|\mathbf{x}-\mathbf{x}_1\|-\|\mathbf{y}\|}{\|\mathbf{x}-\mathbf{x}_1-\mathbf{y}\|^{3}}\mathrm{d}\mathbf{y}-C_1\lim_{R\to\infty}\int_{B(0,R)}\frac{\|\mathbf{x}-\mathbf{x}_2\|-\|\mathbf{y}\|}{\|\mathbf{x}-\mathbf{x}_2-\mathbf{y}\|^{3}}\mathrm{d}\mathbf{y}.\\
\end{aligned}
\end{equation*}
Let $f(\mathbf{x},R)=\int_{B(0,R)}\frac{\|\mathbf{x}\|-\|\mathbf{y}\|}{\|\mathbf{x}-\mathbf{y}\|^{3}}\mathrm{d}\mathbf{y}$. Then $(-\Delta)^{1/2}(\|\mathbf{x}-\mathbf{x}_1\|-\|\mathbf{x}-\mathbf{x}_2\|)=\lim_{R\to\infty} f(\mathbf{x}-\mathbf{x}_1,R)-f(\mathbf{x}-\mathbf{x}_2,R)$. Next we show that $f(\lambda\mathbf{x},\lambda R)=f(\mathbf{x},R)$ for any $\lambda>0$. Actually
\begin{equation*}
\begin{aligned}
    f(\lambda\mathbf{x},\lambda R)
    &=\int_{B(0,\lambda R)}\frac{\|\lambda\mathbf{x}\|-\|\mathbf{y}\|}{\|\lambda\mathbf{x}-\mathbf{y}\|^{3}}\mathrm{d}\mathbf{y}\\
    &=\int_{B(0,R)}\frac{\lambda\|\mathbf{x}\|-\lambda\|\mathbf{y}\|}{\|\lambda\mathbf{x}-\lambda\mathbf{y}\|^{3}}\lambda^d\mathrm{d}\mathbf{y}\\
    &=\int_{B(0,R)}\frac{\|\mathbf{x}\|-\|\mathbf{y}\|}{\|\mathbf{x}-\mathbf{y}\|^{3}}\mathrm{d}\mathbf{y}=f(\mathbf{x},R).
\end{aligned}
\end{equation*}
Also it is easy to see that $f(\mathbf{x},R)$ is radially symmetric over $\mathbf{x}$. So $f(\mathbf{x},R)=f(\|\mathbf{x}\|\mathbf{u},R)$ for any unit vector $\mathbf{u}\in\mathbb{R}^2$. Then we get
\begin{equation*}
    \begin{aligned}
      \lim_{R\to\infty} f(\mathbf{x}-\mathbf{x}_1,R)-f(\mathbf{x}-\mathbf{x}_2,R)&=\lim_{R\to\infty}f(\mathbf{u},\frac{R}{\|\mathbf{x}-\mathbf{x}_1\|})-f(\mathbf{u},\frac{R}{\|\mathbf{x}-\mathbf{x}_2\|})\\
      &=\lim_{R\to\infty}\int_{B(0,\frac{R}{\|\mathbf{x}-\mathbf{x}_1\|})\backslash B(0,\frac{R}{\|\mathbf{x}-\mathbf{x}_2\|}) }\frac{\|\mathbf{u}\|-\|\mathbf{y}\|}{\|\mathbf{u}-\mathbf{y}\|^{3}}\mathrm{d}\mathbf{y}\\
       &=\lim_{R\to\infty}\int_{B(0,\frac{R}{\|\mathbf{x}-\mathbf{x}_1\|})\backslash B(0,\frac{R}{\|\mathbf{x}-\mathbf{x}_2\|}) }\frac{-\|\mathbf{y}\|}{\|-\mathbf{y}\|^{3}}\mathrm{d}\mathbf{y}\\
       &=-\lim_{R\to\infty}\int_{[\frac{R}{\|\mathbf{x}-\mathbf{x}_2\|},\frac{R}{\|\mathbf{x}-\mathbf{x}_1\|}] }\frac{2\pi}{r}\mathrm{d}r\\
       &=-2\pi\lim_{R\to\infty}\log\frac{R}{\|\mathbf{x}-\mathbf{x}_1\|}-\log\frac{R}{\|\mathbf{x}-\mathbf{x}_2\|}\\
       &=2\pi(\log\|\mathbf{x}-\mathbf{x}_1\|-\log\|\mathbf{x}-\mathbf{x}_2\|).
    \end{aligned}
\end{equation*}
So we proved the lemma for case $d=2$.
\end{proof}
The problem \eqref{function_space_multi} is over the Lipschitz continuous function space, which is hard to analyse. The following Lemma shows that we can consider the optimization problem over $ -\Delta h$.
\begin{lemma}
\label{on_Delta_h}
Suppose $h(\mathbf{x})$ is the solution of the variational problem \eqref{function_space_multi}. Then there exist $\mathbf{u}\in\mathbb{R}^{d}, v\in\mathbb{R}$ such that $(-\Delta h(\mathbf{x}),\mathbf{u},v)$ is the solution of the following variational problem:
\begin{equation}
\begin{aligned}
 \min_{\substack{f\in  C(\mathbb{R}^d),\\ \mathbf{u}\in\mathbb{R}^{d}, v\in\mathbb{R}}}\quad & \int_{\operatorname{supp}(\zeta)} \frac{\left({\mathcal{R}\{(-\Delta)^{(d-1)/2}f\}(\bm{V},c)}\right)^2}{\zeta(\bm{V},c)}~\mathrm{d}\sigma^{d-1}(\bm{V})\mathrm{d}c\\
 \textup{subject to}\quad & \int_{\mathbb{R}^d}f(\mathbf{s})\left[\Gamma(\mathbf{x}_j-\mathbf{s})-\Gamma(-\mathbf{s})-\langle\mathbf{x}_j,\nabla\Gamma(-\mathbf{s})\rangle\right]\mathrm{d}\mathbf{s}+\langle \mathbf{u},\mathbf{x}_j \rangle +v=y_j,\quad j=1,\ldots,M,\\
 & {\mathcal{R}\{(-\Delta)^{(d-1)/2}f\}(\bm{V},c)}=0, \ \forall(\bm{V},c)\not\in \operatorname{supp}(\zeta),\\
 &(-\Delta)^{(d-1)/2}f \in L^p(\mathbb{R}^d),\ 1\leq p<d/(d-1),\\
 &\sup_{\mathbf{x}\in \mathbb{R}^d} \|\mathbf{x}\|\cdot|f(\mathbf{x})|<\infty,
\end{aligned}
\label{function_space_multi_Laplace}
\end{equation}
where $\Gamma(\mathbf{x})$ is the fundamental solution of the Laplace equation $-\Delta\Gamma(\mathbf{x})=\delta(\mathbf{x})$. The closed form of $\Gamma(\mathbf{x})$ is
\begin{equation*}
    \Gamma(\mathrm{x})=
    \begin{cases}
    -\frac{1}{2\pi}\log\|\mathbf{x}\|, & d=2,\\
    \frac{1}{d(d-2)V_d\|\mathbf{x}\|^{d-2}}, & d\geq 3,
    \end{cases}
\end{equation*}
where $V_d$ is the volume of the unit ball in $\mathbb{R}^d$.
\end{lemma}
\begin{proof}[Proof of Lemma \ref{on_Delta_h}]
First we prove that $\sup_{\mathbf{x}\in \mathbb{R}^d} \|\mathbf{x}\|\cdot|-\Delta h|<\infty$. According to Lemma \ref{tight_support} and Lemma \ref{recover_lemma}, we have $-\Delta h=\mathcal{R}^*\{\psi\}$, where $\psi$ is tightly supported. Then \citep[Corollary~3.6]{solmon1987asymptotic} shows that $\mathcal{R}^*\{\psi\}=O(\|\mathbf{x}\|^{-1})$, which gives that $\sup_{\mathbf{x}\in \mathbb{R}^d} \|\mathbf{x}\|\cdot|-\Delta h|<\infty$.

Now it is sufficient to prove that for any $\overline{h}\in\operatorname{Lip}(\mathbb{R}^d)$ satisfying that $-\Delta\overline{h}\in  C(\mathbb{R}^d)$ and $\sup_{\mathbf{x}\in \mathbb{R}^d} \|\mathbf{x}\|\cdot|-\Delta\overline{h}(\mathbf{x})|<\infty$, there exist $\mathbf{u}\in\mathbb{R}^{d}, v\in\mathbb{R}$ such that
\begin{equation*}
    \int_{\mathbb{R}^d}[-\Delta\overline{h}(\mathbf{s})]\left[\Gamma(\mathbf{x}-\mathbf{s})-\Gamma(-\mathbf{s})-\langle\mathbf{x},\nabla\Gamma(-\mathbf{s})\rangle\right]\mathrm{d}\mathbf{s}+\langle \mathbf{u},\mathbf{x} \rangle +v=\overline{h}(\mathbf{x}).
\end{equation*}
Let $\overline{g}(\mathbf{x})=\int_{\mathbb{R}^d}[-\Delta\overline{h}(\mathbf{s})]\left[\Gamma(\mathbf{x}-\mathbf{s})-\Gamma(-\mathbf{s})-\langle\mathbf{x},\nabla\Gamma(-\mathbf{s})\rangle\right]\mathrm{d}\mathbf{s}$. First we show that $\overline{g}(\mathbf{x})$ is well-defined. Since $\int_{\|\mathbf{s}\|\geq 1}\|\mathbf{s}\|^{-(d+1)}\mathrm{d}\mathbf{s}<\infty$, we only need to prove that $\Gamma(\mathbf{x}-\mathbf{s})-\Gamma(-\mathbf{s})-\langle\mathbf{x},\nabla\Gamma(-\mathbf{s})\rangle=O(\|\mathbf{s}\|^{-d})$ as $\|\mathbf{s}\|\to \infty$ for any given $\mathbf{x}$. Using Taylor's expansion, we have 
\begin{equation}
\begin{aligned}
  \Gamma(\mathbf{x}-\mathbf{s})-\Gamma(-\mathbf{s})-\langle\mathbf{x},\nabla\Gamma(-\mathbf{s})\rangle&=\mathbf{x}^{T}H_{\Gamma}(c\mathbf{x}-\mathbf{s})\mathbf{x} \text{ for some }c\in[0,1],\label{Taylor}
\end{aligned}
\end{equation}
where $H_{\Gamma}$ is the Hessian matrix of $\Gamma$. Since
\begin{equation}
\begin{aligned}
  \frac{\partial \Gamma}{\partial s_i\partial s_j}(\mathbf{s})=-\frac{\delta_{ij}\|\mathbf{s}\|^2-ds_is_j}{dV_d\|\mathbf{s}\|^{d+2}}=O(\|\mathbf{s}\|^{-d}),
  \label{Hessian_asymptotic}
\end{aligned}
\end{equation}
where $\delta_{ij}=1$ when $i=j$, and $\delta_{ij}=0$ otherwise. According to \eqref{Taylor} we have $\Gamma(\mathbf{x}-\mathbf{s})-\Gamma(-\mathbf{s})-\langle\mathbf{x},\nabla\Gamma(-\mathbf{s})\rangle=O(\|\mathbf{s}\|^{-d})$ as $\|\mathbf{s}\|\to \infty$. Then we proved that $\overline{g}(\mathbf{x})$ is well-defined.

Next we prove that $\|\nabla \overline{g}(\mathbf{x})\|=O(\log\|\mathbf{x}\|)$. We only need to consider the large enough $\mathbf{x}$. Suppose $\|\mathbf{x}\|\geq 2$. The partial derivative of $\overline{g}$ is given by
\begin{equation}
\label{partial_g}
\begin{aligned}
  \frac{\partial \overline{g}}{\partial x_i}(\mathbf{x})&=\int_{\mathbb{R}^d}-\frac{1}{dV_d}[-\Delta\overline{h}(\mathbf{s})]\left[\frac{x_i-s_i}{\|\mathbf{x}-\mathbf{s}\|^d}+\frac{s_i}{\|\mathbf{s}\|^d}\right]\mathrm{d}\mathbf{s}.\\
\end{aligned}
\end{equation}
Since $\sup_{\mathbf{x}\in \mathbb{R}^d} \|\mathbf{x}\|\cdot|-\Delta\overline{h}(\mathbf{x})|<\infty$, we have $\|-\Delta\overline{h}(\mathbf{x})\|\leq C\cdot\min\{1,\frac{1}{\mathbf{x}})\}$ for some constant $C$. It is easy to see that the integrand of \eqref{partial_g} is $O(\|\mathbf{s}\|^{d+1})$. So $|\frac{\partial \overline{g}}{\partial x_i}(\mathbf{x})|<\infty$. Next we estimate the integral \eqref{partial_g} on $\mathbb{R}^d\backslash B(0,\|\mathbf{x}\|/2)$:
\begin{equation}
    \begin{aligned}
      &\left|\int_{\|\mathbf{s}\|>\|\mathbf{x}\|/2}[-\Delta\overline{h}(\mathbf{s})]\left[\frac{x_i-s_i}{\|\mathbf{x}-\mathbf{s}\|^d}+\frac{s_i}{\|\mathbf{s}\|^d}\right]\mathrm{d}\mathbf{s}\right|\\
      \leq&\left|\int_{\|\mathbf{s}\|>\|\mathbf{x}\|/2}\frac{1}{\|\mathbf{s}\|}\left[\frac{x_i-s_i}{\|\mathbf{x}-\mathbf{s}\|^d}+\frac{s_i}{\|\mathbf{s}\|^d}\right]\mathrm{d}\mathbf{s}\right|\\
      =&\left|\int_{\|\mathbf{s}\|>1/2}\frac{1}{\|\mathbf{s}\|}\left[\frac{x_i/\|\mathbf{x}\|-s_i}{\|\mathbf{x}/\|\mathbf{x}\|-\mathbf{s}\|^d}+\frac{s_i}{\|\mathbf{s}\|^d}\right]\mathrm{d}\mathbf{s}\right|\\
      \leq&\max_{\|\widehat{\mathbf{x}}\|=1}\left|\int_{\|\mathbf{s}\|>1/2}\frac{1}{\|\mathbf{s}\|}\left[\frac{\widehat{x}_i-s_i}{\|\widehat{\mathbf{x}}-\mathbf{s}\|^d}+\frac{s_i}{\|\mathbf{s}\|^d}\right]\mathrm{d}\mathbf{s}\right|.\\
      \label{exterior}
    \end{aligned}
\end{equation}
Since $\left|\int_{\|\mathbf{s}\|>1/2}\frac{1}{\|\mathbf{s}\|}\left[\frac{\widehat{x}_i-s_i}{\|\widehat{\mathbf{x}}-\mathbf{s}\|^d}+\frac{s_i}{\|\mathbf{s}\|^d}\right]\mathrm{d}\mathbf{s}\right|$ is well-defined and continuous function over $\widehat{\mathbf{x}}$. Then $\max_{\|\widehat{\mathbf{x}}\|=1}\left|\int_{\|\mathbf{s}\|>1/2}\frac{1}{\|\mathbf{s}\|}\left[\frac{\widehat{x}_i-s_i}{\|\widehat{\mathbf{x}}-\mathbf{s}\|^d}+\frac{s_i}{\|\mathbf{s}\|^d}\right]\mathrm{d}\mathbf{s}\right|$ is a finite number.

Next we estimate the integral \eqref{partial_g} on $B(0,\|\mathbf{x}\|/2)$:
\begin{equation}
    \begin{aligned}
      &\left|\int_{\|\mathbf{s}\|\leq\|\mathbf{x}\|/2}[-\Delta\overline{h}(\mathbf{s})]\left[\frac{x_i-s_i}{\|\mathbf{x}-\mathbf{s}\|^{d-1}}+\frac{s_i}{\|\mathbf{s}\|^d}\right]\mathrm{d}\mathbf{s}\right|\\
      \leq&\left|\int_{\|\mathbf{s}\|\leq 1}C\left[\frac{1}{\|\mathbf{x}-\mathbf{s}\|^d}+\frac{1}{\|\mathbf{s}\|^{d-1}}\right]\mathrm{d}\mathbf{s}\right|+\left|\int_{1<\|\mathbf{s}\|\leq\|\mathbf{x}\|/2}\frac{C}{\|\mathbf{s}\|}\left[\frac{1}{\|\mathbf{x}-\mathbf{s}\|^{d-1}}+\frac{1}{\|\mathbf{s}\|^{d-1}}\right]\mathrm{d}\mathbf{s}\right|\\
      \leq&\left|\int_{\|\mathbf{s}\|\leq 1}C\left[\frac{2^d}{\|\mathbf{x}\|^d}+\frac{1}{\|\mathbf{s}\|^{d-1}}\right]\mathrm{d}\mathbf{s}\right|+\left|\int_{1<\|\mathbf{s}\|\leq\|\mathbf{x}\|/2}\frac{C}{\|\mathbf{s}\|}\left[\frac{2^d}{\|\mathbf{x}\|^{d-1}}+\frac{1}{\|\mathbf{s}\|^{d-1}}\right]\mathrm{d}\mathbf{s}\right|\\
      \leq&C_1+\frac{2^dC}{\|\mathbf{x}\|^{d-1}}\left|\int_{1<\|\mathbf{s}\|\leq\|\mathbf{x}\|/2}\frac{1}{\|\mathbf{s}\|}\mathrm{d}\mathbf{s}\right|+C\left|\int_{1<\|\mathbf{s}\|\leq\|\mathbf{x}\|/2}\frac{1}{\|\mathbf{s}\|^{d}}\mathrm{d}\mathbf{s}\right|\\
      \leq&C_1+\frac{2^dC_2}{\|\mathbf{x}\|^{d-1}}\|\mathbf{x}\|^{d-1}+C_3\log \|\mathbf{x}\|\\
      \leq&C_4+C_3\log \|\mathbf{x}\|,
    \end{aligned}
    \label{interior}
\end{equation}
where $C_1$, $C_2$, $C_3$ and $C_4$ are some constants. Combining \eqref{exterior} and \eqref{interior} we proved that $\|\nabla \overline{g}(\mathbf{x})\|=O(\log\|\mathbf{x}\|)$.

In our last step, we prove that $\overline{g}-\overline{h}$ is linear. Because of the property of the fundamental solution, we have $-\Delta(\overline{g}-\overline{h})\equiv 0$. Since $\overline{h}$ is Lipschitz continuous and $\|\nabla \overline{g}(\mathbf{x})\|=O(\log\|\mathbf{x}\|)$, we have $\nabla(\overline{g}-\overline{h})=O(\log\|\mathbf{x}\|)$. So we can regard $\overline{g}-\overline{h}$ as a tempered distribution. Using the proof technique of Lemma \ref{Delta_s_lipschitz}, we have that $\overline{g}-\overline{h}$ is a polynomial. Since $\nabla(\overline{g}-\overline{h})=O(\log\|\mathbf{x}\|)$, $\overline{g}-\overline{h}$ must be a linear function, which gives the claim.
\end{proof}

\begin{proof}[Proof of Theorem \ref{closed_form_sol}]
To simplify the proof, we let $f(\mathbf{x}, \theta_0)\equiv 0$.
The analysis still holds without this simplification.
Let $h(\mathbf{x})$ be the solution of \eqref{function_space_multi_simple}. Then %
Lemma \ref{on_Delta_h} tell us that there exist $\mathbf{u}\in\mathbb{R}^{d}, v\in\mathbb{R}$ such that $(-\Delta h(\mathbf{x}),\mathbf{u},v)$ is the solution of the following variational problem:
\begin{equation}
\begin{aligned}
 \min_{\substack{f\in  C(\mathbb{R}^d),\\ \mathbf{u}\in\mathbb{R}^{d}, v\in\mathbb{R}}}\quad & \int_{\mathbb{R}^{d}} \left((-\Delta)^{(d-1)/4}f(\mathbf{x})\right)^2~\mathrm{d}\mathbf{x}\\
 \textup{subject to}\quad & \int_{\mathbb{R}^d}f(\mathbf{s})\left[\Gamma(\mathbf{x}_j-\mathbf{s})-\Gamma(-\mathbf{s})-\langle\mathbf{x}_j,\nabla\Gamma(-\mathbf{s})\rangle\right]\mathrm{d}\mathbf{s}+\langle \mathbf{u},\mathbf{x}_j \rangle +v=y_j,\quad j=1,\ldots,M\\
 &(-\Delta)^{(d-1)/2}f \in L^p(\mathbb{R}^d),\ 1\leq p<d/(d-1)\\
 &\sup_{\mathbf{x}\in \mathbb{R}^d} \|\mathbf{x}\|\cdot|f(\mathbf{x})|<\infty,
\end{aligned}
\label{function_space_multi_Laplace_simplified}
\end{equation}
Suppose that $f(\mathbf{x})$ is the solution of \eqref{function_space_multi_Laplace_simplified}. Let $J(f,\mathbf{u},v)=\int_{\mathbb{R}^{d}} \left((-\Delta)^{(d-1)/4}f(\mathbf{x})\right)^2~\mathrm{d}\mathbf{x}$ and $G_j(f,\mathbf{u},v)=\int_{\mathbb{R}^d}f(\mathbf{s})\left[\Gamma(\mathbf{x}_j-\mathbf{s})-\Gamma(-\mathbf{s})-\langle\mathbf{x}_j,\nabla\Gamma(-\mathbf{s})\rangle\right]\mathrm{d}\mathbf{s}+\langle \mathbf{u},\mathbf{x}_j \rangle +v$. For any function $\varphi$ in Schwartz space $\mathcal{S}(\mathbb{R}^d)$,\footnote{The Schwartz functions on $\mathbb{R}^d$ is the function space $\mathcal{S}(\mathbb{R}^d)=\{f\in\mathcal{C}^\infty(\mathbb{R}^d):\forall \alpha,\beta\in\mathbb{N}^d, \sup_{\mathbf{x}\in\mathbb{R}^d}\mathbf{x}^\alpha(D^\beta f)(\mathbf{x})<\infty\}$, where $\alpha$ and $\beta$ are multi-indices.} $\tilde{\mathbf{u}}\in\mathbb{R}^d$ and $\tilde{v}\in\mathbb{R}$, we consider the perturbation $(\epsilon\varphi,\epsilon\tilde{\mathbf{u}},\epsilon\tilde{v})$ to the solution $(-\Delta h,\mathbf{u},v)$. It is easy to verify that $-\Delta h+\epsilon\varphi$ satisfies that $(-\Delta)^{(d-1)/2}(-\Delta h+\epsilon\varphi) \in L^p(\mathbb{R}^d),\ 1\leq p<d/(d-1)$ and $\sup_{\mathbf{x}\in \mathbb{R}^d} \|\mathbf{x}\|\cdot|(-\Delta h+\epsilon\varphi)(\mathbf{x})|<\infty$. Next we have
\begin{equation*}
\begin{aligned}
      \frac{\mathrm{d}}{\mathrm{d}\epsilon} J(-\Delta h+\epsilon\varphi,\mathbf{u}+\epsilon\tilde{\mathbf{u}},v+\epsilon\tilde{v})&=2\int_{\mathbb{R}^{d}} \left((-\Delta)^{(d-1)/4}(-\Delta h)\right)\left((-\Delta)^{(d-1)/4}\varphi)\right)~\mathrm{d}\mathbf{x}\\
      &=2\int_{\mathbb{R}^{d}} \varphi\cdot\left((-\Delta)^{(d-1)/2}(-\Delta h))\right)~\mathrm{d}\mathbf{x},
\end{aligned}
\end{equation*}
The last equality holds because $\varphi\in\mathcal{S}(\mathbb{R}^d)$. Also we have
\begin{equation*}
\begin{aligned}
      \frac{\mathrm{d}}{\mathrm{d}\epsilon} G_j(-\Delta h+\epsilon\varphi,\mathbf{u}+\epsilon\tilde{\mathbf{u}},v+\epsilon\tilde{v})&=\int_{\mathbb{R}^d}\varphi(\mathbf{s})\left[\Gamma(\mathbf{x}_j-\mathbf{s})-\Gamma(-\mathbf{s})-\langle\mathbf{x}_j,\nabla\Gamma(-\mathbf{s})\rangle\right]\mathrm{d}\mathbf{s}+\langle \tilde{\mathbf{u}},\mathbf{x}_j \rangle +\tilde{v}.
\end{aligned}
\end{equation*}
Then according to the first-order optimality condition, there are scalars $\bar{\lambda}_1,\ldots,\bar{\lambda}_M$ such that
\begin{equation*}
\begin{cases}
      (-\Delta)^{(d-1)/2}(-\Delta h(\mathbf{x}))=\sum_{j=1}^M \bar{\lambda}_j\left[\Gamma(\mathbf{x}_j-\mathbf{x})-\Gamma(-\mathbf{x})-\langle\mathbf{x}_j,\nabla\Gamma(-\mathbf{x})\rangle\right]\\
      \sum_{j=1}^M \bar{\lambda}_j=0\\
      \sum_{j=1}^M \bar{\lambda}_j\mathbf{x}_j=\mathbf{0}
    \end{cases}\,,
\end{equation*}
which can be simplified to 
\begin{equation}
\begin{cases}
      (-\Delta)^{(d+1)/2}h(\mathbf{x})=\sum_{j=1}^M \bar{\lambda}_j\left[\Gamma(\mathbf{x}-\mathbf{x}_j)-\Gamma(\mathbf{x})\right]\\
      \sum_{j=1}^M \bar{\lambda}_j=0\\
      \sum_{j=1}^M \bar{\lambda}_j\mathbf{x}_j=\mathbf{0}
    \end{cases}\,.
    \label{optimality_condition}
\end{equation}
According to Lemma~\ref{fractional_Lap_radial_power_function} and Lemma~\ref{Delta_s_lipschitz}, we can find out $\mathbf{u}\in\mathbb{R}^{d}, v\in\mathbb{R}$ such that 
\begin{equation*}
    \begin{aligned}
      h(\mathbf{x})&=\frac{1}{C_d}\sum_{j=1}^M \bar{\lambda}_j\left[\|\mathbf{x}-\mathbf{x}_j\|^3-\|\mathbf{x}\|^3\right]+\langle \mathbf{u},\mathbf{x} \rangle +v\\
      &=\frac{1}{C_d}\sum_{j=1}^M \bar{\lambda}_j\|\mathbf{x}-\mathbf{x}_j\|^3+\langle \mathbf{u},\mathbf{x} \rangle +v,
    \end{aligned}
\end{equation*}
which gives \eqref{exact_solution_h} after substituting $\frac{\bar{\lambda}_j}{C_d}$ by $\lambda_j$. Since $h(\mathbf{x})$ should fit all training data and $\lambda_j$ should satisfy \eqref{optimality_condition}, the coefficients $\lambda_j$, $\mathbf{u}$ and $v$ satisfy \eqref{coefficients}. Now $h(\mathbf{x})$ satisfies the first-order optimality condition and fits all training data. Since the variational problem \eqref{function_space_multi_less_constraint_copy2} is convex, we only need to check that  $h\in\operatorname{Lip}(\mathbb{R}^d)$ and $(-\Delta)^{(d+1)/2}h \in L^p(\mathbb{R}^d),\ 1\leq p<d/(d-1)$ then we can conclude that $h(\mathbf{x})$ is the solution of \eqref{function_space_multi_less_constraint_copy2}. Using \eqref{Taylor}, we have
\begin{equation*}
    \begin{aligned}
      (-\Delta)^{(d+1)/2}h(\mathbf{x})&=\sum_{j=1}^M \bar{\lambda}_j\left[\Gamma(\mathbf{x}_j-\mathbf{x})-\Gamma(-\mathbf{x})-\langle\mathbf{x}_j,\nabla\Gamma(-\mathbf{x})\rangle\right]\\
      &=\sum_{j=1}^M\bar{\lambda}_j\mathbf{x}_j^{T}H_{\Gamma}(c\mathbf{x}_j-\mathbf{x})\mathbf{x}_j \text{ for some }c\in[0,1].\\
    \end{aligned}
\end{equation*}
According to \eqref{Hessian_asymptotic}, we get that $(-\Delta)^{(d+1)/2}h(\mathbf{x})=O(\|\mathbf{x}\|^{-d})$. We set $p=(d+1)/d$ which satisfies $1 \leq p<d/(d-1)$. It is easy to verify that $\int_{B(\mathbf{x}_j,\epsilon)}\Gamma^p(\mathbf{x}_j-\mathbf{x}) \mathrm{d}\mathbf{x}$ is integrable for small enough $\epsilon$ and $\int_{\mathbb{R}^d\backslash B(0,1)}\|\mathbf{x}\|^{-pd} \mathrm{d}\mathbf{x}$ is integrable. Then $(-\Delta)^{(d+1)/2}h \in L^p(\mathbb{R}^d)$.

Similarly we have
\begin{equation*}
    \begin{aligned}
      h(\mathbf{x})&=\sum_{j=1}^M \bar{\lambda}_j\left[\|\mathbf{x}_j-\mathbf{x}\|^3-\|-\mathbf{x}\|^3-\langle\mathbf{x}_j,\nabla(\|\cdot\|^3)(-\mathbf{x})\rangle\right]\\
      &=\sum_{j=1}^M\bar{\lambda}_j\mathbf{x}_j^{T}H_{\|\cdot\|^3}(c\mathbf{x}_j-\mathbf{x})\mathbf{x}_j \text{ for some }c\in[0,1],
    \end{aligned}
\end{equation*}
where $H_{\|\cdot\|^3}$ is the Hessian matrix of $\|\mathbf{x}\|^3$. As $\|\mathbf{x}\|\to\infty$, we have
\begin{equation}
\begin{aligned}
  \frac{\partial \|\cdot\|^3}{\partial x_i\partial x_j}(\mathbf{x})=3\delta_{ij}\|\mathbf{x}\|-3\frac{x_ix_j}{\|\mathbf{x}\|}=O(\|\mathbf{x}\|),
  \label{Hessian_asymptotic_cube}
\end{aligned}
\end{equation}
where $\delta_{ij}=1$ when $i=j$, and $\delta_{ij}=0$ otherwise. Then we have $h\in\operatorname{Lip}(\mathbb{R}^d)$.
\end{proof}
\subsection{Explicit Form of the Curvature Penalty Function} 
\label{proof_multi_curvature_penalty_function}
\begin{proof}[Proof of Proposition \ref{constant_zeta}]
Since $\bm{\mathcal{W}}\sim U(\mathbb{S}^{d-1})$, we have that $p_{\bm{\mathcal{V}}}(\bm{V})$ is constant over $\mathbb{S}^{d-1}$ and $\mathbb{E}(\mathcal{U}^2|\bm{\mathcal{V}}=\bm{V},\mathcal{C}=c)=1$ because $U=\|\bm{\mathcal{W}}\|=1$. Since $\mathcal{B}\sim U(-a,a)$ and $\bm{\mathcal{W}}$ and $\mathcal{B}$ are independent, we have $p_{\mathcal{C}|\bm{\mathcal{V}}=\bm{V}}(c)=\frac{1}{2a}\mathbbm{1}_{[-a,a]}(c)$. Then we get
\begin{equation*}
\begin{aligned}
\zeta(\bm{V},c) &= p_{\mathcal{C}|\bm{\mathcal{V}}
=\bm{V}}(c)~p_{\bm{\mathcal{V}}}(\bm{V})\mathbb{E}(\mathcal{U}^2|\bm{\mathcal{V}}=\bm{V},\mathcal{C}=c)\\
&=C_1\mathbbm{1}_{[-a,a]}(c),
\end{aligned}
\end{equation*}
where $C_1$ is a constant.
\end{proof}
\begin{proof}[Proof of Proposition~\ref{proposition:form-rho-2D}]
Let $p_{\bm{\mathcal{W}},\mathcal{B}}$ and $p_{\mathcal{U},\bm{\mathcal{V}},\mathcal{C}}$ denote the joint density functions of $(\bm{\mathcal{W}},\mathcal{B})$ and $(\mathcal{U},\bm{\mathcal{V}},\mathcal{C})$, respectively. We have
\begin{equation*}
    p_{\mathcal{U},\bm{\mathcal{V}},\mathcal{C}}(u,\bm{V},c)
    =\left|\frac{\partial(u\bm{V},-uc)}{\partial(u,\bm{V},c)}\right|p_{\bm{\mathcal{W}},\mathcal{B}}(u\bm{V},-uc) 
    =u^{d}p_{\bm{\mathcal{W}},\mathcal{B}}(u\bm{V},-uc) , 
\end{equation*}
and 
\begin{equation}
\label{rho_2d}
\begin{aligned}
&p_{\mathcal{C}|\bm{\mathcal{V}}=\bm{V}}(c)~p_{\bm{\mathcal{V}}}(\bm{V})\mathbb{E}(\mathcal{U}^2|\bm{\mathcal{V}}=\bm{V},\mathcal{C}=c)\\
=&p_{\mathcal{C}|\bm{\mathcal{V}}=\bm{V}}(c)~p_{\bm{\mathcal{V}}}(\bm{V})\cdot\int_\mathbb{R^+}u^2p_{\mathcal{U}|\bm{\mathcal{V}}=\bm{V},\mathcal{C}=c}(u)~\mathrm{d}u \\
=&\int_\mathbb{R^+}u^2p_{\mathcal{U},\bm{\mathcal{V}},\mathcal{C}}(u,\bm{V},c)~\mathrm{d}u \\
=&\int_\mathbb{R^+}u^{d+2}p_{\bm{\mathcal{W}},\mathcal{B}}(u\bm{V},-uc)~\mathrm{d}u . 
\end{aligned}
\end{equation}
\end{proof}

\begin{proof}[Proof of Theorem~\ref{proposition:explicit-rho-gaussian-2d}]
Using \eqref{rho_2d}, we have
\begin{equation*}
\begin{aligned}
\zeta(\bm{V},c)&=\int_\mathbb{R^+}u^{d+2}p_{\bm{\mathcal{W}},\mathcal{B}}(u\bm{V},-uc)~\mathrm{d}u \\
&=\int_\mathbb{R^+}u^{d+2}\frac{1}{\sqrt{(2\pi)^d}\sigma_w^d}e^{-\frac{\|u\bm{V}\|_2^2}{2\sigma_w^2}}\frac{1}{\sqrt{2\pi}\sigma_b}e^{-\frac{u^2c^2}{2\sigma_b^2}}~\mathrm{d}u\\ 
&=\frac{1}{(2\pi)^{(d+1)/2}\sigma_w^d\sigma_b}\int_\mathbb{R^+}u^{d+2}e^{-(\frac{1}{2\sigma_w^2}+\frac{c^2}{2\sigma_b^2})u^2}\mathrm{d}u.
\end{aligned}
\end{equation*}
Let $\sigma^2=1/\left(\frac{1}{\sigma_w^2}+\frac{c^2}{\sigma_b^2}\right)$, then we have
\begin{equation*}
\begin{aligned}
\zeta(\bm{V},c)
&=\frac{\sigma}{(2\pi)^{d/2}\sigma_w^d\sigma_b}\int_\mathbb{R^+}u^{d+2}\frac{1}{\sqrt{2\pi}\sigma}e^{-\frac{u^2}{2\sigma^2}}\mathrm{d}u\\
&=\frac{\sigma}{(2\pi)^{d/2}\sigma_w^d\sigma_b}\sigma^{d+2}\cdot 2^{d/2} \cdot \frac{\Gamma(\frac{d+3}{2})}{\sqrt{\pi}}\\
&=\frac{\sigma^{d+3}}{\pi^{(d+1)/2}\sigma_w^d\sigma_b}\Gamma(\frac{d+3}{2})\\
&=\frac{1}{\pi^{(d+1)/2}\sigma_w^d\sigma_b\left(\frac{1}{\sigma_w^2}+\frac{c^2}{\sigma_b^2}\right)^{(d+3)/2}}\Gamma(\frac{d+3}{2})\\
&=\frac{\sigma_w^3\sigma_b^{d+2}}{\pi^{(d+1)/2}\left(\sigma_b^2+c^2\sigma_w^2\right)^{(d+3)/2}}\Gamma(\frac{d+3}{2}).\\
\end{aligned}
\end{equation*}
\end{proof}

\section{Other Activation Functions for Univariate Regression}
\label{Other_activation}
We have focused on networks with ReLUs. 
The ReLU is special in that the second derivative of ReLU is a delta function. 
For other activation functions the variational problem on function space will look different. 

The paper by \cite{Parhi2019MinimumN} considers different types of activation functions $\sigma$. These are then related to different types of linear operators $\mathrm{L}$ in the definition of the smoothness regularizer. Here $\mathrm{L}$ and $\sigma$ satisfy $\mathrm{L}\sigma=\delta$, i.e., $\sigma$ is a Green’s function of $\mathrm{L}$. Suppose $\sigma$ is homogeneous. Then \cite{Parhi2019MinimumN} show that minimizing the weight ``norm''\footnote{Here the form of ``norm'' depends on the degree of homogeneity of the activation $\sigma$. We use quotation marks because it is a generalized notion of norm which may not satisfy the property of a norm.} of two-layer neural networks with activation function $\sigma$ is actually minimizing 1-norm of $\mathrm{L}f$ where $f$ is the output function of the neural network.

The approach in \cite{Parhi2019MinimumN} can be combined with our analysis. So if for example we replace the ReLU by another homogeneous activation, we can replace the operator accordingly and get an analogous result. 
\begin{proof}[Proof of Corollary \ref{cor:diff_activation}]
Use the same notation as in Section \ref{2.4}, and let $\sigma$ be the activation function, where we assume that $\sigma$ is a Green’s function of a linear operator $\mathrm{L}$. Then optimization problem \eqref{probablity_version} becomes:
\begin{equation}
\begin{aligned}
 \min_{\alpha_n\in C(\mathbb{R}^2)}\quad & \int_{\mathbb{R}^2} \alpha_n^2(W^{(1)},b)~\mathrm{d}\mu_n(W^{(1)},b)\\
 \textup{subject to}\quad & \int_{\mathbb{R}^2} \alpha_n(W^{(1)},b)\sigma( W^{(1)}x_j+b)~\mathrm{d}\mu_n(W^{(1)},b)=y_j,\quad j=1,\ldots,M. 
\end{aligned}
\label{probablity_version_activation}
\end{equation}
The limit of the problem \eqref{probablity_version_activation} as width $n\to \infty$ is 
\begin{equation}
\begin{aligned}
 \min_{\alpha\in C(\mathbb{R}^2)}\quad & \int_{\mathbb{R}^2} \alpha^2(W^{(1)},b)~\mathrm{d}\mu(W^{(1)},b)\\
 \textup{subject to}\quad & \int_{\mathbb{R}^2} \alpha(W^{(1)},b)\sigma( W^{(1)}x_j+b)~\mathrm{d}\mu(W^{(1)},b)=y_j,\quad j=1,\ldots,M. 
\end{aligned}
\label{continuous_version_activation}
\end{equation}
As in Section~\ref{sec:implicit_bias_univariate}, we can change the variables and relax the optimization problem \eqref{continuous_version_activation} to 
\begin{equation}
\begin{aligned}
 \min_{\substack{\gamma\in C(\mathbb{R}^2),\\ p\in C(\mathbb{R})} }\quad & \int_{\mathbb{R}^2} \gamma^2(W^{(1)},c)~\mathrm{d}\nu(W^{(1)},c)\\
 \textup{subject to}\quad & p(x_j)+\int_{\mathbb{R}^2} \gamma(W^{(1)},c)\sigma\left(W^{(1)}(x_j-c)\right)~\mathrm{d}\nu(W^{(1)},c)=y_j,\quad j=1,\ldots,M \\
 &\mathrm{L}~p\equiv 0 .
\end{aligned}
\label{continuous_version_activation_relax}
\end{equation}
If the activation function $\sigma$ is ReLU, $p$ is a linear function. Then \eqref{continuous_version_activation_relax} becomes the optimization problem \eqref{continuous_add_linear}. Define the output function $g$ of the neural network by
\[
g(x,(\gamma,p))=p(x)+\int_{\mathbb{R}^2} \gamma(W^{(1)},c)[W^{(1)}(%
  x-c)]_+~\mathrm{d}\nu(W^{(1)},c). 
 \]
Assume that the activation function $\sigma$ is homogeneous of degree $k$, i.e.,\ $\sigma(ax) = a^k
\sigma(x)$ for all $a>0$. Similar to \eqref{2nd_derivative-long}, we have
\begin{equation}
  \begin{aligned}
    (\mathrm{L}g)(x,(\gamma,p))
    &=\mathrm{L}\left(\int_{\mathbb{R}^2}\gamma(W^{(1)},c) \left|W^{(1)}\right|^k \sigma\left(\operatorname{sign}(W^{(1)})\cdot(x-c)\right)~\mathrm{d}\nu(W^{(1)},c)\right)\\
    &=\int_{\mathbb{R}^2}\gamma(W^{(1)},c) \left|W^{(1)}\right|^k \delta(x-c)~\mathrm{d}\nu(W^{(1)},c)\\
    &=\int_{\mathrm{supp}(\nu_\mathcal{C})}\left(\int_{\mathbb{R}}\gamma(W^{(1)},c) \left|W^{(1)}\right|^k  %
    ~\mathrm{d}\nu_{\mathcal{W}|\mathcal{C}=c}(W^{(1)})\right)
    \delta(x-c)~\mathrm{d}\nu_\mathcal{C}(c)\\
    &=\int_{\mathrm{supp}(\nu_\mathcal{C})}\left(\int_{\mathbb{R}}\gamma(W^{(1)},c)\left|W^{(1)}\right|^k~\mathrm{d}\nu_{\mathcal{W}|\mathcal{C}=c}(W^{(1)})\right)\delta(x-c) p_\mathcal{C}(c)\mathrm{d}c\\
    &=p_{\mathcal{C}}(x)\int_{\mathbb{R}}\gamma(W^{(1)},x)\left|W^{(1)}\right|^k~\mathrm{d}\nu_{\mathcal{W}|\mathcal{C}=x}(W^{(1)}) . 
  \end{aligned}
  \label{Lg}
\end{equation}
Then similar to Theorem \ref{theorem_func}, we show that the solution of \eqref{continuous_version_activation_relax} in function space actually solves the following optimization problem:
\begin{equation}
  \min_{h\in C^2(S)}
\int_{S}
\frac{\left((\mathrm{L}h)(x)\right)^2}{\zeta(x)}~\mathrm{d}x %
\quad 
\text{s.t.}\quad h(x_j)=y_j,\quad j=1,\ldots,m,
  \label{function_space_activation}
\end{equation}
where $\zeta(x) = p_\mathcal{C}(x)\mathbb{E}(\mathcal{W}^{2k}|\mathcal{C}=x) $ and $S = \operatorname{supp}(\zeta) \cap [\min_i x_i, \max_i x_i]$. Then Corollary \ref{cor:diff_activation} can be shown by using \eqref{function_space_activation} and the technique used in proof of Theorem  \ref{thm:theorem1}.
\end{proof}

\section{Effect of Linear Adjustment of the Training Data}
\label{Difference_between_solutions_of_variational_problems}

In this section, we show that the solution of the variational problem  with linearly adjusted training data \eqref{continuous_version_multi_relax} is close to the solution of training with the original training data \eqref{continuous_version}. 
This means that our characterization of the implicit bias in Theorem~\ref{thm:theorem1} gives a close description of the solution of gradient descent training with the original data set. 
The high level intuition is that fitting a linear function only requires a very small adjustment of the parameters of the network in comparison with the parameter adjustment needed to fit a non-linear function. 

For the reader's convenience, we restate the continuous version of the problem~\eqref{continuous_version}: 
\begin{equation}
\begin{aligned}
 \min_{\alpha\in C(\mathbb{R}^d\times \mathbb{R})}\quad & \int_{\mathbb{R}^d\times \mathbb{R}} \alpha^2(\mathbf{W}^{(1)},b)~\mathrm{d}\mu(\mathbf{W}^{(1)},b)\\
 \textup{subject to}\quad & \int_{\mathbb{R}^d\times \mathbb{R}} \alpha(\mathbf{W}^{(1)},b)[\langle \mathbf{W}^{(1)},\mathbf{x}_j\rangle+b]_+~\mathrm{d}\mu(\mathbf{W}^{(1)},b)=y_j,\quad j=1,\ldots,M,
\end{aligned}
\label{continuous_version_multi_appendix}
\end{equation}
and the linearly adjusted variational problem:%
\begin{equation}
\begin{aligned}
 \min_{\substack{\alpha\in C(\mathbb{R}^d\times \mathbb{R}),\\ \mathbf{u}\in\mathbb{R}^{d}, v\in\mathbb{R}}}\ & \int_{\mathbb{R}^d\times \mathbb{R}} \alpha^2(\mathbf{W}^{(1)},b)~\mathrm{d}\mu(\mathbf{W}^{(1)},b)\\
 \textup{subject to}\ & \int_{\mathbb{R}^d\times \mathbb{R}} \alpha(\mathbf{W}^{(1)},b)[\langle \mathbf{W}^{(1)},\mathbf{x}_j\rangle+b]_+~\mathrm{d}\mu(\mathbf{W}^{(1)},b)+\langle \mathbf{u},\mathbf{x}_j \rangle +v=y_j,\ j=1,\ldots,M.
\end{aligned}
\label{continuous_version_multi_relax_appendix}
\end{equation}
In this paper, our main focus is on the variational problem \eqref{continuous_version_multi_relax_appendix}, thus we derive our main result Theorem \ref{thm:theorem1} and Theorem \ref{thm:theorem_multi} which are statements on linearly adjusted training data. In this section, we try to analyze the difference between solutions of variational problems \eqref{continuous_version_multi_appendix} and \eqref{continuous_version_multi_relax_appendix}, and thus show that to what extent the variational problem \eqref{main_result} and \eqref{gen_multi_dim} in Theorem \ref{thm:theorem1} and Theorem \ref{thm:theorem_multi} describes the implicit bias of gradient descent on original training data. 

Suppose the solution of problem \eqref{continuous_version_multi_appendix} is $\bar\alpha_1$, and the corresponding output function is
\begin{equation*}
  g(\mathbf{x},\bar\alpha_1)=\int_{\mathbb{R}^2} \bar\alpha_1(\mathbf{W}^{(1)},b)[\langle \mathbf{W}^{(1)},\mathbf{x}\rangle+b]_+~\mathrm{d}\mu(\mathbf{W}^{(1)},b). 
\end{equation*}
The solution of problem \eqref{continuous_version_multi_relax_appendix} is $(\bar\alpha_2,\bar{\mathbf{u}},\bar{v})$ and the corresponding output function is:
\begin{equation*}
  g(\mathbf{x},(\bar\alpha_2,\bar{\mathbf{u}},\bar{v}))=\langle \bar{\mathbf{u}},\mathbf{x} \rangle+\bar{v}+\int_{\mathbb{R}^2} \bar\alpha_2(\mathbf{W}^{(1)},b)[\langle \mathbf{W}^{(1)},\mathbf{x}\rangle+b]_+~\mathrm{d}\mu(\mathbf{W}^{(1)},b). 
\end{equation*}
Our goal is to show that $g(\mathbf{x},\bar\alpha_1)$ and $g(\mathbf{x},(\bar\alpha_2,\bar{\mathbf{u}},\bar{v}))$ are close to each other.

Suppose that the linear function $\langle \bar{\mathbf{u}},\mathbf{x} \rangle +\bar v$ can be fitted by an infinite width network with parameters $\alpha_s$, i.e.,
\begin{equation}
    \int_{\mathbb{R}^2} \alpha_s(\mathbf{W}^{(1)},b)[\langle \mathbf{W}^{(1)},\mathbf{x}\rangle+b]_+~\mathrm{d}\mu(\mathbf{W}^{(1)},b)=\langle \bar{\mathbf{u}},\mathbf{x} \rangle +\bar v.
    \label{fit_line}
\end{equation}
Then $\bar\alpha_2+\alpha_s$ is a feasible solution of the problem
\eqref{continuous_version_multi_appendix}. It is easy to show that $g(\mathbf{x},\bar\alpha_2+\alpha_s)=g(\mathbf{x},(\bar\alpha_2,\bar{\mathbf{u}},\bar{v}))$. So we only need to measure the difference between $g(\mathbf{x},\bar\alpha_1)$ and $g(\mathbf{x},\bar\alpha_2+\alpha_s)$. The next theorem characterizes the relative difference between $\bar\alpha_1$ and $\bar\alpha_2+\alpha_s$. 
\begin{theorem}
\label{relative_error_parameters}
Suppose that the solution of the optimization problem \eqref{continuous_version_multi_appendix} is $\bar\alpha_1$ and the solution of the optimization problem \eqref{continuous_version_multi_relax_appendix} is $(\bar\alpha_2,\bar{\mathbf{u}},\bar{v})$. Suppose that $\alpha_s$ satisfies \eqref{fit_line}. Then we have 
\begin{equation*}
    \frac{\int_{\mathbb{R}^2} (\bar\alpha_1-\bar\alpha_2-\alpha_s)^2~\mathrm{d}\mu(\mathbf{W}^{(1)},b)}{\int_{\mathbb{R}^2} \bar\alpha_1^2~\mathrm{d}\mu(\mathbf{W}^{(1)},b)}
      \leq2\sqrt{\frac{\int_{\mathbb{R}^2}\alpha_s^2~\mathrm{d}\mu(\mathbf{W}^{(1)},b)}{\int_{\mathbb{R}^2} \bar\alpha_1^2~\mathrm{d}\mu(\mathbf{W}^{(1)},b)}}+\frac{\int_{\mathbb{R}^2}\alpha_s^2~\mathrm{d}\mu(\mathbf{W}^{(1)},b)}{\int_{\mathbb{R}^2} \bar\alpha_1^2~\mathrm{d}\mu(\mathbf{W}^{(1)},b)}.
\end{equation*}
\end{theorem}
\begin{proof}[Proof of Theorem \ref{relative_error_parameters}]
Since $(\bar\alpha_2,\bar u,\bar v)$ is the minimizer of \eqref{continuous_version_multi_relax_appendix}, we have that $(\bar\alpha_1, 0,0)$ is a feasible solution of \eqref{continuous_version_multi_appendix} but not optimal, which means
\begin{equation}
    \int_{\mathbb{R}^2} \bar\alpha_1^2(\mathbf{W}^{(1)},b)~\mathrm{d}\mu(\mathbf{W}^{(1)},b)\geq\int_{\mathbb{R}^2} \bar\alpha_2^2(\mathbf{W}^{(1)},b)~\mathrm{d}\mu(\mathbf{W}^{(1)},b).
    \label{left_side}
\end{equation}
From the optimality of $\alpha_1$, we have
\begin{equation*}
    \int_{\mathbb{R}^2} \bar\alpha_1^2(\mathbf{W}^{(1)},b)~\mathrm{d}\mu(\mathbf{W}^{(1)},b)\leq\int_{\mathbb{R}^2} (\bar\alpha_2+\alpha_s)^2(\mathbf{W}^{(1)},b)~\mathrm{d}\mu(\mathbf{W}^{(1)},b).
\end{equation*}
Using the first order optimality condition on the problem \eqref{continuous_version_multi_appendix}, we have that there exist $\lambda_j\in\mathbb{R}$ such that
\begin{equation}
    \alpha_1(\mathbf{W}^{(1)},b)=\sum_{j=1}^M \lambda_j[\langle \mathbf{W}^{(1)},\mathbf{x}\rangle+b]_+. 
    \label{alpha1_optimality}
\end{equation}
Since both $\bar\alpha_1$ and $\bar\alpha_2+\alpha_s$ are the feasible solutions of the problem \eqref{continuous_version_activation},
\begin{equation}
    \int_{\mathbb{R}^2} (\bar\alpha_1-\bar\alpha_2-\alpha_s)\cdot[\langle \mathbf{W}^{(1)},\mathbf{x}_j\rangle+b]_+~\mathrm{d}\mu(\mathbf{W}^{(1)},b)=0, \quad j=1,\ldots,M.
    \label{subspace}
\end{equation}
Using \eqref{alpha1_optimality} and \eqref{subspace}, we have 
\begin{equation}
\begin{aligned}
  &\int_{\mathbb{R}^2} (\bar\alpha_1-\bar\alpha_2-\alpha_s)\bar\alpha_1~\mathrm{d}\mu(\mathbf{W}^{(1)},b)\\
  =&\int_{\mathbb{R}^2} (\bar\alpha_1-\bar\alpha_2-\alpha_s)\sum_{j=1}^M \lambda_j[\langle \mathbf{W}^{(1)},\mathbf{x}\rangle+b]_+~\mathrm{d}\mu(\mathbf{W}^{(1)},b)\\
   =&\sum_{j=1}^M\lambda_j\int_{\mathbb{R}^2} (\bar\alpha_1-\bar\alpha_2-\alpha_s)\cdot [\langle \mathbf{W}^{(1)},\mathbf{x}\rangle+b]_+~\mathrm{d}\mu(\mathbf{W}^{(1)},b)\\
   =&0. 
\end{aligned}
\label{A113}
\end{equation}
Then we measure the difference between $\bar\alpha_1$ and $\bar\alpha_2+\alpha_s$:
\begin{equation*}
\begin{aligned}
    &\int_{\mathbb{R}^2} (\bar\alpha_1-\bar\alpha_2-\alpha_s)^2~\mathrm{d}\mu(\mathbf{W}^{(1)},b)\\
    =&\int_{\mathbb{R}^2} (\bar\alpha_2+\alpha_s)^2-(2\bar\alpha_2+2\alpha_s-\bar\alpha_1)\bar\alpha_1~\mathrm{d}\mu(\mathbf{W}^{(1)},b)\\
    =&\int_{\mathbb{R}^2} (\bar\alpha_2+\alpha_s)^2-\bar\alpha_1^2+(2\bar\alpha_2+2\alpha_s-2\bar\alpha_1)\bar\alpha_1~\mathrm{d}\mu(\mathbf{W}^{(1)},b)\\
    =&\int_{\mathbb{R}^2} (\bar\alpha_2+\alpha_s)^2-\bar\alpha_1^2~\mathrm{d}\mu(\mathbf{W}^{(1)},b)\quad \text{(use  \eqref{A113})}\\
    =&\int_{\mathbb{R}^2} (\bar\alpha_2^2+2\bar\alpha_2\alpha_s+\alpha_s^2)-\bar\alpha_1^2~\mathrm{d}\mu(\mathbf{W}^{(1)},b)\\
    \leq&\int_{\mathbb{R}^2} (\bar\alpha_1^2+2\bar\alpha_2\alpha_s+\alpha_s^2)-\bar\alpha_1^2~\mathrm{d}\mu(\mathbf{W}^{(1)},b)\quad \text{(use  \eqref{left_side})}\\
    \leq&\int_{\mathbb{R}^2} 2\bar\alpha_2\alpha_s+\alpha_s^2~\mathrm{d}\mu(\mathbf{W}^{(1)},b)\\
    \leq&2\sqrt{\int_{\mathbb{R}^2}\bar\alpha_2^2~\mathrm{d}\mu(\mathbf{W}^{(1)},b)\cdot\int_{\mathbb{R}^2}\alpha_s^2~\mathrm{d}\mu(\mathbf{W}^{(1)},b)}+\int_{\mathbb{R}^2}\alpha_s^2~\mathrm{d}\mu(\mathbf{W}^{(1)},b)\\
    \leq&2\sqrt{\int_{\mathbb{R}^2}\bar\alpha_1^2~\mathrm{d}\mu(\mathbf{W}^{(1)},b)\cdot\int_{\mathbb{R}^2}\alpha_s^2~\mathrm{d}\mu(\mathbf{W}^{(1)},b)}+\int_{\mathbb{R}^2}\alpha_s^2~\mathrm{d}\mu(\mathbf{W}^{(1)},b) \quad \text{(use  \eqref{left_side})}. %
\end{aligned}
\end{equation*}
Then we bound the relative difference between $\bar\alpha_1$ and $\bar\alpha_2+\alpha_s$:
\begin{equation*}
    \begin{aligned}
      &\frac{\int_{\mathbb{R}^2} (\bar\alpha_1-\bar\alpha_2-\alpha_s)^2~\mathrm{d}\mu(\mathbf{W}^{(1)},b)}{\int_{\mathbb{R}^2} \bar\alpha_1^2~\mathrm{d}\mu(\mathbf{W}^{(1)},b)}\\
      \leq&\frac{2\sqrt{\int_{\mathbb{R}^2}\bar\alpha_1^2~\mathrm{d}\mu(\mathbf{W}^{(1)},b)\cdot\int_{\mathbb{R}^2}\alpha_s^2~\mathrm{d}\mu(\mathbf{W}^{(1)},b)}+\int_{\mathbb{R}^2}\alpha_s^2~\mathrm{d}\mu(\mathbf{W}^{(1)},b)}{\int_{\mathbb{R}^2} \bar\alpha_1^2~\mathrm{d}\mu(\mathbf{W}^{(1)},b)}\\
      =&2\sqrt{\frac{\int_{\mathbb{R}^2}\alpha_s^2~\mathrm{d}\mu(\mathbf{W}^{(1)},b)}{\int_{\mathbb{R}^2} \bar\alpha_1^2~\mathrm{d}\mu(\mathbf{W}^{(1)},b)}}+\frac{\int_{\mathbb{R}^2}\alpha_s^2~\mathrm{d}\mu(\mathbf{W}^{(1)},b)}{\int_{\mathbb{R}^2} \bar\alpha_1^2~\mathrm{d}\mu(\mathbf{W}^{(1)},b)} . 
    \end{aligned}
\end{equation*}
\end{proof}
The above theorem means that if $\int_{\mathbb{R}^2} \alpha_s^2~\mathrm{d}\mu(\mathbf{W}^{(1)},b)$ is much smaller than $\int_{\mathbb{R}^2} \bar\alpha_1^2~\mathrm{d}\mu(\mathbf{W}^{(1)},b)$, the relative difference between $\bar\alpha_1$ and $\bar\alpha_2+\alpha_s$ is quite small. Here $\alpha_s$ fits a linear function and $\bar\alpha_1$ fits the original training data. Since it is much easier for a neural network to fit a linear function than a non-linear function, in practice we observe that $\int_{\mathbb{R}^2} \alpha_s^2~\mathrm{d}\mu(\mathbf{W}^{(1)},b)$ is indeed much smaller than $\int_{\mathbb{R}^2} \bar\alpha_1^2~\mathrm{d}\mu(\mathbf{W}^{(1)},b)$ when the training data is not highly linearly correlated. This is shown in the right panel of Figure~\ref{fit:linear_effect}.

Generally speaking, %
the relative difference between $g(\mathbf{x},\bar\alpha_1)$ and $g(\mathbf{x},(\bar\alpha_2,\bar{\mathbf{u}},\bar{v}))$ can be related to the relative difference between $\bar\alpha_1$ and $\bar\alpha_2+\alpha_s$, which can be bounded by using $D_1\coloneqq\frac{\int_{\mathbb{R}^2} \alpha_s^2~\mathrm{d}\mu(\mathbf{W}^{(1)},b)}{\int_{\mathbb{R}^2} \alpha_1^2~\mathrm{d}\mu(\mathbf{W}^{(1)},b)}$. In experiments, the relative difference between $g(\mathbf{x},\bar\alpha_1)$ and $g(\mathbf{x},(\bar\alpha_2,\bar{\mathbf{u}},\bar{v}))$ is measured by $D\coloneqq\frac{\int_{[-R,R]^{d}} \left(g(\mathbf{x},\bar\alpha_1)-g(\mathbf{x},(\bar\alpha_2,\bar{\mathbf{u}},\bar{v}))\right)^2~\mathrm{d}\mathbf{x}}{\int_{[-R,R]^{d}} \left(g(\mathbf{x},\bar\alpha_1)\right)^2~\mathrm{d}\mathbf{x}}$, where $R$ is the minimal positive number such that $[-R,R]^{d}$ includes all training samples. %
In order to compute $\int_{\mathbb{R}^2} \bar\alpha_1^2~\mathrm{d}\mu(\mathbf{W}^{(1)},b)$ we only need to solve the optimization  problem~\eqref{continuous_version_multi_appendix} and get $\alpha_1$. 
To compute $\int_{\mathbb{R}^2} \alpha_s^2~\mathrm{d}\mu(\mathbf{W}^{(1)},b)$, we first need to solve the optimization  problem~\eqref{continuous_version_multi_relax_appendix} and get $(\bar\alpha_2,\bar{\mathbf{u}},\bar{v})$. Then we need to find out $\alpha_s$ which satisfies \eqref{fit_line}. We can give an easy form of $\alpha_s$ if we assume that the distribution of $(\bm{\mathcal{W}},\mathcal{B})$ is symmetric over each component, i.e., $(\mathcal{W}_1,\ldots,\mathcal{W}_i,\ldots,\mathcal{W}_d,\mathcal{B})$ and $(\mathcal{W}_1,\ldots,-\mathcal{W}_i,\ldots,\mathcal{W}_d,\mathcal{B})$ have the same distribution for $i=1,\ldots,d$. In this case we can choose $\alpha_s(\mathbf{W}^{(1)},b) = C_1\langle\mathbf{W}^{(1)}, \bar{u}\rangle+C_2\bar{v}$ where $C_1$, $C_2$ are constants which is determined by \eqref{fit_line}.
\begin{table}
\begin{tabular}{l|p{15mm}|p{38mm}|p{32mm}|p{25mm}}
&dimension of inputs&training input set $\mathcal{X}$& training output $\mathcal{Y}$ &distribution of $(\bm{\mathcal{W}},\mathcal{B})$\\
\hline
Setting 1&1&$-2$, $-1.6$, 0.3, 0.6, 2& 1.5, 0.5, 1.5, 0.5, 1.5&\begin{tabular}[t]{@{}l@{}}
$W\sim U(-1,1)$\\$B\sim U(-2,2)$
\end{tabular}\\
\hline 
Setting 2&2&\begin{tabular}[t]{@{}l@{}}
     $(-1,-1)$, $(1,1)$, $(0,0)$,\\$(-1,1)$, $(1,-1)$
\end{tabular}&1.5, 1.5, 0.5, $-0.5$, $-0.5$&\begin{tabular}[t]{@{}l@{}}
$\bm{\mathcal{W}}\sim U(\mathbb{S}^1)$%
\\$B\sim U(-2,2)$
\end{tabular}\\
\hline
Setting 3&2&\begin{tabular}[t]{@{}l@{}}
     $(-1,1)$, $(1,1)$, $(0.5,0.9)$,\\
     $(-1,-1)$, $(1,-1)$, $(0,0)$,\\
     $(-1.3,-0.7)$, $(-0.8,0.3)$,\\
     $(-0.4,1.6)$, $(1.6,-0.4)$
\end{tabular}&1.5, 1.5, 0.5, $-0.5$, $-0.5$, $-1.5$, $-1.5$, $-0.5$, 0.5, 0.5&\begin{tabular}[t]{@{}l@{}}
$\bm{\mathcal{W}}\sim U(\mathbb{S}^1)$\\$B\sim U(-2,2)$
\end{tabular}
\end{tabular}
\caption{Experimental settings.}
\label{settings}
\end{table}

Next, we conduct some experiments to verify the above argument. We try three different settings and they are summarized in Table \ref{settings}. For each setting, we add different linear functions to training data and compute corresponding $D_1$ and $D$. In order to verify the idea that $D_1$ is small if training data is not highly correlated, we compute the coefficient of determination $R^2$ of the training data and then compare it with $D_1$. In Figure \ref{fit:linear_effect} we plot $D$ against $D_1$ and $D_1$ against $R^2$. We observe that $D_1$ is small when $R^2$ is small and $D_1$ is a loose upper bound of $D$. Actually, $D$ is very small even if $D_1$ is relatively large, which implies that the relative difference between solutions of \eqref{continuous_version_multi_appendix} and \eqref{continuous_version_multi_relax_appendix} is small in practice. 
\begin{figure}
    \centering
    \begin{tikzpicture}
    \node at (0,0) {\includegraphics[ width=.49\textwidth]{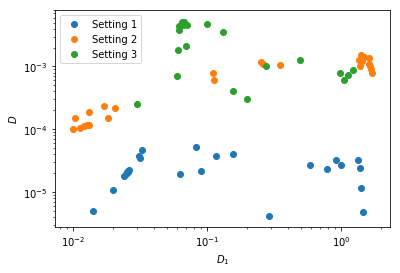}};
    \node at (8,0) {\includegraphics[ width=.49\textwidth]{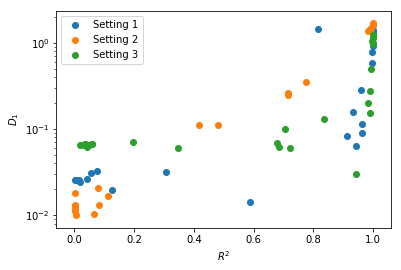}};
    \node at (0.5,2.8) {\footnotesize\textsf{$D$ against $D_1$}};
    \node at (8.5,2.8) {\footnotesize\textsf{$D_1$ against $R^2$}};
    \end{tikzpicture}
    \caption{Scatter plots of $D_1$, $D$ and $R^2$. The left panel is the scatter plot of $D$ against $D_1$, which shows that $D_1$ is a very loose upper bound of $D$. Even when $D_1$ is around $1$, $D$ is still around $10^{-3}$. The right panel is the scatter plot of $D_1$ against $R^2$, which shows that $D_1$ is small when training data are not highly linearly correlated and $D_1$ is large when training data are highly linearly correlated.}
    \label{fit:linear_effect}
\end{figure}

\section{\hj{Neural Networks with Skip Connections}}
\label{app:skip_connection}
\hj{ 
For any given input $\mathbf{x}\in\mathbb{R}^d$, the network with skip connections from the inputs to the outputs computes a function of the form 
\begin{equation}
f(\mathbf{x},\theta)=\sum_{i=1}^n W_i^{(2)}\phi(\langle \mathbf{W}_i^{(1)},\mathbf{x}\rangle +b_i^{(1)}) + \langle \mathbf{u},\mathbf{x} \rangle +v.
\label{eq:skip_connection}
\end{equation}
The skip connection corresponds to the term $\langle \mathbf{u}, \mathbf{x} \rangle$. 
The initializations of $\mathbf{W}_{i}^{(1)},b_{i}^{(1)},W_{i}^{(2)}$ are the same as \eqref{initialization2}. The parameters of skip connections are initialized by zero. We also train this network by gradient descent. The learning rate  of parameters $\mathbf{W}_{i}^{(1)},b_{i}^{(1)},W_{i}^{(2)}$ is $\eta_r$ and the learning rate  of parameters of skip connections $\mathbf{u}, v$ is $\eta_s$. Let $\theta_0=\mathrm{vec}(\overline{\mathbf{W}}^{(1)},\overline{\mathbf{b}}^{(1)},\overline{\mathbf{W}}^{(2)}, \mathbf{0}, 0)$ be the parameters at initialization and $\theta_t=\mathrm{vec}(\mathbf{W}^{(1)}_t,\mathbf{b}^{(1)}_t,\mathbf{W}^{(2)}_t, \mathbf{u}_{t}, v_t)$ be the parameters after $t$ steps of gradient descent. Then the gradient descent iterations are
\begin{equation}
\begin{aligned}
  &\mathbf{W}^{(1)}_0=\overline{\mathbf{W}}^{(1)},
  &\mathbf{W}^{(1)}_{t+1}={\mathbf{W}}^{(1)}_{t}- \eta_r\nabla_{\mathbf{W}^{(1)}} L^{\mathrm{lin}}(\theta_t)\\ 
  &\mathbf{b}^{(1)}_0=\overline{\mathbf{b}}^{(1)},
  &\mathbf{b}^{(1)}_{t+1}={\mathbf{b}}^{(1)}_{t}- \eta_r\nabla_{\mathbf{b}^{(1)}} L^{\mathrm{lin}}(\theta_t)\\ 
  &\mathbf{W}^{(2)}_0=\overline{\mathbf{W}}^{(2)},
  &\mathbf{W}^{(2)}_{t+1}={\mathbf{W}}^{(2)}_{t}- \eta_r\nabla_{\mathbf{W}^{(2)}} L^{\mathrm{lin}}(\theta_t)\\ 
  &\mathbf{u}_0=\mathbf{0}, 
  &\mathbf{u}_{t+1}={\mathbf{u}}_{t}- \eta_s\nabla_{{\mathbf{u}}} L^{\mathrm{lin}}(\theta_t)\\
  &v_0=0, 
  &v_{t+1}={v}_{t}- \eta_s\nabla_{{v}} L^{\mathrm{lin}}(\theta_t)
  \label{eq:gradient_skip_connection}
\end{aligned}
\end{equation}
Let $\widetilde{\omega}_t=\mathrm{vec}(\overline{\mathbf{W}}^{(1)},\overline{\mathbf{b}}^{(1)},\widetilde{\mathbf{W}}_t^{(2)}, \widetilde{\mathbf{u}}, \widetilde{v})$ be the parameters at time $t$ under the update rule where  $\overline{\mathbf{\mathbf{W}}}^{(1)},\overline{\mathbf{b}}^{(1)}$ are kept fixed at their initial values, and 
\begin{equation}
\begin{aligned}
  &\widetilde{\mathbf{W}}^{(2)}_0=\overline{\mathbf{W}}^{(2)},
  &\widetilde{\mathbf{W}}^{(2)}_{t+1}=\widetilde{\mathbf{W}}^{(2)}_{t}- \eta_r\nabla_{\mathbf{W}^{(2)}} L^{\mathrm{lin}}(\widetilde{\omega}_t)\\ 
  &\widetilde{\mathbf{u}}_0=\mathbf{0}, 
  &\widetilde{\mathbf{u}}_{t+1}=\widetilde{\mathbf{u}}_{t}- \eta_s\nabla_{{\mathbf{u}}} L^{\mathrm{lin}}(\widetilde{\omega}_t)\\
  &\widetilde{v}_0=0, 
  &\widetilde{v}_{t+1}=\widetilde{v}_{t}- \eta_s\nabla_{{v}} L^{\mathrm{lin}}(\widetilde{\omega}_t)
  \label{eq:fix_first_skip_connection}
\end{aligned}
\end{equation}
Let $\Psi=\sum_{j=1}^M (\mathbf{x}_j, 1)^T(\mathbf{x}_j, 1)$. Using the similar argument in Section~\ref{sec:3}, we can show that training all parameters can be approximated by training only output weights and skip connections parameters, which is actually a linearized model. Then we can apply Theorem~\ref{problem_appendix} with some modifications and show that gradient descent training of the output weights \eqref{eq:fix_first_skip_connection} on mean squared loss with $\eta_r\leq \frac{M}{4n\lambda_{\max}(\hat{\Theta}_n)}, \eta_s\leq\frac{M}{4\lambda_{\max}(\Psi)}$,  achieves zero loss and solves the following optimization problem:
\begin{equation}
\begin{aligned}
     \min_{\mathbf{W}^{(2)}}\quad  &\frac{1}{\eta_r}\|\mathbf{W}^{(2)}-\overline{\mathbf{W}}^{(2)}\|^2_2 +\frac{1}{\eta_s}\left(\|\mathbf{u}\|_2^2+v^2\right)\\
     \text{s.t.}\quad&\sum_{i=1}^n (W_i^{(2)}-\overline{W}_i^{(2)})[\langle \overline{\mathbf{W}}_i^{(1)}, \mathbf{x}_{j}\rangle+\overline{b}_i^{(1)}]_++\langle\mathbf{u},\mathbf{x}_j\rangle+v=y_j-f(\mathbf{x}_{j}, \theta_0), \;  j=1,\ldots, M.
\end{aligned}
\label{eq:direct_version_non_ASI_skip_connections}
\end{equation} 
Similar to Section~\ref{2.4}, we let $f^{\mathrm{lin}}(\mathbf{x}, \theta_0)\equiv 0$ by using the Anti-Symmetrical Initialization (ASI) trick.
Let $\mu_n$ denote the empirical distribution of the samples $(\overline{\mathbf{W}}_i^{(1)},\overline{b}_i^{(1)})_{i=1}^n$, 
i.e., $\mu_n(A)=\frac{1}{n}\sum_{i=1}^n \mathbbm{1}_A\left((\overline{\mathbf{W}}_i^{(1)},\overline{b}_i^{(1)})\right)$, where $\mathbbm{1}_A$ denotes the indicator function for measurable subsets $A$ in $\mathbb{R}^2$. 
We further consider a function $\alpha_n\colon \mathbb{R}^2\to\mathbb{R}$, $\alpha_n(\overline{\mathbf{W}}_i^{(1)},\overline{b}_i^{(1)})=n(W_i^{(2)}-\overline{W}_i^{(2)})$. 
Then \eqref{eq:direct_version_non_ASI_skip_connections} with ASI can be rewritten as 
\begin{equation}
\begin{aligned}
     \min_{\alpha_n\in C(\mathbb{R}^2)}\  &\int_{\mathbb{R}^2} \alpha_n^2(\mathbf{W}^{(1)},b)~\mathrm{d}\mu_n(\mathbf{W}^{(1)},b)+\frac{n\eta_r}{\eta_s}\left(\|\mathbf{u}\|_2^2+v^2\right)\\
     \textup{ s.t.}\ &\int_{\mathbb{R}^2} \alpha_n(\mathbf{W}^{(1)},b)[\langle \mathbf{W}^{(1)}, \mathbf{x}_{j}\rangle+b]_+~\mathrm{d}\mu_n(\mathbf{W}^{(1)},b)+\langle\mathbf{u},\mathbf{x}_j\rangle+v=y_j,\;  j=1,\ldots, M.
\end{aligned}
\label{eq:probablity_version_skip_connections}
\end{equation} 
}

\hj{ 
Now we can consider the infinite width limit. 
Let $\mu$ be the probability measure of $(\bm{\mathcal{W}},\mathcal{B})$. 
Assume that $\eta_r\leq n^{-1.5}\eta_s$. Then $\frac{n\eta_r}{\eta_s}=o(1)$ as $n\to\infty$, thus it can be ignored in the infinite width limit.
By substituting $\mu$ for $\mu_n$, we obtain a continuous version of problem~\eqref{eq:probablity_version_skip_connections} as follows: 
\begin{equation}
\begin{aligned}
     \min_{\alpha\in C(\mathbb{R}^2)}\  &\int_{\mathbb{R}^2} \alpha^2(\mathbf{W}^{(1)},b)~\mathrm{d}\mu(\mathbf{W}^{(1)},b)\\
     \textup{ s.t.}\ &\int_{\mathbb{R}^2} \alpha(\mathbf{W}^{(1)},b)[\langle \mathbf{W}^{(1)}, \mathbf{x}_{j}\rangle+b]_+~\mathrm{d}\mu(\mathbf{W}^{(1)},b)+\langle\mathbf{u},\mathbf{x}_j\rangle+v=y_j,\;  j=1,\ldots, M.
\end{aligned}
\label{eq:continuous_version_skip_connections}
\end{equation} 
Using that $\mu_n$ weakly converges to $\mu$, 
we show that in fact the solution of 
problem \eqref{eq:probablity_version_skip_connections} converges to the solution of 
\eqref{eq:continuous_version_skip_connections} in Theorem~\ref{eq:theorem_skip_connection}. 
}

\hj{
\begin{theorem}[Infinite width limit for network with skip connections]
\label{eq:theorem_skip_connection}
Let $(\overline{\mathbf{W}}_i^{(1)},\overline{b}_i^{(1)})_{i=1}^n$ be i.i.d.\ samples from a pair $(\bm{\mathcal{W}},\mathcal{B})$ with finite fourth moment. 
Suppose $\mu_n$ is the empirical distribution of $(\overline{\mathbf{W}}_i^{(1)},\overline{b}_i^{(1)})_{i=1}^n$ and $(\overline{\alpha}_n,\overline{ \mathbf{u}}_n,\overline{v}_n)$ is the solution of \eqref{eq:probablity_version_skip_connections}. 
Let $(\overline{\alpha},\overline{ \mathbf{u}},\overline{v})$ be the solution of \eqref{eq:continuous_version_skip_connections}. Assume that $\eta_r\leq n^{-1.5}\eta_s$.
Then, for any compact set $D\subset \mathbb{R}^d$, we have $\sup_{\mathbf{x}\in D}|g_n(\mathbf{x},(\overline{\alpha}_n,\overline{ \mathbf{u}}_n,\overline{v}_n))- g(\mathbf{x},(\overline{\alpha},\overline{ \mathbf{u}},\overline{v}))|=O_p(n^{-1/2})$ 
, where $g_n(\mathbf{x},(\overline{\alpha}_n,\overline{ \mathbf{u}}_n,\overline{v}_n)) = \int_{\mathbb{R}^2} \alpha_n(\mathbf{W}^{(1)},b)[\langle \mathbf{W}^{(1)}, \mathbf{x}\rangle+b]_+~\mathrm{d}\mu_n(\mathbf{W}^{(1)},b)+\langle\mathbf{u}_n,\mathbf{x}\rangle+v_n$
is the function represented by a network with $n$ hidden neurons and skip connections after training, 
and $g(\mathbf{x},(\overline{\alpha},\overline{ \mathbf{u}},\overline{v})) = \int_{\mathbb{R}^2} \alpha(\mathbf{W}^{(1)},b)[\langle \mathbf{W}^{(1)}, \mathbf{x}\rangle+b]_+~\mathrm{d}\mu(\mathbf{W}^{(1)},b)+\langle\mathbf{u},\mathbf{x}\rangle+v$ is
the function represented by the infinite-width network with skip connections. 
\end{theorem}
}
The proof of Theorem~\ref{eq:theorem_skip_connection} is provided at the end of the section. In Section~\ref{sec:implicit_bias_univariate} and 
Section~\ref{sec:implicit_bias_multivariate}, we show that the optimization problem \eqref{eq:continuous_version_skip_connections} is equivalent to \eqref{function_space} in the univariate case and equivalent to \eqref{function_space_multi} in the multivariate case. 
From this we immediately obtain 
our main theorems for networks with skip connections without adjusting the training data, namely the following Theorem~\ref{thm:theorem1_skip} and Theorem~\ref{thm:theorem_multi_skip}.  

\hj{ 
\begin{theorem}
[Implicit bias of networks with skip connections, univariate]
\label{thm:theorem1_skip}
Consider a two-layer feedforward network with skip connections \eqref{eq:skip_connection}. Assume  parameter initialization \eqref{initialization2}, which means for each hidden unit the input weight and bias are initialized from a sub-Gaussian $(\mathcal{W},\mathcal{B})$ with joint density $p_{\mathcal{W},\mathcal{B}}$. %
Then, for any finite data set $\{(x_j,y_j)\}_{j=1}^M$ 
and sufficiently large $n$, the optimization of the mean squared error on the training data 
 $\{(x_j,y_j)\}_{j=1}^M$ 
by gradient descent iterations \eqref{eq:gradient_skip_connection} with learning rate $\eta_s\leq\frac{M}{4\lambda_{\max}(\Psi)}, \eta_r\leq n^{-1.5}\eta_s$ converges to a parameter $\theta^\ast$ for which the output function $f(x,\theta^\ast)$ %
attains zero training error. 
Furthermore, letting $\zeta(x) = \int_\mathbb{R} |W|^3 p_{\mathcal{W},\mathcal{B}}(W,-Wx)~\mathrm{d}W$ and $S = \operatorname{supp}(\zeta) \cap [\min_j x_j, \max_j x_j]$, we have  $\sup_{x\in S}\|f(x,\theta^\ast) - g^\ast(x)\|_2 = O_p(n^{-\frac{1}{2}})$%
over the random initialization $\theta_0$, 
where $g^\ast$ solves following variational problem:  
\begin{equation}
  \begin{aligned}
\min_{g\in C^2(S)}\quad & \int_S \frac{1}{\zeta(x)} (g''(x) - f''(x,\theta_0))^2~\mathrm{d}x\\
\textup{subject to}\quad & g(x_j) = y_j,\quad j=1,\ldots, M .
\end{aligned}
\label{eq:main_result_skip}
\end{equation}
\end{theorem}
}

\hj{
\begin{theorem}
[Implicit bias of networks with skip connections, multivariate]
\label{thm:theorem_multi_skip}
Consider the same network settings as in Theorem~\ref{thm:theorem1_skip} except with $d$ input units instead of a single input unit. 
Assume that $\bm{\mathcal{W}}$ is a random vector with $\mathbb{P}(\|\bm{\mathcal{W}}\|=0)=0$ and $\mathcal{B}$ is a random variable; the distribution of $(\bm{\mathcal{W}},\mathcal{B})$ is symmetric, i.e., $(\bm{\mathcal{W}},\mathcal{B})$ and $(-\bm{\mathcal{W}},-\mathcal{B})$ have the same distribution; and $\|\bm{\mathcal{W}}\|_2$ and $\mathcal{B}$ are both sub-Gaussian.
Then, for any finite data set $\{(\mathbf{x}_j,y_j)\}_{i=1}^M$ 
and sufficiently large $n$, the optimization of the mean squared error on the training data 
 $\{(\mathbf{x}_j,y_j)\}_{j=1}^M$ 
by gradient descent iterations \eqref{eq:gradient_skip_connection} with learning rate $\eta_s\leq\frac{M}{4\lambda_{\max}(\Psi)}, \eta_r\leq n^{-1.5}\eta_s$ converges to a parameter $\theta^\ast$ for which $f(\mathbf{x},\theta^\ast)$ attains zero training error. 
Furthermore, let $\mathcal{U}=\|\bm{\mathcal{W}}\|_2$, $\bm{\mathcal{V}}=\bm{\mathcal{W}}/\|\bm{\mathcal{W}}\|_2$, $\mathcal{C}=-\mathcal{B}/\|\bm{\mathcal{W}}\|_2$ and $\zeta(\bm{V},c) = p_{\bm{\mathcal{V}},\mathcal{C}}(\bm{V},c)\mathbb{E}(\mathcal{U}^2|\bm{\mathcal{V}}=\bm{V},\mathcal{C}=c)$, where $p_{\bm{\mathcal{V}},\mathcal{C}}$ is the joint density of $(\bm{\mathcal{V}},\mathcal{C})$. 
Then, for any compact set $D\subset \mathbb{R}^d$, we have  $\sup_{\mathbf{x}\in D}\|f(\mathbf{x},\theta^\ast) - g^\ast(\mathbf{x})\|_2 = O_p(n^{-\frac{1}{2}})$
over the random initialization $\theta_0$, 
where $g^\ast$ solves following variational problem:  
\begin{equation}
\begin{aligned}
 \min_{g\in \operatorname{Lip}(\mathbb{R}^d)}\quad & \int_{\operatorname{supp}(\zeta)} \frac{\left({\mathcal{R}\{(-\Delta)^{(d+1)/2}(g-f(\cdot,\theta_0))\}(\bm{V},c)}\right)^2}{\zeta(\bm{V},c)}~\mathrm{d}\bm{V}\mathrm{d}c\\
 \textup{subject to}\quad & g(\mathbf{x}_j)=y_j,\quad j=1,\ldots,M \\
 & \mathcal{R}\{(-\Delta)^{(d+1)/2}(g-f(\cdot,\theta_0))\}(\bm{V},c)=0,\quad (\bm{V},c)\not\in\operatorname{supp}(\zeta) \\
 &(-\Delta)^{(d+1)/2}(g-f(\cdot,\theta_0)) \in L^p(\mathbb{R}^d),\ 1\leq p<d/(d-1). 
\end{aligned}
\label{eq:gen_multi_dim_skip}
\end{equation}
\end{theorem} 
}

\begin{proof}[Proof of Theorem~\ref{eq:theorem_skip_connection}]
\hj{ 
The Lagrangian of problem \eqref{eq:probablity_version_skip_connections} is
\begin{equation*}
  L((\alpha_n,\mathbf{u}_n,v_n),\lambda^{(n)})=\int_{\mathbb{R}^2} \alpha_n^2(\mathbf{W}^{(1)},b)~\mathrm{d}\mu_n(\mathbf{W}^{(1)},b)+\frac{n\eta_r}{\eta_s}\left(\|\mathbf{u}_n\|_2^2+v_n^2\right)+\sum_{j=1}^M \lambda^{(n)}_j(g_n(\mathbf{x}_j,\alpha_n)-y_j).
\end{equation*}
The optimal condition is $\nabla_{\alpha_n} L=0$, which means
\begin{align*}
   2\alpha_n(\mathbf{W}^{(1)},b)+\sum_{j=1}^M \lambda^{(n)}_j [\langle\mathbf{W}^{(1)},\mathbf{x}_j \rangle+b]_+ &= 0 \textup{ when } (\mathbf{W}^{(1)},b)=(\mathbf{W}^{(1)}_i,b_i), \ i=1,\ldots,k\\
  \frac{2n\eta_r}{\eta_s}\mathbf{u}_n+\sum_{j=1}^M \lambda^{(n)}_j \mathbf{x}_j&=0\\
  \frac{2n\eta_r}{\eta_s}v_n+\sum_{j=1}^M \lambda^{(n)}_j &=0.
\end{align*}
Since only function values on $(\mathbf{W}_i^{(1)},b_i)_{i=1}^M$ are taken into account in problem \eqref{eq:probablity_version_skip_connections}, we can let
\begin{equation}
  \overline{\alpha}_n(\mathbf{W}^{(1)},b) = -\frac{1}{2}\sum_{j=1}^M \lambda^{(n)}_j [\langle\mathbf{W}^{(1)},\mathbf{x}_j \rangle+b]_+ \quad \forall (\mathbf{W}^{(1)},b)\in\mathbb{R}^{d+1}
  \label{eq:a1}
\end{equation}
without changing $\int_{\mathbb{R}^2} \overline{\alpha}_n^2(\mathbf{W}^{(1)},b)~\mathrm{d}\mu_n(\mathbf{W}^{(1)},b)$ and $g_n(\mathbf{x},\overline{\alpha}_n)$.
}

\hj{
Here $\lambda^{(n)}_j$, $j=1,\ldots,M$ are chosen to make $g_n(\mathbf{x}_i,\overline{\alpha}_n)=y_i$, $i=1,\ldots,M$. So we get a system of linear equations in variables $\{\lambda^{(n)}_j\}_{j=1}^M, \mathbf{u}_n$ and $v_n$:
\begin{equation}
\begin{aligned}
  -\frac{1}{2}\sum_{j=1}^M \lambda^{(n)}_j \int_{\mathbb{R}^2} [\langle\mathbf{W}^{(1)},\mathbf{x}_j \rangle+b]_+[\langle\mathbf{W}^{(1)},\mathbf{x}_i \rangle+b]_+ ~\mathrm{d}\mu_n(\mathbf{W}^{(1)},b)+\langle\mathbf{u}_n,\mathbf{x}_i\rangle+v_n&= y_i,\; i=1,\ldots,M\\
  \sum_{j=1}^M \lambda^{(n)}_j \mathbf{x}_j+\frac{2n\eta_r}{\eta_s}\mathbf{u}_n&=0\\
  \sum_{j=1}^M \lambda^{(n)}_j+\frac{2n\eta_r}{\eta_s}v_n&=0 . 
  \label{eq:e1}
\end{aligned}
\end{equation}
}

\hj{
Similarly, the Lagrangian of problem \eqref{eq:continuous_version_skip_connections} is 
\begin{equation*}
  \widetilde{L}(\alpha,\lambda)=\int_{\mathbb{R}^2} \alpha^2(\mathbf{W}^{(1)},b)~\mathrm{d}\mu(\mathbf{W}^{(1)},b)+\sum_{j=1}^M \lambda_j(g(\mathbf{x}_j,\alpha)-y_j) . 
\end{equation*}
The optimality condition is $\nabla_\alpha \widetilde{L}=0$, which means 
\begin{align*}
   2\alpha(\mathbf{W}^{(1)},b)+\sum_{j=1}^M \lambda^{(n)}_j [\langle\mathbf{W}^{(1)},\mathbf{x}_j \rangle+b]_+ &= 0  \quad \forall (\mathbf{W}^{(1)},b)\in\mathbb{R}^{d+1}\\
  0\cdot\mathbf{u}+\sum_{j=1}^M \lambda^{(n)}_j \mathbf{x}_j&=0\\
  0\cdot v+\sum_{j=1}^M \lambda^{(n)}_j &=0.
\end{align*}
Then we get
\begin{equation}
  \overline{\alpha}(\mathbf{W}^{(1)},b) = -\frac{1}{2}\sum_{j=1}^M \lambda_j [\langle\mathbf{W}^{(1)},\mathbf{x}_j \rangle+b]_+  \quad \forall (\mathbf{W}^{(1)},b)\in\mathbb{R}^2.
  \label{eq:a2}
\end{equation}
Here $\lambda_j$, $j=1,\ldots,M$ are chosen to make $g(\mathbf{x},\alpha)=y_i$, $i=1,\ldots,M$. This means that 
\begin{equation}
\begin{aligned}
  -\frac{1}{2}\sum_{j=1}^M \lambda_j \int_{\mathbb{R}^2} [\langle\mathbf{W}^{(1)},\mathbf{x}_j \rangle+b]_+[\langle\mathbf{W}^{(1)},\mathbf{x}_i \rangle+b]_+ ~\mathrm{d}\mu(\mathbf{W}^{(1)},b)+\langle\mathbf{u},\mathbf{x}_i\rangle+v&= y_i,\; i=1,\ldots,M\\
  \sum_{j=1}^M \lambda_j \mathbf{x}_j+0\cdot\mathbf{u}&=0\\
  \sum_{j=1}^M \lambda_j+0\cdot v&=0\\.
  \label{eq:e2}
\end{aligned}
\end{equation}
}

\hj{ 
Compare \eqref{eq:e1} and \eqref{eq:e2}. Since the number of samples is finite, $\mathbf{x}_i$ is also bounded. Then by the assumption that $\mathcal{\bm{W}}$ and $\mathcal{B}$ have finite fourth moments, we have that $[\langle\mathbf{W}^{(1)},\mathbf{x}_j \rangle+b]_+[\langle\mathbf{W}^{(1)},\mathbf{x}_i \rangle+b]_+$ has finite variance. According to central limit theorem, as $n\to\infty$, $\int_{\mathbb{R}^2} [\langle\mathbf{W}^{(1)},\mathbf{x}_j \rangle+b]_+[\langle\mathbf{W}^{(1)},\mathbf{x}_i \rangle+b]_+ ~\mathrm{d}\mu_n(\mathbf{W}^{(1)},b)$ tends to a Gaussian distribution with variance $O(n^{-1})$. 
This implies that $\forall i=1,\ldots,M,~\forall j=1,\ldots,M$,
\begin{equation*}
\begin{aligned}
&|\int_{\mathbb{R}^2} [\langle\mathbf{W}^{(1)},\mathbf{x}_j \rangle+b]_+[\langle\mathbf{W}^{(1)},\mathbf{x}_i \rangle+b]_+ ~\mathrm{d}\mu_n(\mathbf{W}^{(1)},b)\\
&-\int_{\mathbb{R}^2} [\langle\mathbf{W}^{(1)},\mathbf{x}_j \rangle+b]_+[\langle\mathbf{W}^{(1)},\mathbf{x}_i \rangle+b]_+ ~\mathrm{d}\mu(\mathbf{W}^{(1)},b)|\\
&=O_p(n^{-1/2})
\end{aligned}
\end{equation*}
Also according to the assumption $\eta_r\leq n^{-1.5}\eta_s$, we have $\frac{2n\eta_r}{\eta_s}=O(n^{-1/2})$. So coefficients of \eqref{eq:e1} converge to coefficients of \eqref{eq:e2} at the rate of $O_p(n^{-1/2})$, then we get
\begin{equation}
  |\lambda_j^n-\lambda_j|=O_p(n^{-1/2}),\quad j=1,\ldots,M. \label{eq:converge_lambda} 
\end{equation}
Compare \eqref{eq:a1} and \eqref{eq:a2}. Given $(\mathbf{W}^{(1)},b)$, we have
\begin{equation}
  |\overline{\alpha}_n(\mathbf{W}^{(1)},b)-\overline{\alpha}(\mathbf{W}^{(1)},b)|=O_p(n^{-1/2}).\label{eq:alpha_n_converge}
\end{equation}
Next we want to prove that $\sup_{\mathbf{x}\in D}|g_n(\mathbf{x},(\overline{\alpha}_n,\overline{ \mathbf{u}}_n,\overline{v}_n))- g(\mathbf{x},(\overline{\alpha},\overline{ \mathbf{u}},\overline{v}))|=O_p(n^{-1/2})$. 
Firstly, we prove that $\sup_{\mathbf{x}\in D}|g_n(\mathbf{x},(\overline{\alpha},\overline{ \mathbf{u}},\overline{v}))- g(\mathbf{x},(\overline{\alpha},\overline{ \mathbf{u}},\overline{v}))|=O_p(n^{-1/2})$. Note that $|g_n(\mathbf{x},(\overline{\alpha},\overline{ \mathbf{u}},\overline{v}))- g(\mathbf{x},(\overline{\alpha},\overline{ \mathbf{u}},\overline{v}))|=|g_n(\mathbf{x},(\overline{\alpha},\mathbf{0},0))- g(\mathbf{x},(\overline{\alpha},\mathbf{0},0))|$. According to \eqref{part_converge} in the proof of Theorem~\ref{theorem4} in Appendix~\ref{Proof4}, we have $\sup_{\mathbf{x}\in D}|g_n(\mathbf{x},(\overline{\alpha},\mathbf{0},0))- g(\mathbf{x},(\overline{\alpha},\mathbf{0},0))|=O_p(n^{-1/2})$. Then we have
\begin{equation}
    \sup_{\mathbf{x}\in D}|g_n(\mathbf{x},(\overline{\alpha},\overline{ \mathbf{u}},\overline{v}))- g(\mathbf{x},(\overline{\alpha},\overline{ \mathbf{u}},\overline{v}))|=O_p(n^{-1/2}).
    \label{eq:part_converge}
\end{equation} 
}

\hj{ 
Finally, we prove that $\sup_{\mathbf{x}\in D}|g_n(\mathbf{x},(\overline{\alpha}_n,\overline{ \mathbf{u}}_n,\overline{v}_n))- g_n(\mathbf{x},(\overline{\alpha},\overline{ \mathbf{u}},\overline{v}))|=O_p(n^{-1/2})$. Since $\forall \mathbf{x}\in  D$
\begin{equation*}
  \begin{aligned}
    &|g_n(\mathbf{x},(\overline{\alpha}_n,\overline{ \mathbf{u}}_n,\overline{v}_n))- g_n(\mathbf{x},(\overline{\alpha},\overline{ \mathbf{u}},\overline{v}))|\\
    \leq&\int_{\mathbb{R}^2} \left|\overline{\alpha}_n(\mathbf{W}^{(1)},b)[\langle\mathbf{W}^{(1)},\mathbf{x} \rangle+b]_+ -\overline{\alpha}(\mathbf{W}^{(1)},b)[\langle\mathbf{W}^{(1)},\mathbf{x} \rangle+b]_+\right|~\mathrm{d}\mu_n(\mathbf{W}^{(1)},b)\\
    &+\|\mathbf{x}\|_2\|\overline{ \mathbf{u}}_n-\overline{ \mathbf{u}}\|_2+|\overline{v}_n-\overline{v}|\\
    \leq&\int_{\mathbb{R}^2} \left|\overline{\alpha}_n(\mathbf{W}^{(1)},b)-\overline{\alpha}(\mathbf{W}^{(1)},b)\right|[\langle\mathbf{W}^{(1)},\mathbf{x} \rangle+b]_+~\mathrm{d}\mu_n(\mathbf{W}^{(1)},b)+\|\mathbf{x}\|_2\|\overline{ \mathbf{u}}_n-\overline{ \mathbf{u}}\|_2+|\overline{v}_n-\overline{v}|\\
    \leq&  \int_{\mathbb{R}^2}\left|-\frac{1}{2}\sum_{j=1}^M (\lambda_j^n-\lambda_j) [\langle\mathbf{W}^{(1)},\mathbf{x}_j \rangle+b]_+\right| [\langle\mathbf{W}^{(1)},\mathbf{x} \rangle+b]_+~\mathrm{d}\mu_n(\mathbf{W}^{(1)},b)\\
    &+\|\mathbf{x}\|_2\|\overline{ \mathbf{u}}_n-\overline{ \mathbf{u}}\|_2+|\overline{v}_n-\overline{v}|\\
    \leq&  \frac{1}{2}\sum_{j=1}^M |\lambda_j^n-\lambda_j| \int_{\mathbb{R}^2}[\langle\mathbf{W}^{(1)},\mathbf{x}_j \rangle+b]_+ [\langle\mathbf{W}^{(1)},\mathbf{x} \rangle+b]_+~\mathrm{d}\mu_n(\mathbf{W}^{(1)},b)\\
    &+\|\mathbf{x}\|_2\|\overline{ \mathbf{u}}_n-\overline{ \mathbf{u}}\|_2+|\overline{v}_n-\overline{v}|\\
    \leq&  \frac{1}{2} \left(\max_{\mathbf{x}\in D}\int_{\mathbb{R}^2}[\langle\mathbf{W}^{(1)},\mathbf{x}_j \rangle+b]_+ [\langle\mathbf{W}^{(1)},\mathbf{x} \rangle+b]_+~\mathrm{d}\mu_n(\mathbf{W}^{(1)},b)\right)\sum_{j=1}^M |\lambda_j^n-\lambda_j|\\
    &+\max_{\mathbf{x}\in D}\|\mathbf{x}\|_2\|\overline{ \mathbf{u}}_n-\overline{ \mathbf{u}}\|_2+|\overline{v}_n-\overline{v}|. 
  \end{aligned}
\end{equation*}
Because $ D$ is compact and $\int_{\mathbb{R}^2}[\langle\mathbf{W}^{(1)},\mathbf{x}_j \rangle+b]_+ [\langle\mathbf{W}^{(1)},\mathbf{x} \rangle+b]_+~\mathrm{d}\mu_n(\mathbf{W}^{(1)},b)$ converges according to the law of large numbers, we have that $\max_{\mathbf{x}\in D}\int_{\mathbb{R}^2}[\langle\mathbf{W}^{(1)},\mathbf{x}_j \rangle+b]_+ [\langle\mathbf{W}^{(1)},\mathbf{x} \rangle+b]_+~\mathrm{d}\mu_n(\mathbf{W}^{(1)},b)$ and $\max_{\mathbf{x}\in D}\|\mathbf{x}\|_2$ is bounded by a finite number independent of $n$. Then according to \eqref{eq:converge_lambda},%
\begin{equation*}
\sup_{\mathbf{x}\in D}|g_n(\mathbf{x},(\overline{\alpha}_n,\overline{ \mathbf{u}}_n,\overline{v}_n))- g_n(\mathbf{x},(\overline{\alpha},\overline{ \mathbf{u}},\overline{v}))|=O_p(n^{-1/2}). 
\end{equation*}
Combined with \eqref{eq:part_converge}, we have
\begin{equation*}
\sup_{\mathbf{x}\in D}|g_n(\mathbf{x},(\overline{\alpha}_n,\overline{ \mathbf{u}}_n,\overline{v}_n))- g(\mathbf{x},(\overline{\alpha},\overline{ \mathbf{u}},\overline{v}))|=O_p(n^{-1/2}). 
\end{equation*}
This concludes the proof. } 
\end{proof}

\section{Equivalence of Our Characterization and NTK Norm Minimization for Univariate Regression}
\label{appendix:relation_NTK} 

In this section we demonstrate that NTK norm minimization \citep{zhang2019type}, which characterizes the implicit bias of training a linearized model by gradient descent, is equivalent to our characterization in Section~\ref{2.4} and Section~\ref{sec:implicit_bias_univariate}. For simplicity, we only consider univariate regression in this section.
Following \cite{jacot2018neural}, \cite{zhang2019type} show that gradient descent can be regarded as a kernel gradient descent in function space, whereby the kernel is given by the NTK. 
Then for a linearized model, gradient descent finds the global minimum that is closest to the initial output function in the corresponding reproducing kernel Hilbert space (RKHS). 
Let $\tilde{\Theta}_n$ be the  empirical neural tangent kernel of training only the output layer, i.e.,\ 
\begin{equation*}
\begin{aligned}
    \tilde{\Theta}_n(x_1,x_2) &= \frac{1}{n}\nabla_{W^{(2)}} f(\mathbf{x}_1,\theta_0) \nabla_{W^{(2)}} f(x_2,\theta_0)^T \\
    &=\frac{1}{n}\sum_{i=1}^n\nabla_{W_i^{(2)}} f(x_1,\theta_0) \nabla_{W_i^{(2)}} f(x_2,\theta_0)\\
    &=\frac{1}{n}\sum_{i=1}^n[W_i^{(1)}x_1+b^{(1)}_i]_+  [W_i^{(1)}x_2+b^{(1)}_i]_+. 
\end{aligned}
\end{equation*}
As $n\to\infty$, $\tilde{\Theta}_n\to \tilde{\Theta}$, where 
\begin{equation}
\begin{aligned}
    \tilde{\Theta}(x_1,x_2) 
    &=\int_{\mathbb{R}^2} [W^{(1)}x_1+b^{(1)}]_+  [W^{(1)}x_2+b^{(1)}]_+~\mathrm{d}\mu(W^{(1)},b).
    \label{Theta_form}
\end{aligned}
\end{equation}
Equivalently, using the notation in Section \ref{sec:implicit_bias_univariate}, we have
\begin{equation}
\begin{aligned}
    \tilde{\Theta}(x_1,x_2) 
    &=\int_{\mathbb{R}^2} [W^{(1)}(x_1-c)]_+[W^{(1)}(x_2-c)]_+~\mathrm{d}\nu(W^{(1)},c). 
    \label{Theta_form_nu}
\end{aligned}
\end{equation}

Next, \cite{zhang2019type} construct a RKHS $\mathcal{H}_{\tilde{\Theta}}(S)$ by kernel $\tilde{\Theta}$, and the inner product of the RKHS is denoted by $\langle\cdot,\cdot \rangle_{\tilde{\Theta}}$. Then $\mathcal{H}_{\tilde{\Theta}}(S)$ satisfies:
\begin{align}
\text{(i) \quad} &\forall x\in S, {\tilde{\Theta}}(\cdot,x)\in \mathcal{H}_{\tilde{\Theta}}(S);\\
\text{(ii) \quad} &\forall x\in S, \forall f\in \mathcal{H}_{\tilde{\Theta}}, \langle f(\cdot), {\tilde{\Theta}}(\cdot,x) \rangle_{\tilde{\Theta}} = f(x);\label{kernel_norm_property2}\\
\text{(iii) \quad} &\forall x,y\in S, \langle {\tilde{\Theta}}(\cdot,x), \tilde{\Theta}(\cdot,y) \rangle_{\tilde{\Theta}} = {\tilde{\Theta}}(x,y).    
\end{align}
Here the domain is $S= \operatorname{supp}(\zeta) \cap [\min_i x_i, \max_i x_i]$, which is the same as in Theorem~\ref{thm:theorem1} 
and Theorem~\ref{theorem_func}. 
Using the reproducing kernel Hilbert space, \cite{zhang2019type} prove that  $f^{\mathrm{lin}}(x,\widetilde{\omega}_\infty)$ (defined in Section~\ref{Training_only_the_output_layer}) is the solution of the following optimization problem:
\begin{equation*}
  \min_{g\in\mathcal{H}_{\tilde{\Theta}}(S)} \|g\|_{\tilde{\Theta}_n} \quad \text{s.t. } g(x_j)=y_j,\ j=1,\ldots,M. 
\end{equation*}
As the width $n$ tends to infinity, the above optimization problem becomes 
\begin{equation}
  \min_{g\in\mathcal{H}_{\tilde{\Theta}}(S)} \|g\|_{\tilde{\Theta}} \quad \text{s.t. } g(x_j)=y_j,\ j=1,\ldots,M. 
  \label{kernel_norm_min_continuous}
\end{equation}
In Section~\ref{2.4}, we show that $f^{\mathrm{lin}}(x,\widetilde{\omega}_\infty)$ is the solution of the optimization problem \eqref{probablity_version} in function space. As width $n$ tends to infinity, the optimization problem \eqref{probablity_version} becomes \eqref{continuous_version}, which we repeat below: 
\begin{equation}
\begin{aligned}
 \min_{\alpha\in C(\mathbb{R}^2)}\quad & \int_{\mathbb{R}^2} \alpha^2(W^{(1)},b)~\mathrm{d}\mu(W^{(1)},b)\\
 \textup{subject to}\quad & \int_{\mathbb{R}^2} \alpha(W^{(1)},b)[W^{(1)}x_j+b]_+~\mathrm{d}\mu(W^{(1)},b)=y_j, \quad j=1,\ldots,M .
\end{aligned}
\label{continuous_version_1}
\end{equation}
Since optimization problems \eqref{kernel_norm_min_continuous} and \eqref{continuous_version_1} both characterize the implicit bias of training a linearized model by gradient descent, they must have the same solution in function space. We express this formally in the following theorem: 

\begin{theorem}[Equivalence of our variational problem and NTK norm minimization]
\label{th:equivalence}
Assume that optimization problems \eqref{kernel_norm_min_continuous} and \eqref{continuous_version_1} are both feasible. Suppose $\overline{\alpha}$ is the solution of \eqref{continuous_version_1}, 
and consider the corresponding output function:
\begin{equation}
  \overline{g}(x)=\int_{\mathbb{R}^2} \overline{\alpha}(W^{(1)},b)[W^{(1)}x+b]_+~\mathrm{d}\mu(W^{(1)},b). 
   \label{func_g_1}
\end{equation}
Then $\overline{g}(x)$ restricted on $S$ is the solution of the optimization problem \eqref{kernel_norm_min_continuous}. 
\end{theorem}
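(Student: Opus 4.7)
The plan is to exploit the explicit Lagrangian form of $\overline{\alpha}$ derived in the proof of Theorem~\ref{theorem4}, show that the corresponding output function $\overline{g}$ is a linear combination of kernel sections $\tilde\Theta(\cdot,x_j)$, then verify minimality in the RKHS by a standard orthogonality argument. From the Lagrangian analysis in the proof of Theorem~\ref{theorem4}, the solution of \eqref{continuous_version_1} takes the form
\begin{equation*}
\overline{\alpha}(W^{(1)},b) \;=\; -\tfrac{1}{2}\sum_{j=1}^{M} \lambda_j\,[W^{(1)}x_j+b]_+ ,
\end{equation*}
where the multipliers $\lambda_j$ are determined by the interpolation constraints $\overline{g}(x_i)=y_i$, $i=1,\ldots,M$.

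\noindent\textbf{Identification of \boldmath$\overline{g}$ and equality of the two objective values.}
Substituting the closed-form expression for $\overline{\alpha}$ into \eqref{func_g_1} and comparing with the definition \eqref{Theta_form} of $\tilde\Theta$, I get
\begin{equation*}
\overline{g}(x) \;=\; -\tfrac{1}{2}\sum_{j=1}^{M}\lambda_j\, \tilde\Theta(x_j,x).
\end{equation*}
In particular $\overline{g}\in\mathcal{H}_{\tilde\Theta}(S)$ by property (i) of the RKHS, and by the reproducing property,
\begin{equation*}
\|\overline{g}\|_{\tilde\Theta}^{2}
\;=\; \tfrac{1}{4}\sum_{j,k}\lambda_j\lambda_k\,\langle \tilde\Theta(\cdot,x_j),\tilde\Theta(\cdot,x_k)\rangle_{\tilde\Theta}
\;=\; \tfrac{1}{4}\sum_{j,k}\lambda_j\lambda_k\,\tilde\Theta(x_j,x_k).
\end{equation*}
On the other hand, expanding the square and using \eqref{Theta_form},
\begin{equation*}
\int_{\mathbb{R}^2}\overline{\alpha}^{\,2}(W^{(1)},b)~\mathrm{d}\mu(W^{(1)},b)
\;=\; \tfrac{1}{4}\sum_{j,k}\lambda_j\lambda_k\int_{\mathbb{R}^2}[W^{(1)}x_j+b]_+[W^{(1)}x_k+b]_+~\mathrm{d}\mu
\;=\; \tfrac{1}{4}\sum_{j,k}\lambda_j\lambda_k\,\tilde\Theta(x_j,x_k),
\end{equation*}
so the two objective values coincide: $\|\overline{g}\|_{\tilde\Theta}^{2}=\int\overline{\alpha}^{\,2}\,\mathrm{d}\mu$.

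\noindent\textbf{Minimality in \boldmath$\mathcal{H}_{\tilde\Theta}(S)$.}
Let $h\in\mathcal{H}_{\tilde\Theta}(S)$ be any function with $h(x_j)=y_j$ for $j=1,\ldots,M$, and write $r:=h-\overline{g}$, so that $r(x_j)=0$ for every $j$. Using the reproducing property \eqref{kernel_norm_property2},
\begin{equation*}
\langle r,\overline{g}\rangle_{\tilde\Theta}
\;=\; -\tfrac{1}{2}\sum_{j=1}^{M}\lambda_j\,\langle r,\tilde\Theta(\cdot,x_j)\rangle_{\tilde\Theta}
\;=\; -\tfrac{1}{2}\sum_{j=1}^{M}\lambda_j\,r(x_j)
\;=\; 0 .
\end{equation*}
Hence $\|h\|_{\tilde\Theta}^{2}=\|\overline{g}\|_{\tilde\Theta}^{2}+\|r\|_{\tilde\Theta}^{2}\ge \|\overline{g}\|_{\tilde\Theta}^{2}$, so $\overline{g}|_{S}$ solves \eqref{kernel_norm_min_continuous}. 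The only delicate point is the existence of well-defined Lagrange multipliers $\lambda_j$; this is handled exactly as in the proof of Theorem~\ref{theorem4}, where the Gram matrix with entries $\int[W^{(1)}x_j+b]_+[W^{(1)}x_i+b]_+\,\mathrm{d}\mu=\tilde\Theta(x_i,x_j)$ arises and the linear system for $\lambda$ is solved under the feasibility hypothesis.
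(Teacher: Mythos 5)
Your proof is correct, and it takes a genuinely cleaner route than the paper's. Both arguments start from the same Lagrangian closed form $\overline{\alpha}(W^{(1)},b)=-\tfrac12\sum_j\lambda_j[W^{(1)}x_j+b]_+$ obtained in the proof of Theorem~\ref{theorem4}. The paper then encodes $\overline{\alpha}$ via an auxiliary density $h$ as $\overline{\alpha}(W^{(1)},b)=\int_S h(x)[W^{(1)}x+b]_+\,\mathrm{d}x$, derives that both objectives reduce to the same bilinear form $\int_{S\times S}h(\tilde x)h(\bar x)\tilde\Theta(\tilde x,\bar x)\,\mathrm{d}\tilde x\,\mathrm{d}\bar x$, and concludes the equivalence of the two optimization problems \emph{assuming} that every element of $\mathcal{H}_{\tilde\Theta}(S)$ can be reached by a function of this parametrized form --- a step it leaves at the level of a plausibility argument. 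You instead prove minimality directly: after writing $\overline g=-\tfrac12\sum_j\lambda_j\tilde\Theta(x_j,\cdot)$ (so $\overline g\in\mathcal H_{\tilde\Theta}(S)$), you observe that for any feasible $h$, the residual $r=h-\overline g$ vanishes at all $x_j$ and hence by the reproducing property $\langle r,\overline g\rangle_{\tilde\Theta}=0$, giving $\|h\|_{\tilde\Theta}^2=\|\overline g\|_{\tilde\Theta}^2+\|r\|_{\tilde\Theta}^2\ge\|\overline g\|_{\tilde\Theta}^2$. This is exactly the representer-theorem orthogonality argument, and it establishes that $\overline g|_S$ solves \eqref{kernel_norm_min_continuous} without needing to argue that the two feasible sets are in bijection. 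The identity $\|\overline g\|_{\tilde\Theta}^2=\int\overline\alpha^2\,\mathrm{d}\mu$ you derive along the way is also in the paper (your $\frac14\sum_{j,k}\lambda_j\lambda_k\tilde\Theta(x_j,x_k)$ is the paper's double integral specialized to $h=-\tfrac12\sum_j\lambda_j\delta_{x_j}$), but in your version it is a remark rather than a load-bearing step. In short, your proof closes a small gap in the paper's argument by replacing the approximation claim with a direct Pythagorean inequality.
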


Next, we give a standalone proof of this theorem using the property of kernel norm. The proof gives us an idea of what the kernel norm actually looks like. 
\begin{proof}[Proof of Theorem~\ref{th:equivalence}]
Since $\overline{\alpha}(W^{(1)},b)$ is the solution of 
\eqref{continuous_version_1}, according to \eqref{a2} in the proof of Theorem~\ref{theorem4}, 
\begin{equation*}
  \overline{\alpha}(W^{(1)},b) = -\frac{1}{2}\sum_{j=1}^M \lambda_j [W^{(1)}x_j+b]_+  \quad \forall (W^{(1)},b)\in\mathbb{R}^2
\end{equation*}
for some constants $\lambda_j, j=1,\ldots,M$. Then we write $\overline{\alpha}(W^{(1)},b)$ in the following form:
\begin{equation}
  \overline{\alpha}(W^{(1)},b) = \int_{S} h(x)[W^{(1)}x+b]_+  \mathrm{d}x,
  \label{form_of_alpha}
\end{equation}
where $h(x)$ can be a combination of Dirac delta functions. 
Then substitute \eqref{form_of_alpha} into the expression of $\overline{g}(x)$ \eqref{func_g_1} to obtain  
\begin{equation}
\begin{aligned}
    \overline{g}(x)&=\int_{\mathbb{R}^2\times S} h(\tilde{x})[W^{(1)}\tilde{x}+b]_+[W^{(1)}x+b]_+~\mathrm{d}\mu(W^{(1)},b) \mathrm{d}\tilde{x}\\
    &=\int_{S} h(\tilde{x})\tilde{\Theta}(x,\tilde{x}) \mathrm{d}\tilde{x}, %
    \label{form_of_g}
\end{aligned}
\end{equation}
where we use the expression of the NTK in equation  \eqref{Theta_form}. Then we get
\begin{equation}
\begin{aligned}
    \langle g(x),g(x)\rangle_{\tilde{\Theta}}&=\langle g(x),\int_{S} h(\tilde{x})\tilde{\Theta}(x,\tilde{x}) \mathrm{d}\tilde{x}\rangle_{\tilde{\Theta}}\mathrm{d}\tilde{x}\\
    &=\int_{S} h(\tilde{x})\langle g(x),\tilde{\Theta}(x,\tilde{x})\rangle_{\tilde{\Theta}}\mathrm{d}\tilde{x}\\
    &=\int_{S} h(\tilde{x})g(\tilde{x})\mathrm{d}\tilde{x}\quad \text{ (here we use the property of RKHS norm \eqref{kernel_norm_property2})}\\ 
    &=\int_{S\times S} h(\tilde{x})h(\bar{x})\tilde{\Theta}(\tilde{x},\bar{x})\mathrm{d}\tilde{x}\mathrm{d}\bar{x} \quad \text{ (use \eqref{form_of_g})} . 
    \label{ob1}
\end{aligned}
\end{equation}
On the other hand, using \eqref{form_of_alpha}, the objective of \eqref{continuous_version_1} becomes
\begin{equation}
\begin{aligned}
&\int_{S^2} \overline{\alpha}^2(W^{(1)},b)~\mathrm{d}\mu(W^{(1)},b)\\
=&\int_{S\times S\times \mathbb{R}^2} h(\tilde{x})[W^{(1)}\tilde{x}+b]_+ h(\bar{x})[W^{(1)}\bar{x}+b]_+ ~\mathrm{d}\tilde{x}\mathrm{d}\bar{x}\mathrm{d}\mu(W^{(1)},b)\\
=&\int_{S\times S} h(\tilde{x}) h(\bar{x})\int_{ \mathbb{R}^2}[W^{(1)}\tilde{x}+b]_+[W^{(1)}\bar{x}+b]_+ \mathrm{d}\mu(W^{(1)},b) ~\mathrm{d}\tilde{x}\mathrm{d}\bar{x}\\
=&\int_{S\times S} h(\tilde{x}) h(\bar{x})\tilde{\Theta}(\bar{x},\tilde{x}) ~\mathrm{d}\tilde{x}\mathrm{d}\bar{x}\quad \text{ (use \eqref{Theta_form})}. %
\label{ob2}
\end{aligned}
\end{equation}
Comparing \eqref{ob1} and \eqref{ob2}, we have that optimization problems \eqref{kernel_norm_min_continuous} and \eqref{continuous_version_1} are equivalent if $\alpha(W^{(1)},b)$ has the form \eqref{form_of_alpha} and $g(x)$ has the form \eqref{form_of_g}. Moreover, if every function $g\in\mathcal{H}_{\tilde{\Theta}}(S)$ can be approximated by the shallow network, we can find $\alpha(W^{(1)},b)$ in form of \eqref{form_of_alpha} such that $g(x)$ is expressed in the form of \eqref{form_of_g}.
In this sense we show that optimization problems \eqref{kernel_norm_min_continuous} and \eqref{continuous_version_1} are equivalent. 
\end{proof}

In Section \ref{sec:implicit_bias_univariate}, we relax the optimization problem \eqref{continuous_new} to \eqref{continuous_add_linear} in order to characterize the implicit bias in function space. This relaxation can also be done in 
the NTK norm minimization setting. 
It means that we can equivalently relax the problem \eqref{kernel_norm_min_continuous} to the following problem:
\begin{equation}
  \min_{g\in\mathcal{H}_{\tilde{\Theta}}(S),u\in\mathbb{R},v\in\mathbb{R}} \|g-ux-v\|_{\tilde{\Theta}} \quad \text{s.t. } g(x_j)=y_j,\ j=1,\ldots,M.
  \label{kernel_norm_min_continuous_add_linear}
\end{equation}
Then the optimization problems \eqref{continuous_add_linear} and \eqref{kernel_norm_min_continuous_add_linear} are equivalent. Theorem \ref{theorem_func} shows that  \eqref{continuous_add_linear} and \eqref{function_space} have the same solution on the set $S = \operatorname{supp}(\zeta) \cap [\min_i x_i, \max_i x_i]$. Then we have that optimization problems \eqref{kernel_norm_min_continuous_add_linear} and \eqref{function_space} are equivalent, which means that
\begin{equation}
    \min_{u\in\mathbb{R},v\in\mathbb{R}} \|g-ux-v\|_{\tilde{\Theta}}=\int_{S}
\frac{(g''(x))^2}{\zeta(x)}~\mathrm{d}x, \quad \forall g\in\mathcal{H}_{\tilde{\Theta}}(S) . 
\label{des_kernel_norm}
\end{equation}
Next, we directly prove the above equation \eqref{des_kernel_norm}. Given function $g\in\mathcal{H}_{\tilde{\Theta}}(S)$, let $h=\operatorname{argmin}_{h\in\mathcal{H}_{\tilde{\Theta}}(S)}\|h\|_{\tilde{\Theta}}$, s.t.\ $h=g-ux-v$ for some $u\in\mathbb{R},v\in\mathbb{R}$. Then according to optimality of $h$, we have $\langle h,x\rangle_{\tilde{\Theta}}=0$ and $\langle h,1\rangle_{\tilde{\Theta}}=0$. Consider the space $G=\{h\in\mathcal{H}_{\tilde{\Theta}}(S):\langle h,x\rangle_{\tilde{\Theta}}=0, \langle h,1\rangle_{\tilde{\Theta}}=0\}$, 
which is the orthogonal complement of $\operatorname{span}\{1,x\}$. Then $h$ is the projection of $g$ on $G$. Since $h=g-ux-v$, $h''=g''$. So we can reformulate the equation \eqref{des_kernel_norm} which we want to prove in the following theorem:

\begin{theorem}[Explicit form of the kernel norm]
\label{th:explicitkernelnorm}
The kernel norm on the space $G=\{h\in\mathcal{H}_{\tilde{\Theta}}(S):\langle h,x\rangle_{\tilde{\Theta}}=0, \langle h,1\rangle_{\tilde{\Theta}}=0\}$ is given as follows:
\begin{equation}
  \|h\|^2_{\tilde{\Theta}}=\int_{S}
\frac{(h''(x))^2}{\zeta(x)}~\mathrm{d}x, \quad \forall h\in G. 
\label{des_kernel_norm_orthogonal_complement}
\end{equation}
\end{theorem}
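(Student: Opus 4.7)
I would prove Theorem~\ref{th:explicitkernelnorm} by sandwiching $\|h\|_{\tilde{\Theta}}^2$ between matching lower and upper bounds, with the orthogonality $h \in W$ entering crucially in the upper bound.

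For the lower bound, by Theorem~\ref{th:equivalence} the kernel norm admits the variational characterization $\|h\|_{\tilde{\Theta}}^2 = \min \int \gamma^2\,d\nu$ over all $\gamma$ representing $h$ via the ReLU-feature integral. For any such $\gamma$, the pointwise Cauchy--Schwarz inequality used in \eqref{long2} in the proof of Theorem~\ref{theorem_func} yields $\int \gamma^2\,d\nu \geq \int_S (h''(x))^2/\zeta(x)\,dx$, and hence $\|h\|_{\tilde{\Theta}}^2 \geq \int_S (h''(x))^2/\zeta(x)\,dx$. This direction does not yet use $h \in W$.

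For the upper bound, define the Cauchy--Schwarz equalizer $\gamma^*(W^{(1)}, c) = h''(c)|W^{(1)}|/\zeta(c)$. The calculation in \eqref{long1} gives $\int (\gamma^*)^2\,d\nu = \int_S (h''(x))^2/\zeta(x)\,dx$, while \eqref{2nd_derivative} shows $(g(\cdot, \gamma^*))''(x) = h''(x)$ on $S$, so $g^* := g(\cdot, \gamma^*) = h + ax + b$ for some constants $a, b$. Applying Theorem~\ref{th:equivalence} to $g^*$ gives $\|g^*\|_{\tilde{\Theta}}^2 \leq \int(\gamma^*)^2\,d\nu$. Using the orthogonality $h\in W$ (i.e., $\langle h, 1\rangle_{\tilde{\Theta}} = \langle h, x\rangle_{\tilde{\Theta}} = 0$), the Pythagorean identity gives $\|g^*\|_{\tilde{\Theta}}^2 = \|h\|_{\tilde{\Theta}}^2 + \|ax + b\|_{\tilde{\Theta}}^2 \geq \|h\|_{\tilde{\Theta}}^2$. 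Chaining the inequalities yields $\|h\|_{\tilde{\Theta}}^2 \leq \int_S (h''(x))^2/\zeta(x)\,dx$, and as a byproduct $\|ax + b\|_{\tilde{\Theta}}^2 = 0$, so that in fact $g^* = h$ and $\gamma^*$ represents $h$ exactly.

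The main obstacle is justifying the Pythagorean splitting, which presupposes $1, x \in \mathcal{H}_{\tilde{\Theta}}(S)$ so that $W$ is a genuine orthogonal complement inside a Hilbert space; this should hold under mild nondegeneracy of the joint law of $(\mathcal{W}, \mathcal{B})$, since constants and linear functions can be expressed as (limits of) integrals of the ReLU features $[W^{(1)}x + b]_+$. A subsidiary technical point is to interpret the equalizer $\gamma^*$ --- whose regularity is dictated by $h''$ --- in the appropriate function class; this requires the same density argument already used in the proof of Theorem~\ref{theorem_func}.
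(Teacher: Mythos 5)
Your proof is correct, and it takes a genuinely different route from the paper's. The paper proves the identity directly on a dense set: it defines $\tilde{\Theta}_{x,W}$, the projection of the kernel section $\tilde{\Theta}(\cdot,x)$ onto $W$, uses the reproducing property to show $\langle h, \tilde{\Theta}_{x,W}\rangle_{\tilde{\Theta}} = h(x)$, and then checks by an explicit second-derivative computation that $\int_S h''(y)\,\tilde{\Theta}_{x,W}''(y)/\zeta(y)\,\mathrm{d}y$ also equals $h(x)$ (by showing the two functions of $x$ have the same second derivative and both lie in $W$). Since the $\tilde{\Theta}_{x,W}$ are dense in $W$, this gives the bilinear form and hence the norm. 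Your argument instead sandwiches $\|h\|_{\tilde{\Theta}}^2$: the lower bound comes from Cauchy--Schwarz applied to any feature-space representer $\gamma$ of $h$ (exactly as in \eqref{long2}), and the upper bound comes from evaluating the Cauchy--Schwarz equalizer $\gamma^\ast$, which yields a function $g^\ast = h + ax + b$ whose kernel norm dominates $\|h\|_{\tilde{\Theta}}$ by the Pythagorean identity because $h \perp \mathrm{span}\{1,x\}$. The sandwich approach is more conceptual: it makes transparent where the orthogonality $h\in W$ enters (only in the upper bound) and shows as a byproduct that the equalizer is the exact minimal-norm representer. The paper's approach is more computational but avoids one ingredient your argument leans on implicitly: the characterization $\|h\|_{\tilde{\Theta}}^2 = \inf_\gamma \int \gamma^2\,\mathrm{d}\nu$ over feature-space representers of $h$. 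Your attribution of this to Theorem~\ref{th:equivalence} is a little loose --- that theorem is a statement about the solutions to two constrained problems, not a general formula for the norm --- but the underlying fact is standard for feature-map RKHS constructions and is consistent with the paper's framework, so this is a presentational rather than a substantive gap. Your two flagged caveats (that $1,x\in\mathcal{H}_{\tilde{\Theta}}(S)$, and that the equalizer $\gamma^\ast$ inherits only the regularity of $h''$) are both already implicitly assumed by the paper, which uses $\langle h,1\rangle_{\tilde\Theta}$ and $\langle h,x\rangle_{\tilde\Theta}$ in the very definition of $W$ and manipulates $\gamma$'s of precisely this form in the proof of Theorem~\ref{theorem_func}.
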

This theorem gives the explicit form of the kernel norm in a subspace of $\mathcal{H}_{\tilde{\Theta}}(S)$. Next we prove the above theorem using the property of kernel norm. 
\begin{proof}[Proof of Theorem~\ref{th:explicitkernelnorm}]
Let $\tilde{\Theta}_x(\cdot)=\tilde{\Theta}(\cdot,x)$. We can find the orthogonal projection of $\tilde{\Theta}_x$ on space $G$, which is denoted by $\tilde{\Theta}_{x,G}$. 
Then we only need to prove that $\langle h,\tilde{\Theta}_{x,G}\rangle_{\tilde{\Theta}}=\int_{S}
\frac{h''(y)\tilde{\Theta}_{x,G}''(y)}{\zeta(y)}~\mathrm{d}y$ for any $h\in G$ and $x\in S$.

First, $\tilde{\Theta}_{x,G}=\tilde{\Theta}_{x}-ux-v$ for some constant $u,v\in\mathbb{R}$. Since $h\in G$, $\langle h,1\rangle_{\tilde{\Theta}}=0$ and $\langle h,x\rangle_{\tilde{\Theta}}=0$. Then we have
\begin{equation}
\begin{aligned}
    \langle h,\tilde{\Theta}_{x,G}\rangle_{\tilde{\Theta}}&=\langle h,\tilde{\Theta}_{x}-ux-v\rangle_{\tilde{\Theta}}\\
    &=\langle h,\tilde{\Theta}_{x}\rangle_{\tilde{\Theta}}-u\langle h,x\rangle_{\tilde{\Theta}}-v\langle h,1\rangle_{\tilde{\Theta}}\\
    &=\langle h,\tilde{\Theta}_{x}\rangle_{\tilde{\Theta}}\\
    &=h(x) \quad \text{(use the reproducing property of the kernel \eqref{kernel_norm_property2})} . 
\end{aligned}
\label{hernel_norm_lhs}
\end{equation}
Next, using the notation from Section~\ref{sec:implicit_bias_univariate} we have 
\begin{equation*}
\begin{aligned}
    \tilde{\Theta}_{x,G}''(y)&=(\tilde{\Theta}_{x}(y)-uy-v)''=\tilde{\Theta}_{x}(y)''=\frac{\partial^2}{\partial y^2}\tilde{\Theta}(x,y)\\
    &=\frac{\partial^2}{\partial y^2}\int_{\mathbb{R}^2} [W^{(1)}(x-c)]_+[W^{(1)}(y-c)]_+~\mathrm{d}\nu(W^{(1)},c) \quad \text{(use \eqref{Theta_form_nu})}\\
    &=\frac{\partial^2}{\partial y^2}\int_{\mathbb{R}^2}(W^{(1)})^2 [\operatorname{sign}(W^{(1)})(x-c)]_+[\operatorname{sign}(W^{(1)})(y-c)]_+~\mathrm{d}\nu_{\mathcal{W}|\mathcal{C}=c}(W^{(1)})\mathrm{d}\nu_\mathcal{C}(c)\\
    &=\frac{\partial^2}{\partial y^2}\int_{\mathbb{R}}\left(\mathbb{E}(\mathcal{W}^2    \mathbbm{1}(\mathcal{W}\geq0)|\mathcal{C}=c) [x-c]_+[y-c]_+\right.\\
    &\quad\quad\quad\quad \left.+\mathbb{E}(\mathcal{W}^2    \mathbbm{1}(\mathcal{W}<0)|\mathcal{C}=c) [c-x]_+[c-y]_+\right)p_\mathcal{C}(c)~\mathrm{d}c\\
    &=\int_{\mathbb{R}}\left(\mathbb{E}(\mathcal{W}^2    \mathbbm{1}(\mathcal{W}\geq0)|\mathcal{C}=c) [x-c]_+\frac{\partial^2}{\partial y^2}[y-c]_+\right.\\
    &\quad\quad \left.+\mathbb{E}(\mathcal{W}^2    \mathbbm{1}(\mathcal{W}<0)|\mathcal{C}=c) [c-x]_+\frac{\partial^2}{\partial y^2}[c-y]_+\right)p_\mathcal{C}(c)~\mathrm{d}c\\
    &=\int_{\mathbb{R}}\left(\mathbb{E}(\mathcal{W}^2    \mathbbm{1}(\mathcal{W}\geq0)|\mathcal{C}=c) [x-c]_+\delta(y-c)\right.\\
    &\quad\quad \left.+\mathbb{E}(\mathcal{W}^2    \mathbbm{1}(\mathcal{W}<0)|\mathcal{C}=c) [c-x]_+\delta(y-c)\right)p_\mathcal{C}(c)~\mathrm{d}c\\
    &=\left(\mathbb{E}(\mathcal{W}^2    \mathbbm{1}(\mathcal{W}\geq0)|\mathcal{C}=y) [x-y]_++\mathbb{E}(\mathcal{W}^2    \mathbbm{1}(\mathcal{W}<0)|\mathcal{C}=y) [y-x]_+\right)p_\mathcal{C}(y) . 
\end{aligned}
\end{equation*}
Then we have
\begin{equation*}
    \begin{aligned}
&\int_{S}
\frac{h''(y)\tilde{\Theta}_{x,G}''(y)}{\zeta(y)}~\mathrm{d}y\\
=&\int_{S}
\frac{h''(y)\left(\mathbb{E}(\mathcal{W}^2    \mathbbm{1}(\mathcal{W}\geq0)|\mathcal{C}=y) [x-y]_++\mathbb{E}(\mathcal{W}^2 \mathbbm{1}(\mathcal{W}<0)|\mathcal{C}=y) [y-x]_+\right)p_\mathcal{C}(y)}{\zeta(y)}~\mathrm{d}y\\
=&\int_{S}
\frac{h''(y)\left(\mathbb{E}(\mathcal{W}^2    \mathbbm{1}(\mathcal{W}\geq0)|\mathcal{C}=y) [x-y]_++\mathbb{E}(\mathcal{W}^2    \mathbbm{1}(\mathcal{W}<0)|\mathcal{C}=y) [y-x]_+\right)}{\mathbb{E}(\mathcal{W}^2|\mathcal{C}=y)}~\mathrm{d}y\\
=&\int_{S}
\frac{\mathbb{E}(\mathcal{W}^2    \mathbbm{1}(\mathcal{W}\geq0)|\mathcal{C}=y)}{\mathbb{E}(\mathcal{W}^2|\mathcal{C}=y)}h''(y)[x-y]_++\frac{\mathbb{E}(\mathcal{W}^2    \mathbbm{1}(\mathcal{W}<0)|\mathcal{C}=y)}{\mathbb{E}(\mathcal{W}^2|\mathcal{C}=y)}h''(y)[y-x]_+~\mathrm{d}y . 
    \end{aligned}
\end{equation*}
Now, if we regard $\int_{S}
\frac{h''(y)\tilde{\Theta}_{x,G}''(y)}{\zeta(y)}~\mathrm{d}y$ as a function of $x$, then we get
\begin{equation*}
    \begin{aligned}
&\frac{\partial^2}{\partial x^2}\int_{S}
\frac{h''(y)\tilde{\Theta}_{x,G}''(y)}{\zeta(y)}~\mathrm{d}y\\
=&\frac{\partial^2}{\partial x^2}\int_{S}
\frac{\mathbb{E}(\mathcal{W}^2    \mathbbm{1}(\mathcal{W}\geq0)|\mathcal{C}=y)}{\mathbb{E}(\mathcal{W}^2|\mathcal{C}=y)}h''(y)[x-y]_++\frac{\mathbb{E}(\mathcal{W}^2    \mathbbm{1}(\mathcal{W}<0)|\mathcal{C}=y)}{\mathbb{E}(\mathcal{W}^2|\mathcal{C}=y)}h''(y)[y-x]_+~\mathrm{d}y\\
=&\int_{S}
\frac{\mathbb{E}(\mathcal{W}^2    \mathbbm{1}(\mathcal{W}\geq0)|\mathcal{C}=y)}{\mathbb{E}(\mathcal{W}^2|\mathcal{C}=y)}h''(y)\delta(x-y)+\frac{\mathbb{E}(\mathcal{W}^2    \mathbbm{1}(\mathcal{W}<0)|\mathcal{C}=y)}{\mathbb{E}(\mathcal{W}^2|\mathcal{C}=y)}h''(y)\delta(y-x)~\mathrm{d}y\\
=&\frac{\mathbb{E}(\mathcal{W}^2    \mathbbm{1}(\mathcal{W}\geq 0)|\mathcal{C}=x)}{\mathbb{E}(\mathcal{W}^2|\mathcal{C}=x)}h''(x)+\frac{\mathbb{E}(\mathcal{W}^2    \mathbbm{1}(\mathcal{W}<0)|\mathcal{C}=x)}{\mathbb{E}(\mathcal{W}^2|\mathcal{C}=x)}h''(x)\\
=&h''(x) . 
    \end{aligned}
\end{equation*}
From the definition of the space $G$, we see that the second derivative uniquely determines the element in $G$. Since $h\in G$, in order to show that $\int_{S}
\frac{h''(y)\tilde{\Theta}_{x,G}''(y)}{\zeta(y)}~\mathrm{d}y=h(x)$, we only need to show $\int_{S}
\frac{h''(y)\tilde{\Theta}_{x,G}''(y)}{\zeta(y)}~\mathrm{d}y\in G$, i.e., $\langle\int_{S}
\frac{h''(y)\tilde{\Theta}_{x,G}''(y)}{\zeta(y)}~\mathrm{d}y,1\rangle_{\tilde{\Theta}}=0$ and $\langle\int_{S}
\frac{h''(y)\tilde{\Theta}_{x,G}''(y)}{\zeta(y)}~\mathrm{d}y,x\rangle_{\tilde{\Theta}}=0$. Then we get
\begin{equation*}
    \begin{aligned}
    \langle\int_{S}
\frac{h''(y)\tilde{\Theta}_{x,G}''(y)}{\zeta(y)}~\mathrm{d}y,1\rangle_{\tilde{\Theta}} %
=&\langle\int_{S}
\frac{h''(y)\frac{\partial^2}{\partial y^2}\tilde{\Theta}(x,y)}{\zeta(y)}~\mathrm{d}y,1\rangle_{\tilde{\Theta}}\\
=&\langle\int_{S}
\frac{h''(y)\lim_{h\to 0}\frac{\tilde{\Theta}(x,y+h)-2\tilde{\Theta}(x,y)+\tilde{\Theta}(x,y-h)}{h^2}}{\zeta(y)}~\mathrm{d}y,1\rangle_{\tilde{\Theta}}\\
=&\lim_{h\to 0}\langle\int_{S}
\frac{h''(y)\frac{\tilde{\Theta}(x,y+h)-2\tilde{\Theta}(x,y)+\tilde{\Theta}(x,y-h)}{h^2}}{\zeta(y)}~\mathrm{d}y,1\rangle_{\tilde{\Theta}}\\
=&\lim_{h\to 0}\int_{S}
\frac{h''(y)\frac{\langle\tilde{\Theta}(x,y+h),1\rangle_{\tilde{\Theta}}-2\langle\tilde{\Theta}(x,y),1\rangle_{\tilde{\Theta}}+\langle\tilde{\Theta}(x,y-h),1\rangle_{\tilde{\Theta}}}{h^2}}{\zeta(y)}~\mathrm{d}y\\
=&\lim_{h\to 0}\int_{S}
\frac{h''(y)\frac{y+h-2y+y-h}{h^2}}{\zeta(y)}~\mathrm{d}y\\
=&0 . 
    \end{aligned}
\end{equation*}
Similarly we can show that $\langle\int_{S}
\frac{h''(y)\tilde{\Theta}_{x,G}''(y)}{\zeta(y)}~\mathrm{d}y,x\rangle_{\tilde{\Theta}}=0$. This concludes the proof. 
\end{proof}

\section{Gradient Descent Trajectory and Trajectory of Smoothing Splines for Univariate Regression}
\label{appendix:smoothingspline}

In the following we discuss the relation between the trajectory of functions obtained by gradient descent training of a neural network and a trajectory of solutions to 
the variational problem with the data fitting constraints replaced by a MSE for decreasing smoothness regularization strength. 
This Lagrange version of the variational problem is solved by so-called smoothing splines. 
Smoothing splines have been studied intensively in the literature and in particular they can be written explicitly. 
We give the explicit form of the solution for the trajectory in the context of our discussion.

\subsection{Regularized Regression and Early Stopping}

\cite{bishop1995regularization} shows that for linear regression with quadratic loss, early stopping and $L_2$ regularization lead to similar solutions. 
Let us recall some details of his analysis, before proceeding with our particular setting. 
He considers the loss function $E(\mathbf{w})=\|X\mathbf{w}-\mathbf{y}\|_2^2$, where $X=[\mathbf{x}_1,\ldots,\mathbf{x}_M]^T$ is the matrix of training inputs, $\mathbf{y}=[y_1,\ldots,y_M]^T$ is the vector %
of training outputs, and $\mathbf{w}$ is the weight vector of the linear model. 
Next the loss function can be written in the form of a quadratic function: 
\begin{equation*}
    \begin{aligned}
    E(W)&=\|X\mathbf{w}-\mathbf{y}\|_2^2\\
    &=\mathbf{w}^TX^TX\mathbf{w}-2\mathbf{y}^TX\mathbf{w}+\mathbf{y}^T\mathbf{y}\\
    &=\mathbf{w}^TX^TX\mathbf{w}-2\mathbf{y}^TX\mathbf{w}+\mathbf{y}^T\mathbf{y}\\
    &=\frac{1}{2}(\mathbf{w}-\mathbf{w}^*)^TH(\mathbf{w}-\mathbf{w}^*)+E_0,\\
    \end{aligned}
\end{equation*}
where $H=2X^TX$, $E_0$ is the minimum of the loss function, and $\mathbf{w}^*$ is the %
minimizer. %
The eigenvalues and eigenvectors of $H$ are as follows: 
\begin{equation*}
    H\mathbf{u}_j=\lambda_j\mathbf{u}_j. 
\end{equation*}
Then expand $\mathbf{w}$ and $\mathbf{w}^*$ in terms of the eigenvectors of $H$:
\begin{equation*}
    \mathbf{w}=\sum_j w_j\mathbf{u}_j,\quad \quad \mathbf{w}^*=\sum_j w_j^*\mathbf{u}_j. 
\end{equation*}
For the $L_2$ regularized regression problem, consider the regularized loss function
$\tilde{E}(\mathbf{w})=E(\mathbf{w})+c\|\mathbf{w}\|^2_2$.  Denote the minimizer by $\mathbf{w}=\tilde{\mathbf{w}}$ and consider its expansion as $\tilde{\mathbf{w}}=\sum_j \tilde{w}_j\mathbf{u}_j$. \cite{bishop1995regularization} shows that 
\begin{equation}
    \tilde{w}_j=\frac{\lambda_j}{\lambda_j+c}w_j^* . 
    \label{regularization}
\end{equation}
For early stopping, consider the gradient descent on $E(\mathbf{w})$ with zero initial weight vector: 
\begin{equation*}
\begin{aligned}
    \mathbf{w}^{(\tau)}&= \mathbf{w}^{(\tau-1)}-\eta \nabla E\\
    &= \mathbf{w}^{(\tau-1)}-\eta H(\mathbf{w}^{(\tau-1)}-\mathbf{w}^*),\\
    \mathbf{w}^{(0)}&= \mathbf{0}. 
\end{aligned}
\end{equation*}
Writing $\mathbf{w}^{(\tau)}=\sum_j w^{(\tau)}_j\mathbf{u}_j$, we have
\begin{equation*}
     w^{(\tau)}_j=(1-(1-\eta\lambda_j)^\tau) w^{*}_j. 
\end{equation*}
Note that $1-(1-\eta\lambda_j)^\tau\to1-e^{-\eta\tau\lambda_j}$ as $\eta\to 0$. Hence choosing a sufficiently small learning rate, approximately we have
\begin{equation}
     w^{(\tau)}_j=(1-e^{-\eta\tau\lambda_j}) w^{*}_j . 
     \label{early_stopping}
\end{equation}
From \eqref{regularization} and \eqref{early_stopping}, \cite{bishop1995regularization} observes that if $c$ is much larger than $\lambda_j$, then the regularized solution has coordinate $\tilde w_j$ close to $0$, 
and similarly if $1/(\eta \tau)$ is much larger than $\lambda_j$, then the early-stopping solution has coordinate $w_j^{(\tau)}$ close to the initial value $0$. 
We note that analogous observations apply when the regularization term has a reference point different from zero, $c\|\mathbf{w} - \overline{\mathbf{w}}\|_2^2$, 
and the gradient descent iteration is initialized at a point different from zero, $\mathbf{w}^{(0)} = \overline{\mathbf{w}}$. 

Now we want to take a closer look at the trajectories. 
Consider the following two functions:
\begin{equation*}
    h_1(x)=\frac{\lambda_j}{\lambda_j+x}, \quad\quad h_2(x)=1-e^{-\lambda_j/x}.
\end{equation*}
Actually we can verify that $h_1(0)=h_2(0)=1$ and $\lim_{x\to\infty} \frac{h_1(x)}{h_2(x)}=1$. It implies that these two functions are close to each other on $[0,\infty)$. Figure \ref{fig:two_h} shows the plot of functions $h_1(x)$ and $h_2(x)$. 
\begin{figure}
    \centering
    \includegraphics[width=.49\textwidth]{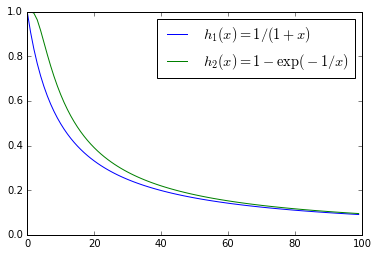}
    \includegraphics[width=.49\textwidth]{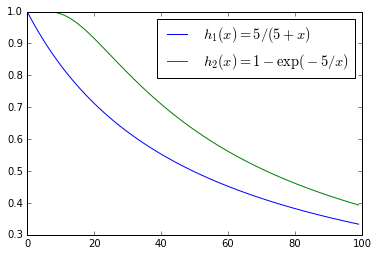}
    \caption{Plot of functions $h_1(x)$ and $h_2(x)$. The left panel plots the two function when $\lambda_j=1$. The right panel plots the two function when $\lambda_j=5$.}
    \label{fig:two_h}
\end{figure}

Now we choose the coefficient of regularization $c=\frac{1}{\eta\tau}$. Comparing \eqref{regularization} and \eqref{early_stopping}, and using the fact that $h_1(x)$ and $h_2(x)$ are close to each other on $[0,\infty)$, we show that early stopping and $L_2$ regularization lead to similar solutions across different values of $c=\frac{1}{\eta\tau}$. 

Back to our problem, we repeat the gradient descent procedures \eqref{fix_first} here:
\begin{equation*}
  \widetilde{W}^{(2)}_0=\overline{W}^{(2)},\quad 
  \widetilde{W}^{(2)}_{t+1}=\widetilde{W}^{(2)}_{t}- \eta\nabla_{W^{(2)}} L^{\mathrm{lin}}(\widetilde{\omega}_t). 
\end{equation*}
It is actually minimizing the following loss function of $W^{(2)}-\overline{W}$:
\begin{equation*}
    E(W^{(2)}-\overline{W})=\sum_{j=1}^M\left(\sum_{i=1}^n(W_i^{(2)}-\overline{W}_i^{(2)})[W_i^{(1)}x_j+b_i]_+-(y_j-f(x_{j}, \theta_0))\right)^2. 
\end{equation*}
Here we change the variable from $W^{(2)}$ to $W^{(2)}-\overline{W}$. 
Then $W_t^{(2)}-\overline{W}=0$ when $t=0$, so that gradient descent starts from the zero initial weight vector. 
Since the above model is linear with respect to $W^{(2)}-\overline{W}$,  we can apply the above argument about early stopping and $L_2$ regularization. Suppose that we use learning rate $\mu_n$ for the neural network of width $n$. We show that the solution $\widetilde{W}^{(2)}_t$ at iteration $t$ is close to the minimizer of the following regularized optimization problem: 
\begin{equation}
    \min_{W^{(2)}} \sum_{j=1}^M\left(\sum_{i=1}^n(W_i^{(2)}-\overline{W}_i^{(2)})[W_i^{(1)}x_j+b_i]_+-(y_j-f(x_{j}, \theta_0))\right)^2+c\|W^{(2)}-\overline{W}\|_2^2,
\label{regularized_problem}
\end{equation}
where $c=\frac{1}{\eta_n t}$. Using the same approach and notation as in Section~\ref{2.4}, the optimization problem \eqref{regularized_problem} is equivalent to 
\begin{equation}
\begin{aligned}
 \min_{\alpha_n\in C(\mathbb{R}^2)}\quad & \sum_{j=1}^M\left(\int_{\mathbb{R}^2} \alpha_n(W^{(1)},b)[W^{(1)}x_j+b]_+~\mathrm{d}\mu_n(W^{(1)},b)-y_j\right)^2\\
 &+\frac{1}{n\eta_n t}\int_{\mathbb{R}^2} \alpha_n^2(W^{(1)},b)~\mathrm{d}\mu_n(W^{(1)},b),
\end{aligned}
\label{regularized_probabilty}
\end{equation}
where we use the ASI trick (see Appendix~\ref{app:ASI}). 
Here \eqref{regularized_probabilty} has an extra factor $\frac{1}{n}$ compared to \eqref{regularized_problem}. 
This is because we define  $\alpha_n(W_i^{(1)},b_i)=n(W_i^{(2)}-\overline{W}_i^{(2)})$. According to Theorem ~\ref{minimum_weight}, $\eta_n\leq \frac{M}{Kn\lambda_{\max}(\hat{\Theta}_n)}$ is sufficient in order to ensure convergence. %
Then we suppose that $\eta_n=\bar{\eta}/n$, where $\bar{\eta}$ is a constant so that the %
requirement on the learning rate in Theorem~\ref{minimum_weight} is satisfied. 
The limit of the optimization problem \eqref{regularized_probabilty} as the width $n$ tends to infinity is:
\begin{equation}
\begin{aligned}
 \min_{\alpha\in C(\mathbb{R}^2)}\quad & \sum_{j=1}^M\left(\int_{\mathbb{R}^2} \alpha(W^{(1)},b)[W^{(1)}x_j+b]_+~\mathrm{d}\mu(W^{(1)},b)-y_j\right)^2\\
 &+\frac{1}{\bar{\eta} t}\int_{\mathbb{R}^2} \alpha^2(W^{(1)},b)~\mathrm{d}\mu(W^{(1)},b) . 
\end{aligned}
\label{regularized_continuous}
\end{equation}
Following the same reasoning of Section~\ref{sec:implicit_bias_univariate}, we relax the optimization problem \eqref{regularized_continuous} to the following one:
\begin{equation}
\begin{aligned}
 \min_{\alpha\in C(\mathbb{R}^2), u\in\mathbb{R}, v\in\mathbb{R}}\quad & \sum_{j=1}^M\left(ux_j+v+\int_{\mathbb{R}^2} \alpha(W^{(1)},b)[W^{(1)}x_j+b]_+~\mathrm{d}\mu(W^{(1)},b)-y_j\right)^2\\
 &+\frac{1}{\bar{\eta} t}\int_{\mathbb{R}^2} \alpha^2(W^{(1)},b)~\mathrm{d}\mu(W^{(1)},b) . 
\end{aligned}
\label{regularized_continuous_relax}
\end{equation}
Using the same technique and notation as in Theorem~\ref{theorem_func}, we can prove that the solution of \eqref{regularized_continuous_relax} actually solves the following optimization problem: 
\begin{equation}
  \label{regularized_function_space}
  \min_{h\in C^2(S)}
\sum_{j=1}^M\left[ h(x_j)-y_j\right]^2+\frac{1}{\bar{\eta} t}
\int_{S}
\frac{(h''(x))^2}{\zeta(x)}~\mathrm{d}x .
\end{equation}
Then in order to study the trajectory of gradient descent, we can study the optimization problem \eqref{regularized_function_space} with varying $t$. Figure~\ref{fig:trajectories} illustrates smoothing spline and gradient descent trajectories. The solution of \eqref{regularized_function_space} is called spatially adaptive smoothing spline. Here the curvature penalty function is $\frac{1}{\bar\eta t} \frac{1}{\zeta(x)}$, with time dependent smoothness regularization coefficient $\frac{1}{\bar \eta t}$. Next, we give the solution of \eqref{regularized_function_space} in the following two cases: (1) uniform case ($\zeta$ is constant over domain $S$); (2) spatially adaptive case ($\zeta$ is not constant over domain $S$). 
\begin{figure}
    \centering
    \begin{tikzpicture}
    \node at (0,0) {\includegraphics[clip=true, trim= 2.5cm 7cm 2.25cm 7cm, width=7cm]{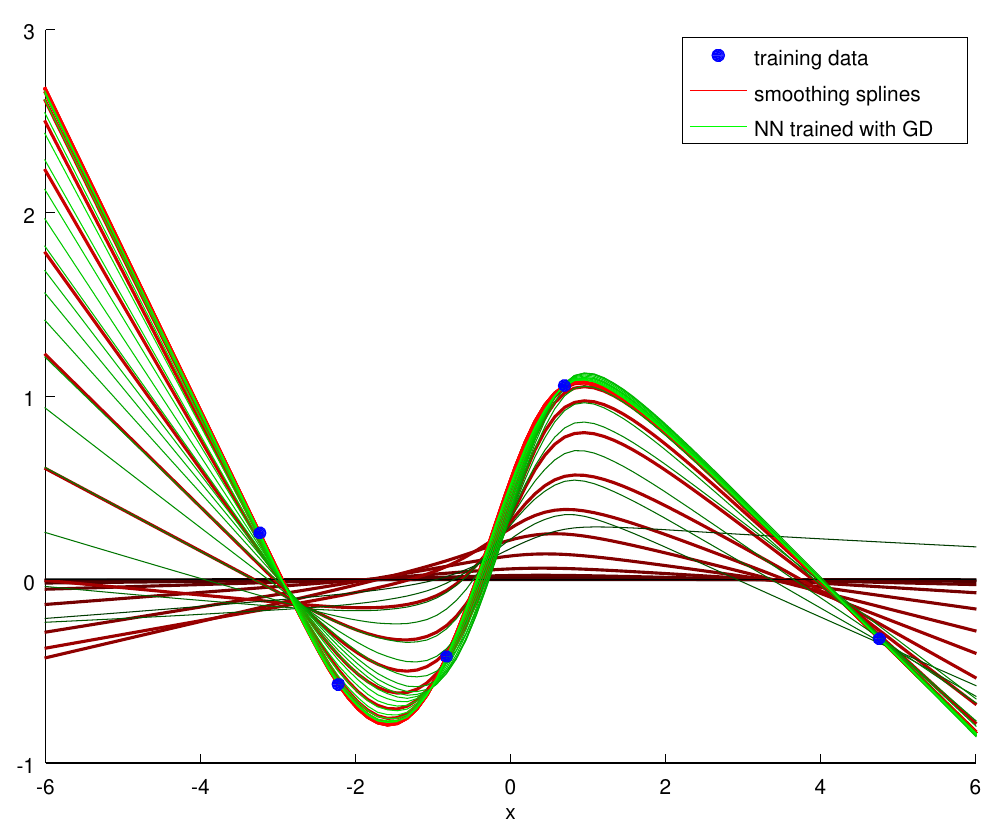}};
    \node at (7,0) {\includegraphics[clip=true, trim=  2.5cm 7cm 2.25cm 7cm, width=7cm]{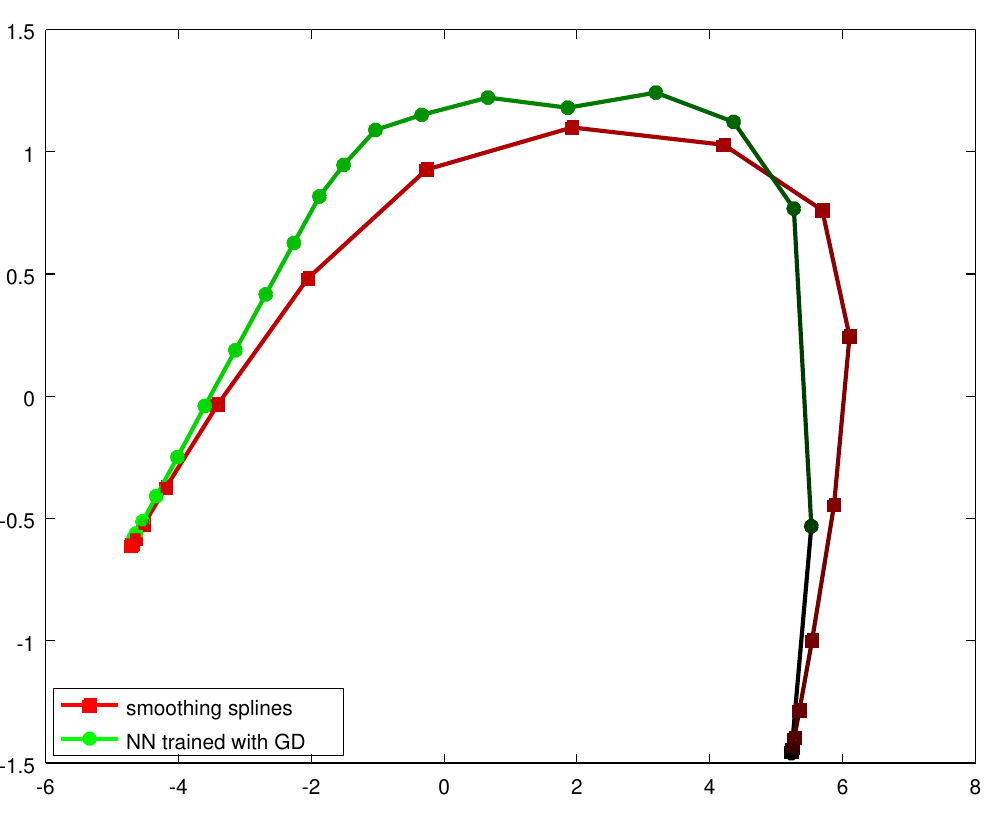}};
    \node at (0,3.1) {\footnotesize\textsf{Trajectories of functions}};
    \node at (7,3.1) {\footnotesize\textsf{2D PCA of the trajectories}};
    \end{tikzpicture}
    \caption{Trajectories of functions obtained by gradient descent training a neural network and by smoothing splines of the training data with decreasing regularization strength (from dark to bright). The left panel plots $20$ functions along each trajectory. The right panel shows the same functions in a two dimensional PCA representation. With asymmetric initialization of the network parameters and adjusting the training data by ordinary linear regression, both trajectories start at the zero function. The trajectories are not equivalent, but are close, and both converge to the same (spatially adaptive) cubic spline interpolation of the training data (in the limit of infinite wide networks). 
    Here we used a large network with $n=2000$ hidden units and Gaussian initialization $\mathcal{W}\sim \mathcal{N}(0,1)$, $\mathcal{B}\sim \mathcal{N}(0,1)$. The results are similar for smaller networks and different initializations.}
    \label{fig:trajectories}
\end{figure}

\begin{remark}[Spectral bias]
We have thus that the gradient descent optimization trajectory can be described approximately by a trajectory of smoothing splines which gradually relaxes the smoothness regularization (relative to initialization) until perfectly fitting the training data. If the function at initialization is at the zero function, e.g.,\ by ASI, then the regularization is on the function itself. 
Hence the result provides a theoretical explanation for the 
spectral bias phenomenon that has been observed by \cite{pmlr-v97-rahaman19a}. 
The spectral bias is that lower frequencies are learned first. 
\end{remark}

\subsection{Trajectory of Smoothing Splines with Uniform Curvature Penalty} 
Suppose the reciprocal curvature penalty is constant $\zeta(x)\equiv z$ on the domain $S$. Let $\lambda=\frac{1}{\bar{\eta}tz}$. Then \eqref{regularized_function_space} becomes the following optimization problem:
\begin{equation}
  \label{regularized_function_space_uniform_appendix}
  \min_{h\in C^2(S)}
\sum_{j=1}^M\left[ h(x_j)-y_j\right]^2+\lambda
\int_{S}
(h''(x))^2~\mathrm{d}x . 
\end{equation}
\cite{german2001smoothing} gives the explicit form of the minimizer $\hat h$ of \eqref{regularized_function_space_uniform_appendix}, which is called a smoothing spline. 
The minimizer $\hat h$ is a natural cubic spline with knots at the sample points $x_1,\ldots,x_M$. 
The %
smoothing spline does not fit the training data exactly, but rather it balances fitting and smoothness. 
The smoothing parameter $\lambda\geq0$ controls the trade off between fitting and roughness. 
The values of the smoothing spline at the knots can be obtained as 
\begin{equation}
(\hat h(x_1),\ldots, \hat h(x_M))^\top = (I + \lambda A)^{-1} Y. 
\label{eq:values-smoothing-spline}
\end{equation}
The matrix $A$ has entries $A_{ij} = \int_S h''_i(x) h''_j(x)~\mathrm{d}x$, where $h_i$ are spline basis functions which satisfy $h_i(x_j)=0$ for $j\not=i$ and $h_i(x_j)=1$ for $j=i$. 
\cite{german2001smoothing} gives a rather explicit form of matrix $A$, which is an $M\times M$ matrix given by $A=\Delta^TW^{-1}\Delta$. 
Here $\Delta$ is an $(M-2)\times M$ matrix of second differences with elements: 
\[
\Delta_{ii}=\frac{1}{h_i},\quad\Delta_{i,i+1}=-\frac{1}{h_i}-\frac{1}{h_{i+1}},\quad\Delta_{i,i+2}=\frac{1}{h_{i+1}} . 
\]
And $W$ is an $(M-2)\times (M-2)$ symmetric tri-diagonal matrix with elements:
\[
W_{i-1,i}=W_{i,i-1}=\frac{h_i}{6},\quad W_{i,i}=\frac{h_i+h_{i+1}}{3}, \text{ here } h_i=x_{i+1}-x_i . 
\]
As $\lambda\to 0$, the smoothing spline converges to the interpolating spline, and as $\lambda \to \infty$, it converges to the linear least squares estimate. 

\subsection{Trajectory of Spatially Adaptive Smoothing Splines} 

Let the curvature penalty $\rho(x)=\frac{1}{\bar\eta t} \frac{1}{\zeta(x)}\frac{1}{M}$. 
Then \eqref{regularized_function_space} can be written as 
\begin{equation}
  \label{regularized_function_space_adaptive}
\min_{h\in W_2(S)} \frac{1}{M}\sum_{i=1}^M\left[ h(x_j)-y_j\right]^2 + \int_S \rho(x) (h''(x))^2~\mathrm{d}x,
\end{equation}
where $W_2(S) = \{f\colon f, f' \text{ absolutely continuous and } f''\in L^2(S)\}$, with $L^2(S)$ the square integrable functions over the domain $S$. 
\citet{ABRAMOVICH1996327, 10.1093/biomet/93.1.113} give the solution of \eqref{regularized_function_space_adaptive} explicitly, which is called a spatially adaptive smoothing spline. 

According to \citet{10.1093/biomet/93.1.113}, the solution can be derived in terms of an appropriate RKHS representation of $W_2^0$ with inner product $\langle f,g \rangle_\rho = \int f''(x) g''(x) \rho(x)~\mathrm{d}x$. 
Here $W_0^2(S)=W_2(S)\cap B_2(S)$, 
where $W_2(S)$ is defined above,
and $B_2(S) = \{f: f(0)=f'(0)=0 \}$. Notice that when defining $B_2(S)$ we need $0\in S$. Actually we can choose any point in $S$. \citet{10.1093/biomet/93.1.113} define $B_2(S)$ in this way just for simplicity.
Then the kernel of the space $W_0^2(S)$ is given by
\begin{equation}
\label{kernel_rho_0}
K_{\rho}(x_1,x_2)=\int_S \rho(u)^{-1}[x_1-u]_+ [x_2-u]_+ \mathrm{d}u.
\end{equation}
Then the minimizer $\hat{h}$ of \eqref{regularized_function_space_adaptive} is given by
\begin{equation}
\label{solution_adaptive}
    \hat{h}(x)=\sum_{j=1}^M c_j K_{\rho}(x_j,x)+a+bx.
\end{equation}
Now define the $M\times M$ matrix
\begin{equation}
\label{def_Sigma}
    \Sigma_\rho=\{K_{\rho}(x_i,x_j)\}_{i,j=1,\ldots,M}, 
\end{equation}
and the $M\times 2$ matrix
\begin{equation}
\label{def_T}
    T=\begin{bmatrix}
1 &x_1\\
1 &x_2\\
\vdots& \vdots\\ 
1 &x_M\\
\end{bmatrix}. 
\end{equation}
Denote the vector of coefficients $\mathbf{c}=(c_1, \ldots , c_M)^T$ and the vector of output values $\mathbf{y}=(y_1, \ldots , y_M)^T$. 
Then the coefficients in \eqref{solution_adaptive} satisfy the following conditions: 
\begin{equation}
\label{conditions_coefficients}
    \Sigma_\rho\left[(\Sigma_\rho+MI)\mathbf{c}+T\begin{pmatrix}
    a\\b
    \end{pmatrix}\right]=\Sigma_\rho \mathbf{y} \quad\text{and}\quad T^\top\left[\Sigma_\rho\mathbf{c}+T\begin{pmatrix}
    a\\b
    \end{pmatrix}\right]=T^\top\mathbf{y}.
\end{equation}
After solving for \eqref{conditions_coefficients}, we get the values of $\mathbf{c}$, $a$ and $b$. Plug them into \eqref{solution_adaptive}, then we get the exact form of the minimizer of \eqref{regularized_function_space_adaptive}.

\section{Solution to the Variational Problems for Univariate Regression after Training}
\label{appendix:splines}

\subsection{Interpolating Splines with Uniform Curvature Penalty}
Theorem~\ref{proposition:explicit-rho}\,\ref{pro9:bin} and \ref{pro9:unif} show that for certain distributions of $(\mathcal{W},\mathcal{B})$, $\zeta$ is constant. In this case problem~\eqref{main_result} with ASI is solved by the cubic spline interpolation of the data with natural boundary conditions \citep{1967theory}. 
\begin{theorem}
[\citealt{1967theory}]
\label{thm:cubic}
  For training samples $\{(x_i,y_i)\}_{i=1}^M$, suppose $x_j\in S,~j=1,\ldots,M$. Then cubic spline interpolation of data $\{(x_i,y_i)\}_{i=1}^M$ with natural boundary condition is the solution of
  \begin{equation*}
    \begin{aligned}
    \min_{h\in C^2(S)} \quad &
      \int_{S} (h''(x))^2\mathrm{d}x\\
      \textup{subject to}\quad & h(x_j)=y_j, \quad j=1,\ldots,m.
    \end{aligned}
  \end{equation*}
\end{theorem}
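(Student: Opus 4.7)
The plan is to use the standard variational argument that characterizes natural cubic splines as the minimum-curvature interpolants. Let $s$ denote the natural cubic spline interpolating $\{(x_i,y_i)\}_{i=1}^M$, so $s \in C^2(S)$, $s(x_j)=y_j$, $s$ is piecewise cubic with knots at $x_1<\cdots<x_M$, and satisfies the natural boundary conditions $s''(x_1)=s''(x_M)=0$. Let $h \in C^2(S)$ be any other admissible competitor with $h(x_j)=y_j$, and set $g = h - s \in C^2(S)$, so $g(x_j)=0$ for all $j$. I would then expand
\begin{equation*}
\int_{[x_1,x_M]} (h''(x))^2\,\mathrm{d}x = \int_{[x_1,x_M]} (s''(x))^2\,\mathrm{d}x + 2\int_{[x_1,x_M]} s''(x)g''(x)\,\mathrm{d}x + \int_{[x_1,x_M]} (g''(x))^2\,\mathrm{d}x,
\end{equation*}
and reduce the whole proof to showing that the cross term vanishes, since then $\int (h'')^2 \geq \int (s'')^2$ with equality iff $g'' \equiv 0$.

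For the orthogonality, I would integrate by parts twice on $[x_1,x_M]$, treating the sum over subintervals $[x_j,x_{j+1}]$ carefully because $s'''$ has jumps at the knots. The first integration by parts produces $[s''g']_{x_1}^{x_M}$, which vanishes by the natural boundary conditions $s''(x_1)=s''(x_M)=0$. The second produces, on each subinterval, a boundary term of the form $[s'''g]_{x_j}^{x_{j+1}}$; since $s'''$ is constant on each open subinterval and $g(x_j)=0$ at every knot, these boundary contributions cancel exactly when summed. The remaining interior integral is $-\int s^{(4)} g$, and $s^{(4)}\equiv 0$ on each piece since $s$ is piecewise cubic. Hence $\int s''g''=0$, as desired.

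The uniqueness part follows because equality in $\int (h'')^2 \geq \int (s'')^2$ forces $g'' \equiv 0$ on $[x_1,x_M]$, so $g$ is affine there; combined with $g(x_j)=0$ at $M\geq 2$ distinct points this gives $g\equiv 0$, hence $h=s$ on the interpolation interval. Outside $[x_1,x_M]$ (inside $S$), the minimum of $\int (h'')^2$ over the free boundary values is attained precisely when $h''\equiv 0$, matching the natural cubic spline's linear extrapolation.

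The main obstacle I anticipate is purely bookkeeping around the non-smoothness of $s'''$ at the knots when carrying out integration by parts, specifically making sure the telescoping of jump terms is done over the correct partition and that the natural boundary conditions kill exactly the right endpoint terms. The argument is classical and contains no real analytic difficulty beyond this; the hypothesis $x_j \in S$ ensures the variational problem is well-posed on the relevant domain, and the case $M=1$ is degenerate but is excluded implicitly since a $C^2$ minimizer subject to a single point constraint is any affine function through that point with $\int (h'')^2 = 0$, which $s$ trivially realizes.
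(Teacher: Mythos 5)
Your proof is correct, and it is the standard variational argument for the extremal property of natural cubic splines. Note that the paper does not supply its own proof of this statement: it is presented as a classical result with a citation to \citet{1967theory}, and your argument is precisely the textbook proof one expects to find there (decompose $h = s + g$, kill the cross term $\int s'' g''$ by two integrations by parts using the natural boundary conditions $s''(x_1)=s''(x_M)=0$ and the interpolation constraints $g(x_j)=0$, then conclude by positivity of $\int (g'')^2$). One minor wording imprecision: after the second integration by parts, each piecewise boundary term $[s''' g]_{x_j}^{x_{j+1}}$ vanishes \emph{individually}, not merely after telescoping, since $s'''$ is constant on each open subinterval and $g$ is zero at both of its endpoints; no summation cancellation is actually needed. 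Your handling of the portion of $S$ outside $[\min_i x_i,\max_i x_i]$ (where the unconstrained minimizer has $h''\equiv 0$, matching linear extrapolation) and of uniqueness via $g''\equiv 0$ plus vanishing at $M\geq 2$ points are both correct.
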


As already mentioned in Appendix~\ref{appendix:smoothingspline}, cubic spline interpolation is a finite dimensional linear problem and can be solved exactly. 
A cubic spline is a piecewise polynomial of order $3$ with $(M-1)$ pieces. The $j$-th piece has the form $S_j(x) = a_j +b_jx + c_jx^2 +d_jx^3$, $j=1,\ldots,M-1$. 
These $(M-1)$ pieces satisfy equations $S_i(x_i)=y_i$, $S_{i}(x_{i+1})=y_{i+1}$, $i=1,\ldots,M-1$ and $S'_i(x_{i+1})=S'_{i+1}(x_{i+1})$, $S''_i(x_{i+1})=S''_{i+1}(x_{i+1})$, $i=1,\ldots,M-2$, and $S''_1(x_1)=S''_{M-1}(x_{M})=0$. 
Hence computing the spline amounts to solving a linear system in $4(M-1)$ indeterminates. 

\subsection{Spatially Adaptive Interpolating Splines}
In the case that $\zeta$ is not constant, we can still give the form of the solution to the variational problem~\eqref{main_result} with ASI by using the result in Appendix \ref{appendix:smoothingspline}. We multiply by a coefficient $\lambda$ the regularization term in the optimization problem \eqref{regularized_function_space_adaptive} and choose $\rho(x)=\frac{1}{\zeta(x)}$. Then we get 
\begin{equation}
  \label{regularized_function_space_adaptiv_lambda}
\min_{h\in W_2(S)} \frac{1}{M}\sum_{i=1}^M\left[ h(x_j)-y_j\right]^2 + \lambda \int_S \frac{1}{\zeta(x)} (h''(x))^2~\mathrm{d}x.
\end{equation}
As $\lambda\to 0$, the minimizer of \eqref{regularized_function_space_adaptiv_lambda} converges to the solution of the following optimization problem:
\begin{equation*}
  \min_{h\in W^2(S)}
\int_{S}
\frac{(h''(x))^2}{\zeta(x)}~\mathrm{d}x %
\quad 
\text{s.t.}\quad h(x_j)=y_j,\quad j=1,\ldots,m,
\end{equation*}
which is the variational problem~\eqref{main_result} with ASI. According to Appendix \ref{appendix:smoothingspline}, the solution of \eqref{regularized_function_space_adaptiv_lambda} is given by:
\begin{equation}
\label{sol_lambda}
\hat{h}^{(\lambda)}(x)=\sum_{j=1}^M c^{(\lambda)}_j K_{\frac{\lambda}{\zeta}}(x_j,x)+a^{(\lambda)}+b^{(\lambda)}x.
\end{equation}
And the vector $\mathbf{c}^{(\lambda)}=(c^{(\lambda)}_1, \ldots , c^{(\lambda)}_M)^T$, $a^{(\lambda)}$ and $b^{(\lambda)}$ satisfy the following conditions:
\begin{equation}
\label{conditions_coefficients_lambda}
    \Sigma_{\frac{\lambda}{\zeta}}\left[(\Sigma_{\frac{\lambda}{\zeta}}+MI)\mathbf{c}^{(\lambda)}+T\begin{pmatrix}
    a^{(\lambda)}\\b^{(\lambda)}
    \end{pmatrix}\right]=\Sigma_{\frac{\lambda}{\zeta}} \mathbf{y} \quad\text{and}\quad T^\top\left[\Sigma_{\frac{\lambda}{\zeta}}\mathbf{c}^{(\lambda)}+T\begin{pmatrix}
    a^{(\lambda)}\\b^{(\lambda)}
    \end{pmatrix}\right]=T^\top\mathbf{y},
\end{equation}
where $K_{\frac{\lambda}{\zeta}}$, $\Sigma_{\frac{\lambda}{\zeta}}$ and $T$ are defined in \eqref{kernel_rho_0}, \eqref{def_Sigma} and \eqref{def_T}. Next we show that $K_{\frac{\lambda}{\zeta}}$ is inversely proportional to $\lambda$:
\begin{equation}
\label{kernel_rho}
\begin{aligned}
K_{\frac{\lambda}{\zeta}}(x_1,x_2)&=\int_S \left(\frac{\lambda}{\zeta}\right)^{-1}[x_1-u]_+ [x_2-u]_+ \mathrm{d}u\\
&=\lambda^{-1}\int_S \left(\frac{1}{\zeta}\right)^{-1}[x_1-u]_+ [x_2-u]_+ \mathrm{d}u\\
&=\lambda^{-1}K_{\frac{1}{\zeta}}(x_1,x_2).
\end{aligned}
\end{equation}
Also $\Sigma_{\frac{\lambda}{\zeta}}=\lambda^{-1}\Sigma_{\frac{1}{\zeta}}$. Then we let $\bar{c}^{(\lambda)}_j=\lambda^{-1}c^{(\lambda)}_j$ and $\bar{\mathbf{c}}^{(\lambda)}=\lambda^{-1}\mathbf{c}^{(\lambda)}$. So we can rewrite \eqref{sol_lambda} and \eqref{conditions_coefficients_lambda} as
\begin{equation}
\label{sol_new}
    \hat{h}^{(\lambda)}(x)=\sum_{j=1}^M \bar{c}^{(\lambda)}_j K_{\frac{1}{\zeta}}(x_j,x)+a^{(\lambda)}+b^{(\lambda)}x,
\end{equation}
where $\bar{\mathbf{c}}^{(\lambda)}$, $a^{(\lambda)}$ and $b^{(\lambda)}$ satisfy the following conditions:
\begin{equation}
\label{conditions_coefficients_new}
    \Sigma_{\frac{1}{\zeta}}\left[(\Sigma_{\frac{1}{\zeta}}+\lambda MI)\bar{\mathbf{c}}^{(\lambda)}+T\begin{pmatrix}
    a^{(\lambda)}\\b^{(\lambda)}
    \end{pmatrix}\right]=\Sigma_{\frac{1}{\zeta}} \mathbf{y} \quad\text{and}\quad T^\top\left[\Sigma_{\frac{1}{\zeta}}\bar{\mathbf{c}}^{(\lambda)}+T\begin{pmatrix}
    a^{(\lambda)}\\b^{(\lambda)}
    \end{pmatrix}\right]=T^\top\mathbf{y},
\end{equation}
Now, as $\lambda\to 0$, \eqref{sol_new} and \eqref{conditions_coefficients_new} become: 
\begin{equation}
\label{sol_new_limit}
    \hat{h}^{(0^+)}(x)=\sum_{j=1}^M \bar{c}^{(0^+)}_j K_{\frac{1}{\zeta}}(x_j,x)+a^{(0^+)}+b^{(0^+)}x,
\end{equation}
where $\bar{\mathbf{c}}^{(0^+)}$, $a^{(0^+)}$, and $b^{(0^+)}$ satisfy the following conditions: 
\begin{equation}
\label{conditions_coefficients_new_limit}
    \Sigma_{\frac{1}{\zeta}}\left[\Sigma_{\frac{1}{\zeta}}\bar{\mathbf{c}}^{(0^+)}+T\begin{pmatrix}
    a^{(0^+)}\\b^{(0^+)}
    \end{pmatrix}\right]=\Sigma_{\frac{1}{\zeta}} \mathbf{y} \quad\text{and}\quad T^\top\left[\Sigma_{\frac{1}{\zeta}}\bar{\mathbf{c}}^{(\lambda)}+T\begin{pmatrix}
    a^{(0^+)}\\b^{(0^+)}
    \end{pmatrix}\right]=T^\top\mathbf{y}. 
\end{equation}
The expressions \eqref{sol_new_limit} and \eqref{conditions_coefficients_new_limit} give the solution of \eqref{regularized_function_space_adaptiv_lambda} as $\lambda\to 0$, which is also the solution to the variational problem~\eqref{function_space}.

\section{Possible Generalizations}
\label{app:generalizations}
\subsection{Deep Networks and Other Architectures}
For deep networks with $L$ layers, if we only train the output layer, then we actually train a linear model. We can actually write down the exact form of the NTK. However it is unclear whether we can write the explicit form of implicit bias in this case.

In the case of shallow networks, we show that training only the output layer is similar to training all parameters. 
Our analysis of shallow networks is based on this. 
However, in the case of a deep network, training only the output layer is no longer similar to training all parameters. 
If we train all model parameters, the results from \cite{lee2019wide,lai2023generalization} show that the model still is approximated by a linearized model. 
The result on kernel norm minimization \citep{zhang2019type} holds in this case. 
It will be interesting to study the explicit form of the kernel norm, and extensions of our analysis to the case of training all parameters of deep networks.

\subsection{Other Loss Functions} 
We have focused on the implicit bias of gradient descent for regression. 
For this type of problems, one often considers a loss function (per example) which has a single finite minimum. 
Roughly speaking, our description of the bias is in terms of smoothness properties of the solution functions. 
There are various works on the implicit bias of gradient descent for classification problems, e.g., \cite{soudry2018implicit}. In this case, the implicit bias is often formulated in terms of maximum margins. 

In our analysis, some theorems require that the loss function is mean square error~(MSE). In Theorem \ref{th-lin-onlyout}, the gradient flow is a linear differential equation if we use MSE. 
If we use a different loss, this will be more complicated. However, we think that the results can be generalized. 
We are also using the result from \cite{lee2018deep}, which is based on MSE. 
According to them it is not clear whether their result will still apply for other loss functions. 
Theorems \ref{theorem4} and~\ref{theorem_func} are about a variational problem that is derived from Theorem \ref{minimum_weight}, in relation to the minimization of $\|\theta-\theta_2\|_2$. 
Theorem \ref{minimum_weight} remains valid for other loss functions beside MSE. 
To sum up, if we can generalize the Theorem \ref{th-lin-onlyout} and the result of \cite{lee2018deep} to other loss functions, then we can generalize our main result in Theorem \ref{thm:theorem1} to other loss functions as well.

\subsection{Other Optimization Procedures} 
\label{app:other-optimization}

It would be interesting to extend the analysis to modifications of the basic gradient descent optimization procedure. The implicit bias of different optimization methods has been studied by \citet{gunasekar2018characterizing} covering some instances of mirror descent, natural gradient descent, Adam, and steepest descent with respect to different potentials and norms. 
In particular, they show that the implicit bias of coordinate descent corresponds to the minimization of the 1-norm of the weights. 
It will be interesting to work out the explicit form of these descriptions in function space. 


\vskip 0.2in
\bibliography{main.bib}

\begin{thebibliography}{57}
\providecommand{\natexlab}[1]{#1}
\providecommand{\url}[1]{\texttt{#1}}
\expandafter\ifx\csname urlstyle\endcsname\relax
  \providecommand{\doi}[1]{doi: #1}\else
  \providecommand{\doi}{doi: \begingroup \urlstyle{rm}\Url}\fi

\bibitem[Abramovich and Steinberg(1996)]{ABRAMOVICH1996327}
Felix Abramovich and David~M. Steinberg.
\newblock Improved inference in nonparametric regression using
  {$L_k$}-smoothing splines.
\newblock \emph{Journal of Statistical Planning and Inference}, 49\penalty0
  (3):\penalty0 327--341, 1996.
\newblock URL
  \url{http://www.sciencedirect.com/science/article/pii/0378375895000216}.

\bibitem[Ahlberg et~al.(1967)Ahlberg, Nilson, and Walsh]{1967theory}
J.~H. Ahlberg, Edwin~N. Nilson, and J.~L. Walsh.
\newblock \emph{The Theory of Splines and Their Applications}.
\newblock ISSN. Elsevier Science, 1967.
\newblock URL \url{https://books.google.com/books?id=S7d1pjJHsRgC}.

\bibitem[Allen-Zhu et~al.(2019)Allen-Zhu, Li, and Song]{pmlr-v97-allen-zhu19a}
Zeyuan Allen-Zhu, Yuanzhi Li, and Zhao Song.
\newblock A convergence theory for deep learning via over-parameterization.
\newblock In Kamalika Chaudhuri and Ruslan Salakhutdinov, editors,
  \emph{Proceedings of the 36th International Conference on Machine Learning},
  volume~97 of \emph{Proceedings of Machine Learning Research}, pages 242--252,
  Long Beach, California, USA, 09--15 Jun 2019. PMLR.
\newblock URL \url{http://proceedings.mlr.press/v97/allen-zhu19a.html}.

\bibitem[Baratin et~al.(2021)Baratin, George, Laurent, Devon~Hjelm, Lajoie,
  Vincent, and Lacoste-Julien]{pmlr-v130-baratin21a}
Aristide Baratin, Thomas George, C{\'e}sar Laurent, R~Devon~Hjelm, Guillaume
  Lajoie, Pascal Vincent, and Simon Lacoste-Julien.
\newblock Implicit regularization via neural feature alignment.
\newblock In Arindam Banerjee and Kenji Fukumizu, editors, \emph{Proceedings of
  The 24th International Conference on Artificial Intelligence and Statistics},
  volume 130 of \emph{Proceedings of Machine Learning Research}, pages
  2269--2277. PMLR, 13--15 Apr 2021.
\newblock URL \url{http://proceedings.mlr.press/v130/baratin21a.html}.

\bibitem[Bishop(1995)]{bishop1995regularization}
Christopher Bishop.
\newblock Regularization and complexity control in feed-forward networks.
\newblock In \emph{Proceedings International Conference on Artificial Neural
  Networks ICANN'95}, volume~1, pages 141--148. EC2 et Cie, January 1995.
\newblock URL
  \url{https://www.microsoft.com/en-us/research/publication/regularization-and-complexity-control-in-feed-forward-networks/}.

\bibitem[Cao and Gu(2019)]{cao2019generalization}
Yuan Cao and Quanquan Gu.
\newblock Generalization bounds of stochastic gradient descent for wide and
  deep neural networks.
\newblock \emph{Advances in neural information processing systems}, 32, 2019.

\bibitem[Cao et~al.(2021)Cao, Fang, Wu, Zhou, and Gu]{ijcai2021p304}
Yuan Cao, Zhiying Fang, Yue Wu, Ding-Xuan Zhou, and Quanquan Gu.
\newblock Towards understanding the spectral bias of deep learning.
\newblock In Zhi-Hua Zhou, editor, \emph{Proceedings of the Thirtieth
  International Joint Conference on Artificial Intelligence, {IJCAI-21}}, pages
  2205--2211. International Joint Conferences on Artificial Intelligence
  Organization, 8 2021.
\newblock URL \url{https://doi.org/10.24963/ijcai.2021/304}.
\newblock Main Track.

\bibitem[Chizat and Bach(2020)]{chizat2020implicit}
L\'ena\"ic Chizat and Francis Bach.
\newblock Implicit bias of gradient descent for wide two-layer neural networks
  trained with the logistic loss.
\newblock In Jacob Abernethy and Shivani Agarwal, editors, \emph{Proceedings of
  Thirty Third Conference on Learning Theory}, volume 125 of \emph{Proceedings
  of Machine Learning Research}, pages 1305--1338. PMLR, 09--12 Jul 2020.
\newblock URL \url{http://proceedings.mlr.press/v125/chizat20a.html}.

\bibitem[Chizat et~al.(2019)Chizat, Oyallon, and Bach]{chizat2019lazy}
L\'{e}na\"{\i}c Chizat, Edouard Oyallon, and Francis Bach.
\newblock On lazy training in differentiable programming.
\newblock In H.~Wallach, H.~Larochelle, A.~Beygelzimer, F.~d\textquotesingle
  Alch\'{e}-Buc, E.~Fox, and R.~Garnett, editors, \emph{Advances in Neural
  Information Processing Systems}, volume~32. Curran Associates, Inc., 2019.
\newblock URL
  \url{https://proceedings.neurips.cc/paper/2019/file/ae614c557843b1df326cb29c57225459-Paper.pdf}.

\bibitem[Daniely(2017)]{NIPS2017_489d0396}
Amit Daniely.
\newblock Sgd learns the conjugate kernel class of the network.
\newblock In I.~Guyon, U.~Von Luxburg, S.~Bengio, H.~Wallach, R.~Fergus,
  S.~Vishwanathan, and R.~Garnett, editors, \emph{Advances in Neural
  Information Processing Systems}, volume~30. Curran Associates, Inc., 2017.
\newblock URL
  \url{https://proceedings.neurips.cc/paper/2017/file/489d0396e6826eb0c1e611d82ca8b215-Paper.pdf}.

\bibitem[Dinh et~al.(2017)Dinh, Pascanu, Bengio, and Bengio]{pmlr-v70-dinh17b}
Laurent Dinh, Razvan Pascanu, Samy Bengio, and Yoshua Bengio.
\newblock Sharp minima can generalize for deep nets.
\newblock In Doina Precup and Yee~Whye Teh, editors, \emph{Proceedings of the
  34th International Conference on Machine Learning}, volume~70 of
  \emph{Proceedings of Machine Learning Research}, pages 1019--1028,
  International Convention Centre, Sydney, Australia, 06--11 Aug 2017. PMLR.
\newblock URL \url{http://proceedings.mlr.press/v70/dinh17b.html}.

\bibitem[Du et~al.(2019)Du, Zhai, Poczos, and Singh]{du2018gradient}
Simon~S. Du, Xiyu Zhai, Barnabas Poczos, and Aarti Singh.
\newblock Gradient descent provably optimizes over-parameterized neural
  networks.
\newblock In \emph{International Conference on Learning Representations}, 2019.
\newblock URL \url{https://openreview.net/forum?id=S1eK3i09YQ}.

\bibitem[Du~Toit(2008)]{du2008radial}
Wilna Du~Toit.
\newblock \emph{Radial basis function interpolation}.
\newblock PhD thesis, Stellenbosch: Stellenbosch University, 2008.

\bibitem[Eggermont and LaRiccia(2006)]{eggermont2006uniform}
PPB Eggermont and VN~LaRiccia.
\newblock Uniform error bounds for smoothing splines.
\newblock \emph{Lecture Notes-Monograph Series}, pages 220--237, 2006.

\bibitem[Folland(1999)]{folland1999real}
Gerald~B Folland.
\newblock \emph{Real analysis: modern techniques and their applications},
  volume~40.
\newblock John Wiley \& Sons, 1999.

\bibitem[German(2001)]{german2001smoothing}
G~German.
\newblock Smoothing and non-parametric regression.
\newblock \emph{International Journal of Systems Science}, 2001.

\bibitem[Gunasekar et~al.(2018{\natexlab{a}})Gunasekar, Lee, Soudry, and
  Srebro]{gunasekar2018characterizing}
Suriya Gunasekar, Jason Lee, Daniel Soudry, and Nathan Srebro.
\newblock Characterizing implicit bias in terms of optimization geometry.
\newblock In Jennifer Dy and Andreas Krause, editors, \emph{Proceedings of the
  35th International Conference on Machine Learning}, volume~80 of
  \emph{Proceedings of Machine Learning Research}, pages 1832--1841,
  Stockholmsmässan, Stockholm Sweden, 10--15 Jul 2018{\natexlab{a}}. PMLR.
\newblock URL \url{http://proceedings.mlr.press/v80/gunasekar18a.html}.

\bibitem[Gunasekar et~al.(2018{\natexlab{b}})Gunasekar, Lee, Soudry, and
  Srebro]{gunasekar2018implicit}
Suriya Gunasekar, Jason~D Lee, Daniel Soudry, and Nati Srebro.
\newblock Implicit bias of gradient descent on linear convolutional networks.
\newblock In S.~Bengio, H.~Wallach, H.~Larochelle, K.~Grauman, N.~Cesa-Bianchi,
  and R.~Garnett, editors, \emph{Advances in Neural Information Processing
  Systems 31}, pages 9461--9471. Curran Associates, Inc., 2018{\natexlab{b}}.
\newblock URL
  \url{http://papers.nips.cc/paper/8156-implicit-bias-of-gradient-descent-on-linear-convolutional-networks.pdf}.

\bibitem[Hall and Meyer(1976)]{hall1976optimal}
Charles~A Hall and W~Weston Meyer.
\newblock Optimal error bounds for cubic spline interpolation.
\newblock \emph{Journal of Approximation Theory}, 16\penalty0 (2):\penalty0
  105--122, 1976.

\bibitem[Heiss et~al.(2019)Heiss, Teichmann, and Wutte]{Heiss2019HowIR}
Jakob Heiss, Josef Teichmann, and Hanna Wutte.
\newblock How implicit regularization of neural networks affects the learned
  function - part i.
\newblock \emph{arXiv preprint arXiv:1911.02903}, 2019.

\bibitem[Jacot et~al.(2018)Jacot, Gabriel, and Hongler]{jacot2018neural}
Arthur Jacot, Franck Gabriel, and Clement Hongler.
\newblock Neural tangent kernel: Convergence and generalization in neural
  networks.
\newblock In S.~Bengio, H.~Wallach, H.~Larochelle, K.~Grauman, N.~Cesa-Bianchi,
  and R.~Garnett, editors, \emph{Advances in Neural Information Processing
  Systems 31}, pages 8571--8580. Curran Associates, Inc., 2018.

\bibitem[Ji and Telgarsky(2019)]{pmlr-v99-ji19a}
Ziwei Ji and Matus Telgarsky.
\newblock The implicit bias of gradient descent on nonseparable data.
\newblock In Alina Beygelzimer and Daniel Hsu, editors, \emph{Proceedings of
  the Thirty-Second Conference on Learning Theory}, volume~99 of
  \emph{Proceedings of Machine Learning Research}, pages 1772--1798, Phoenix,
  USA, 25--28 Jun 2019. PMLR.
\newblock URL \url{http://proceedings.mlr.press/v99/ji19a.html}.

\bibitem[Keskar et~al.(2017)Keskar, Mudigere, Nocedal, Smelyanskiy, and
  Tang]{keskar2016large}
Nitish~Shirish Keskar, Dheevatsa Mudigere, Jorge Nocedal, Mikhail Smelyanskiy,
  and Ping Tak~Peter Tang.
\newblock On large-batch training for deep learning: Generalization gap and
  sharp minima.
\newblock In \emph{International Conference on Learning Representations}, 2017.
\newblock URL \url{https://openreview.net/pdf?id=H1oyRlYgg}.

\bibitem[Kwa{\'s}nicki(2017)]{kwasnicki2017ten}
Mateusz Kwa{\'s}nicki.
\newblock Ten equivalent definitions of the fractional laplace operator.
\newblock \emph{Fractional Calculus and Applied Analysis}, 20\penalty0
  (1):\penalty0 7--51, 2017.

\bibitem[Lai et~al.(2023)Lai, Xu, Chen, and Lin]{lai2023generalization}
Jianfa Lai, Manyun Xu, Rui Chen, and Qian Lin.
\newblock Generalization ability of wide neural networks on $\mathbb{R}$.
\newblock \emph{arXiv preprint arXiv:2302.05933}, 2023.

\bibitem[Lee et~al.(2018)Lee, Sohl-Dickstein, Pennington, Novak, Schoenholz,
  and Bahri]{lee2018deep}
Jaehoon Lee, Jascha Sohl-Dickstein, Jeffrey Pennington, Roman Novak, Sam
  Schoenholz, and Yasaman Bahri.
\newblock Deep neural networks as {G}aussian processes.
\newblock In \emph{International Conference on Learning Representations}, 2018.
\newblock URL \url{https://openreview.net/forum?id=B1EA-M-0Z}.

\bibitem[Lee et~al.(2019)Lee, Xiao, Schoenholz, Bahri, Novak, Sohl-Dickstein,
  and Pennington]{lee2019wide}
Jaehoon Lee, Lechao Xiao, Samuel Schoenholz, Yasaman Bahri, Roman Novak, Jascha
  Sohl-Dickstein, and Jeffrey Pennington.
\newblock Wide neural networks of any depth evolve as linear models under
  gradient descent.
\newblock In H.~Wallach, H.~Larochelle, A.~Beygelzimer, F.~d\textquotesingle
  Alch\'{e}-Buc, E.~Fox, and R.~Garnett, editors, \emph{Advances in Neural
  Information Processing Systems 32}, pages 8572--8583. Curran Associates,
  Inc., 2019.

\bibitem[Liu and Guo(2010)]{liu2010data}
Ziyue Liu and Wensheng Guo.
\newblock Data driven adaptive spline smoothing.
\newblock \emph{Statistica Sinica}, pages 1143--1163, 2010.

\bibitem[Maennel et~al.(2018)Maennel, Bousquet, and Gelly]{maennel2018gradient}
Hartmut Maennel, Olivier Bousquet, and Sylvain Gelly.
\newblock Gradient descent quantizes {ReLU} network features.
\newblock \emph{arXiv preprint arXiv:1803.08367}, 2018.

\bibitem[Melrose and Uhlmann(2008)]{melrose2008introduction}
Richard~B Melrose and Gunther Uhlmann.
\newblock \emph{An introduction to microlocal analysis}.
\newblock Department of Mathematics, Massachusetts Institute of Technology,
  2008.

\bibitem[Nasim(1973)]{10.2307/2038642}
C.~Nasim.
\newblock The solution of an integral equation.
\newblock \emph{Proceedings of the American Mathematical Society}, 40\penalty0
  (1):\penalty0 95--101, 1973.
\newblock URL \url{http://www.jstor.org/stable/2038642}.

\bibitem[Neal(1996)]{neal1996priors}
Radford~M Neal.
\newblock Priors for infinite networks.
\newblock In \emph{Bayesian Learning for Neural Networks}, pages 29--53.
  Springer, 1996.

\bibitem[Neyshabur et~al.(2015)Neyshabur, Tomioka, and
  Srebro]{neyshabur2014search}
Behnam Neyshabur, Ryota Tomioka, and Nathan Srebro.
\newblock In search of the real inductive bias: On the role of implicit
  regularization in deep learning.
\newblock In \emph{ICLR (Workshop)}, 2015.
\newblock URL \url{http://arxiv.org/abs/1412.6614}.

\bibitem[Neyshabur et~al.(2017)Neyshabur, Tomioka, Salakhutdinov, and
  Srebro]{neyshabur2017geometry}
Behnam Neyshabur, Ryota Tomioka, Ruslan Salakhutdinov, and Nathan Srebro.
\newblock Geometry of optimization and implicit regularization in deep
  learning.
\newblock \emph{arXiv preprint arXiv:1705.03071}, 2017.

\bibitem[Ongie et~al.(2020)Ongie, Willett, Soudry, and
  Srebro]{ongie2019function}
Greg Ongie, Rebecca Willett, Daniel Soudry, and Nathan Srebro.
\newblock A function space view of bounded norm infinite width {ReLU} nets: The
  multivariate case.
\newblock In \emph{International Conference on Learning Representations}, 2020.
\newblock URL \url{https://openreview.net/forum?id=H1lNPxHKDH}.

\bibitem[Oymak and Soltanolkotabi(2019)]{oymak2018overparameterized}
Samet Oymak and Mahdi Soltanolkotabi.
\newblock Overparameterized nonlinear learning: Gradient descent takes the
  shortest path?
\newblock In Kamalika Chaudhuri and Ruslan Salakhutdinov, editors,
  \emph{Proceedings of the 36th International Conference on Machine Learning},
  volume~97 of \emph{Proceedings of Machine Learning Research}, pages
  4951--4960, Long Beach, California, USA, 09--15 Jun 2019. PMLR.
\newblock URL \url{http://proceedings.mlr.press/v97/oymak19a.html}.

\bibitem[Parhi and Nowak(2019)]{Parhi2019MinimumN}
Rahul Parhi and Robert~D. Nowak.
\newblock Minimum "norm" neural networks are splines.
\newblock \emph{arXiv preprint arXiv:1910.02333}, 2019.

\bibitem[Parhi and Nowak(2021)]{parhi2021kinds}
Rahul Parhi and Robert~D Nowak.
\newblock What kinds of functions do deep neural networks learn? insights from
  variational spline theory.
\newblock \emph{arXiv preprint arXiv:2105.03361}, 2021.

\bibitem[Paszke et~al.(2019)Paszke, Gross, Massa, Lerer, Bradbury, Chanan,
  Killeen, Lin, Gimelshein, Antiga, Desmaison, Kopf, Yang, DeVito, Raison,
  Tejani, Chilamkurthy, Steiner, Fang, Bai, and Chintala]{NEURIPS2019_9015}
Adam Paszke, Sam Gross, Francisco Massa, Adam Lerer, James Bradbury, Gregory
  Chanan, Trevor Killeen, Zeming Lin, Natalia Gimelshein, Luca Antiga, Alban
  Desmaison, Andreas Kopf, Edward Yang, Zachary DeVito, Martin Raison, Alykhan
  Tejani, Sasank Chilamkurthy, Benoit Steiner, Lu~Fang, Junjie Bai, and Soumith
  Chintala.
\newblock {PyTorch}: An imperative style, high-performance deep learning
  library.
\newblock In \emph{Advances in Neural Information Processing Systems 32}, pages
  8024--8035. Curran Associates, Inc., 2019.
\newblock URL
  \url{http://papers.neurips.cc/paper/9015-pytorch-an-imperative-style-high-performance-deep-learning-library.pdf}.

\bibitem[Pintore et~al.(2006)Pintore, Speckman, and
  Holmes]{10.1093/biomet/93.1.113}
Alexandre Pintore, Paul Speckman, and Chris~C. Holmes.
\newblock {Spatially adaptive smoothing splines}.
\newblock \emph{Biometrika}, 93\penalty0 (1):\penalty0 113--125, 03 2006.
\newblock \doi{10.1093/biomet/93.1.113}.
\newblock URL \url{https://doi.org/10.1093/biomet/93.1.113}.

\bibitem[Potter(1981)]{potter1981multivariate}
Evelyn Dianne~Hatton Potter.
\newblock \emph{Multivariate polyharmonic spline interpolation}.
\newblock Iowa State University, 1981.

\bibitem[Ragozin(1983)]{ragozin1983error}
David~L Ragozin.
\newblock Error bounds for derivative estimates based on spline smoothing of
  exact or noisy data.
\newblock \emph{Journal of approximation theory}, 37\penalty0 (4):\penalty0
  335--355, 1983.

\bibitem[Rahaman et~al.(2019)Rahaman, Baratin, Arpit, Draxler, Lin, Hamprecht,
  Bengio, and Courville]{pmlr-v97-rahaman19a}
Nasim Rahaman, Aristide Baratin, Devansh Arpit, Felix Draxler, Min Lin, Fred
  Hamprecht, Yoshua Bengio, and Aaron Courville.
\newblock On the spectral bias of neural networks.
\newblock In Kamalika Chaudhuri and Ruslan Salakhutdinov, editors,
  \emph{Proceedings of the 36th International Conference on Machine Learning},
  volume~97 of \emph{Proceedings of Machine Learning Research}, pages
  5301--5310, Long Beach, California, USA, 09--15 Jun 2019. PMLR.
\newblock URL \url{http://proceedings.mlr.press/v97/rahaman19a.html}.

\bibitem[Sahs et~al.(2020{\natexlab{a}})Sahs, Damaraju, Pyle, Tavaslioglu,
  Caro, Lu, and Patel]{sahs2020a}
Justin Sahs, Aneel Damaraju, Ryan Pyle, Onur Tavaslioglu, Josue~Ortega Caro,
  Hao~Yang Lu, and Ankit Patel.
\newblock A functional characterization of randomly initialized gradient
  descent in deep {ReLU} networks, 2020{\natexlab{a}}.
\newblock URL \url{https://openreview.net/forum?id=BJl9PRVKDS}.

\bibitem[Sahs et~al.(2020{\natexlab{b}})Sahs, Pyle, Damaraju, Caro,
  Tavaslioglu, Lu, and Patel]{sahs2020shallow}
Justin Sahs, Ryan Pyle, Aneel Damaraju, Josue~Ortega Caro, Onur Tavaslioglu,
  Andy Lu, and Ankit Patel.
\newblock Shallow univariate {ReLU} networks as splines: Initialization, loss
  surface, {H}essian, \& gradient flow dynamics, 2020{\natexlab{b}}.

\bibitem[Savarese et~al.(2019)Savarese, Evron, Soudry, and
  Srebro]{savarese2019infinite}
Pedro Savarese, Itay Evron, Daniel Soudry, and Nathan Srebro.
\newblock How do infinite width bounded norm networks look in function space?
\newblock In Alina Beygelzimer and Daniel Hsu, editors, \emph{Proceedings of
  the Thirty-Second Conference on Learning Theory}, volume~99 of
  \emph{Proceedings of Machine Learning Research}, pages 2667--2690, Phoenix,
  USA, 25--28 Jun 2019. PMLR.
\newblock URL \url{http://proceedings.mlr.press/v99/savarese19a.html}.

\bibitem[Schmidt-Hieber(2020)]{schmidt2020rejoinder}
Johannes Schmidt-Hieber.
\newblock Rejoinder: “nonparametric regression using deep neural networks
  with {ReLU} activation function”.
\newblock \emph{The Annals of Statistics}, 48\penalty0 (4):\penalty0
  1916--1921, 2020.

\bibitem[Segeth(2019)]{segeth2019multivariate}
Karel Segeth.
\newblock Multivariate smooth interpolation that employs polyharmonic
  functions.
\newblock \emph{Programs and Algorithms of Numerical Mathematics}, pages
  140--148, 2019.

\bibitem[Solmon(1987)]{solmon1987asymptotic}
Donald~C Solmon.
\newblock Asymptotic formulas for the dual radon transform and applications.
\newblock \emph{Mathematische Zeitschrift}, 195\penalty0 (3):\penalty0
  321--343, 1987.

\bibitem[Soudry et~al.(2018)Soudry, Hoffer, Nacson, Gunasekar, and
  Srebro]{soudry2018implicit}
Daniel Soudry, Elad Hoffer, Mor~Shpigel Nacson, Suriya Gunasekar, and Nathan
  Srebro.
\newblock The implicit bias of gradient descent on separable data.
\newblock \emph{The Journal of Machine Learning Research}, 19\penalty0
  (1):\penalty0 2822--2878, 2018.

\bibitem[Storlie et~al.(2010)Storlie, Bondell, and Reich]{storlie2010locally}
Curtis~B Storlie, Howard~D Bondell, and Brian~J Reich.
\newblock A locally adaptive penalty for estimation of functions with varying
  roughness.
\newblock \emph{Journal of Computational and Graphical Statistics}, 19\penalty0
  (3):\penalty0 569--589, 2010.

\bibitem[Wang et~al.(2013)Wang, Du, and Shen]{wang2013smoothing}
Xiao Wang, Pang Du, and Jinglai Shen.
\newblock Smoothing splines with varying smoothing parameter.
\newblock \emph{Biometrika}, 100\penalty0 (4):\penalty0 955--970, 2013.

\bibitem[Wendland(2004)]{wendland2004scattered}
Holger Wendland.
\newblock \emph{Scattered data approximation}, volume~17.
\newblock Cambridge university press, 2004.

\bibitem[Williams et~al.(2019)Williams, Trager, Panozzo, Silva, Zorin, and
  Bruna]{williams2019gradient}
Francis Williams, Matthew Trager, Daniele Panozzo, Claudio Silva, Denis Zorin,
  and Joan Bruna.
\newblock Gradient dynamics of shallow univariate {ReLU} networks.
\newblock In \emph{Advances in Neural Information Processing Systems}, pages
  8378--8387, 2019.

\bibitem[Wu et~al.(2017)Wu, Zhu, and Weinan]{Wu2017TowardsUG}
Lei Wu, Zhanxing Zhu, and E~Weinan.
\newblock Towards understanding generalization of deep learning: Perspective of
  loss landscapes.
\newblock \emph{arXiv preprint arXiv:1706.10239}, 2017.

\bibitem[Zhang et~al.(2017)Zhang, Bengio, Hardt, Recht, and
  Vinyals]{zhang2016understanding}
Chiyuan Zhang, Samy Bengio, Moritz Hardt, Benjamin Recht, and Oriol Vinyals.
\newblock Understanding deep learning requires rethinking generalization.
\newblock In \emph{International Conference on Learning Representations, {ICLR}
  2017}, 2017.
\newblock URL \url{https://arxiv.org/abs/1611.03530}.

\bibitem[Zhang et~al.(2020)Zhang, Xu, Luo, and Ma]{zhang2019type}
Yaoyu Zhang, Zhi-Qin~John Xu, Tao Luo, and Zheng Ma.
\newblock A type of generalization error induced by initialization in deep
  neural networks.
\newblock In Jianfeng Lu and Rachel Ward, editors, \emph{Proceedings of The
  First Mathematical and Scientific Machine Learning Conference}, volume 107 of
  \emph{Proceedings of Machine Learning Research}, pages 144--164, Princeton
  University, Princeton, NJ, USA, 20--24 Jul 2020. PMLR.
\newblock URL \url{http://proceedings.mlr.press/v107/zhang20a.html}.

\end{thebibliography}

\end{document}